\renewcommand\footnotetextcopyrightpermission[1]{}
\newcommand{\mycomment}[1]{{\footnotesize\color{blue}\ttfamily//#1}}
\newcommand{\specialcell}[2][c]{  \begin{tabular}[#1]{@{}c@{}}#2\end{tabular}}
 \newcommand{\CD}{\mathcal{D}}
\newcommand{\CA}{\mathcal{A}}
\newcommand{\CE}{\mathcal{E}}
\newcommand{\CX}{\mathcal{X}}
\newcommand{\CN}{\mathcal{N}}
\newcommand{\CF}{\mathcal{F}}
\newcommand{\CU}{\mathcal{U}}
\newcommand{\CL}{\mathcal{L}}
\newcommand{\CI}{\mathcal{I}}
\newcommand{\CO}{\mathcal{O}}
\newcommand{\CQ}{\mathcal{Q}}
\newcommand{\MBP}{\mathbb{P}}
\newcommand{\MBE}{\mathbb{E}}
\newcommand{\MBR}{\mathbb{R}}
\newcommand{\RHmu}{\mathring{\hat{\mu}}}
\newcommand{\indicator}[1]{\mathbbm{1}_{\{#1\}}}
\newcommand{\NIPS}{\color{black}}
\newtheorem{theorem}{\hspace{-0.13in}\bf Theorem}
\newtheorem{corollary}{\hspace{-0in}\bf Corollary}
\newtheorem{strategy}{\hspace{-0.12in}\bf Strategy}
\newtheorem{example}{\hspace{-0.0in}\bf Example}
\newtheorem{assumption}{\hspace{-0.0in}\bf Assumption}
\newtheorem{property}{\hspace{-0.0in}\bf Property}
\newtheorem{definition}{\hspace{-0.0in}\bf Definition}
\newcounter{bandit}
\newenvironment{bandit}[1][htb]{  \let\c@algocf\c@bandit
  \SetAlgorithmName{BanditOracle}{bandit}{List of Classes}
    \begin{algorithm}[#1]  }{\end{algorithm}
}
\newcounter{evaluator}
\newenvironment{evaluator}[1][htb]{  \let\c@algocf\c@evaluator
  \SetAlgorithmName{OfflineEvaluator}{evaluator}{List of Classes}
    \begin{algorithm}[#1]  }{\end{algorithm}
}
\newcommand\independent{\protect\mathpalette{\protect\independenT}{\perp}}
\def\independenT#1#2{\mathrel{\rlap{$#1#2$}\mkern2mu{#1#2}}}
\begin{document}
\title{Combining Offline Causal Inference and Online Bandit Learning for Data
  Driven Decision}

 \author{Li Ye}
\affiliation{The Chinese University of Hong Kong}
\author{Yishi Lin}
\affiliation{Tencent}
\author{Hong Xie}
\affiliation{College of Computer Science, Chongqing University}
\author{John C.S. Lui}
\affiliation{The Chinese University of Hong Kong}

\begin{abstract}
A fundamental question for companies with large amount of logged data
is: {\em How to 
    use such logged data together with incoming streaming data
  to make good decisions?}
Many companies currently make decisions via online A/B tests, 
but wrong decisions during testing hurt users' experiences 
and cause irreversible damage.  
A typical alternative is offline causal inference, 
which analyzes logged data alone to make decisions.   
However, these decisions are not adaptive to the new incoming data, 
and so a wrong decision will continuously hurt users' experiences.
To overcome the aforementioned limitations, 
we propose a framework to unify offline causal inference algorithms 
(e.g., weighting, matching)
  and online learning algorithms (e.g., UCB, LinUCB).
We propose novel algorithms and derive 
bounds on the decision accuracy via the notion of ``regret''.
We derive the first upper regret bound for forest-based online bandit algorithms.
Experiments on two real datasets  
show that our algorithms outperform other algorithms that use only logged data or
  online feedbacks, or algorithms that do not use the data properly.
\end{abstract}
 
\maketitle

\section{Introduction} 

How to make good decisions is a key challenge in many web applications,
i.e., an Internet company such as Facebook that sells in-feeds advertisements (or ``ads'' for short) needs to decide whether
to place an ad below videos or below images, as illustrated in
Fig.~\ref{fig:video_adv}.   

\begin{figure}[htb]
    \centering
      \includegraphics[width=0.49\textwidth]{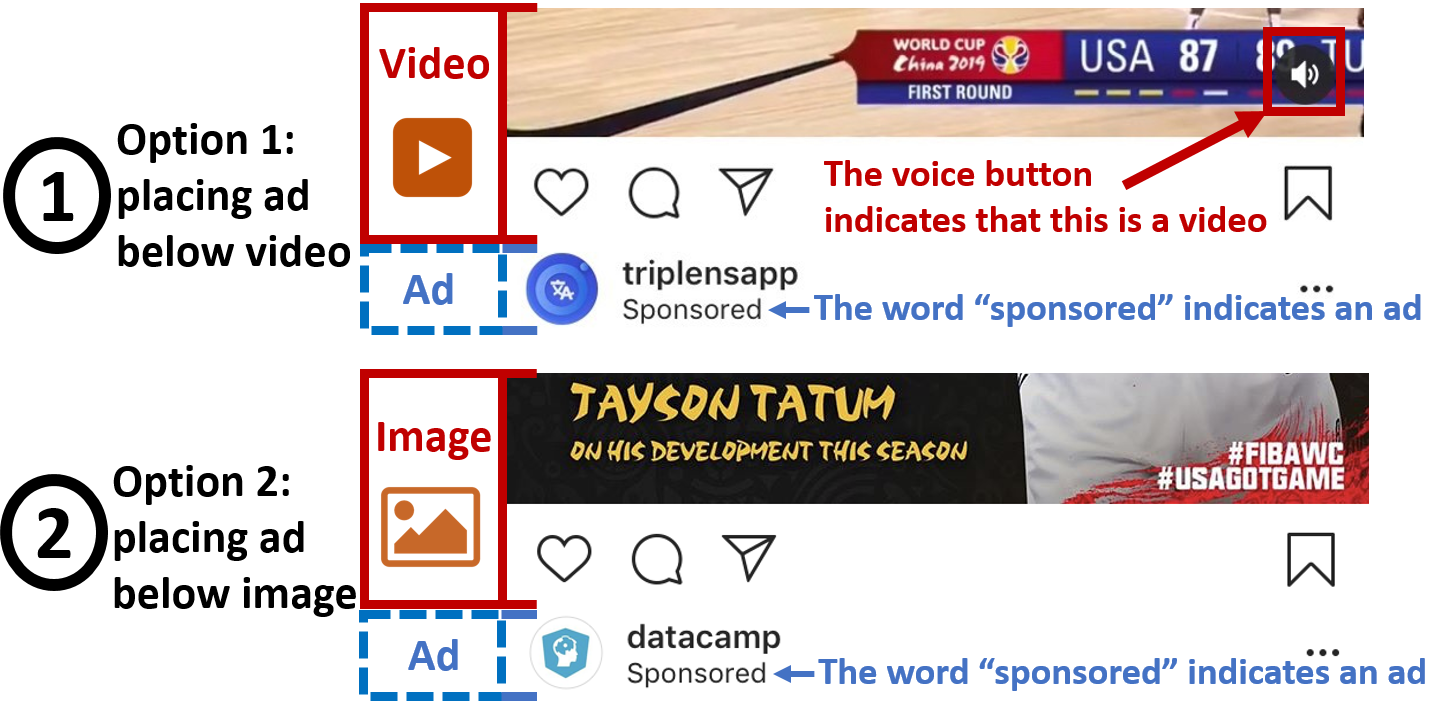} 
 \caption{In-feeds ad placement of Instagram}
 \label{fig:video_adv}
\end{figure}

\begin{comment}
\begin{figure}
  \includegraphics[width=1.04\textwidth]{causal_example2}
  \caption{ {\bf Four possible strategies for decisions in Example~\ref{example:Introduction:DecisionProb}. }
    (1) {\bf\em Empirical average} chooses the
    action with the highest average click rate in Table 3.
    (2) {\bf\em Causal inference} adjusts the fraction of users in each type (like
    videos or not) to be the
    fraction 50\% in the online setting.
    (3) {\bf\em A/B testing} randomly distribute the first 4,000 users to test the two
    actions, and then apply the action with higher click rate to the remaining
    6,000 users.
    (4) {\bf\em Our algorithm} uses the results of offline causal inference to
    {\em warm-start} the online learning algorithm (e.g. UCB) which makes adaptive
    online decisions.
  }
  \label{fig:four_strategies}
\end{figure}
\end{comment}

It is common that Internet companies have archived lots of logged data
which may assist decision making.
For example, Internet companies which sell in-feeds
advertisements have logs of advertisements' placement, as well as users' feedbacks
to these ads as illustrated in Table~\ref{tab:tabular_data}.  
The question is: {\em how to use these logs to make a better decision?} To
motivate this problem, consider Example~\ref{example:Introduction:DecisionProb}.    

\begin{table}[htb]
  \caption{Logged data of a company that sells in-feeds ads}
  \vspace{-0.1in}
  \centering
\makebox[0pt][l]{\raisebox{-2.5ex}{\hspace{-1.35in}{$\overbrace{\hspace{0.135\textwidth}}^{\text{\bf\normalsize action}}$
}}}
\makebox[0pt][l]{\raisebox{-2.5ex}{\hspace{-0.42in}{$\overbrace{\hspace{0.22\textwidth}}^{\text{\bf\normalsize contexts}}$
}}}
\makebox[0pt][l]{\raisebox{-2.5ex}{\hspace{1.12in}{$\overbrace{\hspace{0.065\textwidth}}^{\text{\bf\normalsize outcome}}$
}}}
  \begin{tabular}[t]{|c|c|c|c|c|c|} \hline
    {\small\bf ID} & \specialcell{\small\bf Ad below video?} & \specialcell{\small\bf User likes videos?} & {\small\bf Age} & ... & {\small\bf Click?} \\ \hline
    1 & no  & no & 30 & ... & no (0) \\\hline
    2 & yes  & yes & 20 & ... & yes (1) \\\hline
    {$\vdots$} & $\vdots$ & $\vdots$ & $\vdots$ & $\vdots$ & $\vdots$ \\\hline
  \end{tabular} 
  \label{tab:tabular_data} 
  \end{table}

\begin{example} 
10,000 new users will arrive to see the advertisement.  
The Internet company needs to decide whether to place the advertisement (ad) below a video or
below an image.  
The company wishes more clicks from these 10,000 new users.
Users are of two types --- users who ``like'' or users who ``dislike'' videos.   
For simplicity, assume 50\% of these new user likes (or dislikes) videos.  
The ``true click rates'' for each types of user, which are unknown to the company, are summarized in
Table~\ref{tab:settings_2types}. 
Furthermore, the company has a logged statistics of the past 400 users, half of whom like (or dislike) videos,
as shown in Table~\ref{tab:Introduction:SlickRateSum}.

\begin{table}[htb]
  \centering 
  \caption{True click rates of each type of user.  
  Action 2 ({\em ad below image}, with ``*'') is better for both types of users.}
  \vspace{-0.1in}
\begin{tabular}{|c||c|c|}
    \hline
\diagbox{\bf Action \#}{\bf User type} & \specialcell{\bf Like videos} & \specialcell{\bf Dislike videos} \\\hhline{|=#=|=|}
{\bf 1. Ad below video} & 11\% & 1\%  \\\hline
{\bf 2. Ad below image$^*$} & 14\%  & 4\%  \\\hline
  \end{tabular}
  \label{tab:settings_2types} 
\end{table}
\begin{table}[htb] 
 \centering 
\caption{Average click rate in logs of 400 users. 
In the logged data, users who like videos were more likely to see ads below videos, as they subscribed to more videos.}
  \vspace{-0.1in}
\begin{tabular}{|c||c|c|}
    \hline
\diagbox{\bf Action \#}{\bf User type} & \specialcell{\bf Like videos\\\bf (200 users)} & \specialcell{\bf Dislike videos\\\bf (200 users)} \\\hhline{|=#=|=|}
{\bf 1. Ad below video} & 10\% of 150 ads & 2\% of 50 ads \\\hline
{\bf 2. Ad below image$^*$} & 12\% of 50 ads & 4\% of 150 ads \\\hline
  \end{tabular}
  \label{tab:Introduction:SlickRateSum}
\end{table}

 \label{example:Introduction:DecisionProb}
\end{example}

{\NIPS
\noindent One may consider the following three strategies to make decisions.

  \noindent{\bf Empirical Average.} {\em The company chooses the action with the
  highest average click rate in the logged data to serve 10,000 incoming users.}
  For logs in Table~\ref{tab:Introduction:SlickRateSum}, the average
  click rate for ``ad below video'' is $(10\%{\times}
  150{+}2\%{\times}50)/(150{+}50){=}8\%$. Similarly, the average click rate is
  $6\%$ for ``ad below image''. Thus, the company chooses to place ``ad below
  video'' for the 10,000 incoming users. But it is the
  wrong action implied by the true click rates in Table~\ref{tab:settings_2types}. This method fails because it ignores users' {\em preferences to videos}.
        
  \noindent{\bf Offline causal inference.} {\em First, the company computes the average click
    rates w.r.t. each user type (as in
    Table~\ref{tab:Introduction:SlickRateSum}). Second, for each action, it computes
    the weighted average of such type-specific click rates where the weight is the
    fraction of users in each type. 
}
  For logs in Table~\ref{tab:Introduction:SlickRateSum}, 
the weighted average click rate for action 1 (ad below video) is
$10\% {\times} ({200}/{400}) + 2\% {\times} ({200}/{400}) {=} 6\%.$
Similarly, the weighted average click rate for action 2 is
$(12\%{+}4\%)/2{=}8\%$. Thus, the company chooses action 2 
based on the logged data in Table~\ref{tab:Introduction:SlickRateSum}. However, the
causal inference strategy has a risk of not finding the right action as the
logged data are only finite samples from the population. For example, in another sample statistics where the number of clicks for users
who dislike videos and see ad below video (the upper right cell in Table~\ref{tab:Introduction:SlickRateSum})
increases from $1$ (i.e. $2\%{\times} 50$) to $4$, the ``offline causal
inference'' strategy will then choose the inferior action of ``placing ad below video''.

\noindent{\bf Online A/B testing.}
{\em Each of the first 4,000 incoming users is randomly assigned to group A or B with equal probability.  
Users in group A see ads below videos (action 1), while users in group
B see ads below images (action 2). Then, the company selects the action with a higher average testing click rate for the remaining 6,000 users.}
In this A/B test, 2,000 testing users in group A suffer from the inferior
action.

The above three strategies have their own limitations. Taking the
``empirical average'' leads to a wrong decision by ignoring the important factor
of users' preferences. ``Offline
causal inference'' only uses the logged data and has a risk to make the wrong
decision due to the
incompleteness of the logged samples. ``A/B testing'' only uses the online data and pays a
{\em high cost} of testing the inferior actions. 
In this paper, we propose a novel
strategy which can use both the logged data and the online feedbacks.

\noindent{\bf Causal inference + online learning (our method).}
{\em 
The company applies offline causal inference to ``judiciously'' use the logged data to improve the
efficiency of an online learning algorithm. 
For example, UCB is used~\cite{auer2002finite} as the online learning
algorithm in Table~\ref{tab:example_revenue}.
}
}

\begin{table}[htb] 
  \centering
  \caption{The expected revenue(\$) of the four strategies over 10,000 users. 
  Suppose each click yields a revenue of \$1. The
    optimal expected revenue is \$900 (where the optimal action is to ``place videos
    below an image'').
    A strategy's {\em ``regret''} is the difference between the optimal revenue
    and its revenue.
  }
  \vspace{-0.1in}
\setlength{\tabcolsep}{3pt}
  \begin{tabular}{|c|c|c|c|c|}\hline
   {\bf Strategy} & \specialcell{\bf Empirical \\\bf average} &  \specialcell{\bf Causal \\\bf inference} & \specialcell{\bf A/B \\\bf testing} & \specialcell{\bf Our \\\bf method} \\ \hline
    \specialcell{\bf Expected Revenue} & 674.4  & 847.7 & 839.9 & {\bf 894.4}  \\\hline
    \specialcell{\bf Expected Regret}  & 225.6  & 52.3 & 60.1 & {\bf 5.6} \\\hline
  \end{tabular}
  \label{tab:example_revenue} 
\end{table}

Table~\ref{tab:example_revenue} shows that our algorithm achieves the highest
revenue for Example~\ref{example:Introduction:DecisionProb}. 
{\NIPS The key is to choose the appropriate data from the logged data to improve our decision making.}
Our contributions are:  

\noindent{\bf $\bullet$ A unified framework {with novel algorithms}.} 
We formulate a general online decision making problem, 
which utilizes logged data to improve both (1) {\em context-independent decisions}, and (2) {\em contextual decisions}.
{\NIPS Our framework unifies offline causal
  inference and online bandit algorithms. Our framework is generic enough to combine different causal inference
methods like matching and weighting~\cite{austin2011introduction}, and bandit
algorithms like UCB~\cite{auer2002finite} and LinUCB~\cite{li2010contextual}.  
This unification inspires us to extend the offline regression-forest to
an {\em``$\epsilon$-decreasing multi-action forest''} online learning algorithm.

\noindent{\bf $\bullet$ {Theoretical} regret bounds.}  
We derive regret upper bounds for algorithms in our framework.
            We show how the logged data can reduce the regret of online decisions.
    Moreover, we derive an asymptotic regret bound for the {\em ``$\epsilon$-decreasing
  multi-action forest''} algorithm. To the best of our knowledge, this is {the
    first regret analysis for a forest-based online bandit algorithm}.

\noindent{\bf $\bullet$ Extensive empirical evaluations.}
Experiments on synthetic data and real web datasets from Yahoo show 
that our algorithms that use both logged data and
  online feedbacks can make the right decision with the highest accuracy.
On the Yahoo's dataset, we reduce the regret by 21.1\%
compared to LinUCB of \cite{li2010contextual}.
Moreover, we show our algorithms outperforms the heuristics that uses supervised
learning algorithm to learn from offline data for decision making.

\section{Model \& Problem Formulation}

Our approach for the new online decision problem uses  
the logged data to improve online decision accuracy 
(more details in Section \ref{sec:framework}).  
Note that the observed logged data may have ``selection bias'' on
the actions, 
while in the online environment actions are chosen by the decision maker.  
This is why we need to find a formal approach 
to ``connect'' the logged data and the online data for correct usage.  

In this section, we first present the logged data model.  
Then we model the online environment.  
Finally, we present the online decision problem 
which aims to utilize both the logged data and online feedbacks to minimize the regret. 
 
\subsection{Model of Logged Data} 

We consider a tabular logged dataset (e.g., Table \ref{tab:tabular_data}), 
which was collected before the running of online decision algorithms.
The logged dataset has $I \in \mathbb{N}_+$ items, 
denoted by 
$
\mathcal{L} \triangleq \{ (a_{i}, \bm{x}_{i}, y_{i}) | i \in [-I] \}, 
$
where 
$(a_{i}, \bm{x}_{i}, y_{i})$ denotes the $i^{th}$ recorded data item
and 
$[-I] 
\triangleq \{-I, -I{+}1, \ldots, -1\}
$.
{
Here, we use {\em negative} indices to 
indicate that the logged data were collected in the past.  }  
 The action for data item $i$ is denoted as $a_i \in [K] \triangleq \{1, \ldots, K\}$, where $K \in \mathbb{N}_+$.  
The actions in the logged data can be generated according to the users'
natural behaviors or by the company's interventions. For example, option 1 and 2
in Figure~\ref{fig:video_adv} are actions.
The $y_i \in \mathcal{Y} \subseteq \mathbb{R}$ denotes the outcome (or reward).  
The $\bm{x}_i \triangleq (x_{i,1}, \ldots, x_{i,d}) \in \mathcal{X}$ 
denotes the contexts (or features) of data item $i$,
where $d \in \mathbb{N}_+$ and $\mathcal{X} \subseteq \mathbb{R}^d$.
The contexts are also known as {\em ``observed confounders''}~\cite{austin2011introduction}.   
We use $\bm{u}_i \triangleq (u_{i,1}, \ldots, u_{i, \ell}) \in \mathcal{U}$, 
where $\ell \in \mathbb{N}_+$ and $\mathcal{U} \subseteq \mathbb{R}^{\ell}$, 
to model the unobserved confounders.  
The $\bm{u}_i$ captures latent or hidden contexts, e.g., a user's monthly income.  

Now we introduce the generating process of the logged data.
For the $i^{th}$ user with context $\bm{x}_i$, let $A_i$ be the random
variable for the action of the $i^{th}$ user.
To capture the randomness of the outcome, 
{\color{black} let the random variable $Y_i (k)$ denote the outcome for the
  $i^{th}$ user if we had changed the action of the $i^{th}$ user to $k$.  
When $k\ne a_i$, $Y_i(k)$ is also called a {\em ``potential outcome''} in the causal
model~\cite{rubin2005causal} and it is not recorded in the logged data.}   
We have the following two assumptions, which are common for causal inference~\cite{rubin2005causal}.
 
\begin{assumption}[Stable unit for logged data]
  The potential outcome of a data item is independent of
the actions of other data items, i.e. 
$
\MBP[Y_i(k){=}y | A_i{=}a_i, A_j{=}a_j] = \MBP[Y_i(k){=}y|A_i{=}a_i]
$,
$\forall i {\in} [-I], j {\neq} i$.
  \label{asum:StableUnit:offline}
\end{assumption}

\begin{assumption}[Ignorability]
The potential outcomes of a data item $i$ are independent of the action $a_i$ given the
context $\bm{x}_i$ (so that we can ignore $\bm{u}_i$'s impacts), i.e.
$
  [ Y_i(1), \ldots, Y_i(K) ] \independent A_i | \bm{x}_i, \forall{i\in [-I]}. 
$
 \label{assumption:ignorability}
\end{assumption}

\noindent
Assumption~\ref{assumption:ignorability} holds in Example~\ref{example:Introduction:DecisionProb}
since the decision maker observes users' {\em preferences to videos} which
determine the users' types. In
Table~\ref{tab:settings_2types}, each type of users have a fixed click rates for
the actions,
which are independent of action.

\subsection{Model of Online Decision Environment}
 
Consider a discrete time system $t \in [T]$, 
where $T \in \mathbb{N}_+$ and $[T] \triangleq \{1,\ldots,T\}$.  
In time slot $t$, one new user arrives, 
and she is associated with the context $\bm{x}_t\in \CX$ 
and unobserved confounders $\bm{u}_t\in \CU$.
Then, the decision maker chooses an action $a_t\in [K]$, and observes the
outcome (or reward) $y_t$ corresponding to this chosen action.

Consider that the confounders 
$(\bm{x}_t, \bm{u}_t)$ 
are independent and identically generated by a cumulative distribution
  function 
$
F_{\bm{X}, \bm{U}} (\bm{x}, \bm{u})
\triangleq 
\mathbb{P} [\bm{X} \leq \bm{x}, \bm{U} \leq \bm{u} ], 
  $
where $\bm{X} \in \mathcal{X}$ and $\bm{U} \in \mathcal{U}$ 
denote two random variables.   
The distribution $F_{\bm{X}, \bm{U}} (\bm{x}, \bm{u})$ characterizes  
the joint distribution of the confounders over the whole user population.  
If we marginalize over $\bm{u}$, then the observed confounders $\bm{x}_t$ 
are independently identically generated from the marginal distribution  
$F_{ \bm{X} } (\bm{x}) \triangleq \mathbb{P}[\bm{X} \leq \bm{x}]$.   
Let the random variable $Y_t(k)$ denote
the outcome of taking action $k$ in time slot $t$.  
 
\begin{assumption}[Stable unit for online model]
  The outcome $Y_t(k)$ in time $t$ is 
independent of the actions in other time slots, i.e.
 \begin{align}
\MBP[Y_t(k){=}y | A_t{=}a_t, A_s{=}a_s] = \MBP[Y_t(k){=}y|A_t{=}a_t],
\forall t {\in} [T], s {\neq} t.
  \label{eq:assum_stable_online}
\end{align}
\label{asum:StableUnit:online}
\vspace{-0.2in}
\end{assumption}
  
In the online setting, before the decision maker chooses the
action, the distributions of the ``potential outcomes'' [$Y_t(1), \cdots, Y_t(K)$] are determined
given the confounders $(\bm{x}_t,\bm{u}_t)$. Moreover, as the unobserved confounders
$\bm{u}_t$ are i.i.d. in different time slots, the potential outcomes are
independent of how we select the action, given the user's context $\bm{x}_t$. Formally, we have the following property.

\begin{property}
The potential outcomes in time slot $t$ satisfies 
\begin{align}
 [ Y_t(1), \ldots, Y_t(K) ] \independent A_t | \bm{x}_t, \forall{t\in [T]}. 
  \label{eq:property_indep}
\end{align} 
   \label{property:ignorability}
\vspace{-0.2in}
\end{property}

One can see that Assumption \ref{asum:StableUnit:offline} and
\ref{assumption:ignorability} for the logged data correspond to Assumption
\ref{asum:StableUnit:online} and Property \ref{property:ignorability} for the
online decision model. This way, we can {\em``connect''} the logged data with the online
decision environment. 
Figure \ref{Fig:Model:LogOnline} summarizes our models of logged data and the
online feedbacks.  

\begin{figure}[htb] 
  \centering
  \includegraphics[width=0.48\textwidth]{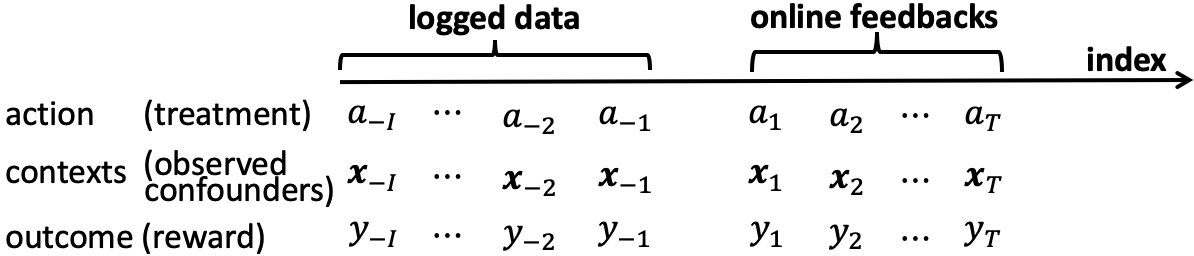} 
  \caption{\bf Summary of logged data and online feedbacks} 
  \label{Fig:Model:LogOnline} 
  \vspace{-0.2in}
\end{figure}

\subsection{ Online Decision Problems }  
\label{sec:DecisionProblem}

The decision maker selects an action in each time slot. 
We consider two
kinds of online decision problems depending on whether users with different contexts
can be treated differently or not.

\noindent{\bf $\bullet$ Context-independent decision problem. }
Consider the setting where a company makes a context-independent decision for all users.  
In causal inference, this setting corresponds
to the estimation of ``average treatment effect''~\cite{rubin2005causal}. In online learning, this
setting corresponds to the ``stochastic multi-armed bandit'' problem~\cite{kuleshov2014algorithms}.  
In time slot $t$, the decision maker can use the logged data $\mathcal{L}$
and the feedback history  
$
\mathcal{F}_t 
{\triangleq} \{(a_1, \bm{x}_1, y_1), \cdots, (a_{t-1}, \bm{x}_{t-1}, y_{t-1})\}.   
$ 
Let $\mathcal{E}$ denote an {\em``offline evaluator''} (e.g., an offline causal
inference algorithm), 
which synthesizes feedbacks from the logged data $\mathcal{L}$.
Let $\mathcal{O}$ denote an online context-independent bandit learning algorithm.   
We defer the details of $\mathcal{E}$ and $\mathcal{O}$ to Section \ref{sec:context_independent}.  
Let $\mathcal{A}_{\mathcal{O} +  \mathcal{E}} (\cdot, \cdot)$ denote an algorithm 
that combines $\mathcal{O}$ and $\mathcal{E}$ to make online context-independent decisions, 
i.e., 
$
a_t {=} \mathcal{A}_{\mathcal{O} +  \mathcal{E}} (\mathcal{L}, \mathcal{F}_t ).  
$  
The decision accuracy is quantified the following pseudo-regret: 
\begin{align}
R(T, \mathcal{A}_{\mathcal{O} +  \mathcal{E}}) 
\triangleq
\sum_{t=1}^T \left(  \mathbb{E}[y_t|a^\ast] - \mathbb{E}[y_t| 
a_t {=} \mathcal{A}_{\mathcal{O} +  \mathcal{E}} (\mathcal{L}, \mathcal{F}_t )] \right),
  \vspace{-0.1in}
  \label{eq:regret_population}
\end{align}  
where $a^\ast \triangleq \arg\max\nolimits_{a\in[K]} \mathbb{E}[y_t | a_t{=}a]$ denotes the
optimal action.
 
\noindent
{\bf $\bullet$ Context-dependent decision problem. }
Consider that a company can make different decisions for users coming with different contexts.   
Let $\mathcal{O}_c$ denote an online contextual bandit learning algorithm.   
Let $\mathcal{A}_{\mathcal{O}_c +  \mathcal{E}} (\cdot, \cdot, \cdot)$ denote an algorithm, 
that combines $\mathcal{O}_c$ and $\mathcal{E}$ to make online contextual decisions, 
i.e., 
$a_t {=} \mathcal{A}_{\mathcal{O}_c +  \mathcal{E}}(\mathcal{L}, \mathcal{F}_t, \bm{x}_t)$.   
Given $\bm{x}_t$, the {\em unknown} optimal action is 
$
a_t^\ast{\triangleq}\max_{a\in[K]}\mathbb{E}[y_t|a,\bm{x}_t].
$
The decision accuracy is quantified the following pseudo-regret: 
\[
R(T, \mathcal{A}_{\mathcal{O}_c +  \mathcal{E}}) 
{\triangleq}\!\! 
\sum_{t=1}^T  
\!\!
\left( 
\mathbb{E}[y_t|a_t^\ast,\bm{x}_t] 
{-}
\mathbb{E}[y_t|a_t {=}\mathcal{A}_{\mathcal{O}_c +  \mathcal{E}} (\mathcal{L}, \mathcal{F}_t, \bm{x}_t),\bm{x}_t]  
\right).  
\] 
This paper aims to develop a generic framework to combine 
different bandit learning algorithms $\CO$, $\mathcal{O}_c$, and offline evaluator $\mathcal{E}$ 
to make decisions with provable theoretical guarantee on the regret.     

In the following sections, we explore the following questions: 
\textit{
(1) How to combine offline evaluator $\mathcal{E}$ with online bandit learning $\mathcal{O}$ or $\mathcal{O}_c$? 
(2) How to prove bounds on the decision maker's regrets?  (3) What are the advantages of our
methods on real decision problems?
}

\section{General Algorithmic Framework}
\label{sec:framework}

We first develop a general algorithmic framework 
to combine offline evaluators ($\mathcal{E}$) with online bandit learning algorithms ($\mathcal{O}$ and $\mathcal{O}_c$).  
Then, we present regret bounds for the proposed framework.

\subsection{Algorithmic Framework}

  \begin{figure}[htb] 
  \centering
  \includegraphics[width=0.4\textwidth]{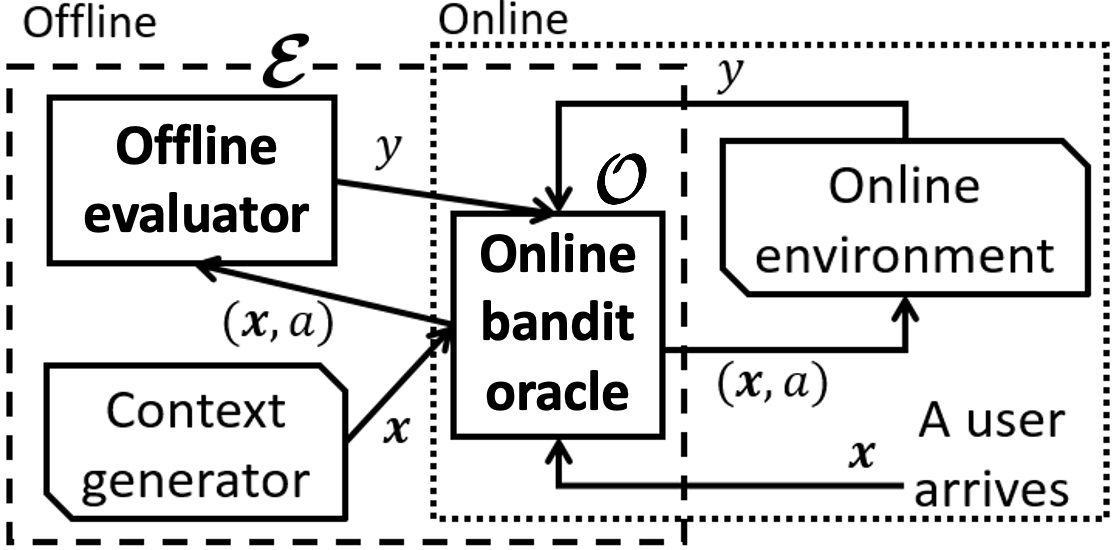}
  \caption{Illustration of algorithmic framework.
{\it \color{black} Online bandit oracle}
  has two functions: 
function {\it play}$(\bm{x})$ returns an action $a$ given a
 context $\bm{x}$;
function {\it update}$(\bm{x}, a, y)$ updates the oracle
 with the feedback $y$ w.r.t. action $a$, under the context $\bm{x}$.
{\it Offline evaluator}  
has one function {\it get\_outcome}$(\bm{x},a)$ that searches the logged data 
and returns a ``synthetic outcome'' $y$ given the pair $(\bm{x},a)$, where
the return value $y=\text{NULL}$
if the offline evaluator is not able to synthesize a feedback.  
  } 
  \label{fig:framework}
\end{figure}

{\NIPS
  The key idea of our framework is to select ``appropriate'' data from the log to
  improve online learning. This is achieved via the idea of ``virtual play''.
}
Figure~\ref{fig:framework} illustrates the workflow of our framework.  
The {\em ``BanditOracle''} $\CO$ denotes an online learning algorithm. 
The {\em``OfflineEvaluator''} $\CE$ denotes an algorithm that 
synthesizes feedbacks from the log.
Algorithm~\ref{alg:framework} shows how to coordinate these two components to 
make sequential decisions in $T$ rounds.  
Each round has an offline phase and an online phase.  
In the offline phase (Line 4-11), 
we first generate a context according to the CDF $F_{\bm{X}}(\cdot)$\footnote{In practice, the CDF is usually {\em unknown} but can be estimated with
  convergence guarantee (\cite{jiang2017uniform}). We will discuss using
  empirical context distribution in Section~\ref{sec:experiments}.}.
Then, we get an action
from the {\em BanditOracle}. 
The {\em OfflineEvaluator} returns a synthetic
feedback to update the {\em BanditOracle}.  
We repeat such procedure until the {\em OfflineEvaluator} 
cannot synthesize a feedback.  
When this happens, we turn to the online phase (Line 12-14), 
where the same {\em BanditOracle} chooses the action, 
and updates itself with online feedbacks.

 \begin{algorithm}[htb]
  \caption{\bf General Algorithmic Framework}\label{alg:framework}
    Initialize the \textit{OfflineEvaluator} with logged data $\CL$\\
  Initialize the \textit{BanditOracle}\\
    \For{$t=1$ to $T$}{
    \While{True}{
     $\bm{x} {\gets} \textit{context\_generator()}$ \mycomment{from CDF $F_{\bm{X}}(\cdot)$}\\
      $a \gets \textit{BanditOracle.}\textbf{play}(\bm{x})$
      \mycomment{virtual play}\\
      $y \gets \textit{OfflineEvaluator.}\textbf{get\_outcome}(\bm{x}, a)$\\
   \If{$y \ne$ {\em NULL} }{
        \textit{BanditOracle.}\textbf{update}$(\bm{x}, a, y)$
}\Else(\mycomment{offline evaluator cannot synthesize a feedback}){
        \textbf{break} 
      }
    }
       $a_t \gets \textit{BanditOracle.}\textbf{play}(\bm{x}_t)$
    \mycomment{online play} \\
    $y_t\gets$ the outcome from the online environment \\
    \textit{BanditOracle.}\textbf{update}$(\bm{x}_t, a_t, y_t)$
  } 
\end{algorithm}

\noindent{\bf Unifying causal inference and online bandit learning.}
Both online bandit algorithms and causal inference
algorithms are special cases of our framework.
First, if there are no logged data, then the offline evaluator cannot synthesize
feedbacks and always returns {``NULL''}. We use $\CE_\emptyset$ to denote such
offline evaluator that always returns ``NULL''.
Then, our framework always calls the online bandit oracle, and
it reduces to an online bandit algorithm.
Second, we consider a specific A/B test online learning oracle described in
BanditOracle~\ref{alg:abtest_oracle}, and we let $T{=}1$.  
Then, after the offline phase, the estimated outcome $\bar{y}_a$ can be used to estimate the causal effect. 
In this case, our framework reduces to a causal inference algorithm.
 
\setcounter{bandit}{-1}
\begin{bandit}
    \caption{\bf A/B Testing}\label{alg:abtest_oracle}
  {\bf Member variables:} the average outcome $\bar{y}_a$ of each action 
 $a{\in}[K]$, and the number of times $n_a$ that action $a$ was played.\\
  \SetKwFunction{FPlay}{{\bf play}}
  \MFn{\FPlay{$\bm{x}$}}{
    \Return $a$ with probability $1/K$ for each $a\in[K]$
  }
  \SetKwFunction{FUpdate}{{\bf update}}
  \MFn{\FUpdate{$\bm{x}, a, y$}}{
    $\bar{y}_a \gets (n_a\bar{y}_a+y) / (n_a+1)$,~~~~  $n_a \gets n_a + 1$ 
  }
\end{bandit}

\subsection{Regret Analysis Framework} 

We decompose the regret of Algorithm~\ref{alg:framework} as  
\textit{``online regret = total regret - regret of virtual plays''}.  
The intuition is that among all the decisions of the online bandit oracle, 
there are ``virtual plays'' whose feedbacks are simulated 
from {\em the logged data}, and ``online
plays'' whose feedbacks are from {\em the real online environment}.
The online bandit oracle cannot distinguish the ``virtual plays'' from ``online plays''.    
Thus we can apply the theories of the online bandit
oracles~(e.g. \cite{auer2002finite}\cite{li2010contextual}\cite{agrawal2012analysis}) to bound the {\em total regret}.
By subtracting the {\em regret
of virtual plays}, we get the bound for {\em online regret}.   
 
\begin{restatable}[General upper bound]{theorem}{generalUpperBound}
 \label{thm:general_upper_bound}
Suppose there exist $g(T)$ and $g_c(T)$, such that 
$R(T, \mathcal{A}_{\CO + \CE_{\emptyset}}){\le} g(T)$, 
and $R(T, \mathcal{A}_{\CO_c + \CE_\emptyset}){\le} g_c(T), \forall T$. 
Denote the returns of the offline evaluator till time $T$ as 
$\{\tilde{y}_j\}_{j=1}^N$ w.r.t. input $\{(\tilde{\bm{x}}_j{,} \tilde{a}_j)\}_{j=1}^N$. 
If $\mathcal{E}$ satisfies 
$\MBE[\CE\text{.\it{get\_outcome}}(\bm{x},a)]{=}\MBE[y|a]$, 
then  
 \begin{align}
   \label{eq:general_upper_bound}
  R(T, \CA_{\CO + \CE}) {\le} g(T{+}N) {-} \sum\nolimits_{j=1}^N\left( \max_{a^\prime\in[K]}\MBE[y|a^\prime] {-} \MBE[y|a=\tilde{a}_j] \right).
  \end{align}
If $\mathcal{E}$ satisfies
$\MBE[\CE\text{.\it{get\_outcome}}(\bm{x},a)]{=}\MBE[y|a,\bm{x}]$ contextually, 
then 
 \begin{align*}
R(T, \CA_{\CO_c {+} \CE}) {\le} g_c(T{+}N) {-}
\hspace{-0.03in} \sum\nolimits_{j{=}1}^N
\hspace{-0.00in} \left(  \hspace{-0.00in}\max_{a^\prime\in[K]}\MBE[y|a^\prime,\tilde{\bm{x}}_j] {-} \MBE[y|a{=}\tilde{a}_j\hspace{-0.01in}{,} \tilde{\bm{x}}_j] \hspace{-0.00in}\right) \hspace{-0.00in}. 
\end{align*} 
\end{restatable}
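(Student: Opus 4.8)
The plan is to exploit the ``virtual play'' mechanism: the \textit{BanditOracle} $\CO$ is driven only by its \textbf{play} and \textbf{update} calls and has no way to tell whether a given feedback came from the \textit{OfflineEvaluator} $\CE$ (a virtual play) or from the real online environment (an online play). Hence the entire execution of Algorithm~\ref{alg:framework} looks, from the oracle's side, like a single run of $\CO$ over a horizon of $T+N$ steps in which the $N$ virtual plays and the $T$ online plays are interleaved. I would first fix the realized value of $N$ and the realized virtual inputs $\{(\tilde{\bm{x}}_j,\tilde{a}_j)\}_{j=1}^N$ and reason conditionally on them, so that all regret quantities below are understood in expectation over the oracle's internal randomness.

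The first substantive step is to show that the expected \emph{total} pseudo-regret accumulated over all $T+N$ plays is at most $g(T+N)$. The hypothesis $R(T,\CA_{\CO+\CE_{\emptyset}})\le g(T)$ says that $g$ bounds the regret of $\CO$ whenever it is fed i.i.d.\ contexts from $F_{\bm{X}}$ together with feedbacks whose conditional mean is $\MBE[y|a]$. The context generator in Line~5 supplies virtual contexts from exactly this CDF, and the assumption $\MBE[\CE.\textit{get\_outcome}(\bm{x},a)]=\MBE[y|a]$ guarantees that the virtual feedbacks carry the same conditional mean as online feedbacks. Therefore the combined $(T{+}N)$-step process satisfies the very conditions under which $g$ was derived, and the oracle's own regret analysis applies verbatim to give total regret $\le g(T+N)$.

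Next I would split this total regret along the two kinds of plays. Writing $\mu^\ast \triangleq \max_{a'\in[K]}\MBE[y|a']$, the total pseudo-regret decomposes as
\begin{align*}
\underbrace{\sum_{t=1}^T\big(\mu^\ast-\MBE[y|a_t]\big)}_{\text{online regret } = R(T,\CA_{\CO+\CE})}
\;+\;
\underbrace{\sum_{j=1}^N\big(\mu^\ast-\MBE[y|a=\tilde{a}_j]\big)}_{\text{regret of virtual plays}}
\;\le\; g(T+N).
\end{align*}
Solving for the online term yields exactly~\eqref{eq:general_upper_bound}. Intuitively, the virtual plays ``pre-pay'' part of the regret budget $g(T+N)$ using only logged data, so the online regret is reduced by precisely the regret the oracle would have incurred on those plays.

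The contextual statement is proved by the identical argument, replacing every unconditional mean $\MBE[y|a]$ by the conditional mean $\MBE[y|a,\bm{x}]$, using the contextual unbiasedness $\MBE[\CE.\textit{get\_outcome}(\bm{x},a)]=\MBE[y|a,\bm{x}]$ and the bound $g_c$. The main obstacle is the justification in the second paragraph: one must argue rigorously that a regret guarantee proven for the purely online oracle still holds when a fraction of its updates are synthetic, which requires that the virtual feedbacks match the online ones in whatever sense the oracle's analysis actually uses (conditional mean suffices for pseudo-regret, but concentration-based oracles such as UCB/LinUCB may also need boundedness or sub-Gaussianity of the synthetic outcomes). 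A secondary subtlety is that $N$ is itself a random stopping value determined by $\CE$; conditioning on $N$ as above and noting that the bound holds for every realization handles this, though care is needed if $g$ is only an expected-regret bound, in which case monotonicity of $g$ together with a deterministic cap on $N$ from the finite log closes the gap.
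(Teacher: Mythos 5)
Your proof takes essentially the same route as the paper's: view the $T+N$ plays as a single indistinguishable run of the oracle, bound the total pseudo-regret by $g(T{+}N)$ using the unbiasedness of $\CE$, and subtract the regret of the $N$ virtual plays via the decomposition \textit{total regret = online regret + regret of virtual plays}, with the contextual case obtained by replacing $\MBE[y|a]$ with $\MBE[y|a,\bm{x}]$. The caveats you flag at the end (that the oracle's guarantee must depend only on properties the synthetic feedbacks actually share, and that $N$ is random) are exactly the points the paper itself disposes of via a footnote assuming $g(T)$ depends only on the arms' expected rewards, so your treatment is, if anything, slightly more careful than the original.
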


\noindent 
\textit{Due to page limit, all proofs are presented in the supplementary materials~\cite{supplement}. }
In Inequality (\ref{eq:general_upper_bound}), $g(T{+}N)$ is the upper bound of {\em
 total regret}, and $\sum\nolimits_{j=1}^N\left( \max_{a^\prime\in[K]}\MBE[y|a^\prime] {-}
  \MBE[y|a=\tilde{a}_j] \right)$ is the {\em regret of virtual
  plays}.  
The condition $\MBE[\CE\text{.\it{get\_outcome}}(\bm{x},a)]{=}\MBE[y|a]$ 
(or $\MBE[\CE\text{.\it{get\_outcome}}(\bm{x},a)]{=}\MBE[y|a,\bm{x}]$ ) 
implies that the offline evaluator $\CE$ returns unbiased context-independent (or contextual) outcomes.
 Using similar regret decomposition, we also derive a regret lower
bound with logged data in our supplementary material~\cite{supplement}.

\section{Case Study I: Context-independent Decision}
\label{sec:context_independent}

To demonstrate the versatility of our algorithmic framework for context-independent decisions, 
we start with a case of using UCB and exact matching in our framework.  
Then we extend the offline evaluator  
from exact matching to propensity score matching, and weighting method like inverse propensity score weighting.   
Finally, we study the case when Assumptions~\ref{asum:StableUnit:offline} and \ref{assumption:ignorability} do not hold.

\subsection{Warm-up: UCB +  Exact Matching  }

To illustrate Algorithm~\ref{alg:framework}, 
let us start with an instance that uses UCB~\cite{auer2002finite}
(BanditOracle~\ref{alg:ucb}) as the online bandit oracle
and the ``exact
matching'' causal inference algorithm~\cite{stuart2010matching} (OfflineEvaluator~\ref{alg:exact_match}) as
the offline evaluator.   
We denote this instance of Algorithm~\ref{alg:framework} as $\CA_{\text{UCB+EM}}$. 
In each round, BanditOracle~\ref{alg:ucb} selects an action with the maximum 
upper confidence bound defined as 
$\bar{y}_a{+}\beta\sqrt{{2\ln(n)}/{n_a}}$, 
where 
$\bar{y}_a$ is the average outcome, 
$\beta$ is a constant, and $n_a$ is the
number of times that an action $a$ was played. 
OfflineEvaluator~\ref{alg:exact_match} searches for 
a data item in log $\CL$ with the exact same context $\bm{x}$ and action
$a$, and returns the outcome $y$ of that data item.  
If it cannot find a matched data item for an action $a$,
it stops the matching process for the action $a$.  
The stop of matching is to ensure
that the synthetic feedbacks simulate the online feedbacks correctly.  

\begin{bandit}
    \caption{\bf {UCB} \cite{auer2002finite} }\label{alg:ucb}
    {\bf Variables:} the average outcome $\bar{y}_a$ of each action
$a{\in}[K]$, number of times $n_a$ action $a$ was played.\\
  \SetKwFunction{FPlay}{{\bf play}}
  \Fn{\FPlay{$\bm{x}$}}{
    \vspace{-0.05in}
    \textbf{return} $\arg\max\limits_{a\in[K]} \bar{y}_a{+}\beta \sqrt{\frac{2\ln(\sum_{a\in[K]}n_a)}{n_a}}$
  }
  \SetKwFunction{FUpdate}{{\bf update}}
  \Fn{\FUpdate{$\bm{x}, a, y$}}{     $\bar{y}_a \gets {(n_a\bar{y}_a{+}y)} / {(n_a{+}1)}$,~~~~  $n_a \gets n_a {+} 1$ 
  }
\end{bandit}

\begin{evaluator}
    \caption{\bf Exact Matching (EM) \cite{stuart2010matching}}\label{alg:exact_match}
  \SetKwFunction{FExactMatching}{{\bf get\_outcome}}
{\bf Member variables}: $S_a{\in}\{False,True\}$ indicates whether we stop matching
for action $a$, initially $S_a{\gets} False, \forall a{\in}[K]$.
  
  \MFn{\FExactMatching{$\bm{x}$, $a$}}{
    \If{$S_a=False$}{
      $\CI(\bm{x}, a)\gets \{i~|~\bm{x}_i=\bm{x}, a_i=a\}$\\
      \If{$\CI(\bm{x}, a) \ne \emptyset$}{
        $i\gets$ a random sample from $\CI(\bm{x},a)$\\
        $\CL\gets \CL \backslash \{(a_i,\bm{x}_i,y_i)\}$\\
        \Return $y_i$
      }
                }
          $S_a\gets True$
\mycomment{If we can't find a sample for the action $a$, i.e. $\CI(\bm{x},a){=}\emptyset$, stop matching for $a$}\\
      \Return NULL
                
      }
\end{evaluator}

Applying Theorem \ref{thm:general_upper_bound}, 
we present the regret upper bound of $\CA_{\text{UCB+EM}}$ 
in the following theorem.  

\begin{restatable}[UCB+Exact matching]{theorem}{EMUCB} 
Suppose there are $C \in \mathbb{N}_+$ possible categories 
of users' features denoted by $\bm{x}^1, \ldots, \bm{x}^C$.  
Denote ${\MBP}[\bm{x}^c]$ as the probability for an online user to have context $\bm{x}^c$.
Recall $a^\ast{=} 
\arg\max_{\tilde{a}\in[K]}\MBE[y|\tilde{a}]$ and denote $\Delta_a\triangleq \MBE[y|a^\ast]-\MBE[y|a]$.
Let $N(\bm{x}^c,a){\triangleq} \sum_{i\in[-I]}
\indicator{\bm{x}_i{=}\bm{x}^c,a_i{=}a}$ be the number of samples with context
$\bm{x}^c$ and action $a$.
Suppose the reward $y\in [0,1]$. Then,
\begin{align*}
  {R}(T, & \CA_{\text{UCB+EM}}) \le 
\sum\nolimits_{a\ne a^\ast} 
 \Delta_a
\left( 
1{+}\frac{\pi^2}{3}  \right. \\
  &\left. + \sum\nolimits_{c\in[C]}   
  \max\left\{0{,} 8\frac{\ln(T{+}A)}{\Delta_a^2}  
  {\MBP}[\bm{x}^c] {-} \hspace{-0.03in}
  \min_{\tilde{c}\in[C]} \hspace{-0.03in} \frac{ N(\bm{x}^{\tilde{c}}\hspace{-0.03in}{,}a)\MBP[\bm{x}^c]}{\MBP[\bm{x}^{\tilde{c}}]}\right\}  \right),
\end{align*}
where $A$ is derived as: 
  \[
    A  = N{-} \hspace{-0.05in}
    \sum_{a\ne a^\ast} \hspace{-0.03in}\sum_{c\in[C]} \hspace{-0.03in}
    \max \hspace{-0.03in} \left\{0,
      N(\bm{x}^{{c}}{,}a)   {-} (8\frac{\ln(T{+}N)}{\Delta_a^2}{+}1{+}\frac{\pi^2}{3}){\MBP}[\bm{x}^c]\right\}.
  \] 
  \label{mthm:exact_matching}
\end{restatable}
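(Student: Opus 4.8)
The plan is to instantiate Theorem~\ref{thm:general_upper_bound} with UCB (BanditOracle~\ref{alg:ucb}) and exact matching (OfflineEvaluator~\ref{alg:exact_match}), and then sharpen its scalar regret decomposition into a per-context count. First I would record the UCB guarantee of \cite{auer2002finite}: over a horizon of $T'$ pulls, the expected number of pulls of a suboptimal action $a$ is at most $n_a^{\max}(T')\triangleq 8\ln(T')/\Delta_a^2+1+\pi^2/3$, so $g(T')=\sum_{a\ne a^\ast}\Delta_a\,n_a^{\max}(T')$. To invoke the theorem I would check its unbiasedness hypothesis: each virtual context is drawn from $F_{\bm X}$ and OfflineEvaluator~\ref{alg:exact_match} returns the logged outcome of an item with exactly that context and action, so by ignorability (Assumption~\ref{assumption:ignorability}) $\MBE[\CE.\textit{get\_outcome}(\bm x,a)]=\MBE_{\bm x\sim F_{\bm X}}\!\big[\MBE[y\mid a,\bm x]\big]=\MBE[y\mid a]$. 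Thus the interleaved stream of $N$ virtual and $T$ online pulls is a valid UCB instance with the correct per-action means and rewards in $[0,1]$, and the total number of pulls of a suboptimal $a$ is at most $n_a^{\max}(T+N)$.

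The heart of the proof is to count online pulls of each suboptimal action by context. Writing $p_c\triangleq\MBP[\bm x^c]$, let $M_{a,c}$ be the virtual pulls of $a$ matched to context $\bm x^c$ and $n^{\mathrm{on}}_{a,c}$ the online ones. Because context-independent UCB chooses its action without looking at the freshly drawn context while every context is i.i.d.\ from $F_{\bm X}$, the pulls of $a$ split across contexts in expectation in proportion $p_c$; together with the total-pull bound this yields $\MBE[M_{a,c}+n^{\mathrm{on}}_{a,c}]\le n_a^{\max}(T+N)\,p_c$, hence $\MBE[n^{\mathrm{on}}_{a,c}]\le\max\{0,\ n_a^{\max}(T+N)\,p_c-\MBE[M_{a,c}]\}$. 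Since the online regret equals $\sum_{a\ne a^\ast}\Delta_a\sum_c\MBE[n^{\mathrm{on}}_{a,c}]$, everything reduces to a lower bound on $\MBE[M_{a,c}]$ and a clean-up of the horizon term.

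For $\MBE[M_{a,c}]$ I would analyze the matching for a fixed action $a$ in isolation: consecutive virtual pulls of $a$ draw contexts i.i.d.\ from $F_{\bm X}$ and consume one $(\bm x^c,a)$ sample per hit, and by the stopping rule of OfflineEvaluator~\ref{alg:exact_match} the process halts the first time it draws a context whose samples are already exhausted. Viewing each context $c$ as a bin with $N(\bm x^{c},a)$ tokens, the bottleneck bin $\tilde c^\ast=\arg\min_{\tilde c}N(\bm x^{\tilde c},a)/p_{\tilde c}$ is the first to empty; until it does, every draw is a hit, and because $N(\bm x^{c},a)\ge p_c\min_{\tilde c}N(\bm x^{\tilde c},a)/p_{\tilde c}$ no other bin can empty earlier, so bin $c$ is hit at least $p_c\min_{\tilde c}N(\bm x^{\tilde c},a)/p_{\tilde c}$ times. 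This gives $\MBE[M_{a,c}]\ge\min_{\tilde c}N(\bm x^{\tilde c},a)\,p_c/p_{\tilde c}$, the subtracted term inside the claimed $\max$.

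Finally I would assemble the bound. Substituting the lower bound on $\MBE[M_{a,c}]$ and splitting $n_a^{\max}=8\ln(\cdot)/\Delta_a^2+(1+\pi^2/3)$ through $\max\{0,X+Y\}\le\max\{0,X\}+Y$ with $Y=(1+\pi^2/3)p_c\ge0$, the constant contributions sum to $1+\pi^2/3$ since $\sum_c p_c=1$, producing precisely the summand $\Delta_a\big(1+\pi^2/3+\sum_c\max\{0,\ 8\ln(T+\cdot)/\Delta_a^2\,p_c-\min_{\tilde c}N(\bm x^{\tilde c},a)p_c/p_{\tilde c}\}\big)$. To turn the random horizon $\ln(T+N)$ into the computable $\ln(T+A)$, I would bound the virtual pulls themselves: $M_{a,c}\le\min\{N(\bm x^{c},a),\,n_a^{\max}(T+N)p_c\}$ for $a\ne a^\ast$ and $M_{a^\ast}\le\sum_c N(\bm x^{c},a^\ast)$, so summing and over-estimating the interior horizon by the log size yields $N\le A$; monotonicity of $\ln$ then makes the replacement an upper bound. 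The main obstacle is the matching-stopping step of the previous paragraph: the stopping time and the per-bin hit counts are dependent, so converting the informal ``every draw before the bottleneck empties is a hit'' into a rigorous statement about $\MBE[M_{a,c}]$ needs a concentration or coupling argument bounding the chance that some non-bottleneck bin empties prematurely, from which the leading-order quantity $\min_{\tilde c}N(\bm x^{\tilde c},a)p_c/p_{\tilde c}$ is extracted.
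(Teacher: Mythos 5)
Your proposal is correct and follows essentially the same route as the paper's proof: decompose regret as total minus virtual plays, apply the Auer et al.\ UCB bound to the interleaved stream of $T+N$ pulls, lower-bound the per-context matched plays by the bottleneck quantity $\min_{\tilde{c}} N(\bm{x}^{\tilde{c}},a)\MBP[\bm{x}^c]/\MBP[\bm{x}^{\tilde{c}}]$, and upper-bound the total number of virtual plays by $A$ to sharpen $\ln(T+N)$ to $\ln(T+A)$. The one ``obstacle'' you flag --- rigorously justifying that each bin is hit in proportion $\MBP[\bm{x}^c]$ up to the stopping time despite the dependence between hit counts and the stopping event --- is treated no more rigorously in the paper, which simply asserts the proportionality $\MBE[M(\bm{x}^c,a)]/\MBE[M(\bm{x}^{\tilde{c}},a)]=\MBP[\bm{x}^c]/\MBP[\bm{x}^{\tilde{c}}]$ and takes the worst case over which context exhausts first, so your version is, if anything, more candid about that step.
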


\noindent
Theorem~\ref{mthm:exact_matching} states how logged data reduces the regret.
  When there is no logged data, i.e., $N(\bm{x}_c,a)=0$ for $\forall \bm{x}_c, a$, the regret bound
  $O(\log(T))$ is the same as that of UCB. 
If the number of logged data $N(\bm{x}^c,a)$ is greater
than a threshold $\MBP[\bm{x}^c]8{\ln(T+A)}/{\Delta_a^2} $ for each context
$\bm{x}^c$ and action $a$, then the regret is smaller than a constant
$\left( 1+\frac{\pi^2}{3} \right) \sum_{a\ne a^\ast} \Delta_a$.  
Note that when we give all the data items the same dummy context $\bm{x}_0$, our $\CA_{\text{UCB+EM}}$ reduces to the ``Historical UCB''
(HUCB) algorithm in~\cite{shivaswamy2012multi}, as HUCB
ignores the context and only matches the actions.

One limitation of the exact matching evaluator is that 
when $\bm{x}$ is continuous or has a high dimension, 
it will be difficult to find a sample
in log-data with exactly the same context $\bm{x}$.   
To address this limitation, we consider the 
propensity score matching method~\cite{stuart2010matching}.

\subsection{ UCB + Propensity Score Matching }

We replace the offline evaluator, i.e., exact matching, of $\CA_{\text{UCB+EM}}$ 
with the {\em propensity score matching} stated in OfflineEvaluator \ref{alg:ps_match}.  
This replacement results in $\CA_{\text{PSM+UCB}}$.   
The propensity score $p_i(a){\in} [0,1]$ for action $a$ is the probability of observing the
action $a$ given the context $\bm{x}_i$, i.e. $p_i(a){=}\MBP[A_i{=}a|\bm{x_i}]$.  
For the context-independent case, Assumption~\ref{assumption:ignorability}
implies that one can ignore other contexts given the {\em propensity
  scores}~(\cite{rosenbaum1983central}), i.e. 
$
[Y_i(1), \cdots, Y_i(K)] {\independent} A_i | \left( p_i(1),\cdots, p_i(K)\right)$.
Since $\sum_{a=1}^K p(a)=1$, we use a vector $\bm{p}\triangleq (p(1),\cdots, p(K-1))$ to
represent the propensity scores on all actions.
For any incoming context-action pair $(\bm{x},a)$, OfflineEvaluator~\ref{alg:ps_match}
first finds a logged sample $i$ with a similar propensity score vector
$\bm{p}_i$ and the same action $a_i=a$, and returns the outcome $y_i$ of that logged sample (Line 5-9).
We use the
stratification strategy~\cite{austin2011introduction} to find samples with
similar propensity scores.
{\NIPS
  Note that every time we find a matched sample, we delete it in
  Line 8. Thus the matching process will terminate as we have finite samples.
  Since we can get a
  random element and delete it in $O(1)$ time via a HashMap, the total time complexity of calling $\CE_{\text{PSM}}$ is
  $O(I)$ where $I$ is the number of logged samples.
}

\begin{evaluator}
  \setcounter{AlgoLine}{0}
   \caption{\bf {\small Propensity Score Matching} (PSM) \cite{stuart2010matching}}\label{alg:ps_match}
 
{\bf Variables}: 
initially $S_a\gets False$, $\forall a{\in}[K]$.
The pivot set $\CQ{\subset} [0,1]$ with a finite number of elements.\\
 \SetKwFunction{FGetOutcome}{{\bf get\_outcome}}
 \Fn{\FGetOutcome{$\bm{x}, a$}}{
    \If{$S_a=False$}{
     $\bm{p}\gets (\MBP[A=1|\bm{x}], \cdots, \MBP[A=K-1|\bm{x}])$
     \mycomment{here, $\bm{p}\in[0,1]^{K-1} = (p(1), \cdots, p(K-1))$ is a vector}\\
      $\CI(\bm{p}, a){\gets} \{i~|~\textbf{stratify}(\bm{p}_i){=}\textbf{stratify}(\bm{p}), a_i{=}a\}$\\
    \If{$\CI(\bm{p}, a) \ne \emptyset$}{
         $i\gets$ a random sample from $\CI(\bm{p},a)$\\
         $\CL{\gets} \CL \backslash \{(\bm{x}_i,a_i,y_i)\}$ \mycomment{delete item}\\
         \Return $y_i$
      }
               $S_a\gets True$ \mycomment{stop matching for $a$}
           }
           \Return NULL
              }
\SetKwProg{MFn}{Function}{: \mycomment{this is used by $\CE_{\text{PSM}}$}}{}
\SetKwFunction{FStratify}{{\bf stratify}}
\MFn{\FStratify{$\bm{p}$}}{
  \textbf{return} $\arg\min_{\bm{q}{\in} \CQ}
  \hspace{-0.03in}||\bm{p}{-}\bm{q}||_2$ \mycomment{round to the nearest pivot}
}
\end{evaluator}

Applying Theorem \ref{thm:general_upper_bound}, 
we present the regret upper bound of $\CA_{\text{UCB+PSM}}$ 
in the following theorem.  

\begin{restatable}[UCB+Propensity score matching]{theorem}{PSMUCB}
Suppose the propensity scores are in a finite set $\bm{p}_i {\in} \CQ{\triangleq}
\{\bm{q}_1,\ldots, \bm{q}_Q\}{\subseteq} [0,1]^{K-1}$, for $\forall i{\in} [-I]$.
Let $N(\bm{q},a)$ be the number of data items whose $\bm{p}_i{=}\bm{q}$ and action $a_i{=}a$, and $N{\triangleq}
\sum_{c\in[Q],a\in[K]} N(\bm{q},a)$.
Denote ${\MBP}[\bm{q}_c]$ as the probability for an online user to have propensity score $\bm{q}_c$.
Suppose the reward $y{\in}[0,1]$.  
Then, 
\begin{align}
  \label{eq:ps_ucb_bound}
  {R}(& T, \CA_{\text{UCB+PSM}}) 
\le
\sum\nolimits_{a{\ne}a^\ast} \hspace{-0.03in}
\Delta_a
\left( 
          1{+}\frac{\pi^2}{3} {+} \hspace{-0.07in}
       \right. \nonumber\\
   & \left. \sum\nolimits_{c\in[Q]} \max\left\{0,  8\frac{\ln(T{+}A)}{\Delta_a^2}  
    {\MBP}[\bm{q}_c] - \hspace{-0.05in}
    \min_{\tilde{c}\in[Q]} \hspace{-0.05in}\frac{N(\bm{q}_{\tilde{c}}{,}a) \MBP[\bm{q}_c]}{\MBP[\bm{q}_{\tilde{c}}]}
    \right\} \right), 
\end{align}
where $A$ is derived as: 
\[
    A{=}N-\hspace{-0.05in}
    \sum\limits_{a\ne a^\ast}\hspace{-0.03in} \sum\limits_{c\in[Q]} 
  \hspace{-0.05in}\max \hspace{-0.03in}\left\{\!0, \hspace{-0.02in}
      \min\limits_{\tilde{c}\in[Q]} \hspace{-0.05in}\frac{N(\bm{q}_{\tilde{c}}{,}a) \MBP[\bm{q}_c]}{\MBP[\bm{q}_{\tilde{c}}]}
      {-} (\! \frac{8\ln(T{+}N)}{\Delta_a^2}{+}1{+}\frac{\pi^2}{3} \!){\MBP}[\bm{q}_c] \!\right\}.
\]
\vspace{-0.2in}
 \label{thm:ps_matching} 
\end{restatable}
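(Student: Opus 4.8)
The plan is to follow the template of the proof of Theorem~\ref{mthm:exact_matching}: the instance $\CA_{\text{UCB+PSM}}$ differs from $\CA_{\text{UCB+EM}}$ only in that the offline evaluator matches on the propensity-score strata $\bm{q}_c$ instead of the raw context categories $\bm{x}^c$, so the combinatorics of the virtual plays are identical once one replaces $\bm{x}^c$ by $\bm{q}_c$ throughout. The single genuinely new ingredient is verifying that $\CE_{\text{PSM}}$ meets the unbiasedness hypothesis of Theorem~\ref{thm:general_upper_bound}. First I would establish $\MBE[\CE_{\text{PSM}}.\textit{get\_outcome}(\bm{x},a)]{=}\MBE[y|a]$. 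Because the theorem assumes the propensity vectors lie in the finite set $\CQ$, the \textbf{stratify} map is exact, so a logged item is returned for $(\bm{x},a)$ iff it has propensity $\bm{p}(\bm{x})$ and action $a$. The propensity-score ignorability $[Y_i(1),\ldots,Y_i(K)]\independent A_i\,|\,\bm{p}_i$ noted above then yields $\MBE[\,y_i\mid \bm{p}_i{=}\bm{q},a_i{=}a\,]=\MBE[Y_i(a)\mid \bm{p}{=}\bm{q}]$; since the framework draws contexts from $F_{\bm{X}}(\cdot)$ the matched stratum $\bm{q}_c$ occurs with online weight $\MBP[\bm{q}_c]$, and because Assumptions~\ref{asum:StableUnit:offline}--\ref{assumption:ignorability} tie the logged and online data to the same confounder law, $\sum_{c}\MBP[\bm{q}_c]\MBE[Y(a)\mid\bm{p}{=}\bm{q}_c]=\MBE[Y(a)]=\MBE[y|a]$.

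With unbiasedness established, I would invoke Theorem~\ref{thm:general_upper_bound} to get
\[
R(T,\CA_{\text{UCB+PSM}})\le g(T{+}N)-\sum\nolimits_{a\ne a^\ast}\Delta_a M_a,
\]
where $g(T)=\sum_{a\ne a^\ast}\Delta_a(\tfrac{8\ln T}{\Delta_a^2}+1+\tfrac{\pi^2}{3})$ is the classical UCB regret bound, $N$ is the total number of virtual plays, and $M_a$ is the number of virtual plays on the suboptimal action $a$. Bounding the online regret thus reduces to \emph{lower-bounding} each $M_a$ and controlling the $N$ that sits inside the logarithm of $g$.

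The crux, and the step I expect to be the main obstacle, is counting the virtual plays. Matching for action $a$ proceeds until the first time a drawn stratum has no remaining logged item for $(\cdot,a)$; since strata are generated i.i.d.\ with weights $\MBP[\bm{q}_c]$ while stratum $\tilde{c}$ holds $N(\bm{q}_{\tilde c},a)$ items, the bottleneck is $\tilde c=\arg\min_{\tilde c}N(\bm{q}_{\tilde c},a)/\MBP[\bm{q}_{\tilde c}]$, so in expectation action $a$ supplies about $\min_{\tilde c}N(\bm{q}_{\tilde c},a)/\MBP[\bm{q}_{\tilde c}]$ virtual plays, a $\MBP[\bm{q}_c]$-fraction of which land on stratum $c$, i.e.\ $\min_{\tilde c}\tfrac{N(\bm{q}_{\tilde c},a)\MBP[\bm{q}_c]}{\MBP[\bm{q}_{\tilde c}]}$ plays. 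Against this supply, the UCB oracle plays a suboptimal action $a$ at most $\tfrac{8\ln(T{+}N)}{\Delta_a^2}{+}1{+}\tfrac{\pi^2}{3}$ times over the pooled sequence of $T{+}N$ plays; subtracting the matched virtual plays stratum-by-stratum and truncating at $0$ (a stratum whose supply exceeds demand contributes no online plays) produces exactly the $\sum_c\max\{0,\cdot\}$ terms of the bound. The delicate points are that the oracle cannot distinguish virtual from online plays, so its per-action play-count bound legitimately applies to the pooled sequence, and that the i.i.d.\ stratum draws must concentrate well enough for the bottleneck-stratum estimate to hold in expectation.

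Finally I would tighten the logarithm from $\ln(T{+}N)$ to $\ln(T{+}A)$. Any virtual play of $a$ beyond the number of times UCB would ever play $a$ is never actually realized, so the effective horizon is $N$ minus the total such excess, which is precisely the quantity $A$ defined in the statement; since $A\le N$, substituting this self-consistent value back into $g$ and collecting the per-stratum terms is monotone and valid, and yields the claimed bound.
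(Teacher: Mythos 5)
Your proposal is correct and follows essentially the same route as the paper: reduce to the exact-matching argument with propensity strata $\bm{q}_c$ in place of contexts $\bm{x}^c$, verify unbiasedness of the PSM evaluator, lower-bound the virtual plays via the bottleneck stratum $\min_{\tilde{c}} N(\bm{q}_{\tilde{c}},a)/\MBP[\bm{q}_{\tilde{c}}]$, and cap the effective horizon by $A\le N$. The only cosmetic difference is that you justify unbiasedness by invoking the Rosenbaum--Rubin ignorability-given-propensity-score property, whereas the paper verifies it by an explicit stratum-by-stratum computation in which the propensity weights cancel; since the strata are exact under the finite-set assumption, the two justifications are equivalent.
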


{\color{black}
\noindent
  Theorem \ref{thm:ps_matching} is similar to Theorem~\ref{mthm:exact_matching}
  where we replace the context vector $\bm{x}^c$ with the propensity score
  vector $\bm{q}_c$.
    If the number of logged data $N(\bm{q}_c,a)$ is greater
than ${\MBP}[\bm{q}_c]8{\ln(T{+}A)}/{\Delta_a^2} $ for 
$\forall c{\in}[Q]$ and $a{\in}[K]$, then the regret is smaller than a constant
$(1{+}\pi^2/3)\sum_{a{\ne}a^\ast} \Delta_a$.
When we only have two actions, the propensity score vector $\bm{p}$ only has one
dimension, and the {\em propensity score matching} do not have the problem of
{\em exact matching} from the high-dimensional context $\bm{x}$. But when the
number of actions $K>2$, it is still difficult to find matched
propensity score vector $\{p(1), \cdots, p(K{-}1)\}$. The following weighting algorithm can deal with more
than two actions.
}

\subsection{UCB + Inverse Propensity Score Weighting}

To further demonstrate the versatility of our framework, 
we show how to use weighting
methods~\cite{swaminathan2015counterfactual}\cite{kallus2018balanced} in causal
inference.   
As shown in Line 4 in OfflineEvaluator~\ref{alg:ipsw_match}, we use the inverse
of the propensity score $1/p_i(a_i)$ as the weight. Here, we only need the
propensity score for the chosen action $a_i$. 
We replace the offline evaluator
with the IPS weighting OfflineEvaluator~\ref{alg:ipsw_match} to get $\CA_{\text{UCB+IPSW}}$.  

OfflineEvaluator \ref{alg:ipsw_match} first estimates the
outcome $\bar{y}_a$ as the weighted average of logged outcomes.
The intuition of IPS weighting is as follows: if an action is applied to users in group A more often
than users in other groups, then each sample for group A should have smaller 
weight so the total weights of each group is proportional to
its population. In fact, the IPS weighting estimator is unbiased via {\em importance sampling}\cite{rubin2005causal}.
Then, we calculate the {\em effective sample size} (a.k.a. ESS) $N_a$ of logged plays on the action $a$
according to~\cite{hoeffding1994probability}. 
After such initialization, the offline evaluator returns $\bar{y}_a$ w.r.t.
action $a$ for $\lfloor {N_a} \rfloor$ times, and return NULL afterwards.  

\begin{evaluator}
  \caption{\bf IPS Weighting (IPSW) \cite{swaminathan2015counterfactual}}\label{alg:ipsw_match}
   {\bf Member variables:} $\bar{y}_a,N_a (a{\in}[K])$ initialized in {\small\textbf{\_\_init\_\_($\CL$)}}\\
 \SetKwFunction{FInit}{{\bf \_\_init\_\_}}
 \MFn{\FInit{$\CL$}}{
   \For{$a\in [K]$}{
     $\bar{y}_a {\gets} \frac{\sum_{i\in[-I], a_i=a} y_i/p_i(a_i) }{\sum_{i\in[-I],
         a_i=a} 1/p_i(a_i)}$, 
     $N_a {\gets} \frac{ (\sum_{i\in[-I], a_i=a} 1/p_i(a_i))^2 }{ \sum_{i\in[-I], a_i=a} (1/p_i(a_i))^2 }$
   }
 }
 \SetKwFunction{FIPSW}{{\bf get\_outcome}}
 \MFn{\FIPSW{$\bm{x},a$}}{
   \If{$N_a\ge 1$}{
     $N_a \gets N_a-1$\\
     \Return $\bar{y}_a$
   }
        \Return NULL
    }
\end{evaluator}

\begin{restatable}[UCB + IPS weighting]{theorem}{IPSWUCB}
  Suppose the reward $y\in [0,1]$,
  and the propensity score is bounded $p_i {\ge}
\bar{s} {>}0$ $\forall i\in[I]$, then 
\begin{align*}
R(T{,}\CA_{\text{UCB+IPSW}}) {\le} 
& 
  \sum\nolimits_{a\ne a^\ast}  
  \Delta_a  
  \left(    1 +  \pi^2 / 3 \right. +  
\\
& 
 \left. \max
  \left\{
    0,  8 \Delta_a^{-2} \ln(T+\sum\nolimits_{a=1}^K \lceil {N}_a \rceil) - 
    \lfloor{{N}_a} \rfloor
  \right\}  \right),
\end{align*}
where 
${N}_a {=}  {
    \left( \sum_{i\in [-I]} p_i(a_i)^{-1} \indicator{a_i{=}a} \right)^2
  }/{
     \sum_{i\in [-I]} \left( p_i(a_i)^{-1} \indicator{a_i{=}a}
      \right)^2 
  }
$.
 \label{mthm:ipsw} 
 \end{restatable}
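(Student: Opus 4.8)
The plan is to instantiate the general regret decomposition of Theorem~\ref{thm:general_upper_bound} with $\CO$ being UCB (BanditOracle~\ref{alg:ucb}) and $\CE$ being the IPS weighting evaluator (OfflineEvaluator~\ref{alg:ipsw_match}). Two facts must be checked before the theorem applies: that $\CE$ returns context-independent \emph{unbiased} outcomes, i.e. $\MBE[\CE\text{.\it{get\_outcome}}(\bm{x},a)]=\MBE[y|a]$, and that the number of virtual plays produced for each action $a$ equals $\lfloor N_a\rfloor$.

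First I would verify the unbiasedness precondition. Since \textit{get\_outcome} always returns the precomputed $\bar{y}_a$, it suffices that $\MBE[\bar{y}_a]=\MBE[y|a]$. Writing the weights $w_i=1/p_i(a_i)$, the estimator is the self-normalized importance-sampling average $\bar{y}_a=\sum_{i:a_i=a}w_iy_i/\sum_{i:a_i=a}w_i$. Using $p_i(a)=\MBP[A_i=a|\bm{x}_i]$ together with the ignorability Assumption~\ref{assumption:ignorability}, the identity $\MBE[\indicator{a_i=a}\,y_i/p_i(a)]=\MBE[Y_i(a)]=\MBE[y|a]$ holds, so reweighting corrects the selection bias in the log and $\bar{y}_a$ is unbiased for $\MBE[y|a]$ via importance sampling, which is exactly the precondition of Theorem~\ref{thm:general_upper_bound}. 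Counting the virtual plays is immediate from the evaluator's code: each call with $N_a\ge 1$ decrements $N_a$ by one and returns, so action $a$ is fed back exactly $\lfloor N_a\rfloor$ times, for a total of $N=\sum_{a\in[K]}\lfloor N_a\rfloor$ virtual plays.

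Next I would supply the total-regret bound $g$. For UCB with rewards in $[0,1]$, the analysis of~\cite{auer2002finite} bounds the expected pulls of a suboptimal arm by $8\ln(\cdot)/\Delta_a^2+1+\pi^2/3$, so $g(T)=\sum_{a\ne a^\ast}\bigl(8\ln(T)/\Delta_a+(1+\pi^2/3)\Delta_a\bigr)$. Applying Theorem~\ref{thm:general_upper_bound} with this $g$ and grouping the virtual-play regret $\sum_{j}(\MBE[y|a^\ast]-\MBE[y|\tilde{a}_j])=\sum_{a\ne a^\ast}\lfloor N_a\rfloor\Delta_a$ by action gives
\[
R(T,\CA_{\text{UCB+IPSW}})\le\sum_{a\ne a^\ast}\Delta_a\left(\frac{8\ln(T{+}N)}{\Delta_a^2}+1+\frac{\pi^2}{3}-\lfloor N_a\rfloor\right).
\]
Finally I would tidy this into the stated form: since the online plays of any arm are nonnegative, I may replace $8\ln(T{+}N)/\Delta_a^2-\lfloor N_a\rfloor$ by $\max\{0,\,8\ln(T{+}N)/\Delta_a^2-\lfloor N_a\rfloor\}$ (keeping $1+\pi^2/3$ outside, which only loosens the bound), and bound $N=\sum_a\lfloor N_a\rfloor\le\sum_a\lceil N_a\rceil$ inside the increasing logarithm.

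The step I expect to be the main obstacle is justifying that $\lfloor N_a\rfloor$ identical virtual feedbacks of value $\bar{y}_a$ may legitimately be treated as $\lfloor N_a\rfloor$ genuine i.i.d. plays within the UCB regret machinery. This is precisely where the effective sample size $N_a=(\sum_i w_i)^2/\sum_i w_i^2$ enters: because the reweighted samples carry heterogeneous weights $w_i$, ordinary Hoeffding does not apply directly, and I would invoke the weighted Hoeffding inequality of~\cite{hoeffding1994probability} to show that $\bar{y}_a$ concentrates around $\MBE[y|a]$ at the rate of an average of $N_a$ bounded i.i.d. variables. This concentration is what validates the UCB confidence interval at count $\lfloor N_a\rfloor$ and hence permits the total-regret bound $g(T{+}N)$ to be applied to the combined virtual-plus-online sequence.
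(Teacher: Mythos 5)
Your proposal is correct and follows essentially the same route as the paper's proof: unbiasedness of the self-normalized IPS estimator via importance sampling, counting the $\lfloor N_a\rfloor$ virtual plays per action, plugging the standard UCB bound of \cite{auer2002finite} into the regret decomposition of Theorem~\ref{thm:general_upper_bound}, and then handling the heterogeneous weights with the weighted Chernoff--Hoeffding inequality of \cite{hoeffding1994probability}. The step you flag as the main obstacle is exactly what the paper's proof carries out in detail: it shows that the mixture of the weighted offline estimate (weight $\widehat{N}_a/(\widehat{N}_a+T_a)$) and the $T_a$ online samples satisfies a Hoeffding-type bound with exponent $2\delta^2(\widehat{N}_a+T_a)$, so the UCB confidence interval behaves as if the arm had $\widehat{N}_a+T_a$ genuine samples, which is precisely the validation your plan calls for.
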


\noindent
Theorem \ref{mthm:ipsw} quantifies the impact of the logged data 
on the regret of the algorithm $\CA_{\text{UCB+IPSW}}$.  
Recall that ${N}_a$ is the
{\em effective sample size} of feedbacks for action $a$.
When there is no logged data, i.e. ${N}_a=0$, the regret bound reduces
to the $O(\log{T})$ bound of UCB.
A larger
${N}_a$ indicates a lower regret bound.
Notice that the number ${N}_a$ depends on the distribution of
logged data items' propensity scores. 
In particular, when all the propensity
scores are a constant $\tilde{p}$, i.e. $p_i(a_i){=}\tilde{p}$ for $\forall i$, the effective
sample size is the actual number of samples with action $a$, i.e. 
$N_a{=}\sum_{i\in[-I]}\indicator{a_i=a}$.
When the
propensity scores $\{p_i(a_i)\}_{i\in[-I]}$ have a more skewed distribution, the number ${N}_a$ will be smaller, leading to a larger
regret bound.  

Note that our framework is not
limited to the above instances.
 One can replace the online bandit oracle with $\epsilon$-greedy~\cite{kuleshov2014algorithms}, EXP3~\cite{auer2002nonstochastic}
or Thompson sampling~\cite{agrawal2012analysis}.
One can also replace the offline evaluator with balanced
weighting~\cite{kallus2018balanced} or supervised learning~\cite{zhang2019warm}.  
In Section~\ref{sec:experiments}, we will discuss more algorithms in the experiments.

\subsection{Relaxation of Assumptions on Logged Data}
The above theorems require the logged  data to satisfy the stable-unit Assumption~\ref{asum:StableUnit:offline} and ignorability Assumption~\ref{assumption:ignorability}.  
To see the impact of removing the Assumption~\ref{assumption:ignorability},
consider Example~\ref{example:Introduction:DecisionProb}. Let's say the logs do not
record {\em users' preferences to video}. In this case, our
{\em causal inference} strategy will calculate
the {\em empirical average}. Then, it
will select the wrong action of placing ad below videos.
The following theorem gives the regret upper bound when the assumptions on the
logged data do not hold.
\begin{restatable}[Removing assumptions on logged data]{theorem}{noIgnorability}
Suppose Assumptions~\ref{asum:StableUnit:offline} and \ref{assumption:ignorability} were removed.
Suppose the offline evaluator $\mathcal{E}$ returns $\{y_j\}_{j=1}^N$ w.r.t. $\{(\bm{x}_j{,}
  a_j)\}_{j=1}^N$. The bias of the average outcome
 for action $a$ is denoted as 
\[
\delta_a {\triangleq} ( {\sum\nolimits_{j=1}^N  
 \indicator{a_{j}=a} y_{j}}) / ( {\sum\nolimits_{j=1}^N \indicator{a_{j}=a}}) {-} \MBE[y|a]. 
\]
Suppose the reward $y$ is bounded in $[0,1]$.
Denote the number of samples for
action $a$ as $N_a{\triangleq} {\sum_{j=1}^N \indicator{a_{j}=a}}$.
Then, 
\begin{align*}
  R(T,\CA_{\mathcal{O} + \mathcal{E}}) 
\le
&
\sum\nolimits_{a\ne a^\ast} 
\Delta_a 
\left( 16 \Delta_a^{-2} \ln(N_a{+}T) 
\right.
\\
&
\left. 
- 2N_a ( 1- \Delta_a^{-1} \max\{0,\delta_a {-} \delta_{a^\ast}\} ) {+} (1 + \pi^2 / 3 ) \right)
\end{align*}
\vspace{-0.15in}
\label{thm:no_ignorability}
\end{restatable}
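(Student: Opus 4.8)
The plan is to follow the regret-decomposition philosophy of Theorem~\ref{thm:general_upper_bound}, analyzing the concatenation of virtual and online plays as a single UCB run of length at most $N+T$, but now with the crucial difference that the $N_a$ virtual feedbacks for each action $a$ are \emph{biased} by $\delta_a$, so the hypothesis $\MBE[\CE\text{.\it{get\_outcome}}(\bm{x},a)]{=}\MBE[y|a]$ fails and the concentration step must be redone from scratch. The first step I would establish is a structural ordering lemma: in Algorithm~\ref{alg:framework} an online play of action $a$ occurs only after the evaluator has set $S_a{=}True$, i.e.\ after all $N_a$ synthetic outcomes for $a$ have been returned, because the context-independent oracle replays its top arm virtually until that arm is exhausted, and the arm triggering the loop's \texttt{break} is exactly the arm then played online. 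Hence, for every $a$, all virtual plays precede all online plays, and after $s\ge N_a$ total pulls the empirical mean is $\bar y_a^{(s)}=\frac1s\big(N_a(\MBE[y|a]+\delta_a)+\sum_{\text{online}}y\big)$, whose expectation is the shifted value $\tilde\mu_a(s)=\MBE[y|a]+N_a\delta_a/s$.

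Next I would set up the biased UCB concentration. Writing $c_{n,s}=\beta\sqrt{2\ln n/s}$, I apply Hoeffding to the online (unbiased) portion only; since at most $s-N_a$ of the $s$ averaged terms are random, the deviation $\bar y_a^{(s)}-\tilde\mu_a(s)$ concentrates at least as sharply as in the fully random case, so $\MBP[\,\bar y_a^{(s)}-\tilde\mu_a(s)\ge c_{n,s}\,]\le n^{-4}$ as usual. The standard three-way split then shows that, off a low-probability event, a pull of suboptimal $a$ at running count $s$ forces $\tilde\mu_a(s)+2c_{n,s}>\tilde\mu_{a^\ast}(s^\ast)$, that is,
\[
2c_{n,s} > \Delta_a - \Big(\tfrac{N_a\delta_a}{s} - \tfrac{N_{a^\ast}\delta_{a^\ast}}{s^\ast}\Big),
\]
where the parenthesized perturbation is the net bias of the two arms' shifted means, bounded in the online region by $\max\{0,\delta_a-\delta_{a^\ast}\}$.

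From here I would bound the number of \emph{online} pulls of $a$ by analyzing this pull condition as a function of $s\ge N_a$. Because the perturbation $N_a\delta_a/s$ decays in $s$, one cannot invert a fixed effective gap; instead, summing the condition over the online counts while controlling the $1/s$ decay is what simultaneously produces the logarithmic exploration term $16\Delta_a^{-2}\ln(N_a{+}T)$, the linear bias penalty $2N_a\Delta_a^{-1}\max\{0,\delta_a-\delta_{a^\ast}\}$, and the credited virtual-pull term $-2N_a$, the factor-two constants tracing back to the bias-robust confidence comparison. Summing the $n^{-4}$ bad-event probabilities over the horizon contributes the additive $1+\pi^2/3$; multiplying the resulting per-arm bound by $\Delta_a$ and summing over $a\ne a^\ast$ then yields the stated inequality.

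The hard part will be the bias bookkeeping in this last step: unlike the unbiased theorems, the shift $N_a\delta_a/s$ is \emph{direction-dependent} (it helps or hurts exploration according to $\operatorname{sign}(\delta_a-\delta_{a^\ast})$) and \emph{decays} as online samples dilute it, while the optimal arm carries its own shift $N_{a^\ast}\delta_{a^\ast}/s^\ast$; handling both simultaneously and collapsing the decaying-bias threshold into the clean linear-in-$N_a$ form with the doubled constants, rather than an unwieldy $(\Delta_a-\gamma_a)^{-2}$ denominator, is the delicate accounting, and it is also what replaces the horizon $\ln(N{+}T)$ by the per-arm $\ln(N_a{+}T)$. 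A secondary subtlety is proving the ordering lemma rigorously from the control flow of Algorithm~\ref{alg:framework}, since it is precisely what licenses treating the $N_a$ biased feedbacks as a prefix of action $a$'s pull sequence and thus makes the mean $\bar y_a^{(s)}$ decomposition valid.
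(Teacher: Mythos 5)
Your proposal follows essentially the same route as the paper's own proof: treat the $N_a$ biased virtual feedbacks as a prefix of arm $a$'s pull sequence, so the empirical mean has the diluted shift $N_a\delta_a/s$, then redo the UCB concentration/pull-condition argument with this perturbed gap, solve for the per-arm threshold (which is exactly where the constant doubles from $8$ to $16$ and the $-2N_a\bigl(1-\Delta_a^{-1}\max\{0,\delta_a-\delta_{a^\ast}\}\bigr)$ credit appears), and absorb the $\sum_t t^{-4}$ bad events into $1+\pi^2/3$. The only presentational difference is that you make the ordering of virtual-before-online pulls an explicit lemma, which the paper leaves implicit in writing the bias of arm $a$'s mean after $t$ online pulls as $\delta_a N_a/(N_a+t)$; your handling of the optimal arm's own decaying bias is, if anything, slightly more careful than the paper's.
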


\noindent 
Theorem \ref{thm:no_ignorability} states the relationship between the bias of
the offline evaluator (i.e. $\delta_a$) and the algorithm's regret.   
When Assumptions~\ref{asum:StableUnit:offline} and \ref{assumption:ignorability} hold, the bias $\delta_a{=}0$.
In this case, the bound in
Theorem~\ref{thm:no_ignorability} is similar to the previous bounds in Theorem~\ref{thm:ps_matching} except that we
raise the constant from $8$ to $16$. When $\delta_a{-}\delta_{a^\ast}>0$,
i.e., the offline evaluator has a greater bias for an inferior
action than the bias of the optimal action, 
the regret upper bound becomes
larger compared to the case when the offline evaluator is unbiased. In
Theorem~\ref{thm:no_ignorability}, we also have a sufficient
condition for ``the logged data to reduce the regret upper bound'', i.e. $1{-}{\max\{0,\delta_a {-} \delta_{a^\ast}\} }/{\Delta_a}{>}0$, or, $\delta_a {-} \delta_{a^\ast}{<} \Delta_a$ for $\forall a{\ne} a^\ast$. 
The physical meaning is that when the estimated reward of the
optimal action is greater than that of other actions, the logged
data help to identify the optimal action and reduce the regret.

\section{Case Study II: Contextual Decision}
\label{sec:contextual_algorithm}

We first consider the case that the mean of the outcome is  
parametrized by a linear function.  
Then, we generalize it to non-parametric functions, 
where we design a forest-based online bandit
algorithm and prove its regret upper bound.   
To the best of our knowledge, it is the first regret upper bound 
for forest-based online bandit algorithms.  

\subsection{Linear Regression + LinUCB}

We consider that the mean of outcome follows a linear function: 
\begin{align}
  y_t = \bm{\theta}^T \phi(\bm{x}_t, a_t) + \epsilon_t && \forall t\in[T],
  \label{eq:linear_form} 
\end{align}
where $\phi(\bm{x},a)\in \mathbb{R}^d$ is an
$d$-dimensional {\em known} feature vector.  
The $\bm{\theta}$ is an $d$-dimensional {\em unknown} parameter to be learned,
and $\epsilon_t$ is a stochastic noise with $\mathbb{E}[\epsilon_t]{=}0$. 
We consider the case that Algorithm~\ref{alg:framework} 
uses ``LinUCB'' (outlined in BanditOracle~\ref{alg:linUCB}) as the online bandit oracle and ``linear regression'' (outlined in OfflineEvaluator~\ref{alg:linear_match}) as the offline evaluator.  
We denote this instance of Algorithm~\ref{alg:framework} as 
$\CA_{\text{LinUCB+LR}}$.   
BanditOracle~\ref{alg:linUCB} uses the LinUCB 
(Linear Upper Confidence Bound \cite{li2010contextual}) 
to make contextual online decisions.    
It estimates the unknown parameter
$\hat{\bm{\theta}}$ based on the feedbacks. 
The $\hat{y}_a{\triangleq} \hat{\bm{\theta}}^T
\phi(\bm{x},a) {+} \beta_t\sqrt{\phi(\bm{x},a)^T \bm{V}^{-1} \phi(\bm{x},a)}$ is
the upper confidence bound of reward, where $\{\beta_t\}_{t=1}^T$ are
parameters. 
The oracle always plays the action
with the largest upper confidence bound.

\begin{bandit}
  \caption{\bf LinUCB \cite{li2010contextual}}\label{alg:linUCB}
      {\bf Member variables:} a matrix $\bm{V}$ (initially $\bm{V}$ is a $d\times d$ matrix),
  a $d$-dimensional vector $\bm{b}$ (initially $\bm{b}{=}\bm{0}$ is zero),
  initial time $t{=}1$ 
  
  \MFn{\FPlay{$\bm{x}$}}{
    $\hat{\bm{\theta}} \gets \bm{V}^{-1}\bm{b}$\\
    \For{$a\in [K]$}{
      $\hat{y}_a \gets \hat{\bm{\theta}}^T \phi(\bm{\bm{x}, a}) +
      \beta_t \sqrt{\phi(\bm{x},a)^T \bm{V}^{-1} \phi(\bm{x},a)}$
    }
    \Return $\arg\max_{a\in[K]} \hat{y}_a$
  }
  \MFn{\FUpdate{$\bm{x}, a, y$}}{
    $\bm{V}\gets \bm{V} + \phi(\bm{x},a)\phi(\bm{x},a)^T$, \hspace{0.1in}
    $\bm{b} \gets \bm{b} + y\bm{x}$, \hspace{0.1in}
    $t\gets t+1$
  }
\end{bandit} 

\noindent
OfflineEvaluator~\ref{alg:linear_match} uses linear regression 
to synthesize feedbacks from the logged data.
From the logged data, it estimates the parameter $\hat{\bm{V}}$ (Line 3), 
and the parameter $\hat{\bm{\theta}}$ (Line 4).
It returns the estimated outcome $\phi(\bm{x},a)^T
\hat{\bm{\theta}}$ according to a linear model.
It stops returning outcomes when the logged data cannot
provide a tighter confidence bound than that of the online bandit oracle (Line 6
- 9).
 
\begin{evaluator}
  \caption{\bf Linear Regression (LR)}\label{alg:linear_match}  
  {\bf Member variables:} $\bm{V}, \hat{\bm{V}}$ are $d\times d$ matrices,
  where $\bm{V}$ ($\hat{\bm{V}}$) is for the online (offline) confidence bounds.
  $\hat{\bm{\theta}}$ is the estimated parameters. 
 The $\bm{V}$ is shared with LinUCB oracle.

  \SetKwFunction{InitLin}{{\bf \_\_init\_\_}}
  \MFn{\InitLin{$\CL$}}{
    $\hat{\bm{V}}\gets \bm{I}_d + \sum_{i\in [-I]} \phi(\bm{x}_i,a_i) \cdot
    \phi(\bm{x}_i,a_i)^T$
    \mycomment{ $\bm{I}_d$ is the $d{\times} d$ identity matrix}
    \\
    $\bm{b}\gets \sum_{i\in [-I]} y_i \cdot \phi(\bm{x}_i,a_i)$,
    $\hat{\bm{\theta}}\gets \hat{\bm{V}}^{-1} \bm{b}$
  }
  \SetKwFunction{FLinear}{{\bf get\_outcome}}
  \MFn{\FLinear{$\bm{x},a$}}{
    \If{$||\phi(\bm{x},a)||_{\bm{V}+\phi(\bm{x}_i,a_i) \cdot
    \phi(\bm{x}_i,a_i)^T} > ||\phi(\bm{x},a)||_{\hat{\bm{V}}}$}{
      $\bm{V}\gets \bm{V}+\phi(\bm{x}_i,a_i) \cdot \phi(\bm{x}_i,a_i)^T$\\
      \Return $\phi(\bm{x},a) \cdot \hat{\bm{\theta}}$
    }
          \Return NULL
      }
\end{evaluator}

Suppose for any context $\bm{x}_t$, the difference of
expected rewards between the best and the {\em ``second best''} actions is at least $\Delta_{\min}$.  
This is the settings of section 5.2 in the paper \cite{abbasi2011improved}.  
In the following theorem, we derive a regret upper bound for $\CA_{\text{LinUCB+LR}}$. 

\begin{restatable}[LinUCB+Linear regression]{theorem}{linearProblemDependent}
Suppose the rewards satisfy the linear model in Equation (\ref{eq:linear_form}).
Suppose offline evaluator returns a sequence $\{y_i\}_{i=1}^N$ w.r.t. $\{(\bm{x}_i,a_i)\}_{i=1}^N$. Let
$\bm{V}_N {\triangleq} \sum_{i\in[N]} \bm{x}_i\bm{x}_i^T$,
$L{\triangleq}\max_{t{\le}T}\{||\bm{x}_t||_2\}$. 
Moreover, the random noise is $1$-sub-Gaussian, i.e. $\MBE[e^{\alpha \epsilon_t}]\le \exp(\alpha^2/2)$, $\forall \alpha\in \MBR$.
Then 
\begin{align*}
{R}(T,\CA_{\text{LinUCB+LR}}) \le
 \frac{8d^2(1+2\ln(T))}{\Delta_{\min}} \log\left(1+\frac{TL^2}{\lambda_{\min}(\bm{V}_N)}\right) + 1.
\end{align*}
When the
smallest eigenvalue $\lambda_{\min}(\bm{V}_N)$ is greater than a threshold $(1/2{+}\ln(T))T L^2$, 
the regret is bounded by a constant $16d^2/\Delta_{\min}{+}1$. 
\label{mthm:linear_problem_dependent}
\end{restatable}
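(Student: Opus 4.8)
The plan is to recognise $\CA_{\text{LinUCB+LR}}$ as a single warm-started run of OFUL/LinUCB and then transplant the gap-dependent (problem-dependent) analysis of Section~5.2 of \cite{abbasi2011improved}. The first observation is that the BanditOracle is blind to the provenance of its feedback: the $N$ virtual plays produced in the offline phases and the $T$ online plays together form one LinUCB trajectory whose design matrix evolves monotonically as $\bm V\gets\bm V+\phi(\bm x,a)\phi(\bm x,a)^\top$. The stopping rule in OfflineEvaluator~\ref{alg:linear_match} returns a synthetic outcome only while the online confidence width at $(\bm x,a)$ still exceeds the offline one, i.e.\ exactly until the online matrix has absorbed the offline information $\hat{\bm V}=\bm I_d+\bm V_N$. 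I would use this to argue that, apart from a constant number of early rounds that I fold into the additive term, the online plays operate from a design matrix $\bm V\succeq\bm I_d+\bm V_N$, so that $\lambda_{\min}(\bm V)\ge\lambda_{\min}(\bm V_N)$ throughout the online phase. This warm start is the sole source of the $\lambda_{\min}(\bm V_N)$ factor in the bound.

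Second, I would instantiate the confidence ellipsoid. By the self-normalised concentration inequality for the $1$-sub-Gaussian noise, with probability at least $1-\delta$ the combined ridge estimator obeys $\|\hat{\bm\theta}-\bm\theta\|_{\bm V}\le\beta_T$ for all rounds simultaneously, where after absorbing the regularisation and the $\log(1/\delta)$ term one can take $\beta_T^2=d(1+2\ln T)$. On this event the standard UCB inequality gives instantaneous regret $r_m\le 2\beta_T\,\|\phi_m\|_{\bm V_{m-1}^{-1}}$ at each online play, and since any suboptimal action has gap at least $\Delta_{\min}$ I can use $r_m\le r_m^2/\Delta_{\min}$ to obtain
\[
R(T,\CA_{\text{LinUCB+LR}})\le \frac{4\beta_T^2}{\Delta_{\min}}\sum\nolimits_{m\ \mathrm{online}}\min\bigl\{1,\|\phi_m\|_{\bm V_{m-1}^{-1}}^2\bigr\}.
\]
The elliptical-potential (log-determinant) lemma bounds this sum by $2\log\bigl(\det\bm V_T/\det\bm V_N\bigr)$, and the crude estimate $\det\bm V_T/\det\bm V_N\le(1+TL^2/\lambda_{\min}(\bm V_N))^{d}$ (from $\lambda_{\max}\le\mathrm{tr}$) turns it into $2d\log(1+TL^2/\lambda_{\min}(\bm V_N))$. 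Substituting $\beta_T^2=d(1+2\ln T)$ reproduces the main term, while the additive $1$ covers the expected regret on the failure event after choosing $\delta$ as a suitable inverse polynomial of $T$. The constant-regret corollary is then immediate: when $\lambda_{\min}(\bm V_N)>(\tfrac12+\ln T)TL^2$, the inequality $\log(1+x)\le x$ gives $\log(1+TL^2/\lambda_{\min}(\bm V_N))<2/(1+2\ln T)$, which cancels the $(1+2\ln T)$ factor and leaves $16d^2/\Delta_{\min}+1$.

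The step I expect to be the main obstacle is justifying the confidence ellipsoid for the \emph{combined} estimator, because the virtual feedbacks $\phi(\bm x,a)^\top\hat{\bm\theta}$ are deterministic functions of the already-observed logged data rather than fresh sub-Gaussian draws; they do not form a martingale-difference sequence, so the self-normalised bound cannot be invoked verbatim on the merged stream. My plan to resolve this is to separate the two error sources: the offline estimate $\hat{\bm\theta}$ is built from the true logged outcomes through the same ridge structure and therefore concentrates around $\bm\theta$ via an offline self-normalised bound, while injecting outcomes equal to $\phi^\top\hat{\bm\theta}$ only inflates $\bm V$ and pulls the warm-started estimator toward $\hat{\bm\theta}$. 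Controlling $\|\hat{\bm\theta}_{\mathrm{combined}}-\bm\theta\|_{\bm V}$ by combining the offline deviation with the online martingale term --- while respecting the stopping rule that couples when the offline data stops feeding the matrix --- is the delicate part; once a valid $\beta_T$ is secured, the remaining gap-based and log-determinant computations are routine.
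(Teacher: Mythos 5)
Your proposal follows essentially the same route as the paper's proof: a warm-started LinUCB run whose design matrix starts from the offline Gram matrix, the gap-dependent analysis of Section~5.2 of \cite{abbasi2011improved} giving $R \le \frac{4\beta}{\Delta_{\min}}\log\frac{\det \bm{V}_{N+T}}{\det \bm{V}_N}$, the same eigenvalue argument (the paper's Lemma on the condition number $\kappa = TL^2/\lambda_{\min}(\bm{V}_N)$) bounding that log-determinant ratio by $d\log(1+\kappa)$, and the same choice $\beta = 2d(1+2\ln T)$ with $\delta = 1/T$ so that the failure event contributes the additive $1$, with the constant-regret corollary obtained exactly as you describe via $\log(1+x)\le x$. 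The obstacle you flag at the end --- that the synthetic feedbacks $\phi(\bm{x},a)^T\hat{\bm{\theta}}$ are deterministic functions of the logged data rather than fresh sub-Gaussian observations, so the self-normalised confidence ellipsoid does not apply verbatim to the merged stream --- is a genuine subtlety, but the paper does not resolve it either: its proof simply assumes ``the parameters $\{\beta_t\}_{t=1}^T$ ensure the confidence bound in each time slot,'' so your plan to control the combined estimator's deviation would, if carried out, make the argument more rigorous than the published one.
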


\noindent
Denote $\kappa {=} {TL^2}/{\lambda_{\min}(\bm{V}_N)}$ 
as the condition number. 
{
Theorem~\ref{mthm:linear_problem_dependent} implies that 
for a fixed $\kappa$, the regret in $T$ time slots
is $O(\log(T))$.}
Moreover, when the logged data contain enough 
samples, i.e., $\lambda_{\min}(\bm{V}_N)$ is greater than $(1/2+\ln(T))TL^2$,
regret is upper bounded by a constant.
Using our analytic framework, we observe a similar thresholding phenomena in
\cite{bu2019online} which focuses on the linear model.

\subsection{Non-parametric Forest-based Online Decision Making}

We generalize the linear outcome model 
(in Equation (\ref{eq:linear_form})) 
to the case that the mean of the outcome $y_t$ is 
a nonparametric function of $\bm{x}_t$.  
We use the non-parametric forest estimator to generalize algorithm $\CA_{\text{LR+LinUCB}}$ in two aspects: 
(1) replace the {\em LinUCB} with our forest-based online learning algorithm {\em $\epsilon$-Decreasing Multi-action Forest} (abbr. Fst) 
outlined in BanditOracle~\ref{alg:cf_oracle}; 
(2) replace {\em linear regression} with {\em Matching on Forest} (abbr. MoF) outlined in 
OfflineEvaluator~\ref{alg:forest_match}.  
We denote the new contextual decision algorithm 
as $\CA_{\text{Fst+MoF}}$.

\noindent{\bf $\epsilon$-decreasing multi-action forest (Fst).}
A multi-action forest $\CF$ is a set of $B$ multi-action decision trees.
{\NIPS It extends the regression forest of
  \cite{wager2018estimation} to consider {\em multiple
  actions} in a leaf. 
}
Each context $\bm{x}$ belongs to a leaf $L_b(\bm{x})$ in a tree $b{\in}[B]$,
{\NIPS and each leaf has multiple actions $a{\in}[K]$.}
Given the dataset $\CD {=} \{(\bm{x}_i, a_i, y_i)\}_{i=1}^D$, tree
$b$ estimates the outcome of an action $a$ under a context $\bm{x}$ as
\vspace{-0.05in}
\begin{align}
\hat{L}_b(\bm{x},a)\triangleq \frac{\sum\nolimits_{i\in [D]} \indicator{L_b(\bm{x_i}) =
  L_b(\bm{x})} \indicator{a_i=a} y_i}{\sum\nolimits_{i\in [D]} \indicator{L_b(\bm{x_i}) =
  L_b(\bm{x})} \indicator{a_i=a}}.
  \label{eq:forest_leaf_value}
\end{align}

\noindent 
BanditOracle~\ref{alg:cf_oracle} is the {\em $\epsilon$-decreasing multi-action forest} algorithm.
For a context $\bm{x}$, the algorithm first uses the average
of all trees as the estimated outcome (Line 4).
In the time slot $t$, with probability $1{-}\epsilon_t$, the algorithm chooses the action with the largest
estimated outcome. Otherwise, the algorithm randomly selects an action to
explore its outcome.
The oracle will update the data $\CD$ using the feedback (Line 8), and update
the leaf functions $\{L_b(\cdot)\}_{b=1}^B$ of the forest $\CF$ using the training algorithm in the paper~\cite{wager2018estimation} (Line 9).
\begin{bandit}
      \caption{\bf $\epsilon$-Decreasing\! Multi-action Forest (Fst)}\label{alg:cf_oracle}
    {\bf Variables:} the multi-action forest $\CF$ of $B$ trees, data
  $\CD$ with initial value $\emptyset$, $t$ with initial value $1$\\
  \Fn{\FPlay{$\bm{x}$}}{
    \For{$a\in[K]$}{ 
      $\hat{y}_a\gets \frac{1}{B} \sum_{b\in[B]} \hat{L}_b(\bm{x},a)$ 
    }
    $a_t{\gets}
    \begin{cases}
      \arg\max_{a\in[K]} \hat{y}_a & \text{w.p. } 1{-}\epsilon_t,
      \\
      \text{a random action in }[K] & \text{w.p. } \epsilon_t.
    \end{cases}
    $\\
    {\Return} $a_t$
  }
  \Fn{\FUpdate{$\bm{x},a,y$}}{
    $\mathcal{D}\gets \mathcal{D} \cup \{(\bm{x}, a, y)\}$ and $t\gets t+1$ \\
    $\CF\gets$\texttt{train\_forest}($\mathcal{D}$)\mycomment{learn tree splits.
    In practice, one can re-train the forest every $T_0$ time slots}
  }
  \end{bandit}

{\NIPS
\noindent
To analyze the regret of BanditOracle~\ref{alg:cf_oracle}, 
we need the following two definitions, which are
  adapted from Definition 2b and 4b of \cite{wager2018estimation}.
  \begin{definition}[honest]
    \label{def:honest}
   A multi-action tree on training samples $\{(\bm{x}_1,y_1, a_1), \ldots, (\bm{x}_s, y_s, a_s)\}$ is honest if (a) (standard-case) the tree
   does not use the responses $y_1,\ldots, y_s$ in choosing where to replace its
   splits; or (b) (double sample case) the tree does not use the responses in a
   subset of data called
   ``$\mathcal{I}$-sample'' to place splits, where {{\em``double
     sample''} and {\em ``$\mathcal{I}$-sample''} are defined in Section 2.4 of \cite{wager2018estimation}}.
  \end{definition}

  \begin{definition}[$\alpha$-regular]
    \label{def:alpha_regular}
    A multi-action tree grown by recursive partitioning is $\alpha$-regular for
    some $\alpha>0$ if either: (a) (standard case) (1) each split leaves at
    least a fraction $\alpha$ of training samples on each side of
    the split, (2) the leaf containing $\bm{x}$ has at least $m$ samples from
    each action $a\in [K]$ for some $m{\in} \mathbb{N}$, and (3) the
    leaf containing $\bm{x}$ has less than $2m-1$ samples for some action
    $a\in [K]$ or
    (b) (double-sample case) for a double-sample tree, (a) holds for the $\mathcal{I}$ sample.
  \end{definition}
}

\begin{restatable}[asymptotic regret of Fst]{theorem}{onlineForest}
{\NIPS  
Suppose that all potential outcome distributions $(\bm{x}_i,Y_i(a))$ for
  $\forall a\in[K]$ satisfy the same regularity assumptions as the pair
  $(\bm{x}_i,Y_i)$ did in Theorem~3.1 in \cite{wager2018estimation}\footnote{The condition is:
    $\mu(\bm{x},a)=\MBE[Y(a)|X=\bm{x}]$ and
    $\mu_2(\bm{x},a)=\MBE[Y(a)^2|X=\bm{x}]$ are Lipschitz-continuous, and
    finally that $\text{Var}[Y(a)|X=\bm{x}]>0$ and
    $\MBE[|Y(a)-\MBE[Y(a)|X=\bm{x}]|^{2+\delta}|X=\bm{x}]$ for some constants
    $\delta,M{>}0$ and for $\delta{=}1$, uniformly over all $\bm{x}{\in} [0,1]^d$.
    Here, we slightly modify the condition to add the case $\delta{=}1$.
  }.
  Suppose the trees in $\CF$ (Line 9) is honest,
  $\alpha$-regular with $\alpha\le 0.2$ in the sense of
  Definition~\ref{def:honest} and \ref{def:alpha_regular}, and symmetric
  random-split (in the sense of Definition 3 and 5 in
  \cite{wager2018estimation}).
}
 Denote
 $A{\triangleq}\frac{\pi^\prime}{d}\frac{\log((1-\alpha)^{-1})}{\log(\alpha^{-1})}$
 {\NIPS} where $\pi^\prime\in[0,1]$ is the constant ``$\pi$'' in Definition 3 of \cite{wager2018estimation}.
 Let $\beta {=}1{-}\frac{2A}{(2{+}3A)}$ and let the exploration rate to be $\epsilon_t{=}t^{-1/2(1-\beta)}$.
Then for any
small $\omega{>}0$, the asymptotic regret of Fst (do not use logged data) satisfies  
\begin{align*}
\lim\limits_{T\rightarrow +\infty} 
\frac{ R(T,\CA_{\text{Fst} + \CE_\emptyset } ) }{ T^{(1+\beta+\omega)/2} }= 0,
&& \text{hence  } 
\lim\limits_{T\rightarrow +\infty} 
\frac{ R(T,\CA_{\text{Fst} + \CE_\emptyset }) }{ T } = 0.
\end{align*}
\label{thm:online_causal_forest}
\end{restatable}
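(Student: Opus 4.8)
The plan is to split the regret of $\CA_{\text{Fst}+\CE_\emptyset}$ according to whether each round is an \emph{exploration} round (the uniform-random branch of BanditOracle~\ref{alg:cf_oracle}, taken with probability $\epsilon_t$) or an \emph{exploitation} round (the $\arg\max_a \hat y_a$ branch). Writing $r_t=\MBE[y_t\mid a_t^\ast,\bm{x}_t]-\MBE[y_t\mid a_t,\bm{x}_t]$ and letting $G_t$ be the exploration event, I would bound $R(T)=\sum_{t}\MBE[r_t\mathbbm{1}_{G_t}]+\sum_{t}\MBE[r_t\mathbbm{1}_{G_t^c}]$. Because the reward is bounded, the exploration term is at most $\sum_{t=1}^{T}\epsilon_t=\sum_{t=1}^{T}t^{-(1-\beta)/2}=O\!\big(T^{(1+\beta)/2}\big)$, which is $o\!\big(T^{(1+\beta+\omega)/2}\big)$ for every $\omega>0$; so this term is automatically negligible and the whole problem reduces to forcing the exploitation term to obey the same growth.

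The central device for the exploitation term is that on an exploration round the action is drawn \emph{independently of $\bm{x}_t$}, so the exploration triples are, conditionally on their number, i.i.d.\ draws in which $a$ is uniform on $[K]$. Together with \emph{honesty} (Definition~\ref{def:honest}), which forbids the tree splits from reading the responses, this lets me treat the forest as trained on clean i.i.d.\ data for the purpose of estimating each $\mu(\bm{x},a)=\MBE[Y(a)\mid \bm{X}=\bm{x}]$. Exploitation rounds pile extra samples onto the currently favoured action, so the \emph{binding} quantity is the sample count of a sub-optimal action, which is supplied essentially only by exploration: letting $n_t$ denote this count before round $t$, its mean is $\sum_{s<t}\epsilon_s/K\asymp t^{(1+\beta)/2}$, and a Chernoff bound pins $n_t$ to this order outside an event of super-polynomially small probability.

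On the resulting good event I invoke the consistency of honest, $\alpha$-regular, symmetric random-split forests (the bias--variance bounds underlying Theorem~3.1 of \cite{wager2018estimation}, via Definitions~\ref{def:honest} and \ref{def:alpha_regular}, applied action-by-action under the stated regularity hypotheses): the integrated mean-squared error $\MBE_{\bm{x}}[(\hat y_a-\mu(\bm{x},a))^2]$ decays polynomially in $n_t$ at a rate whose exponent is governed by $A=\frac{\pi'}{d}\frac{\log((1-\alpha)^{-1})}{\log(\alpha^{-1})}$ (bias from the shrinking leaf diameter balanced against variance from the leaf sample count). Crucially I would \emph{not} assume a reward gap: if $\hat a=\arg\max_a\hat y_a$ differs from the optimal $a^\ast(\bm{x})$, optimality of $\hat a$ for $\hat y$ gives $\mu(\bm{x},a^\ast)-\mu(\bm{x},\hat a)\le 2\max_a|\hat y_a-\mu(\bm{x},a)|$, so by Jensen the per-round exploitation regret is at most $2\sum_a\sqrt{\MBE_{\bm{x}}[(\hat y_a-\mu(\bm{x},a))^2]}$, i.e.\ of the order of the root-MSE $\asymp n_t^{-r(A)}$ with $r(A)=\tfrac{A}{2(1+A)}$.

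Summing this per-round bound gives $\sum_t n_t^{-r(A)}\asymp\sum_t t^{-r(A)(1+\beta)/2}\asymp T^{\,1-r(A)(1+\beta)/2}$ up to poly-logarithmic factors. The exponents $\epsilon_t=t^{-(1-\beta)/2}$ and $\beta=1-\frac{2A}{2+3A}$ are pinned down precisely by equating this exploitation exponent with the exploration exponent $(1+\beta)/2$: solving $1-r(A)\tfrac{1+\beta}{2}=\tfrac{1+\beta}{2}$ with $r(A)=\tfrac{A}{2(1+A)}$ returns exactly $\beta=\tfrac{2+A}{2+3A}$, so both contributions are $O\!\big(T^{(1+\beta)/2}\,\mathrm{polylog}\,T\big)$ and the arbitrary slack $\omega>0$ absorbs the poly-logarithmic and concentration remainders, yielding the first limit; the second follows at once from $(1+\beta)/2<1$. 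The step I expect to be the main obstacle is legitimising the i.i.d.\ forest-consistency result on \emph{adaptively} collected data: the forest is retrained every round on a sample whose composition depends on all past estimates, so I must argue that restricting attention to the context-independent exploration sub-sample, combined with honesty and the preservation of $\alpha$-regularity under retraining, decouples the split geometry from the adaptive outcomes and places each estimate squarely inside the framework of \cite{wager2018estimation} — and that the extra exploitation samples can only sharpen, never corrupt, the estimate of each $\mu(\bm{x},a)$.
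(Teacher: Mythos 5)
Your skeleton coincides structurally with the paper's own proof: the paper likewise splits each round into an exploration part (contributing $\epsilon_t\Delta_{\max}$ per round, hence $O(T^{(1+\beta)/2})$ in total), bounds the exploitation regret by twice the forest's estimation error with no gap assumption, uses the fact that $\epsilon$-decreasing exploration induces an overlap condition $\MBP[A=a\mid X=\bm{x}]\ge \frac{1}{K}t^{-(1-\beta)/2}$ on the collected data, and fixes $\beta$ by equating exponents. Your arithmetic is also correct: your two-step balance (root-MSE $\asymp n_t^{-A/(2(1+A))}$ in the effective sample count $n_t\asymp t^{(1+\beta)/2}$, then exploitation rate $=$ exploration rate) is algebraically equivalent to the paper's three-way balance of the standard deviation $\sqrt{s/t}$, the bias $(\varepsilon_t s)^{-A/2}$, and the exploration rate $\varepsilon_t$, and it reproduces $\beta=\frac{2+A}{2+3A}=1-\frac{2A}{2+3A}$.

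The genuine gap is the step you merely "invoke": a polynomially decaying (finite-sample) error bound for the multi-action forest trained by the algorithm. This cannot be cited off the shelf from \cite{wager2018estimation}: their results are (i) asymptotic, (ii) for single-action forests under a \emph{constant} overlap probability, and (iii) control the variance only through the H\'ajek-projection/incrementality machinery, so no ready-made MSE or concentration bound exists for your setting. The bulk of the paper's proof (its Lemmas~7 and~8) consists precisely of re-deriving the bias bound, the incrementality bound, and asymptotic normality for multi-action trees under the \emph{dynamic} overlap $\varepsilon_n=n^{-(1-\beta)/2}$ — the factor $\varepsilon_n$ degrades every estimate — and then converting the asymptotic normality into a quantitative high-probability bound (a Berry--Esseen-type rate for the H\'ajek projection plus an explicit bound on $\MBE[(\hat{\mu}_n-\RHmu_n)^2]$) so that per-round bounds can be summed with controlled failure probabilities $\omega_t^\prime$; absorbing those remainders is exactly where the slack $\omega$ is spent. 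Moreover, your rate $n_t^{-A/(2(1+A))}$ is not an intrinsic property of the forest as a function of $n_t$: it holds only because the theorem's specific $\beta$ makes the subsample size $s=t^\beta$ balance bias against variance under that dynamic overlap (for a generic subsample size the rate in $n_t$ is strictly worse), so quoting the Wager--Athey "optimal" rate presupposes the very balancing you set out to derive. Finally, on the adaptivity obstacle you flag: your proposed resolution — that exploitation samples "can only sharpen, never corrupt" the estimates — is asserted, not proved (adaptively selected samples need not behave like i.i.d.\ ones), and it is not the paper's route either; the paper instead treats the pooled dataset as exchangeable (a shuffling argument) so that the overlap condition applies to the whole sample fed to the forest.
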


Theorem~\ref{thm:online_causal_forest} states that our 
online forest-based bandit algorithm $Fst$ achieves a sub-linear regret w.r.t. $T$. 
Note that our estimator can be biased.
We see by appropriate choices of the exploration rate
$\epsilon_t$, our algorithm $Fst$ balances both the
bias-variance tradeoff and the exploration-exploitation tradeoffs.
For readers who study causal inference, note that we do not need the
``overlap'' assumption~\cite{wager2018estimation} on the logged data. This is because our exploration
  probability $\epsilon_t$ ensures that each action is played with a
  non-zero probability.
\begin{comment}
We also notice that the regret bounds depends on $\alpha$. If users' context vectors are distributed uniformly
in the feature space (when $\alpha{=}0.5$), then the asymptotic regret upper
bound reaches the minimal.
\end{comment}

\noindent{\bf Matching-on-forest offline evaluator (MoF).}
OfflineEvaluator~\ref{alg:forest_match} describes the {\em Matching-on-Forest} offline evaluator.
It finds a ({\em weighted}) random ``nearest neighbor'' in the logs for the
context-action pair $(\bm{x}, a)$. For a decision tree $b\in [B]$, the ``nearest
neighbors'' of $(\bm{x}, a)$ is the data items in the same leaf
$L_b(\bm{x})$ which have the same action $a$.
If a data sample belongs to the
{nearest neighbors}
of $(\bm{x}, a)$ in more trees, then it will be returned by $MoF$ with a higher probability. 
  
\begin{evaluator}
      \caption{\bf  Matching on Forest ({\text{MoF}})}\label{alg:forest_match}
      {\bf Input:} a multi-action forest $\CF$ with leaf functions
  $\{L_b(\cdot)\}_{b=1}^B$, and the logged data $\CL$\\
    \Fn{\FGetOutcome{$\bm{x}, a$}}{
    $b \gets$ a uniformly random number in $\{1,2,\cdots, B\}$\\
    $\CI_{\text{matched}}\gets \{i~|~ L_b(\bm{x}_i) {=} L_b(\bm{x}), a_i{=}a\}$\\
    \If{$\CI \ne \emptyset$}{
      $i\gets $ a random sample from $\CI_{\text{matched}}$\\
      $\CL{\gets} \CL\backslash \{(\bm{x}_i,a_i,y_i)\}$\mycomment{delete item}\\
      {\Return} $y_i$
    }
         {\Return} NULL
      }
  \end{evaluator}

\section{Experiments} 
\label{sec:experiments}
We use real datasets from Yahoo, as well as synthetic data to
carry out our experiments\footnote{Code and Yahoo's data are in
  \cite{supplement}, which will be public once this paper is published.}.
First, we show that it is better to use both the logged data and the online feedbacks to
make decisions, compared with using just one of the data sources.
Second, we show why we need to judiciously use the logged data via our proposed method.   
Third, we discuss the practicability of our algorithms.

\subsection{Datasets and Experiment Settings}

{\bf Synthetic dataset.}
Each user's context $\bm{x}$ is drawn from $[-1,1]^d$ uniformly at random.  
Consider propensity scores $\MBP[\text{action}=a|\bm{x}]= ps(\bm{x},a)$ for all actions
$a\in\{0,\cdots, K-1\}$.   
Unless we vary it explicitly, 
we set the propensity score $ps(\bm{x},a)=\exp(s_a)/(\sum_{a=0}^{K-1} \exp(s_a))$ by default, 
where $s_a=\exp(-\bm{x}^T\bm{\theta}_a(\MBE[y|a] - \MBE[y|(a{+}1)\mod K]) )$.  
We generate the action $a\in\{0,\cdots, K{-}1\}$ according to the
propensity scores.  
We consider a reward function $y {=} f(\bm{x}, a)$ for each 
$(\bm{x}, a)$ pair.   
Unless we vary it explicitly, we set $f(\bm{x},a)=\bm{x}^T\bm{\theta}_a+b_a$
for some parameter $\bm{\theta}_a\in\MBR^d$ and bias $b_a=0.5\times a$.  
For the contextual-independent cases, 
the expected reward for an action $a$ is 
$\MBE[y | a] {=} \MBE_{\bm{x}}[f(\bm{x},a) | a]$ by marginalizing over the context $\bm{x}$.
By default, we set the number of arms as $K=3$.  
We present experiment results under other settings in our supplementary materials~\cite{supplement}.

\begin{figure*}
  \centering
  \begin{minipage}{0.245\linewidth}
   \captionsetup{width=0.94\textwidth}
    \includegraphics[width=\textwidth]{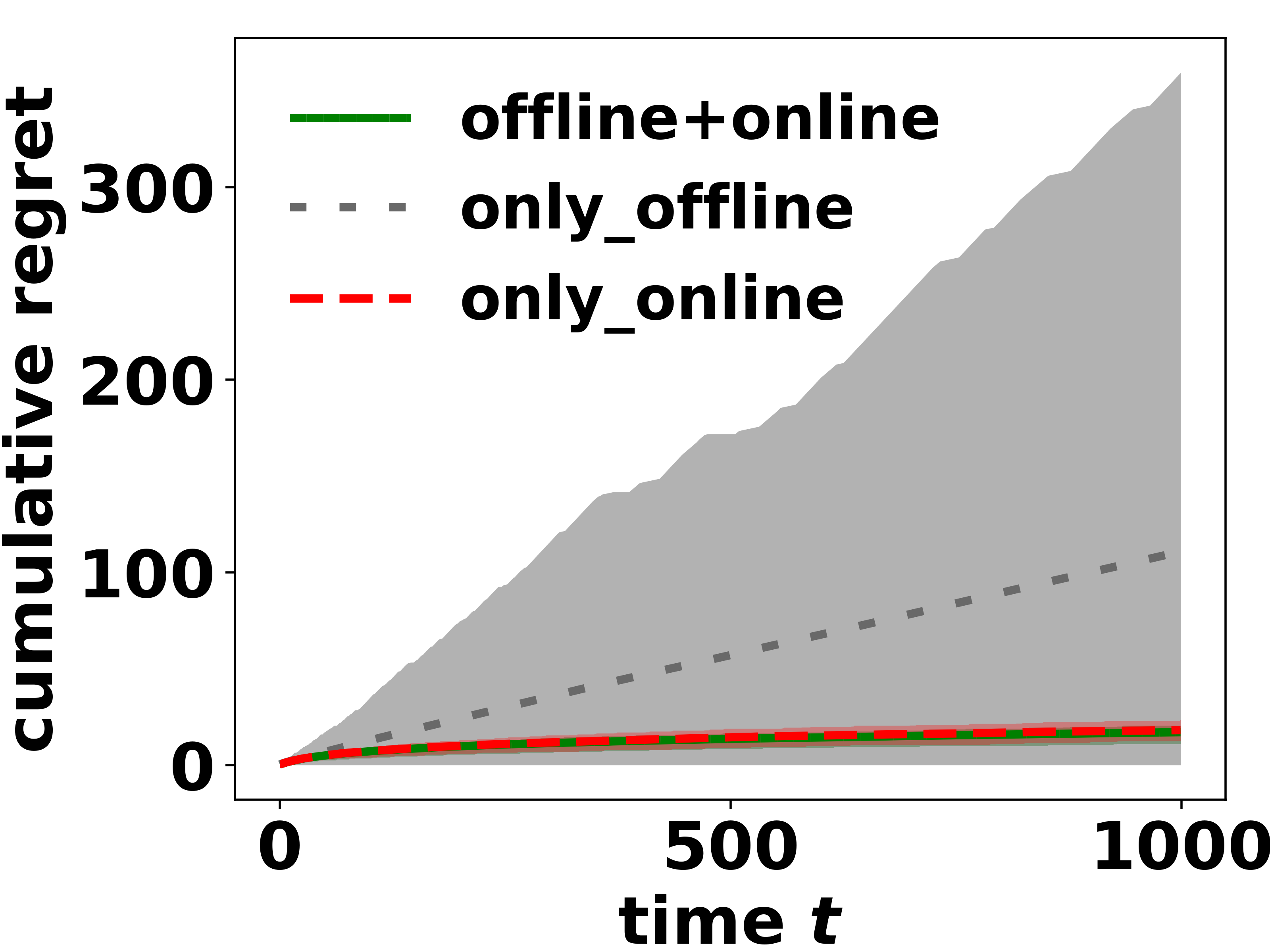} 
        \caption{Cumulative regrets of $\CA_{\text{UCB+EM}}$ \& variants ($K{=}2$)}
    \label{fig:exact_matching}
  \end{minipage}
  \begin{minipage}{0.245\linewidth}
   \captionsetup{width=0.97\textwidth}
    \includegraphics[width=\textwidth]{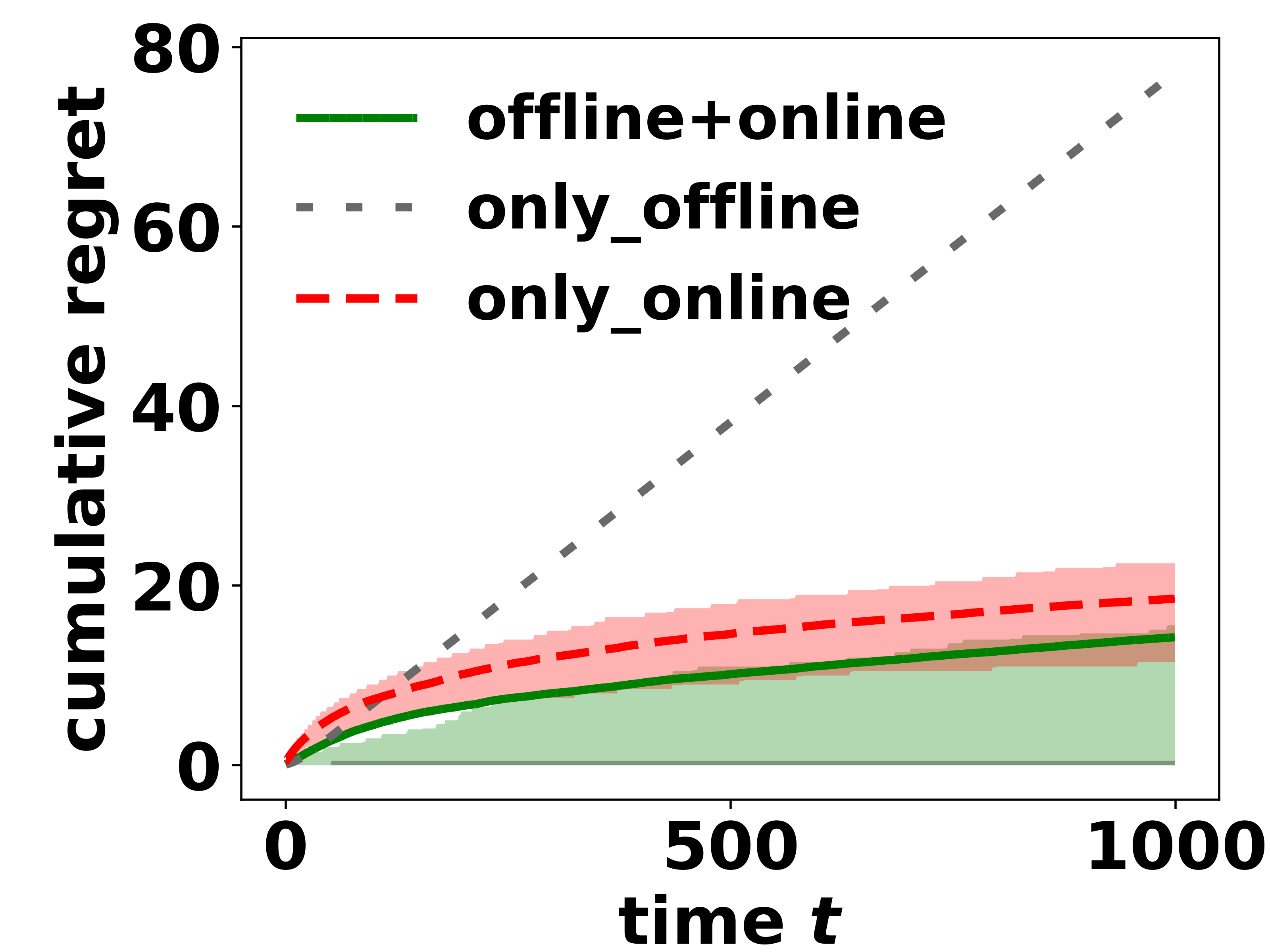} 
        \caption{Cumulative regrets of $\CA_{\text{UCB+PSM}}$ \& variants ($K{=}2$)}
    \label{fig:ps_matching}
  \end{minipage}
  \begin{minipage}{0.245\linewidth}
   \captionsetup{width=0.96\textwidth}
    \includegraphics[width=\textwidth]{{exp2_IPSW_UCB}.png} 
        \caption{Cumulative regrets of $\CA_{\text{UCB+IPSW}}$ and its variants}
    \label{fig:IPSW}
  \end{minipage}
  \begin{minipage}{0.245\linewidth}
   \captionsetup{width=0.95\textwidth}
    \includegraphics[width=\textwidth]{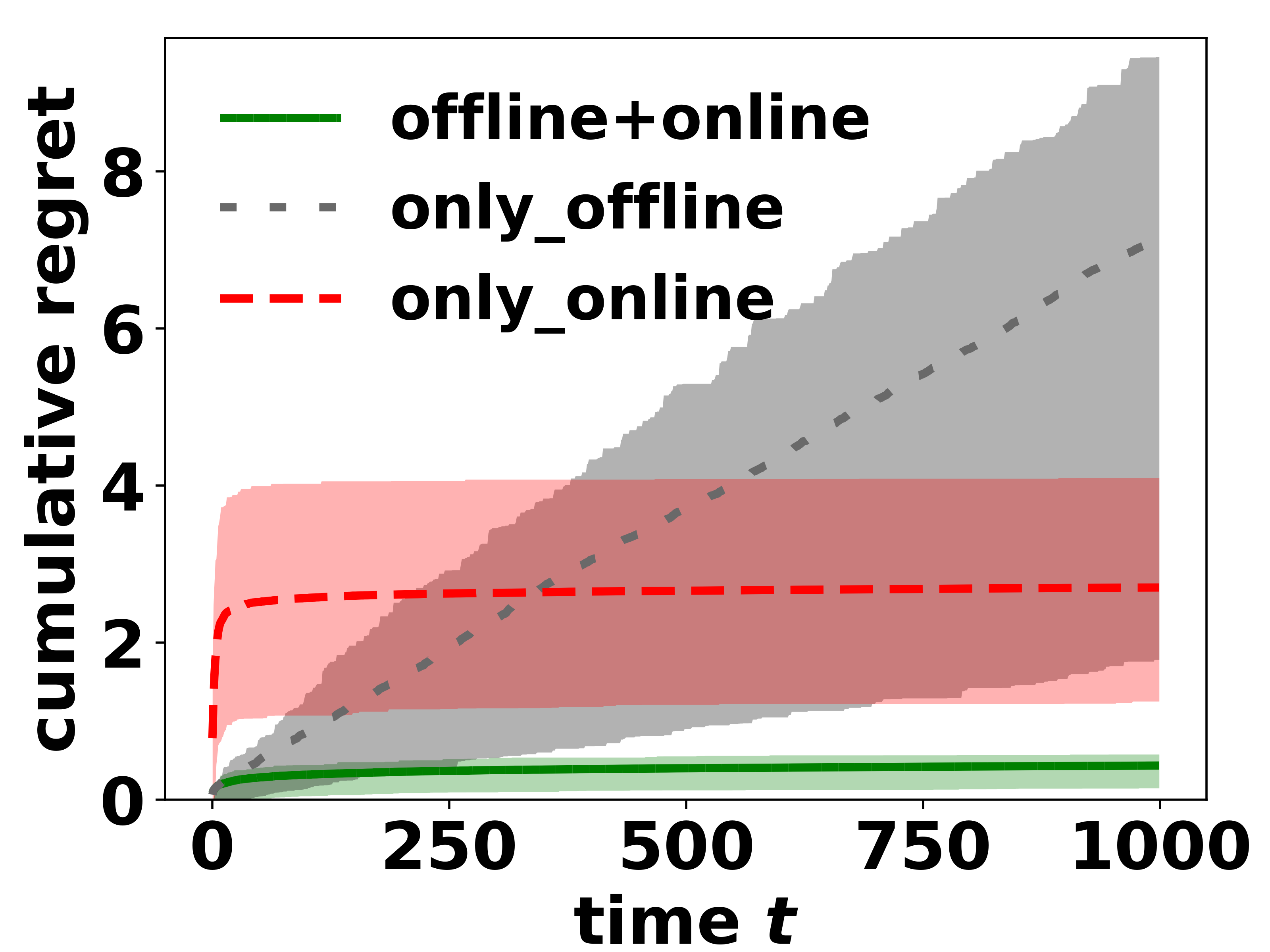} 
        \caption{Cumulative regrets of $\CA_{\text{LinUCB+LR}}$, linear $f$}
    \label{fig:linear}
  \end{minipage}
\end{figure*}

\begin{figure*}
  \centering
  \begin{minipage}{0.245\linewidth}
    \includegraphics[width=\textwidth]{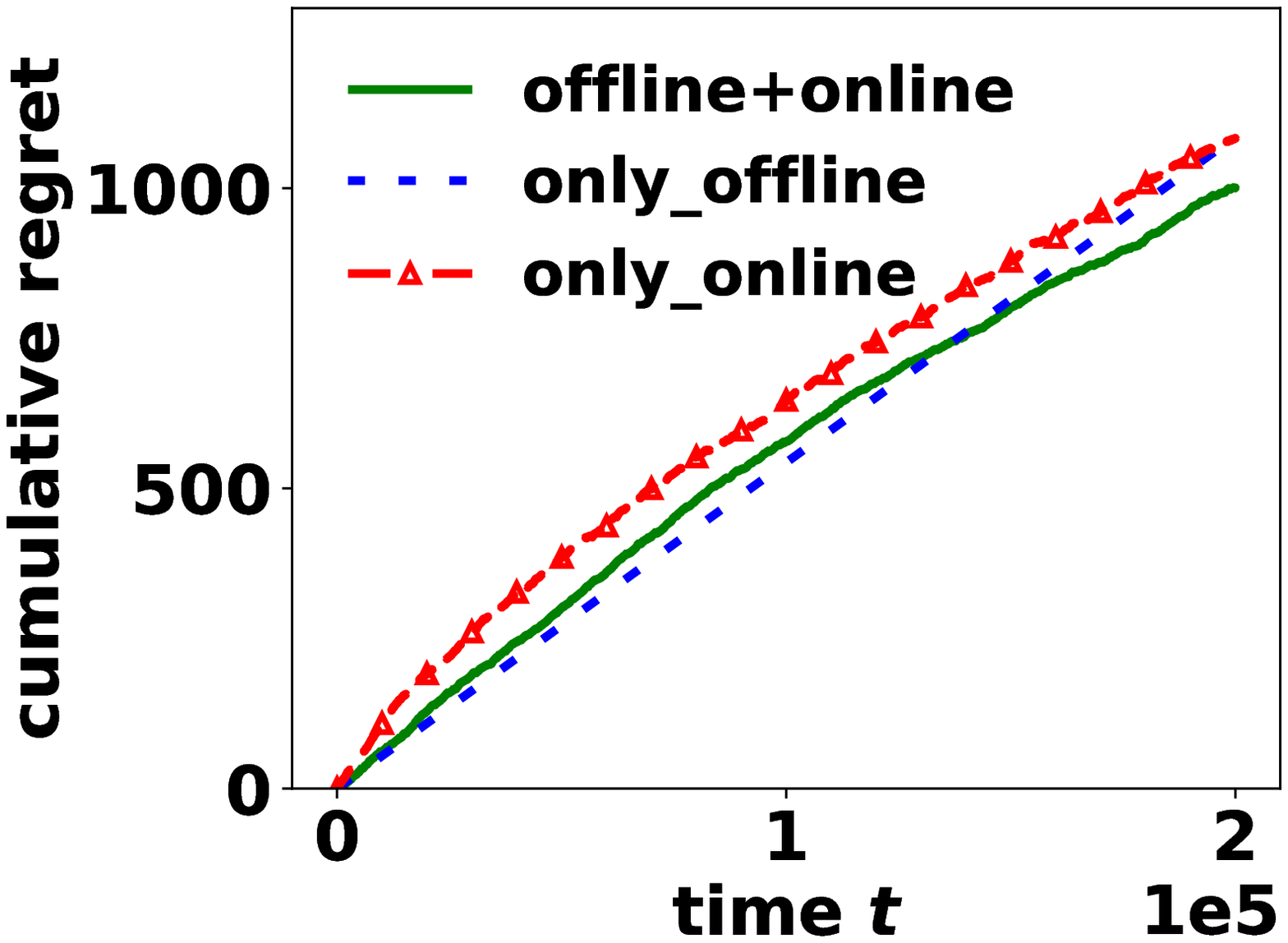}
        \captionsetup{width=0.95\textwidth}
    \vspace{-0.16in}
    \caption{Cumulative regret of $\CA_{\text{UCB+IPSW}}$ [Yahoo, context-independent]}
    \label{fig:yahoo_ipsw}
  \end{minipage}
  \begin{minipage}{0.245\linewidth}
    \includegraphics[width=\textwidth]{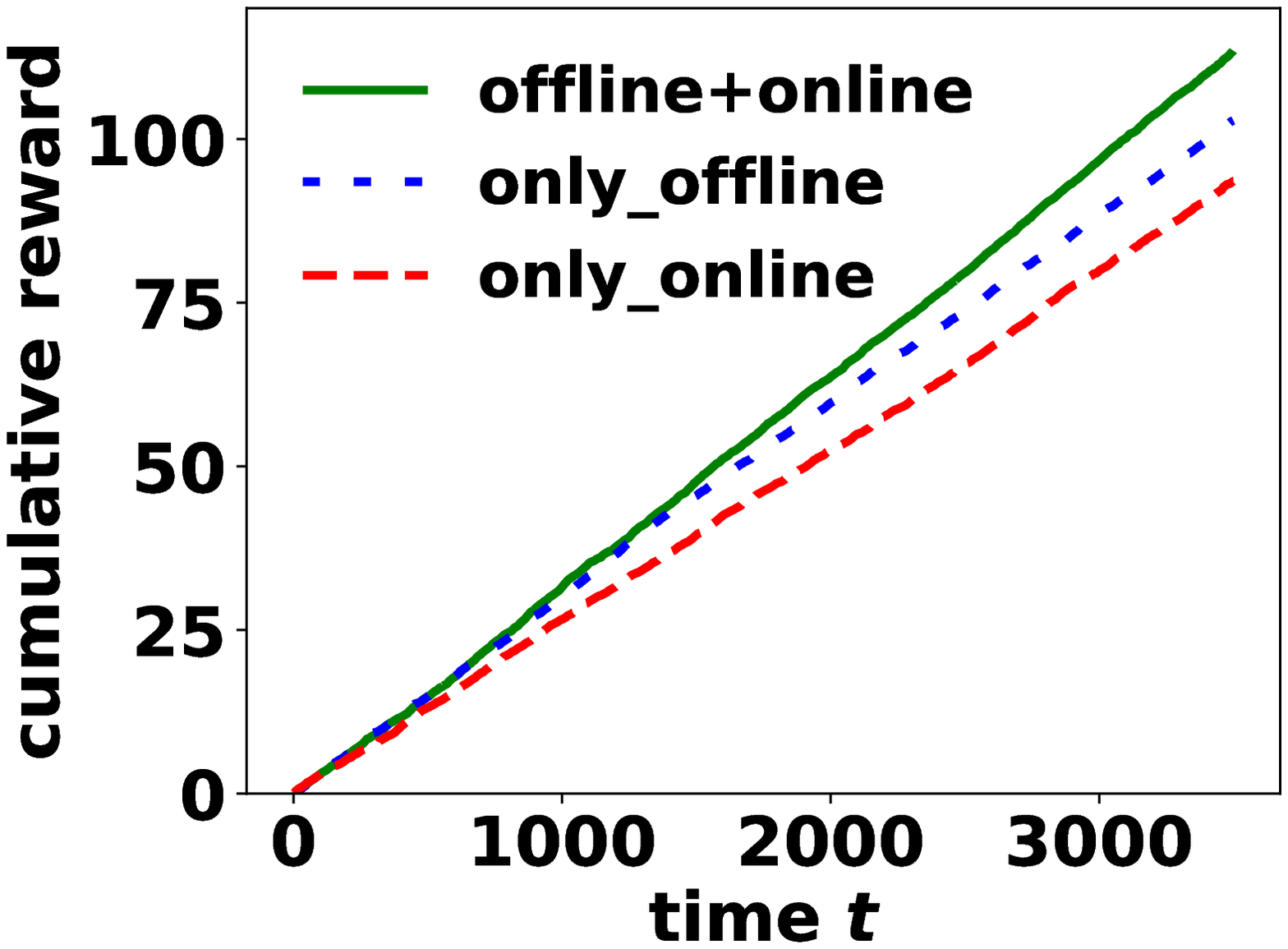}
    \vspace{-0.16in}
    \captionsetup{width=0.88\textwidth}
    \caption{Reward of $\CA_{\text{LinUCB+LR}}$ and its variants [Yahoo, contextual]}
  \label{fig:yahoo_linUCB}
  \end{minipage}
  \begin{minipage}{0.245\linewidth}
    \includegraphics[width=\textwidth]{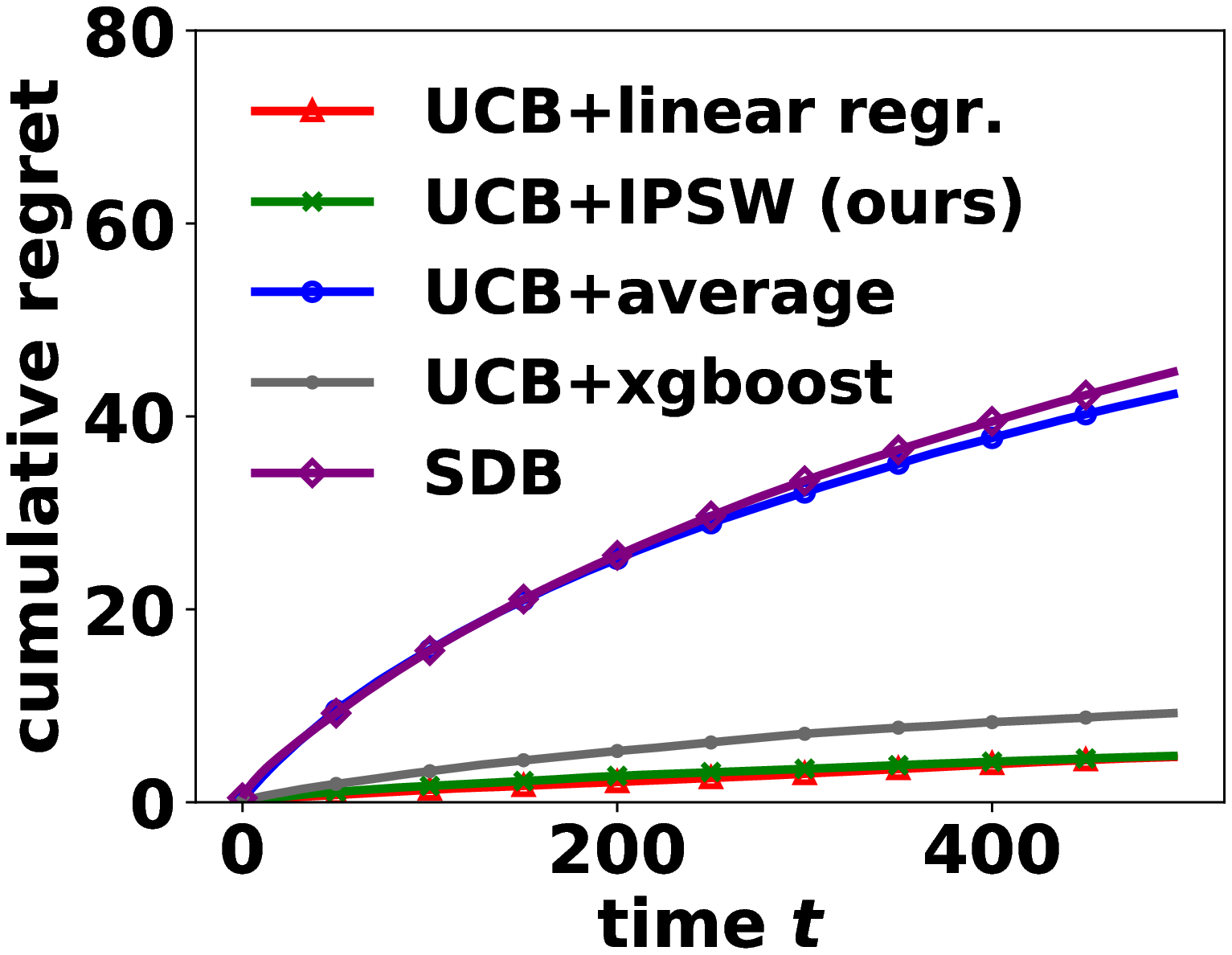}
    \vspace{-0.25in}
    \captionsetup{width=0.9\textwidth}
    \caption{Different algorithms on synthetic data, linear function
      $f$}
    \label{fig:supervised_UCB_continuous}
  \end{minipage}
  \begin{minipage}{0.245\linewidth}
    \includegraphics[width=\textwidth]{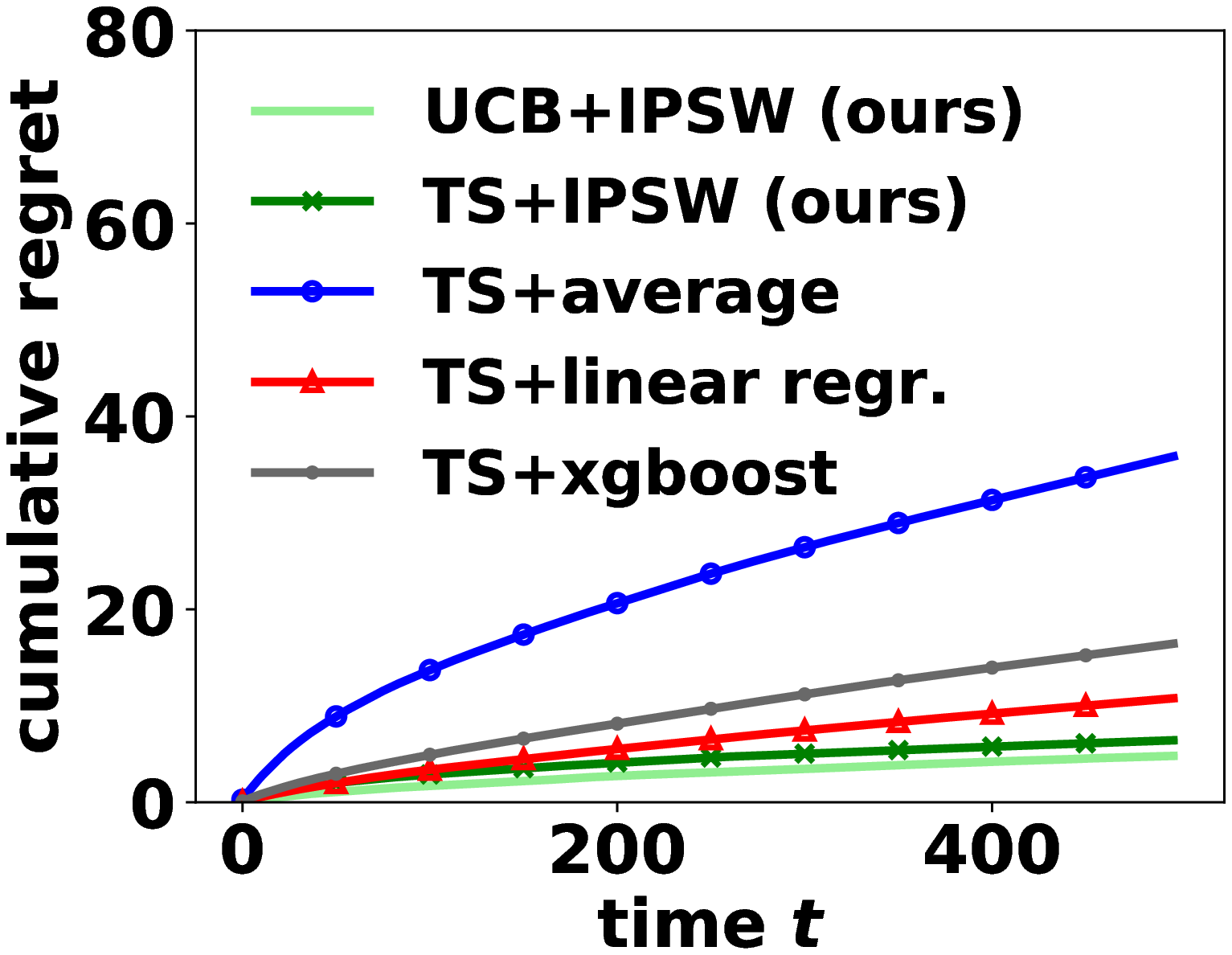}
    \vspace{-0.25in}
    \captionsetup{width=0.9\textwidth}
    \caption{Different algorithms using Thompson Sampling, linear function $f$}
    \label{fig:supervised_TS_continuous}
  \end{minipage}
\end{figure*}

\noindent
{\bf Yahoo's news recommendation data.}
The publicly available Yahoo's news recommendation dataset~\cite{yahoo} contains 100,000 rows of logs,
where we split 20\% of them as the logged data and 80\% of them as the online feedbacks. Each
row contains: (1) six user features, (2) candidate news IDs, (3) the selected
news ID, 
(4) whether the user clicks the news.
Since the user features in this dataset were learned via a linear
model~\cite{yahoo}, the Yahoo's data favors LinUCB~\cite{li2010contextual} for
contextual decisions.
We use the evaluation protocol of~\cite{li2010contextual} and run the algorithms for 50 times to take the average.

\begin{figure*}
  \begin{minipage}{0.245\linewidth}
    \includegraphics[width=\textwidth]{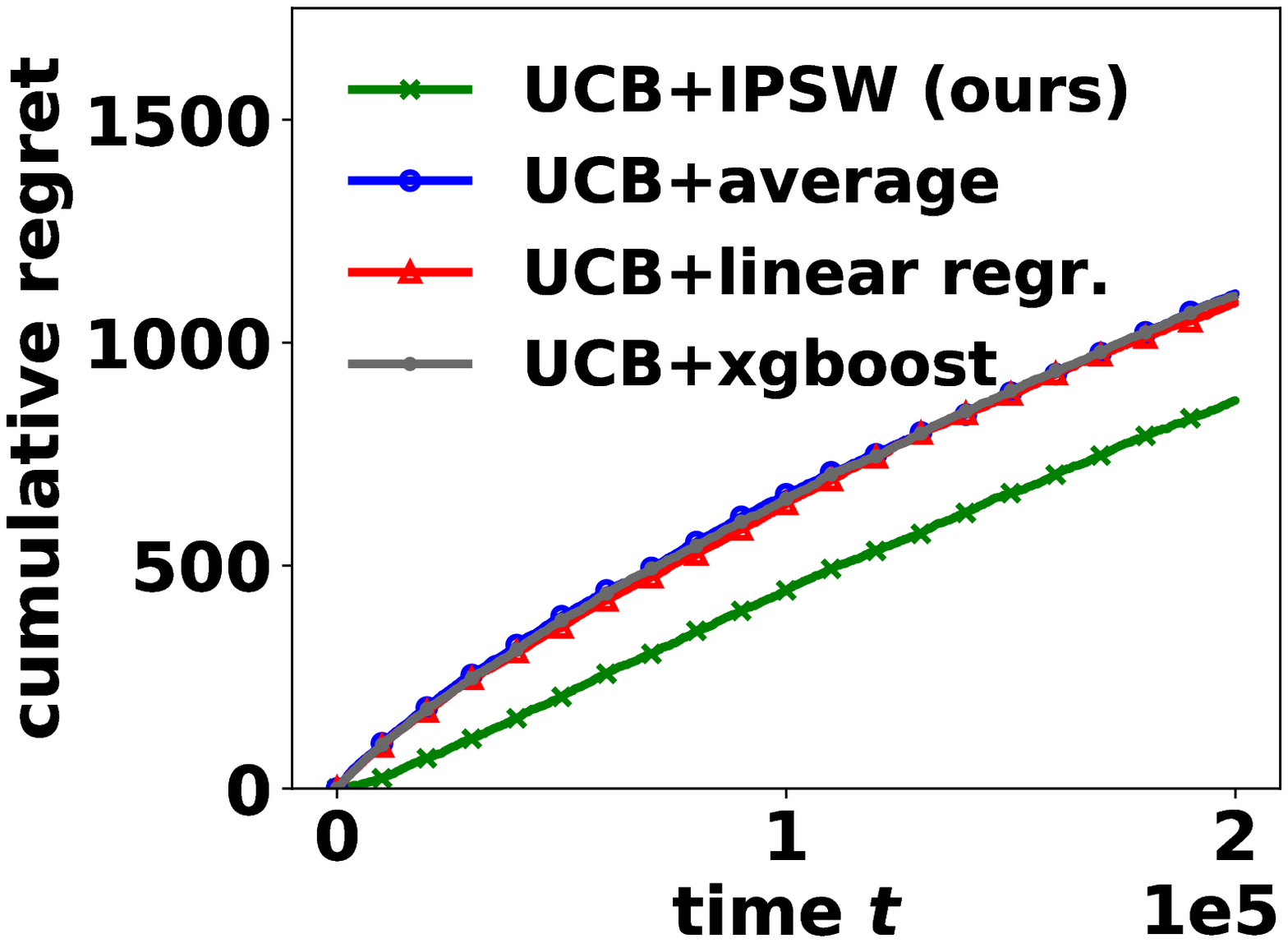}
    \vspace{-0.25in}
    \captionsetup{width=0.85\textwidth}
    \caption{Regrets of different algorithms [Yahoo, context-independent]}
    \label{fig:supervised_UCB_yahoo}
  \end{minipage}
  \begin{minipage}{0.245\linewidth}
    \captionsetup{width=0.88\textwidth}
    \includegraphics[width=\textwidth]{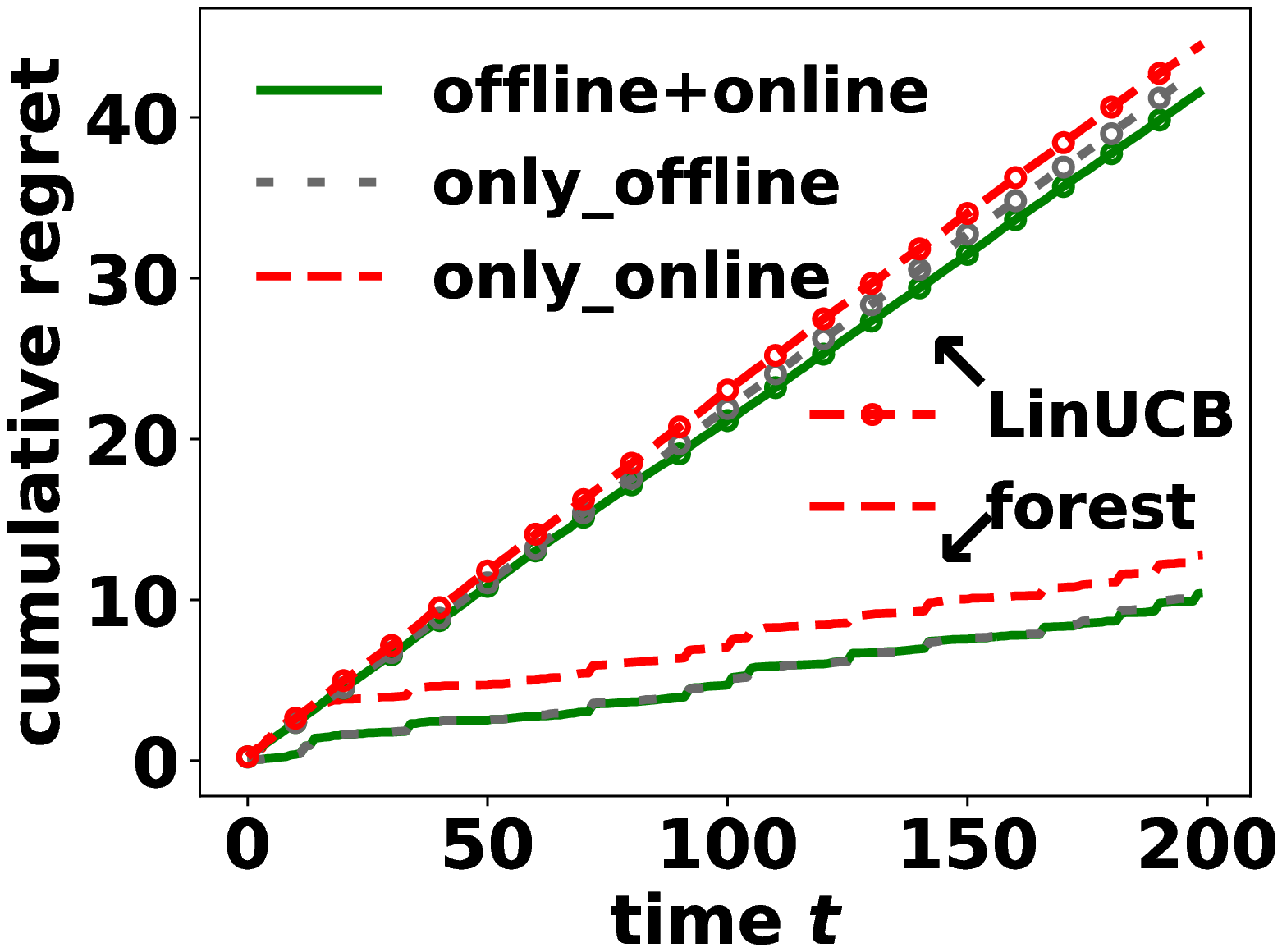}
    \vspace{-0.25in}
    \caption{Regrets of $\CA_{\text{Fst+MoF}}$, $\CA_{\text{LinUCB+LR}}$ and
      their variants,
      non-linear $\tilde{f}$}
    \label{fig:causal_forest}
  \end{minipage}
  \begin{minipage}{0.245\linewidth}
    \includegraphics[width=\textwidth]{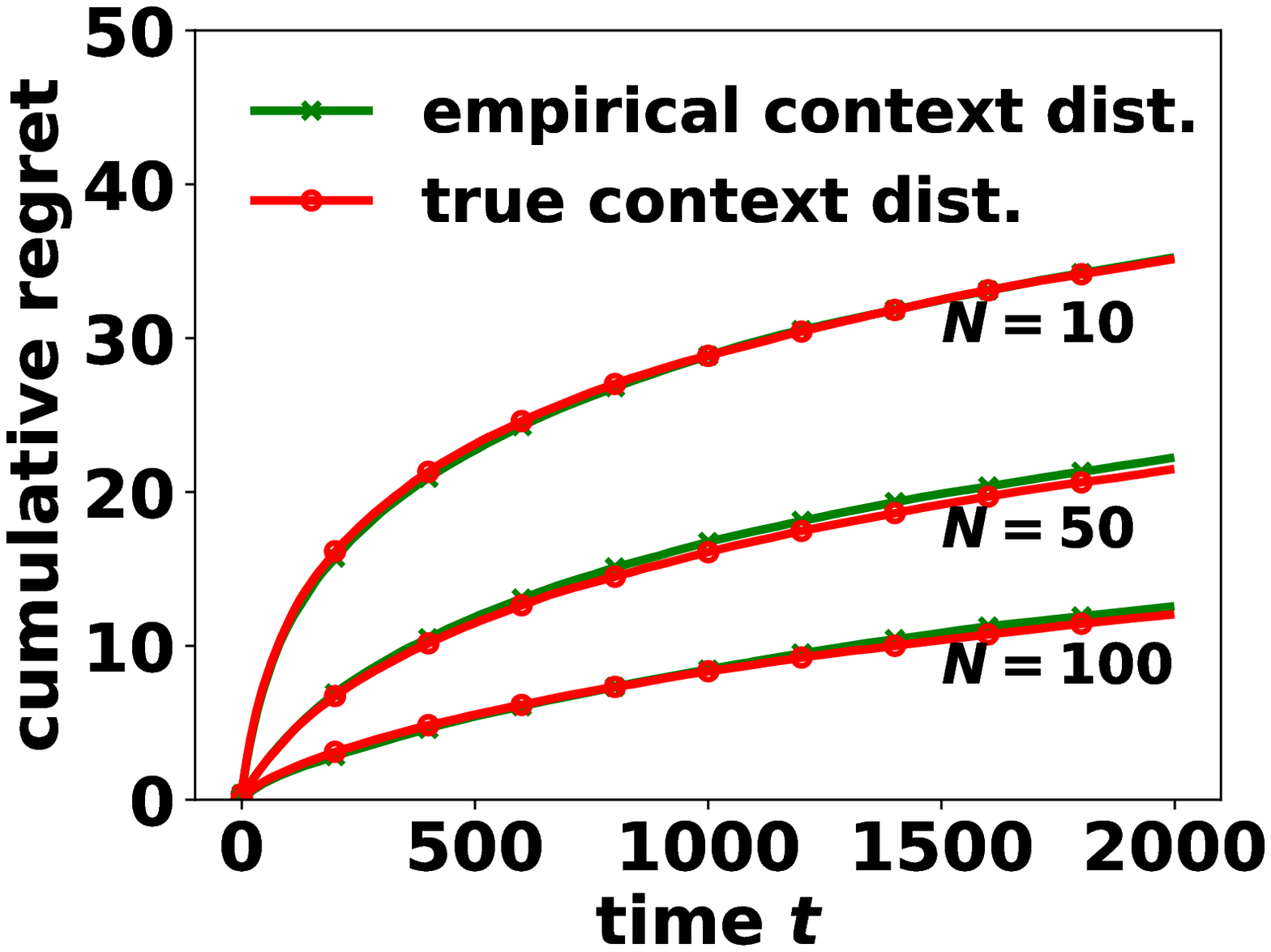}
    \vspace{-0.25in}
    \captionsetup{width=0.98\textwidth}
    \caption{Empirical context distribution
      ($\CA_{\text{UCB+IPSW}}$)}     \label{fig:empirical_context_dist}
  \end{minipage}
  \begin{minipage}{0.245\linewidth}
    \includegraphics[width=\textwidth]{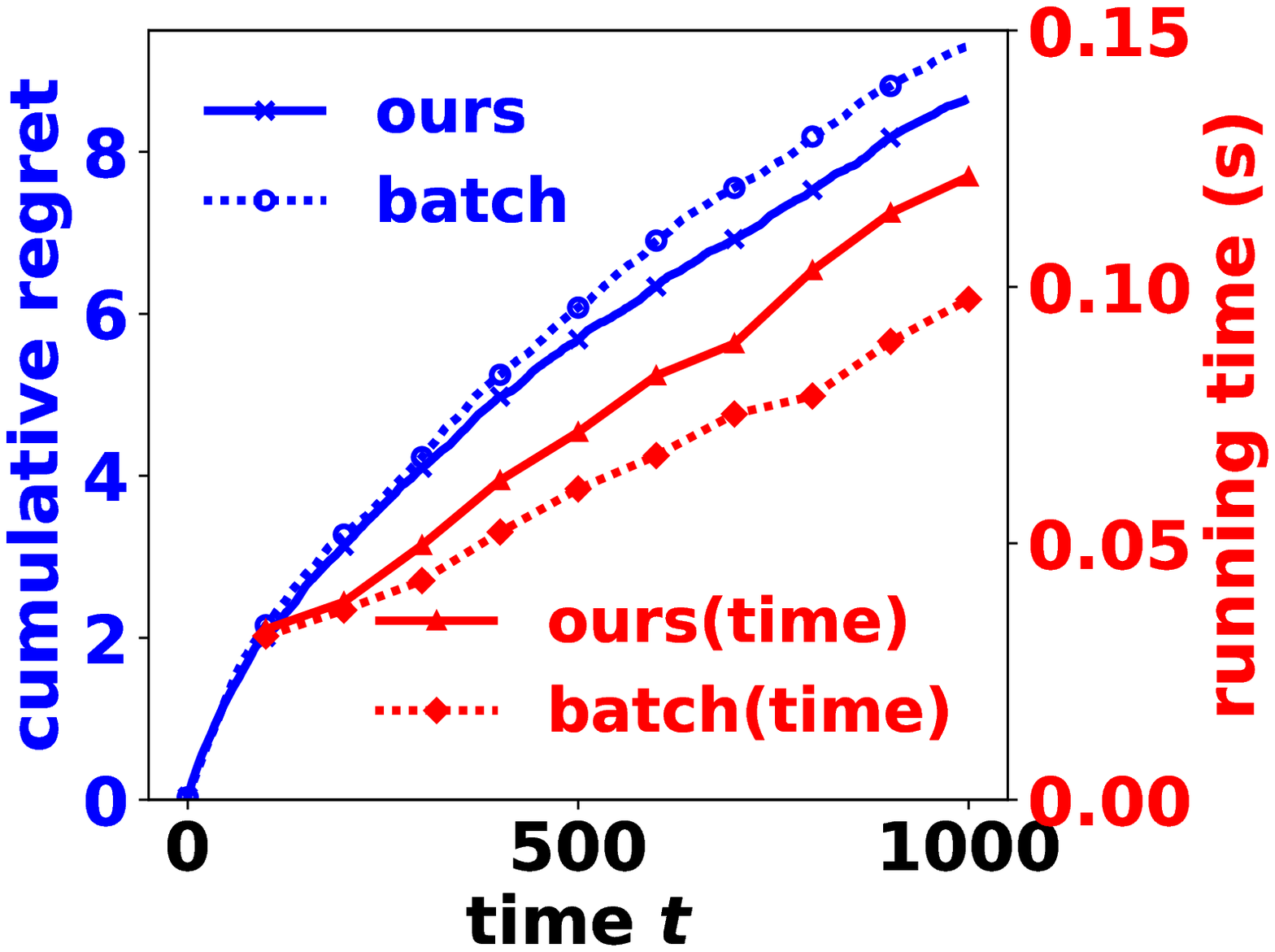}
    \vspace{-0.25in}
    \captionsetup{width=0.85\textwidth}
    \caption{Batch method vs. ours ($\CA_{\text{UCB+IPSW}}$)}
    \label{fig:batch_mode}
  \end{minipage}
\end{figure*}

\vspace{-0.05in}
\subsection{Using Both Offline and Online Data}

We compare the performance of algorithm $\CA_{\CO+\CE}$ 
(or $\CA_{\CO_c +\CE}$)
with its two variants that do not combine 
offline and online data: (1)
online bandit algorithm $\CO$ (or $\CO_c$)
that only uses online feedbacks; (2)
offline causal inference algorithm $\CE$
that only uses logged data.
{
}

\noindent
{\bf 
Exp1: Synthetic data.} 
We run each algorithm 500 times to get the average regret. We also plot the
20-80 percentiles as the confidence interval.
In Figure~\ref{fig:exact_matching}, \ref{fig:ps_matching} and \ref{fig:IPSW}, 
we have 100 logged data points.
We observe that 
our ``offline+online'' algorithms always have smaller regrets than the
``only\_online'' variants.  
This is because using logged data to
warm-start reduces the cost of online exploration.  
The regret for the  ``only\_offline'' version increases linearly in time,
with a large variance. This is because the
decisions can be either always right or always wrong depending on the initial decision.
In particular, in Figure~\ref{fig:ps_matching} and \ref{fig:IPSW}, the
80-percentile of the regrets for the ``only\_offline'' variants are always zero,
although the average regret is high.
We set $K=2$ for $\CA_{\text{UCB+EM}}$ and
$\CA_{\text{UCB+PSM}}$ because they cannot work well for more actions~\cite{supplement}.
We also set the context dimensions $d=2K$.
Figure~\ref{fig:exact_matching} shows that using the offline data does
not reduce the regret under the offline evaluator $EM$, because it
is difficult to find exactly matched logged data 
point for contexts in high dimensions.
In Figure~\ref{fig:ps_matching}, algorithm
$\CA_{\text{UCB+PSM}}$ improves the efficiency to use the logged data, 
and reduces the regret.  
Algorithm $\CA_{\text{UCB+IPSW}}$ can work for $K=3$ and further reduces the regret, as shown in Figure~\ref{fig:IPSW}.

We also investigate the contextual decision case. In Figure~\ref{fig:linear},
recall that by default our outcome function
$\MBE[y]=f(\bm{x},a)=\bm{\theta}_a^T\cdot\bm{x}$ is linear w.r.t. the contexts $\bm{x}$.
We see our ``offline+online'' algorithm $\CA_{\text{LinUCB+LR}}$ has the
smallest regret which is nearly zero,
because it uses the logged data to reduce the cost of online exploration.

\noindent
{\bf Exp 3: Yahoo's dataset. }
Figure~\ref{fig:yahoo_linUCB} shows that our 
``offline+online'' $\CA_{\text{LinUCB+LR}}$ improves the rewards by
21.1\% (or 10.0\%) compared to the ``only\_online'' {\em LinUCB} (or the
``only\_offline'' {\em LR} algorithm).

Although Yahoo's data were prepared to evaluate contextual
decisions~\cite{li2010contextual},
in Figure~\ref{fig:yahoo_ipsw} we restrict the decisions to be
context-independent. Our ``offline+online'' $\CA_{\text{UCB+IPSW}}$ has
a lower regret than the
``only\_online'' UCB algorithm. Our $\CA_{\text{UCB+IPSW}}$ has a lower regret
than the ``only\_offline'' IPSW algorithm when $T$ is large.

\noindent{\bf Lessons learned.}
Our algorithms that use both data sources achieve the
largest rewards or the smallest regret on both real and synthetic datasets, for
both context-independent and contextual decisions. 

\subsection{Proper Usage of the Offline Data}
\label{sec:exp:proper}
Besides our causal inference approach to use the offline logged data, there are
other heuristic methods which can use both data sources. 
We will show that our proposed method has a superior performance over the following heuristics.

\begin{enumerate}
\item {\bf Historical average in data (historicalUCB~\cite{shivaswamy2012multi}).}
This method uses the empirical averages of each action in the logged data as the initial
values for the online bandit oracle. 
\item {\bf Linear regression.} Instead of simply calculating the average,
  another way is to use supervised learning algorithm to {\em ``learn''} from offline
  data. The linear regression method learns a total number of $K$ linear models for each actions where
   features are the contexts and labels are outcomes.
\item {\bf Xgboost.}
  Xgboost~\cite{chen2016xgboost} is another supervised learning algorithm that
  often performs well for tabular data. The Xgboost method learns a total number
  of $K$ models for
  the $K$ actions.
\item {\bf Stochastic Delayed Bandits (SDB~\cite{mandel2015queue}).} Stochastic delayed bandit is a method
  proposed for bandit problem with delayed feedback. It can deal with bandit
  with logged data when we treat the logged data as the delayed feedbacks.
\item {\bf Thompson sampling with informed prior.} 
  Thompson sampling~\cite{agrawal2012analysis} is a Bayesian online decision
  algorithm. 
With logged data, one can use the historical data to give a
  prior distribution for each action. For example, one can use the average reward for each action to calculate the prior.
\end{enumerate}
All the above heuristics fall within our framework where
different heuristics to use the offline data are different offline evaluators.

\noindent 
{\bf Exp 4: Our method vs. others on synthetic data. }
Figure~\ref{fig:supervised_UCB_continuous} compares our algorithm and the
baseline heuristics (1)-(4) on the synthetic data. 
Recall that by default, the outcome $y=\bm{x}^T\bm{\theta}_a+b_a$ is the linear
function w.r.t. the context $\bm{x}$.
We observe that our algorithm $\CA_{\text{UCB+IPSW}}$ and 
the linear regression method have the smallest cumulative regret. 
The linear regression method performs comparatively
well because linear regression is unbiased when the reward is a linear function~\cite{LR_unbiased}.
Xgboost performs worse than our algorithm, 
because it cannot guarantee to unbiasedly estimate the rewards.
Using historical average to initialize UCB (i.e.
historicalUCB~\cite{shivaswamy2012multi}) or using the stochastic delayed bandit 
result in the highest regrets, because they ignore the impacts of the
confounders.

Figure~\ref{fig:supervised_TS_continuous} compares different heuristics to
get the informed prior for the Thompson Sampling (TS) algorithm~\cite{agrawal2012analysis}. All these
heuristics are instances in our framework where the
online learning oracle is Thompson Sampling. 
Our algorithms $\CA_{\text{TS+IPSW}}$ and $\CA_{\text{UCB+IPSW}}$ that use the causal inference
algorithm IPSW has the lowest regret.

\noindent
{\bf Exp6: Our method vs. others on Yahoo's data.}
In Figure~\ref{fig:supervised_UCB_yahoo}, we compare different algorithms'
regrets on Yahoo's data. 
Here, we randomly delete some data rows to simulate the selection bias in the
logged data. 
In particular, we delete a logged row with a probability of 0.9 if the average reward for the
chosen article is ranked among the top-3 and the reward is 1, or if the average reward for the
chosen article is not among the top-3 and the reward is 0.
We see that our algorithm $\CA_{\text{UCB+IPSW}}$ achieves the lowest regret under this setting.
The linear regression does not perform well because the reward in Yahoo's data
is not a perfectly linear function of the contexts~\cite{wang2016learning}.

\noindent{\bf Exp7: Linear vs. forest models for contextual decision.}
In Figure~\ref{fig:causal_forest}, we conduct experiments on synthetic data.
We set the reward  
$y = \tilde{f} (\bm{x}, a) {\triangleq}
(\sum_{j=1}^d\indicator{\bm{x}\ge (\bm{\theta}_a)_j})/d + 0.5{\times}
\indicator{a=1}$ to be a nonlinear function of the context $\bm{x}$, where $d=10$. 
We see our non-parametric forest-based algorithm $\CA_{\text{Fst+MoF}}$ can reduce the regrets of by over 75\%
(from around 40 to less than 10) compared to $\CA_{\text{LR+LinUCB}}$.

The features in Yahoo's dataset were learned using a linear model, and we compare the linear
and forest models in the supplement~\cite{supplement}.

\noindent{\bf Lessons learned.} One needs to use the offline data properly to
reduce the regret in decisions. Our methods that combine causal
inference and online bandit learning achieve the smallest regret.
For contextual decisions, when the reward is not a linear function of the context,
the forest-based model outperforms the linear model.

\subsection{Practical Considerations}

\noindent 
{\bf Exp8: Relaxing knowledge on context distribution. }
Recall that in our framework Algorithm~\ref{alg:framework}, we propose to use
the empirical distribution of the contexts from both offline and online data.
In Figure~\ref{fig:empirical_context_dist}, we compare the regret
using empirical and true context distribution using synthetic data, where we run
the algorithms for 2,000 time to take the average.
For various number of logged data $N\in\{10,50,100\}$, algorithms that use empirical context
distribution or true context distributions have similar regrets.
This shows the soundness to use empirical
context distribution in our framework.
We do not use real data, because for real data we do not know the true context distribution.

\noindent
{\bf Exp9: Comparison to batch method.}
One variant of our algorithmic framework is to use the logged data all in a batch before the online decisions. 
In contrast, in our Algorithm~\ref{alg:framework}, we use the logged data before each online
decision round $t$.
On synthetic data, Figure~\ref{fig:batch_mode} shows that our method and the batch method have
similar cumulative regrets, although our method is slightly better when $t$ is large.
The running time for the two methods increase linearly as the number
of online rounds $t$ increases. This shows that both methods are scalable w.r.t.
$t$.
Our Algorithm~\ref{alg:framework} has 
lower regret when $t$ is large, but is slower compared to its batch variant.
We also point out that the batch method do not have theoretical regret guarantee.
We do the comparison on real data in our supplement~\cite{supplement}.

\noindent{\bf Unobserved confounders.}
For real data Yahoo, probably we do not observe all the
confounders~\cite{wang2016learning}\cite{liu2013introduction}. Our experiments
show that in these real datasets, our algorithms still have the lowest
regrets. 
Please refer to our supplement~\cite{supplement} for more experiments discussing
the impact of unobserved confounders.

\section{Related works} 

Offline causal inference~(e.g. \cite{rubin2005causal}\cite{stuart2010matching}\cite{pearl2000causality}) 
focuses on observational logged data and asks
``what the outcome would be if we had done another action?''.  
Pearl formulated a Structural Causal Model (SCM) framework to model and infer causal effects\cite{pearl2000causality}. 
Rubin proposed another alternative,i.e., Potential Outcome (PO) framework\cite{rubin2005causal}.
Researchers propose various techniques for causal inference.
Matching (e.g. \cite{mccaffrey2004propensity}\cite{stuart2010matching}) and weighting
(e.g. \cite{austin2011introduction}\cite{kallus2018balanced}\cite{hansen1982large}) 
are techniques that deal with the imbalance of action's distributions in offline data.
Other techniques include ``doubly robust''\cite{dudik2011doubly} that
combines regression and causal inference, and
``differences-in-differences'' \cite{bertrand2004much}.
Recently, several works studied the individualized treatment effects~\cite{wager2018estimation}\cite{athey2019generalized}.
Offline policy evaluation is closely related to offline causal inference.  
It estimates the performance (or ``outcomes'') of a policy, 
which prescribes an action for each context~\cite{swaminathan2015counterfactual}\cite{li2015offline}.
We also use offline policy evaluation to evaluate the performances of 
contextual bandit algorithms\cite{li2012unbiased}.  
The offline policy evaluators can be used as the ``offline evaluator'' in our
framework. For example, the Inverse Propensity Score Weighting method in this
paper is commonly used in offline policy evaluation~\cite{swaminathan2015counterfactual}.
Our paper is orthogonal to the above works 
in that we focus on combining (or unifying) 
offline causal inference with online bandit learning algorithms 
to improve the online decision accuracy.  
Our work points out if we ignore the online feedbacks, these offline approaches
can have a poor decision performance.  
Offline causal inference algorithms can be seen as special cases of our
framework.  

Many works studied the stochastic multi-armed bandit problem.
Two typical algorithms are UCB \cite{auer2002finite} and Thompson sampling \cite{dong2018information}.  
LinUCB is a parametric variants of UCB \cite{dani2008stochastic} tuning for 
linear reward functions.   
For the contextual bandit problem, LinUCB algorithm has a regret of $O(\sqrt{T\log(T)})$ 
\cite{chu2011contextual}\cite{abbasi2011improved} and 
was applied to news article recommendation \cite{li2010contextual}.  
The {\em Thompson sampling causal forest} by~\cite{dimakopoulou2017estimation} and
{\em random-forest bandit} by \cite{feraud2016random} were non-parametric contextual bandit
algorithms, but these works did not provide regret bound.
{
Guan  \textit{et al.}  proposed a non-parametric online bandit
algorithm using {\em k-Nearest-Neighbor} \cite{guan2018nonparametric}. 
Our causal-forest based algorithm 
improves their bounds in a high-dimensional setting. 
} 
Lattimore \textit{et al.} used the causal structure of a problem to
find online interventions \cite{lattimore2016causal}.  
Our paper is orthogonal to the above works 
in that we focus on developing a generic framework 
to combine offline causal inference with these online bandit learning algorithms 
such that offline logged data can be used to speed up theses bandit algorithms 
with provable regret bounds.   
In addition, we propose a novel {\em $\epsilon$-greedy causal forest} algorithm, 
and prove regret upper bound for it  
(to the best of our knowledge, this is the first regret bound for forest based online bandit algorithms).

Several works aimed at using logged data to help online decision making. 
The historicalUCB algorithm \cite{shivaswamy2012multi} is 
a special case of our framework, while they ignored users' contexts. 
Bareinboim \textit{et al.} \cite{bareinboim2015bandits} 
and Forney \textit{et al.} \cite{forney2017counterfactual} combined the observational data, 
experimental data and counterfactual data, to solve the
MAB problem with unobserved confounders. 
They considered a
different problem of maximizing the ``intent-specific reward'', 
and they did not analyze the regret bound.
Zhang \textit{et al.} \cite{zhang2019warm} used adaptive weighting to robustly combine supervised
learning and online learning.
They focused on correcting the bias of supervised learning via online feedbacks,
while we use causal inference
methods to synthesize unbiased feedbacks to speed up online bandit algorithms. 
Our experiments in Section~\ref{sec:exp:proper} show that using
historicalUCB~\cite{shivaswamy2012multi}, SDB~\cite{mandel2015queue} or the supervised learning algorithm~\cite{zhang2019warm} to initialize the online
learning algorithms can result in higher regrets than our method.

\section{Conclusions} 
This paper studies how to use the logged data to make
better online decisions. 
We unify the offline causal inference and online bandit algorithms into a single framework, 
and consider both context-independent and
contextual decisions. 
We introduce five novel algorithm instances that incorporate causal
inference algorithms including matching, weighting, causal forest, and bandit
algorithms including UCB and LinUCB. 
For these algorithms, we present regret bounds under our framework. In particular, we give the first regret
analysis for a forest-based bandit algorithm.
Experiments on two real datasets and synthetic data show that our algorithms that can use both logged data 
and online feedbacks outperform algorithms that only use either of the data sources.
We also show the importance to judiciously use the offline data via our methods.

Our framework can alleviate the cold-start problem of online learning,
and we show how to use the results of offline causal inference to make online decisions.
 Our unified framework can be applied to all previous applications of offline causal inference
 and online bandit learning, such as A/B testing with logged data, recommendation
systems~\cite{wang2018deconfounded}\cite{li2010contextual} and online
advertising~\cite{bottou2013counterfactual}.

\newpage
{\Huge \bf Appendices}
\appendix

\setcounter{theorem}{7}

\section{More Theoretical Results}
\label{appendix:framework}
\subsection{General Lower Bound on The Regret}

\begin{restatable}[General lower bound]{theorem}{generalLowerBound}
  \label{mthm:general_lower_bound}
  Suppose for any bandit oracle ${\CO}$, $\exists$ a
  non-decreasing function
  $h(T)$, s.t. $R(T, \CA_{\CO+\CE_\emptyset})\ge h(T)$ for $\forall T$.
  Suppose the offline
  estimator $\CE$ returns unbiased outcomes $\{{y}_j\}_{j=1}^N$ w.r.t. $\{({\bm{x}}_j,{a}_j)\}_{j=1}^N$. Then for any
  contextual-independent algorithm ${\CA_{\CO+\CE}}$, we
  have:
  \begin{align*}
  R(T, {\CA}_{\CO+\CE}) \ge h(T) - \sum\nolimits_{j=1}^N\left( \max_{a\in[K]}\MBE[y|a] - \MBE[y|a={a}_j] \right).
  \end{align*}
 For any contextual algorithm ${\CA}_{\CO_c+\CE}$, we have
  \begin{align*}
   R(T, {\CA}_{\CO_c+\CE}) \ge h(T) {-} \hspace{-0.05in}\sum\nolimits_{j=1}^N \hspace{-0.05in}
   \left( \max_{a\in[K]}\MBE[y|a,{\bm{x}}_j] {-} \MBE[y|a{=}{a}_j{,} {\bm{x}}_j] \right).
  \end{align*}
            \vspace{-0.1in}
\end{restatable}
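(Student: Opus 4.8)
The plan is to mirror the decomposition already used for the general upper bound (Theorem~\ref{thm:general_upper_bound}), namely \emph{``online regret $=$ total regret $-$ regret of virtual plays''}, but now to \textbf{lower}-bound the total regret rather than upper-bound it. First I would turn this decomposition into an exact identity. Over one full execution of Algorithm~\ref{alg:framework} the oracle $\CO$ issues exactly $T{+}N$ \textbf{play} calls: $N$ of them are ``virtual plays'' fed with synthetic outcomes from $\CE$ (yielding actions $\{a_j\}_{j=1}^N$), and $T$ of them are ``online plays'' (yielding actions $\{a_t\}_{t=1}^T$). Assigning to each play the per-step regret $\max_{a\in[K]}\MBE[y|a]-\MBE[y|a=\cdot]$ and summing over all $T{+}N$ calls, the total regret $R_{\text{total}}(T{+}N)$ splits exactly as
\begin{align*}
R_{\text{total}}(T{+}N) = R(T,\CA_{\CO+\CE}) + \sum\nolimits_{j=1}^N\left(\max_{a\in[K]}\MBE[y|a]-\MBE[y|a=a_j]\right),
\end{align*}
where the first term collects the online plays (and equals $R(T,\CA_{\CO+\CE})$ by definition) and the sum collects the virtual plays. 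Rearranging isolates the online regret as total regret minus virtual-play regret.

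Next I would lower-bound $R_{\text{total}}(T{+}N)$. The key is the indistinguishability argument that already underlies the upper bound: because $\CE$ returns unbiased outcomes, $\MBE[\CE.\textit{get\_outcome}(\bm{x},a)]=\MBE[y|a]$, the oracle $\CO$ cannot tell a virtual play apart from an online play, so the concatenated sequence of $T{+}N$ feedbacks is a valid run of the \emph{pure-online} oracle $\CA_{\CO+\CE_\emptyset}$ at horizon $T{+}N$. The hypothesis then gives $R_{\text{total}}(T{+}N)\ge h(T{+}N)$, and since $h$ is non-decreasing and $N\ge 0$ we have $h(T{+}N)\ge h(T)$. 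Substituting into the identity yields
\begin{align*}
R(T,\CA_{\CO+\CE}) &= R_{\text{total}}(T{+}N) - \sum\nolimits_{j=1}^N\left(\max_{a\in[K]}\MBE[y|a]-\MBE[y|a=a_j]\right)\\
&\ge h(T) - \sum\nolimits_{j=1}^N\left(\max_{a\in[K]}\MBE[y|a]-\MBE[y|a=a_j]\right),
\end{align*}
which is the claimed bound. The contextual statement is identical after conditioning every expectation on the generated context: replace $\MBE[y|a]$ by $\MBE[y|a,\bm{x}_j]$, use the contextual unbiasedness $\MBE[\CE.\textit{get\_outcome}(\bm{x},a)]=\MBE[y|a,\bm{x}]$, and invoke the contextual lower bound $h$ for $\CA_{\CO_c+\CE_\emptyset}$.

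The step I expect to be the main obstacle is justifying $R_{\text{total}}(T{+}N)\ge h(T{+}N)$ rigorously, i.e.\ transferring the pure-online lower bound to the mixed virtual/online run. Unbiasedness matches only the conditional \emph{means} of the synthetic feedback, whereas a regret lower bound can in principle depend on the full reward distribution (for instance through KL-type quantities). I would therefore make the indistinguishability precise as a coupling: build a pure-online instance whose feedback law agrees, play-by-play, with the mixed run, so that the oracle's action sequence---and hence its accumulated regret---has the same law in both, giving $R_{\text{total}}(T{+}N)\ge h(T{+}N)$. If $h$ is guaranteed to depend only on the gaps $\{\Delta_a\}$, mean-matching suffices; otherwise I would strengthen the requirement on $\CE$ to match the conditional outcome \emph{distribution} (not merely its mean) on virtual plays, which is exactly the regime of the matching-based evaluators of Section~\ref{sec:context_independent}, since those replay genuine logged outcomes and hence reproduce the online feedback law.
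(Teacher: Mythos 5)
Your proof is correct and takes essentially the same route as the paper's: the exact decomposition of the $(T{+}N)$-call run into virtual-play regret plus online regret, the application of the hypothesis $R(T{+}N,\CA_{\CO+\CE_\emptyset})\ge h(T{+}N)$ to the mixed run, and the monotonicity step $h(T{+}N)\ge h(T)$. Your closing caveat---that mean-unbiasedness alone may not justify treating the mixed run as a pure-online run when the lower bound $h$ depends on more than the arms' expected rewards---is a genuine subtlety that the paper itself elides (it addresses the analogous issue for the upper bound only through a footnote restricting $g(T)$ to depend solely on expected rewards), so your coupling/distribution-matching remark strengthens rather than departs from the paper's argument.
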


\noindent Theorem~\ref{mthm:general_lower_bound} shows how we can apply the
regret {\em ``lower
bound''} of online bandit oracles (e.g. \cite{bubeck2013bounded}) to derive a regret lower
bound with logged data . 
When an algorithm's upper bound meets the lower bound, we get a {\em nearly optimal}
online decision algorithm that uses the logged data.
The proof of Theorem~\ref{mthm:general_lower_bound} is in Section~\ref{sec:proof:general_bound}.

\begin{definition}[The value of logged data]
 The online learning oracle $\CO$ has a regret upper bound $g(T)$ after $T$ time
 slots. Suppose the regret of an algorithm $\CA$ that uses logged data is upper
 bounded by $R(T,\CA)$.
 Then, we call $g(T)-R(T,\CA)$ the ``value of logged data'' in time $T$.
\end{definition}
The ``value of logged data'' quantifies the reduction of regret by using the
logged data. The following corollary gives a lower bound on the ``value of
logged data'' for large $T$.
\begin{corollary}
Suppose conditions in Theorem 1 hold.
Suppose the offline evaluator returns $\{\tilde{y}_j\}_{j=1}^N$ w.r.t. $\{(\tilde{\bm{x}}_j{,} \tilde{a}_j)\}_{j=1}^N$ till time $T$.
If an online bandit oracle satisfies the ``no-regret'' property, i.e. $\exists$
a regret upper bound $g(T)$, such that $\lim_{T\rightarrow
    \infty}g(T)/T {=} 0$ (and $g$ is concave), then the difference of regret
  bounds (before and after using offline data) has the following limit for a context-independent algorithm $\CA_{\CO+\CE}$:
  \begin{align*}
  \lim_{T\rightarrow +\infty} g(T){-}R(T,\CA_{\CO+\CE}) {\ge} \sum\limits_{j=1}^N\left( \max_{a\in[K]}\MBE[y|a] {-} \MBE[y|a{=}\tilde{a}_j] \right).
  \end{align*}
  For a contextual algorithm $\CA_{\CO_c+\CE}$, the limit of such difference
  \begin{align*}
    \lim_{T\rightarrow +\infty} g(T){-}  R(T,\CA_{\CO_c+\CE}) {\ge} 
     \sum\limits_{j=1}^N \hspace{-0.04in}
    \left( \max_{a\in[K]}\MBE[y|a,\tilde{\bm{x}}_j] {-} \MBE[y|a{=}\tilde{a}_j, \tilde{\bm{x}}_j] \right).
  \end{align*}
  \label{corollary:reduction_regret}
\end{corollary}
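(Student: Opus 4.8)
The plan is to read off the result directly from the general upper bound of Theorem~\ref{thm:general_upper_bound} and then control a single ``overshoot'' term using the concavity and sublinearity hypotheses on $g$. First I would invoke Theorem~\ref{thm:general_upper_bound}: since $\CE$ returns unbiased context-independent outcomes, $\MBE[\CE\text{.\it{get\_outcome}}(\bm{x},a)]{=}\MBE[y|a]$, so
\begin{align*}
R(T, \CA_{\CO+\CE}) \le g(T{+}N) - \sum\nolimits_{j=1}^N\left(\max_{a\in[K]}\MBE[y|a] - \MBE[y|a{=}\tilde{a}_j]\right).
\end{align*}
Rearranging this inequality isolates exactly the quantity of interest,
\begin{align*}
g(T) - R(T,\CA_{\CO+\CE}) \ge \big(g(T) - g(T{+}N)\big) + \sum\nolimits_{j=1}^N\left(\max_{a\in[K]}\MBE[y|a] - \MBE[y|a{=}\tilde{a}_j]\right),
\end{align*}
so it suffices to show that the term $g(T{+}N) - g(T)$ vanishes as $T\to\infty$.

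The key step is the following elementary fact about $g$. Because the logged dataset is finite and the offline evaluator deletes each matched item, the number of virtual plays $N$ is bounded and eventually equals a constant $N_\infty$ for all large $T$; hence I may treat $N$ as fixed. For a concave $g$ the discrete increments $g(s{+}1)-g(s)$ are non-increasing in $s$, so telescoping gives $0\le g(T{+}N)-g(T)=\sum_{k=0}^{N-1}\big(g(T{+}k{+}1)-g(T{+}k)\big)\le N\,\big(g(T{+}1)-g(T)\big)$, where the left inequality uses that $g$ is non-decreasing. The no-regret property then forces the one-step increment to zero: if $g(T{+}1)-g(T)$ converged to some $c>0$, concavity would give $g(T)\ge cT+O(1)$, contradicting $g(T)/T\to 0$. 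Therefore $g(T{+}1)-g(T)\to 0$, and consequently $\lim_{T\to\infty}\big(g(T{+}N)-g(T)\big)=0$.

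Combining the two pieces, I would take $T\to\infty$ (more precisely, the limit inferior, which is all a ``$\ge$'' claim requires) in the rearranged inequality; the vanishing of $g(T{+}N)-g(T)$ and the fact that the sum on the right is a fixed constant once $N{=}N_\infty$ yield the context-independent bound. The contextual case is entirely analogous: I would start from the contextual half of Theorem~\ref{thm:general_upper_bound}, replace every $\MBE[y|a]$ by $\MBE[y|a,\tilde{\bm{x}}_j]$, and apply the same concavity argument to $g_c$ in place of $g$. The only genuine obstacle is the limit $g(T{+}N)-g(T)\to 0$; everything else is bookkeeping. The subtlety worth flagging is that \emph{both} hypotheses are needed — concavity to bound the $N$-step increment by $N$ times the one-step increment, and sublinearity to drive that one-step increment to zero — together with the stabilization of $N$. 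A merely sublinear but non-concave $g$ could have $N$-step increments that fail to vanish, so concavity is not a convenience but an essential assumption here.
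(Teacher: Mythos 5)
Your proposal is correct and follows essentially the same route as the paper's own proof: rearrange the bound from Theorem~\ref{thm:general_upper_bound} and then show $g(T{+}N)-g(T)\to 0$ by combining concavity (increments non-increasing) with the no-regret property via a contradiction argument. Your telescoping through one-step increments and your explicit remark that $N$ stabilizes for large $T$ are only minor refinements of the paper's argument, which bounds the $N$-step increment directly.
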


\subsection{Problem independent regret upper bound on $\CA_{\text{LinUCB+LR}}$}
\begin{restatable}[Linear regression+LinUCB, problem-independent]{theorem}{linearProblemIndependent}
Suppose we have $N$ offline data points. With a probability at least $1-\delta$,
the psuedo-regret (here, $V_0=\bm{I}_d$ is a $d\times d$ identity matrix)
\begin{align*}
 {R}(T,\CA_{\text{LinUCB+LR}}&)  \le 
\sqrt{8 (N{+}T) \beta_T(\delta) \log \frac{ \texttt{trace}(V_0){+}(N{+}T)L^2
  }{\texttt{det} (V_0) } }  \\
& -
 \sqrt{8\beta_T(\delta)} \min\{1, ||\bm{x}||_{\min}\} \frac{2}{L^2} \left( \sqrt{1+NL^2} - 1\right).
\end{align*}
Here, $\{\beta_{t}(\delta)\}_{t=1}^T$ is a non-decreasing sequence where
$\beta_t(\delta)\ge 2d(1+2\ln(1/\delta))$. In addition, $L{=}||\bm{x}||_{\max}$ is the
maximum of $l_2$-norm of the context in any time slot. 
\label{mthm:ContextDepend:LinPI}
\end{restatable}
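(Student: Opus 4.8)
The plan is to analyze the whole run of $\CA_{\text{LinUCB+LR}}$ as a single LinUCB trajectory of length $N{+}T$, following the paper's ``total regret minus virtual-play regret'' philosophy but realizing the subtraction inside the elliptical-potential estimate rather than at the level of suboptimality gaps (which for LinUCB may vanish and so give nothing to subtract). Index the plays so that $s=1,\dots,N$ are the virtual plays fed by the linear-regression evaluator and $s=N{+}1,\dots,N{+}T$ are the online rounds; write $\phi_s{\triangleq}\phi(\bm x_s,a_s)$ and $\bm V_s{\triangleq}V_0+\sum_{r\le s}\phi_r\phi_r^{T}$ with $V_0{=}\bm I_d$, noting that each virtual play enlarges the \emph{shared} matrix $\bm V$. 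First I would invoke the self-normalized concentration bound of \cite{abbasi2011improved}: on an event of probability at least $1-\delta$ one has $\|\hat{\bm\theta}_s{-}\bm\theta\|_{\bm V_s}\le\sqrt{\beta_s(\delta)}$ for all $s$ whenever $\beta_s(\delta)\ge 2d(1{+}2\ln(1/\delta))$, so the optimism argument bounds each online instantaneous regret by $r_t\le 2\sqrt{\beta_t(\delta)}\min\{1,\|\phi_t\|_{\bm V_{t-1}^{-1}}\}$. Since $\beta_t$ is non-decreasing, summing over the $T$ online rounds yields $R(T,\CA_{\text{LinUCB+LR}})\le 2\sqrt{\beta_T(\delta)}\sum_{t=N+1}^{N+T}\min\{1,\|\phi_t\|_{\bm V_{t-1}^{-1}}\}$.

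Next I would split the online potential into the full $(N{+}T)$-term potential minus the offline part,
\[
\sum_{t=N+1}^{N+T}\min\{1,\|\phi_t\|_{\bm V_{t-1}^{-1}}\}=\sum_{s=1}^{N+T}\min\{1,\|\phi_s\|_{\bm V_{s-1}^{-1}}\}-\sum_{i=1}^{N}\min\{1,\|\phi_i\|_{\bm V_{i-1}^{-1}}\},
\]
and bound the two pieces in opposite directions. For the full sum I would apply Cauchy--Schwarz over all $N{+}T$ terms together with the elliptical-potential lemma $\sum_{s}\min\{1,\|\phi_s\|_{\bm V_{s-1}^{-1}}^{2}\}\le 2\log(\det\bm V_{N+T}/\det V_0)$ and the determinant--trace inequality bounding $\det\bm V_{N+T}$ through $\texttt{trace}(V_0){+}(N{+}T)L^2$ (with $L$ the largest feature norm); after pulling out the prefactor $2\sqrt{\beta_T(\delta)}$ this reproduces the leading term $\sqrt{8(N{+}T)\beta_T(\delta)\log(\cdot)}$.

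The crux --- and the step I expect to be the main obstacle --- is the matching \emph{lower} bound on the offline potential $\sum_{i=1}^{N}\min\{1,\|\phi_i\|_{\bm V_{i-1}^{-1}}\}$, since this is exactly what gets subtracted. Here I would use that before the $i$-th virtual play $\bm V_{i-1}\preceq(1{+}(i{-}1)L^2)\bm I_d$, so that $\|\phi_i\|_{\bm V_{i-1}^{-1}}^{2}\ge\|\phi_i\|_2^{2}/(1{+}(i{-}1)L^2)\ge\|\bm x\|_{\min}^{2}/(1{+}(i{-}1)L^2)$; a short case analysis on whether $\|\bm x\|_{\min}$ exceeds $1$ then gives $\min\{1,\|\phi_i\|_{\bm V_{i-1}^{-1}}\}\ge\min\{1,\|\bm x\|_{\min}\}/\sqrt{1{+}(i{-}1)L^2}$. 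Comparing the decreasing summand to an integral, $\sum_{i=1}^{N}1/\sqrt{1{+}(i{-}1)L^2}\ge\int_0^N(1{+}kL^2)^{-1/2}dk=\tfrac{2}{L^2}(\sqrt{1{+}NL^2}-1)$, so the offline potential is at least $\min\{1,\|\bm x\|_{\min}\}\tfrac{2}{L^2}(\sqrt{1{+}NL^2}-1)$, which is precisely the subtracted quantity. Combining the upper bound on the full sum with this lower bound, sharing the common prefactor $2\sqrt{\beta_T(\delta)}$, and using $T\le N{+}T$ yields the stated inequality, with the probability $1-\delta$ inherited from the single concentration event. The delicate points I would watch are verifying that the self-normalized bound stays valid over the concatenated virtual-plus-online sequence (the virtual feedbacks are the evaluator's unbiased linear predictions, so the required conditional sub-Gaussianity of the residuals is preserved) and the bookkeeping that each virtual play is genuinely absorbed into $\bm V$ before the online rounds begin, so that the offline potential is actually available to be subtracted.
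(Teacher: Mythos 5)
Your proposal follows essentially the same route as the paper's own proof: unify the $N$ virtual plays and $T$ online plays into one LinUCB trajectory, bound the full $(N{+}T)$-term elliptical potential from above via Cauchy--Schwarz and the determinant--trace inequality, and subtract a lower bound on the offline potential obtained from $\lambda_{\max}(\bm{V}_{i-1})\le 1+(i-1)L^2$ followed by a telescoping (in your case, integral) comparison giving $\tfrac{2}{L^2}(\sqrt{1+NL^2}-1)$. The one discrepancy is that your honest bookkeeping attaches the prefactor $2\sqrt{\beta_T(\delta)}=\sqrt{4\beta_T(\delta)}$ to the subtracted term rather than the stated $\sqrt{8\beta_T(\delta)}$ --- a $\sqrt{2}$ gap that is inherited from the paper itself, whose proof simply asserts the $\sqrt{8\beta}$ prefactor on the offline sum without deriving it.
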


\noindent
The regret upper bound of Theorem \ref{mthm:ContextDepend:LinPI} consists of two terms. 
The first term that is from the online bandit oracle is $O(\sqrt{(N{+}T)\log(N{+}T)})$. The
second term is the reduction of regret by matching logged data which is $-\Omega(\sqrt{N\log(N+T)})$.
Comparing with the regret bound $O(\sqrt{T\log(T)})$ for only using the online feedbacks \cite{abbasi2011improved}, the regret bound
changes from $O(\sqrt{T\log(T)})$ to $O(\sqrt{(N+T)\log(N+T)}) -
\Omega(\sqrt{N\log(N+T)})$. To illustrate the reduction, we observe that
$\sqrt{N+T}-\sqrt{N}=\sqrt{T}\frac{\sqrt{T}}{\sqrt{N+T}+\sqrt{N}}\le \sqrt{T}$,
where ``$\sqrt{N+T}-\sqrt{N}$'' is for our regret bound with logged data, and
``$\sqrt{T}$'' is for the previous bound without logged data. 
\section{More Experiments and Code Explaination}
\label{appendix:experiment}

\subsection{Code and experiment settings}

Note that we provide the code for reproducibility and one can find the detailed experiment settings
in the code. Thus, this section serves as a document of our code.

When we run one experiment, we run the corresponding \texttt{python} scripts in
the \texttt{/experiments} folder. Figure~\ref{fig:calling_graph} illustrates the
Call Graph of one experiment.

\noindent{\bf Code for the {\em $\epsilon$-decreasing multi-action forest.}}
We modify the R package ``grf'' to implement our multi-action forest. In
particular, we implement the \texttt{BanditPrediction.cpp} in
\texttt{grf/core/src} that extends the regression forest (or causal forest) to
allow multiple actions under a leaf node.
In a typical call for the bandit predictor, the following functions are called
in sequence in the file \texttt{r-package/grf/R/causal\_forest.R}.
The order of functions being called is \texttt{predict\_action}$\rightarrow$\texttt{causal\_predict\_action}.
Note that although we still use the name \texttt{causal\_forest} in the names of
our multi-action forest for convenience, our multi-action forest does not call 
the predictor of ``causal forest'' but use our own implementation instead.

\begin{figure}[htb]
  \centering
  \includegraphics[width=0.5\textwidth]{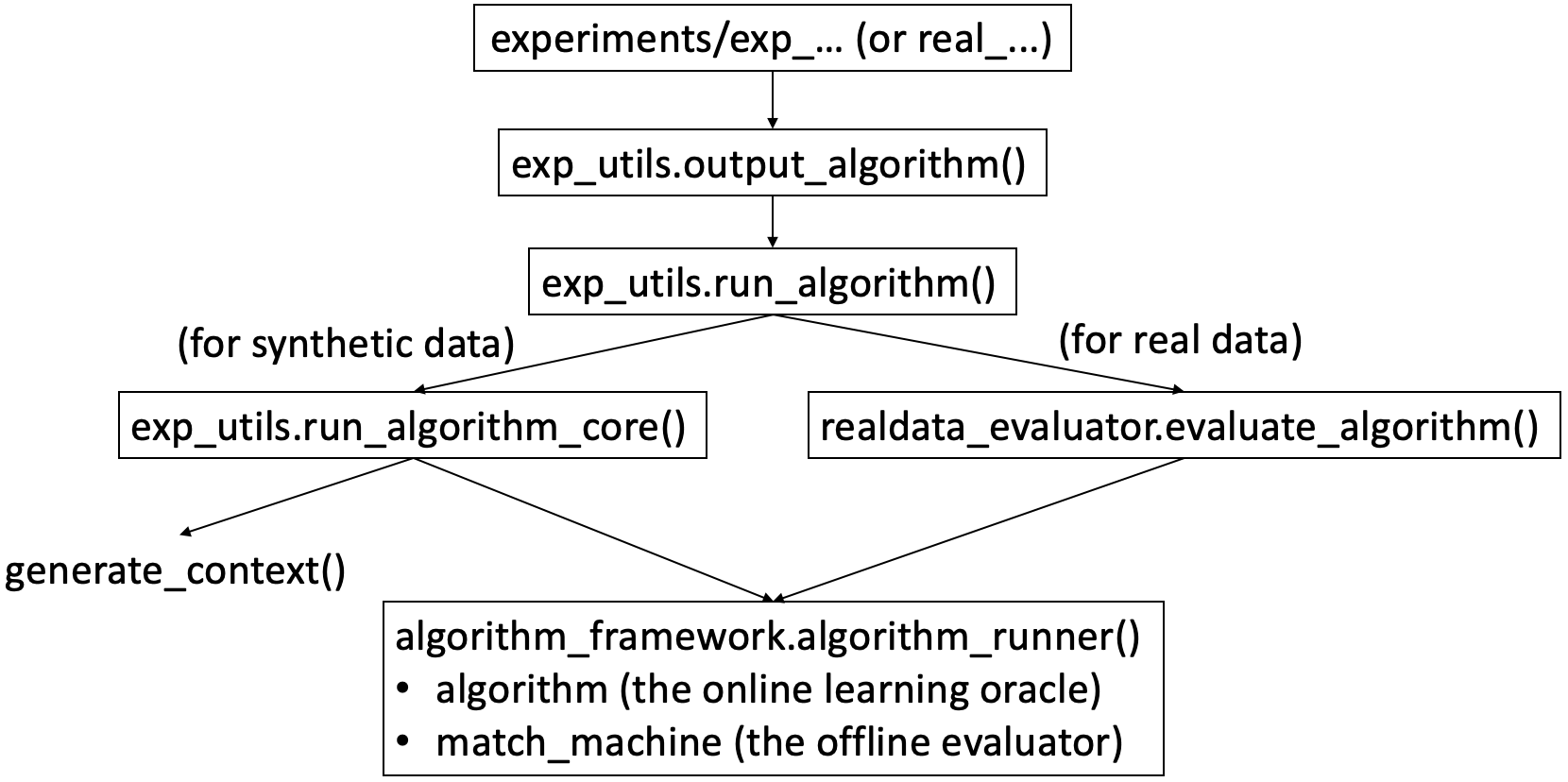}
  \caption{Call Graph of one experiment (in the code)}
  \label{fig:calling_graph}
\end{figure}

\subsubsection{Settings on the simulation}
To do the simulation, we need to simulate an online environment and use it to
generate the logged data.
To have a unified framework for both the context-independent case and the
contextual case, we first have a model to generate the outcome w.r.t. the
context and action, and then get the average outcome w.r.t. the actions by
summing over all contexts. The simulation code is in \texttt{environment.py}.

Note that our method to generate the outcomes for the context-independent case
is not restrictive, because the expected reward for each action can be
arbitrary. Also, the distribution of reward for each action can be arbitrary by setting
different distribution of the contexts.

\subsection{Thompson Sampling}
BanditOracle~\ref{alg:ts_gauss} is the Thompson Sampling algorithm where the
reward of the actions are assumed to be Gaussian random variables.
Figure~\ref{fig:supervised_TS_continuous} in the main paper uses
BanditOracle~\ref{alg:ts_gauss}.
When the reward is of binary values (e.g. in the Yahoo's dataset), one can use
the BanditOracle~\ref{alg:ts_bernoulli} which assume the rewards are Bernoulli
random variables. 
For the Bernoulli Thompson sampling,
the mean of the reward has a Beta-distributed posterior distribtion.

\setcounter{bandit}{3}
\begin{bandit}
    \caption{\bf Thompson Sampling (Gaussian)}\label{alg:ts_gauss}
  {\bf Member variables:} the average outcome $\bar{y}_a$ of each action 
 $a{\in}[K]$, and the number of times $n_a$ that action $a$ was played.\\
  \SetKwFunction{FPlay}{{\bf play}}
  \MFn{\FPlay{$\bm{x}$}}{
    $R_a\gets $ a random variable with normal distribution $\CN(\bar{y}_a,
    \beta^2/(n_a+1))$, for $\forall a\in [K]$.\\
    $r_a\gets $ is a sample from $R_a$.\\
    \Return $\arg\max_{a\in[K]} r_a$
  }
  \SetKwFunction{FUpdate}{{\bf update}}
  \MFn{\FUpdate{$\bm{x}, a, y$}}{
    $\bar{y}_a \gets (n_a\bar{y}_a+y) / (n_a+1)$,~~~~  $n_a \gets n_a + 1$ 
  }
\end{bandit}

\begin{bandit}
    \caption{\bf Thompson Sampling (Bernoulli)}\label{alg:ts_bernoulli}
  {\bf Member variables:} the number of ``1'''s $s_a$ (success) in the feedback for each
  action $a\in[K]$, and the number of ``0'''s $f_a$ (failure) in the feedback for each
  action $a\in[K]$.\\
  \SetKwFunction{FPlay}{{\bf play}}
  \MFn{\FPlay{$\bm{x}$}}{
    $R_a\gets $ a random variable with beta distribution $Beta({s}_a, f_a)$, for $\forall a\in [K]$.\\
    $r_a\gets $ is a sample from $R_a$.\\
    \Return $\arg\max_{a\in[K]} r_a$
  }
  \SetKwFunction{FUpdate}{{\bf update}}
  \MFn{\FUpdate{$\bm{x}, a, y$}}{
    \If{$y=1$}{
      $s_a \gets s_a+1$
    }\Else{
      $f_a \gets f_a+1$
    }
  }
\end{bandit}

\subsection{Propensity Score Matching for More Than Two Actions}
In the main paper, we consider the $\CA_{\text{UCB+PSM}}$ algorithm only for two
actions $K=2$. Here, we keep other settings as default and change the number of actions.
Figure~\ref{fig:psm_2actions}-\ref{fig:psm_5actions} show the cumulative regrets
for the $\CA_{\text{UCB+PSM}}$ algorithm for the number of actions $K=2$ to $K=5$.

Note that the ``only\_online'' algorithm UCB is not affected by the offline
evalutor. Therefore, the ``only\_online'' curve can serve as the baseline.
First, we observe that when $K>2$, the ``only\_offline'' PSM algorithm has a
high regret, which is much higher than the regret for $K=2$.
 Second, when $K>2$, the cumulative regret for the ``offline+online'' algorithm
 $\CA_{\text{UCB+PSM}}$ can be higher than that of the ``only\_online'' UCB
 algorithm. In other words, the propensity score matching offline evaluator does
 not help reduce the regret by using the offline data. This is because it is
 difficult to find matched samples with similar propensity vector and our
 stratification strategy introduces further bias on the estimated reward.
 Moreover, when $K>2$, the regret for the ``only\_offline'' PSM algorithm does
 not necessarily depend on the number of actions $K$. This is
 because PSM algorithm cannot effectively use the offline data and the decision
 depends on some other non-informative factors such as how the values are stratified.
 
\noindent{\bf Lessons learned.} The original original version of propensity score matching algorithm (with
stratification) is not suitable for more than two actions.

\begin{figure*}
  \begin{minipage}{0.245\linewidth}
    \includegraphics[width=\textwidth]{exp3_PSmatch_UCB.png}
    \caption{$\CA_{\text{UCB+PSM}}$, $K=2$}
    \label{fig:psm_2actions}
  \end{minipage}
  \begin{minipage}{0.245\linewidth}
    \includegraphics[width=\textwidth]{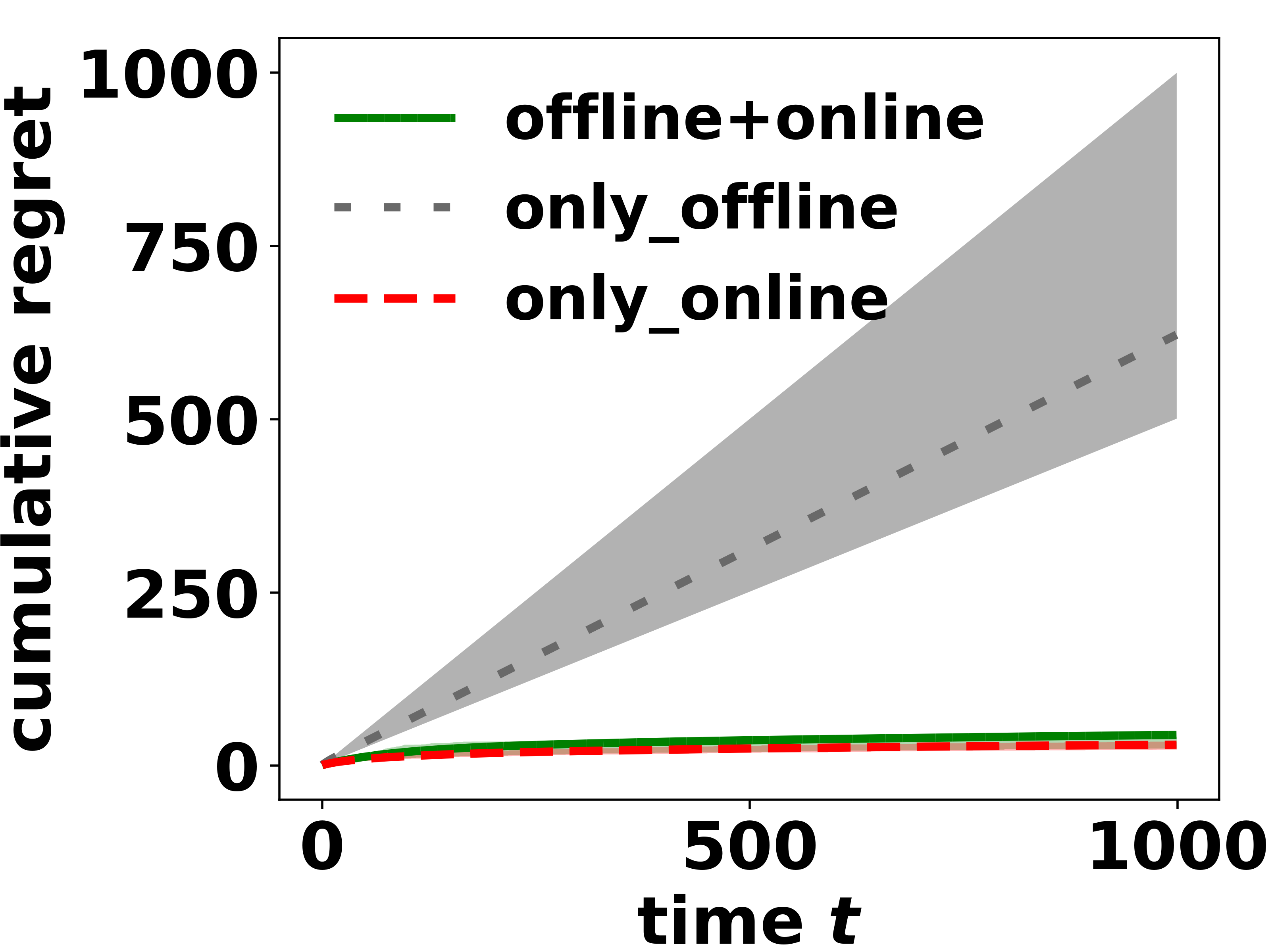}
    \caption{$\CA_{\text{UCB+PSM}}$, $K=3$}
    \label{fig:psm_3actions}
  \end{minipage}
  \begin{minipage}{0.245\linewidth}
    \includegraphics[width=\textwidth]{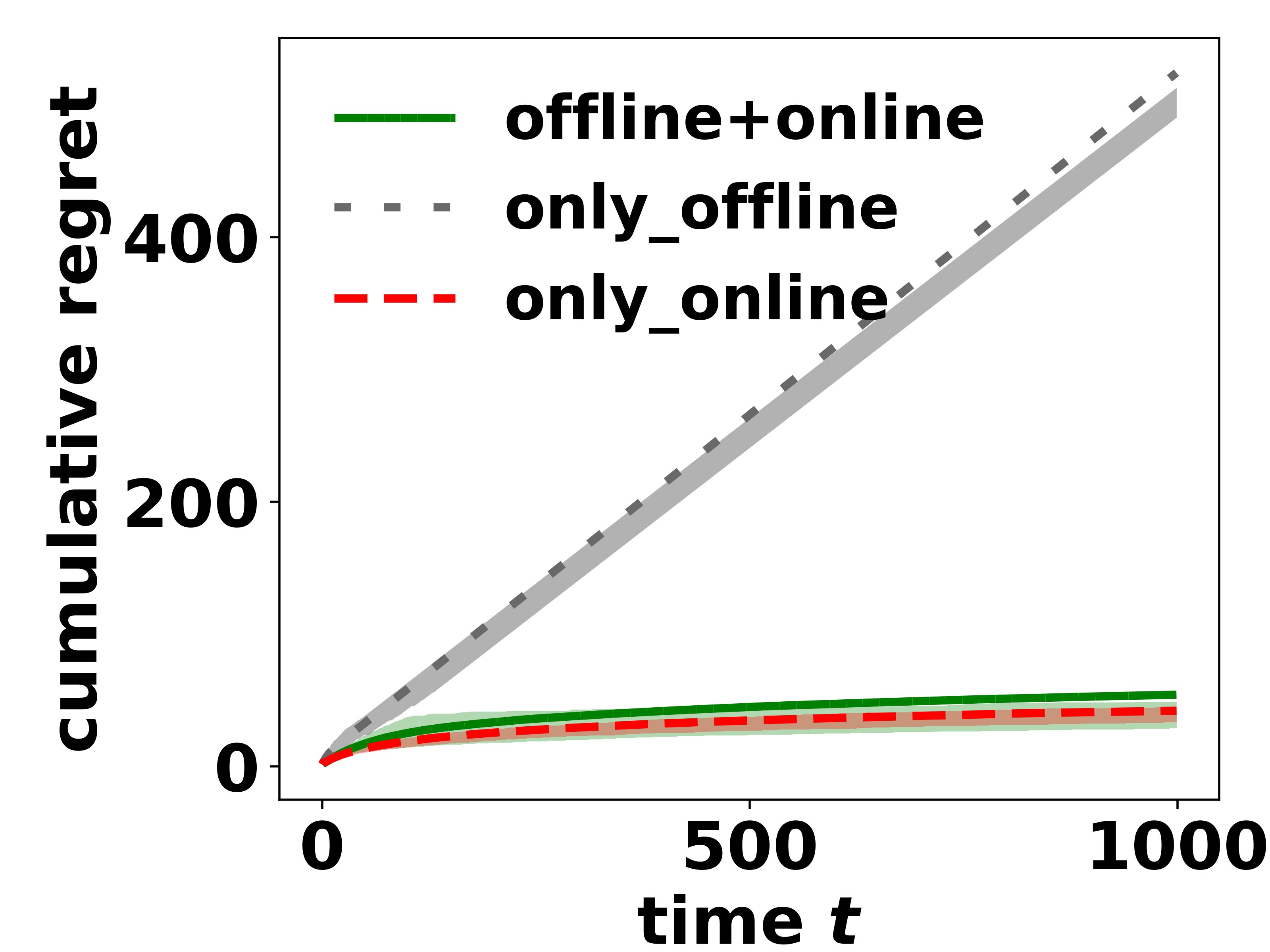}
    \caption{$\CA_{\text{UCB+PSM}}$, $K=4$}
    \label{fig:psm_4actions}
  \end{minipage}
  \begin{minipage}{0.245\linewidth}
    \includegraphics[width=\textwidth]{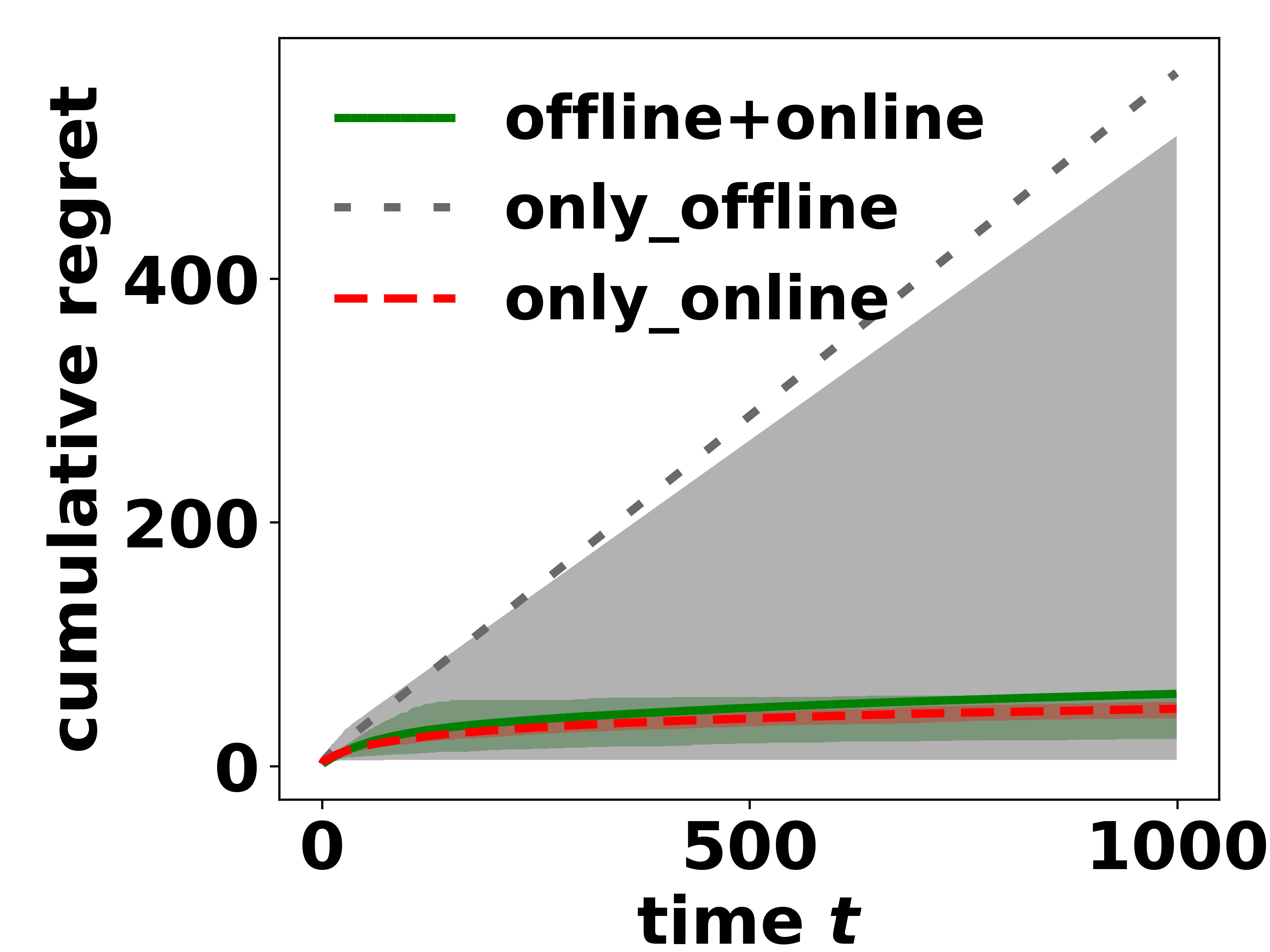}
    \caption{$\CA_{\text{UCB+PSM}}$, $K=5$}
    \label{fig:psm_5actions}
  \end{minipage}
\end{figure*}

\subsection{Experiment on Other Settings of Synthetic Data}
We will extend the default experiment settings in three aspects: (1) the number of
actions, (2) the propensity score function $ps(\bm{x},a)$, and (3) the outcome
function $f(\bm{x},a)$.

\noindent{\bf The number of actions.}
In Figure~\ref{fig:ipsw_3actions}-\ref{fig:ipsw_8actions}, we increase the
number of actions from 3 to 8 for the $\CA_{\text{UCB+IPSW}}$ algorithm. 
First, we observe that for each number of actions, our $\CA_{\text{UCB+IPSW}}$ algorithm
always has a lower regret compared to its two variants. 
Second, we observe that as the number of actions increases, the difference
between the regret of the ``offline+online'' algorithm $\CA_{\text{UCB+IPSW}}$
and the regret of the ``only\_online'' UCB algorithm becomes smaller. This is because when we have
more actions, we need more logged data so that the numbers of logged data are
sufficient for each actions.

\begin{figure*}
  \begin{minipage}{0.245\linewidth}
    \includegraphics[width=\textwidth]{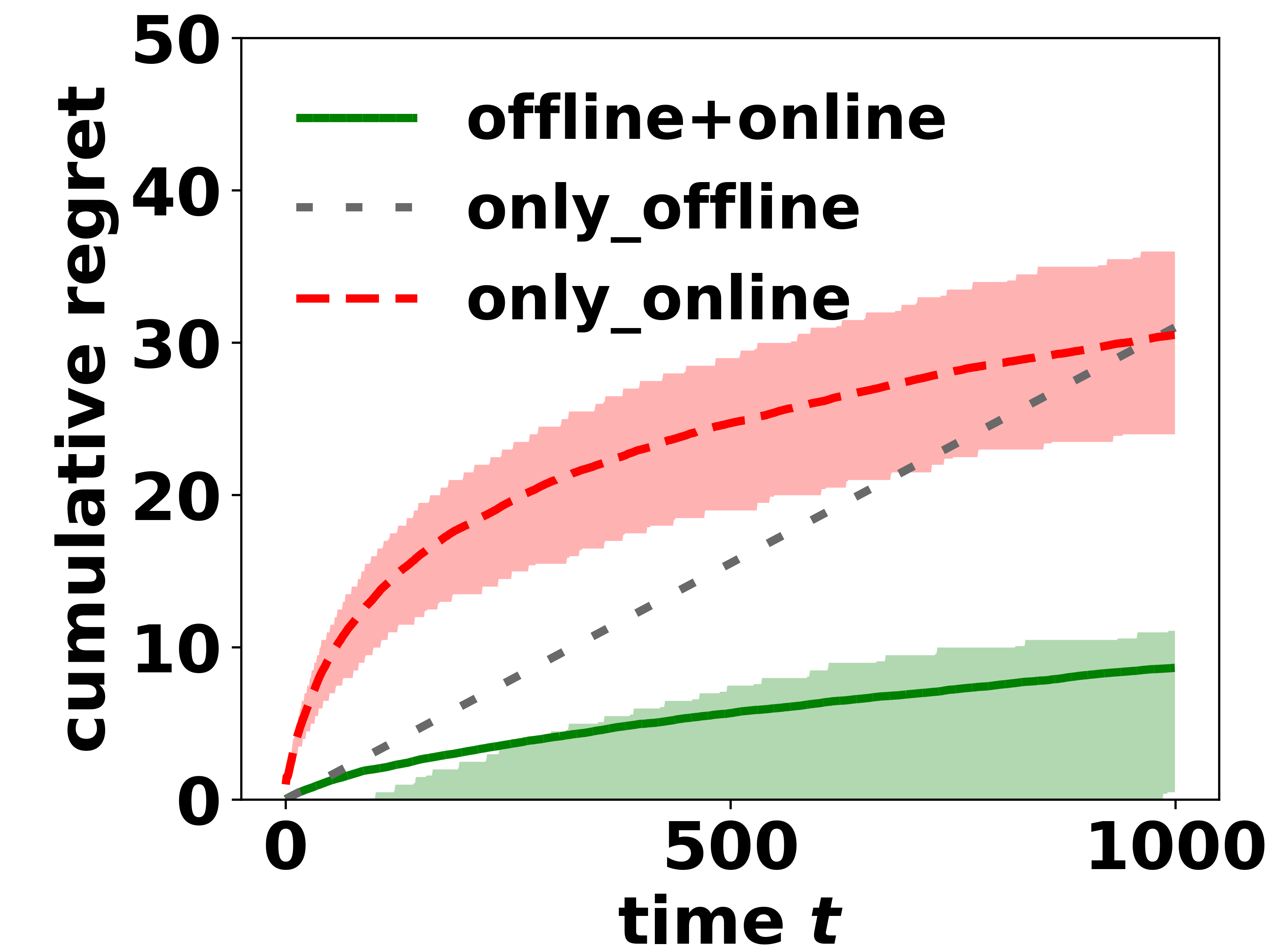}
    \caption{$\CA_{\text{UCB+IPSW}}$, $K=3$}
    \label{fig:ipsw_3actions}
  \end{minipage}
  \begin{minipage}{0.245\linewidth}
    \includegraphics[width=\textwidth]{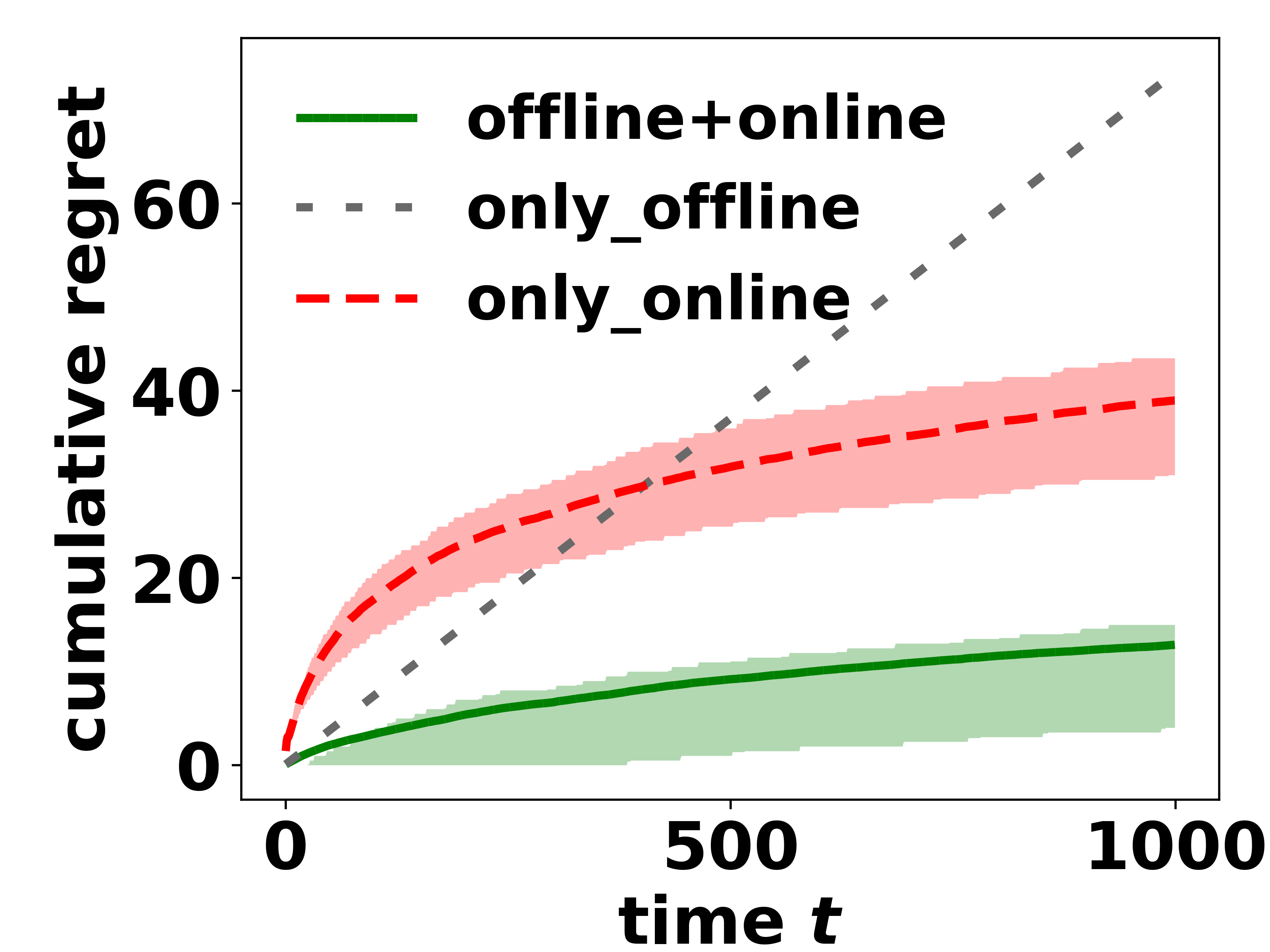}
    \caption{$\CA_{\text{UCB+IPSW}}$, $K=4$}
    \label{fig:ipsw_4actions}
  \end{minipage}
  \begin{minipage}{0.245\linewidth}
    \includegraphics[width=\textwidth]{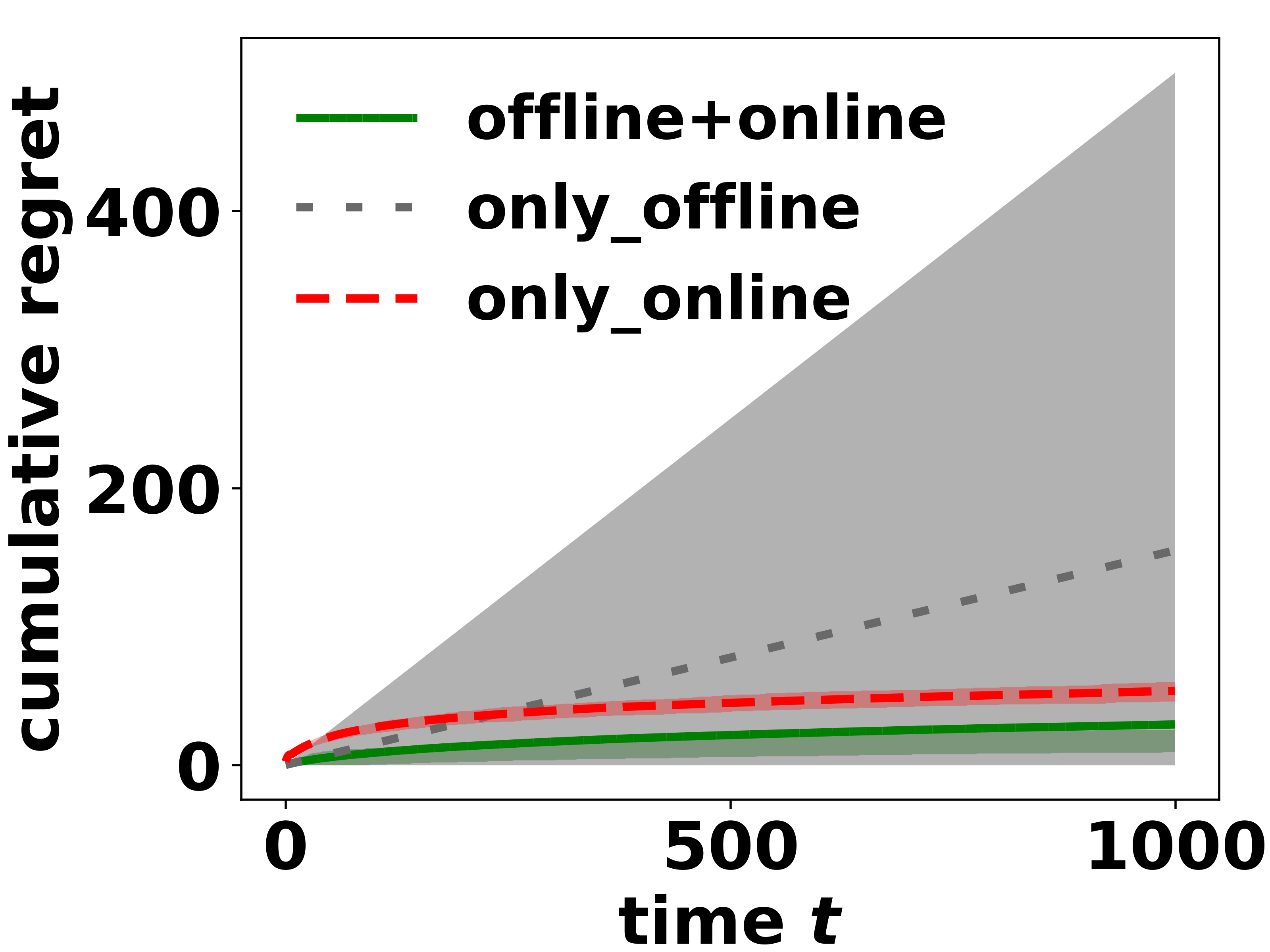}
    \caption{$\CA_{\text{UCB+IPSW}}$, $K=6$}
    \label{fig:ipsw_6actions}
  \end{minipage}
  \begin{minipage}{0.245\linewidth}
    \includegraphics[width=\textwidth]{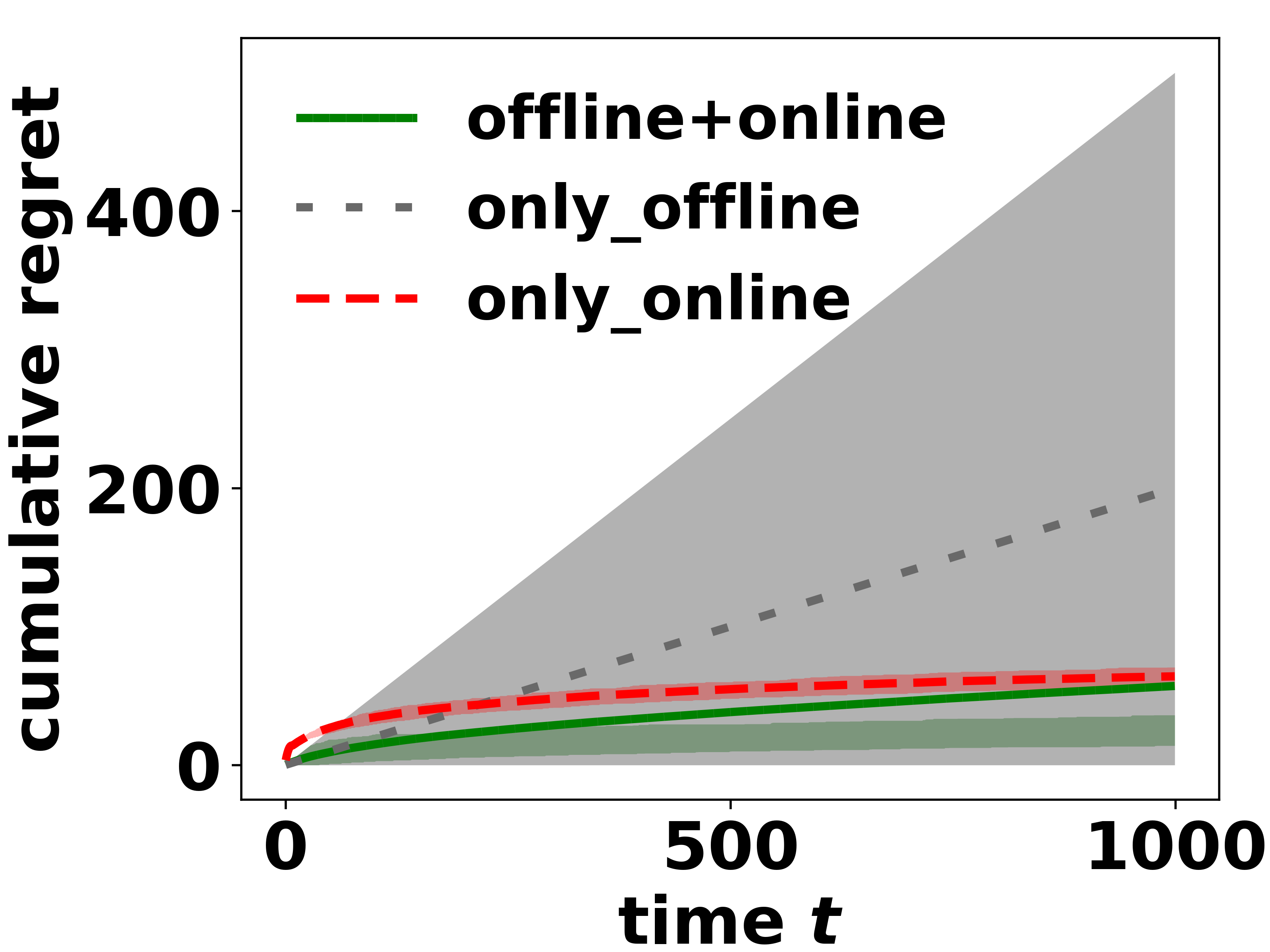}
    \caption{$\CA_{\text{UCB+IPSW}}$, $K=8$}
    \label{fig:ipsw_8actions}
  \end{minipage}
\end{figure*}

\noindent{\bf The propensity score function.}
In the main paper, we set the propensity score function to $ps(\bm{x},a)=\exp(s_a)/(\sum_{a=0}^{K-1} \exp(s_a))$, 
where $s_a=\exp(\rho \bm{x}^T\bm{\theta}_a(\MBE[y|a] - \MBE[y|(a{+}1)\mod K]) )$
and $\rho=-1$.
The parameter $\rho$ controls the correlation between the action and the outcome
given the contexts. Negative $\rho$ indicates the following negative correlation: when $\rho<0$, if an action has a higher expected reward,
then the samples of this action will be selected with a higher probability if
the sample reward is lower.
In the following experiment, we explore more settings where $\rho=0$ or $\rho=1$. Here, $\rho=0$ means
that each action will have the same propensity score, i.e., each action will be
selected with equal probability.

\begin{figure*}
  \begin{minipage}{0.245\linewidth}
    \includegraphics[width=\textwidth]{exp2_IPSW_UCB.png}
    \caption{$\CA_{\text{UCB+IPSW}}$, $\rho=-1$}
    \label{fig:ipsw_correlation-1}
  \end{minipage}
  \begin{minipage}{0.245\linewidth}
    \includegraphics[width=\textwidth]{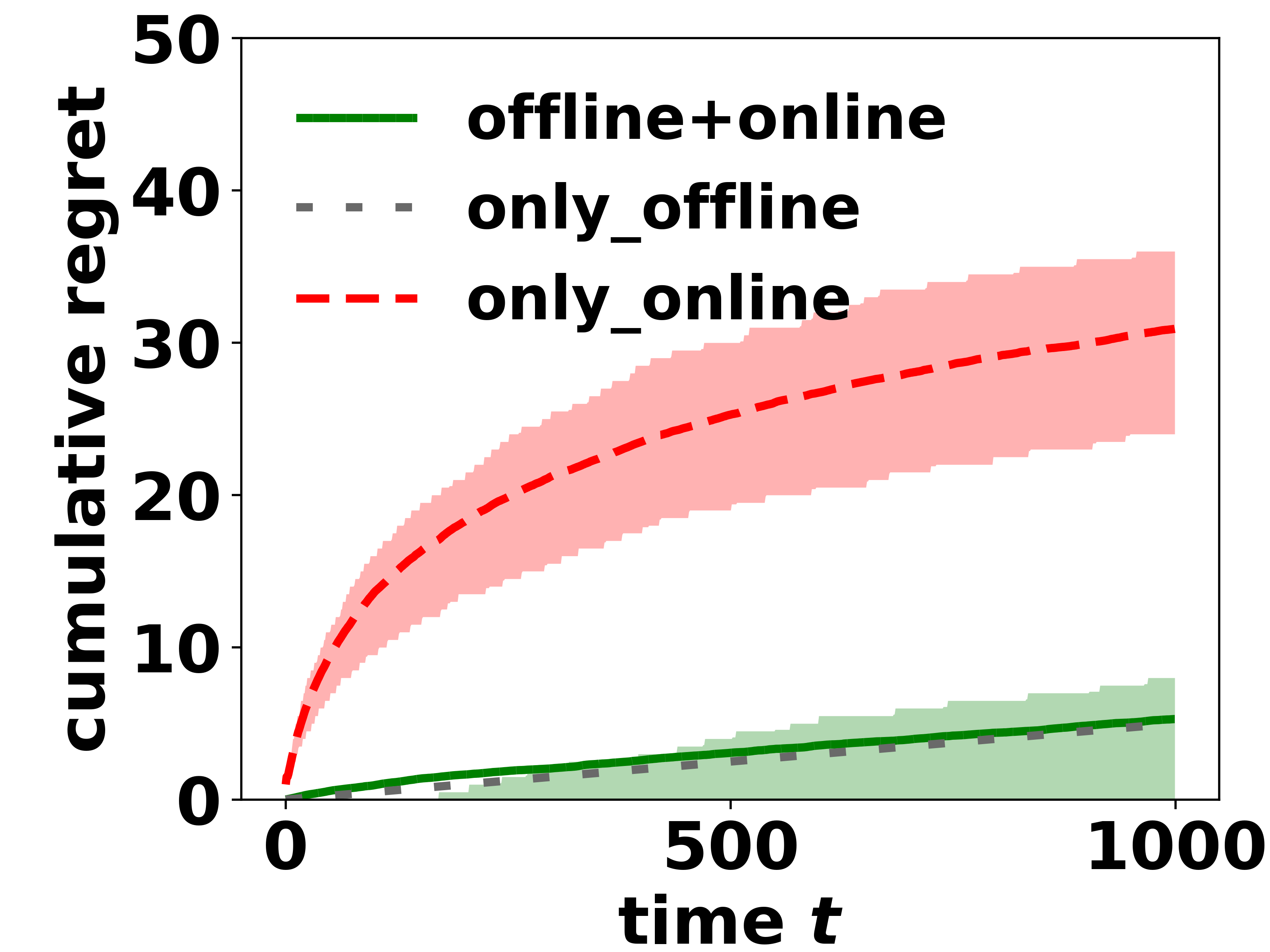}
    \caption{$\CA_{\text{UCB+IPSW}}$, $\rho=0$}
    \label{fig:ipsw_correlation0}
  \end{minipage}
  \begin{minipage}{0.245\linewidth}
    \includegraphics[width=\textwidth]{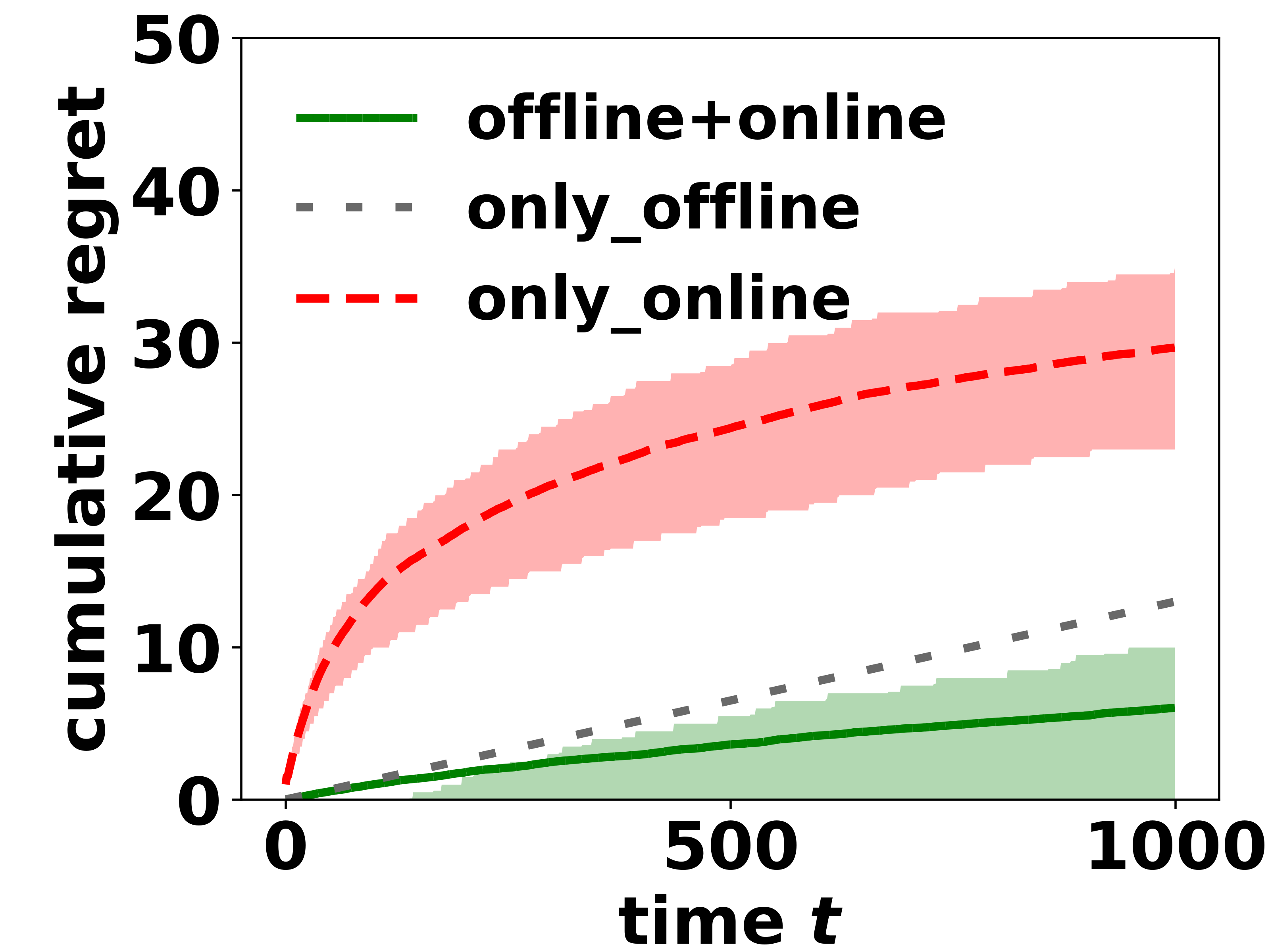}
    \caption{$\CA_{\text{UCB+IPSW}}$, $\rho=1$}
    \label{fig:ipsw_correlation1}
  \end{minipage}
\end{figure*}

\noindent{\bf The outcome function.}
In our main paper, the default outcome function is the linear function
$y=f(\bm{x},a)=\bm{x}^T\bm{\theta}_a+b_a$. Here, we consider two variants of the
outcome function. The first is the sigmoid function
$y=1/(1+\exp(-\bm{x}^T\bm{\theta}_a+b_a))$. The second is the binary outcome
$y\in\{0,1\}$ where $y=1$ with probability
$1/(1+\exp(-\bm{x}^T\bm{\theta}_a+b_a))$. We point out that the expected reward
for the ``sigmoid'' and the ``binary'' settings are the same.

Figure~\ref{fig:ipsw_linear}, \ref{fig:ipsw_sigmoid} and \ref{fig:ipsw_binary}
are the results for the linear outcome function, the sigmoid outcome function
and the binary outcome function respectively. We observe that the outcome function
significantly affects the performance of the algorithms. For sigmoid outcome
function, our ``offline+online'' algorithm and the ``only\_offline'' algorithm
almost have zero regret. It means that the 100 logged samples provide enough
information for the decision maker to distinguish the action with the highest
expected reward.
When the outcome is binary, our ``offline+online'' algorithm has a lower regret
than the ``only\_online'' UCB algorithm. Although the sigmoid function and the
binary outcome function correspond to the same expected reward for each action,
the regret is higher for the binary outcome because the binary outcome function
implies a larger variance of the outcome.

\begin{figure*}
  \begin{minipage}{0.245\linewidth}
    \includegraphics[width=\textwidth]{exp2_IPSW_UCB.png}
    \captionsetup{width=0.75\textwidth}
    \caption{$\CA_{\text{UCB+IPSW}}$, linear function}
    \label{fig:ipsw_linear}
  \end{minipage}
  \begin{minipage}{0.245\linewidth}
    \includegraphics[width=\textwidth]{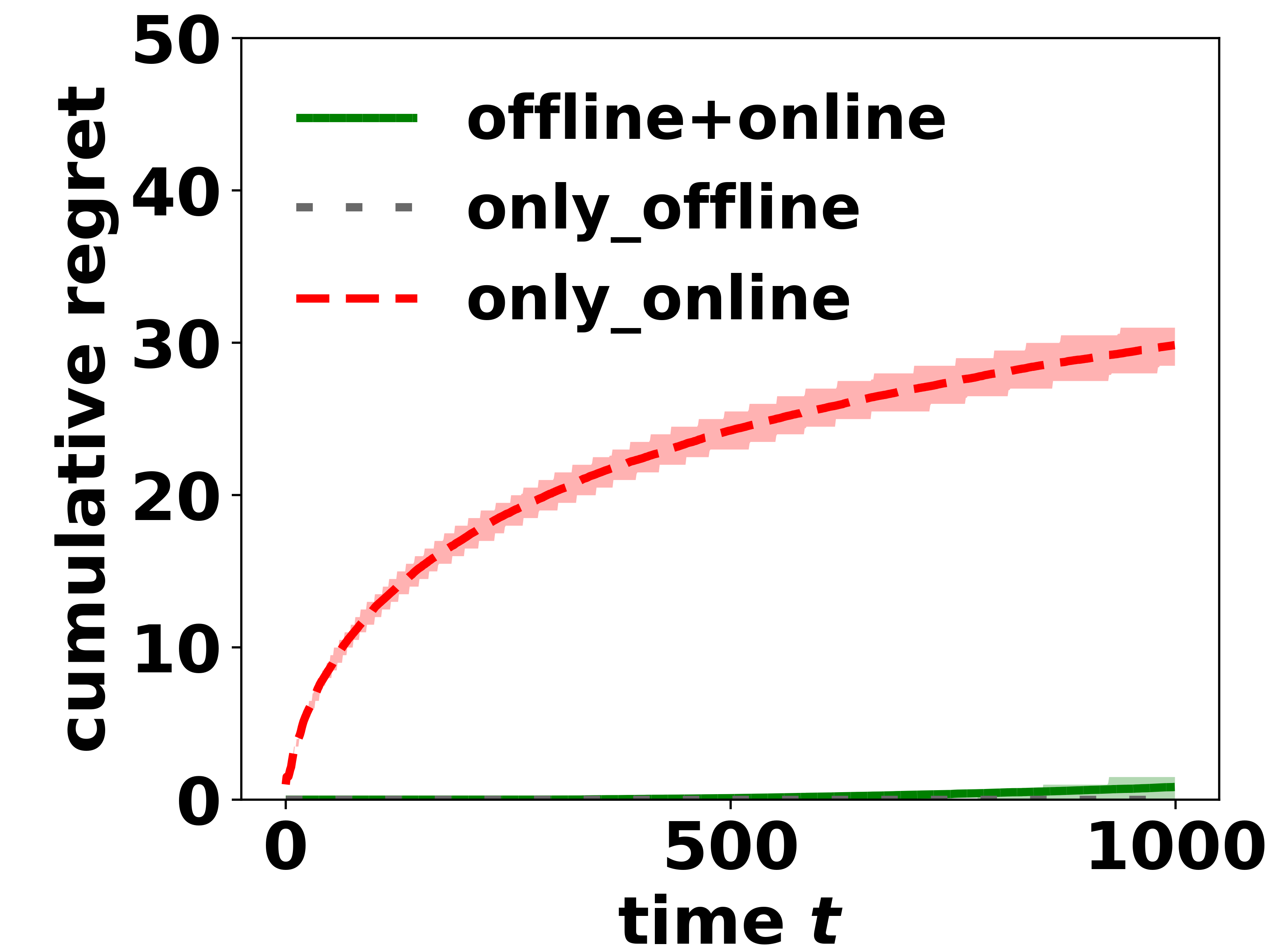}
    \captionsetup{width=0.75\textwidth}
    \caption{$\CA_{\text{UCB+IPSW}}$, sigmoid function}
    \label{fig:ipsw_sigmoid}
  \end{minipage}
  \begin{minipage}{0.245\linewidth}
    \includegraphics[width=\textwidth]{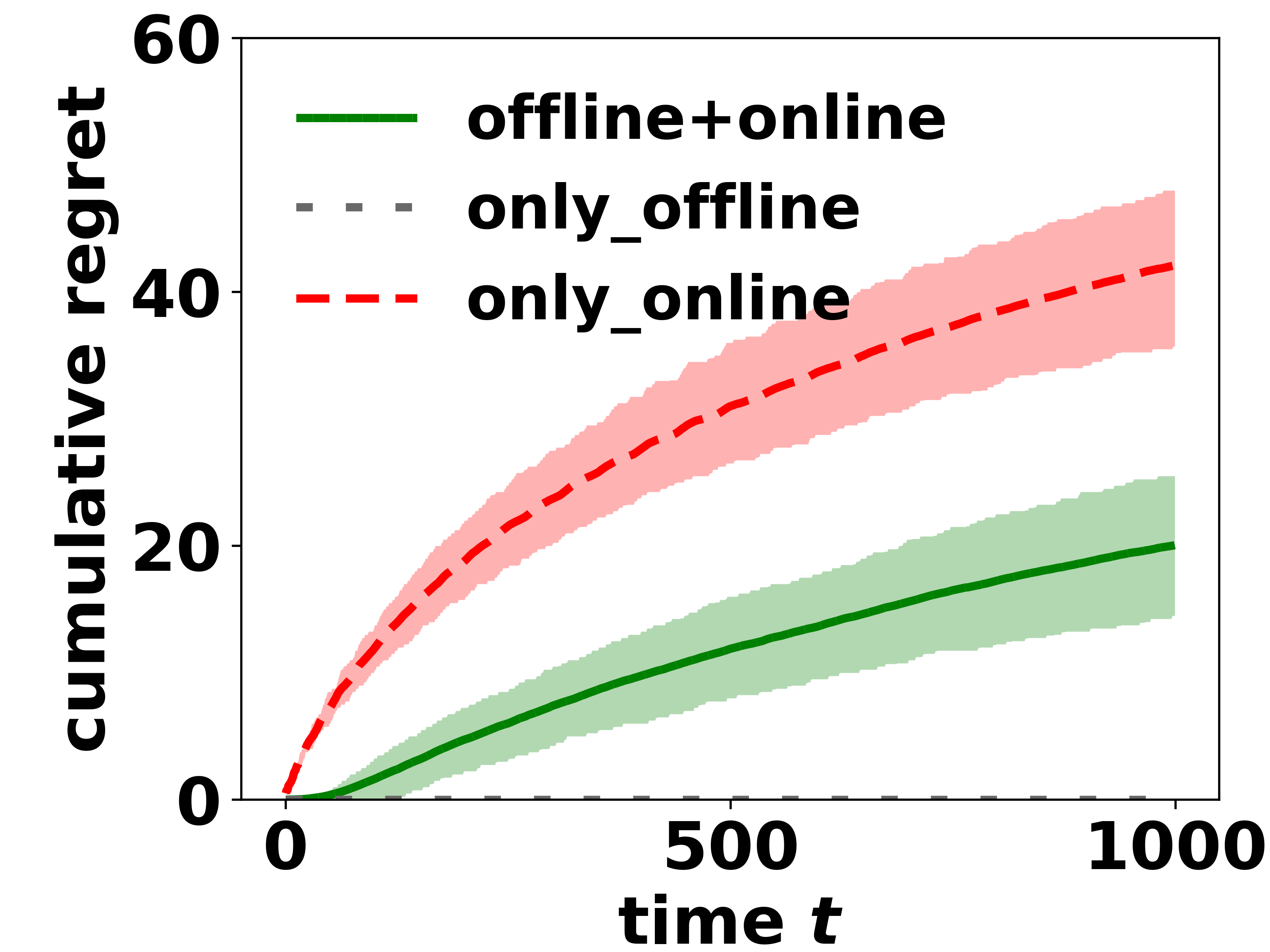}
    \captionsetup{width=0.75\textwidth}
    \caption{$\CA_{\text{UCB+IPSW}}$, binary outcome}
    \label{fig:ipsw_binary}
  \end{minipage}
\end{figure*}

\subsection{Linear vs. Forest Model on Yahoo's Data}
In Figure~\ref{fig:yahoo_LinUCB} and Figure~\ref{fig:yahoo_forest}, we compare
the cumulative reward for $\CA_{\text{LinUCB+LR}}$ and $\CA_{\text{Fst+MoF}}$ on
Yahoo's data. We see that the two algorithms result in similar cumulative regrets.
Recall that the user features in the Yahoo's data were learned via a linear
model. In other words, our non-paramtric forest model achieves comparable
performance with the LinUCB even on the ``linear'' dataset.

 \begin{figure*}
  \begin{minipage}{0.67\linewidth}
    \vspace{-0.02in}
    \captionsetup{width=0.9\textwidth}
   \begin{centering}
   \hspace{0.3in}\includegraphics[width=0.85\textwidth]{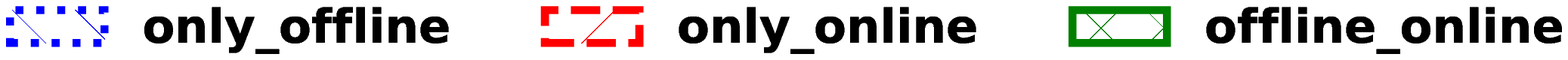}
   \end{centering}
   \includegraphics[height=0.32\textwidth]{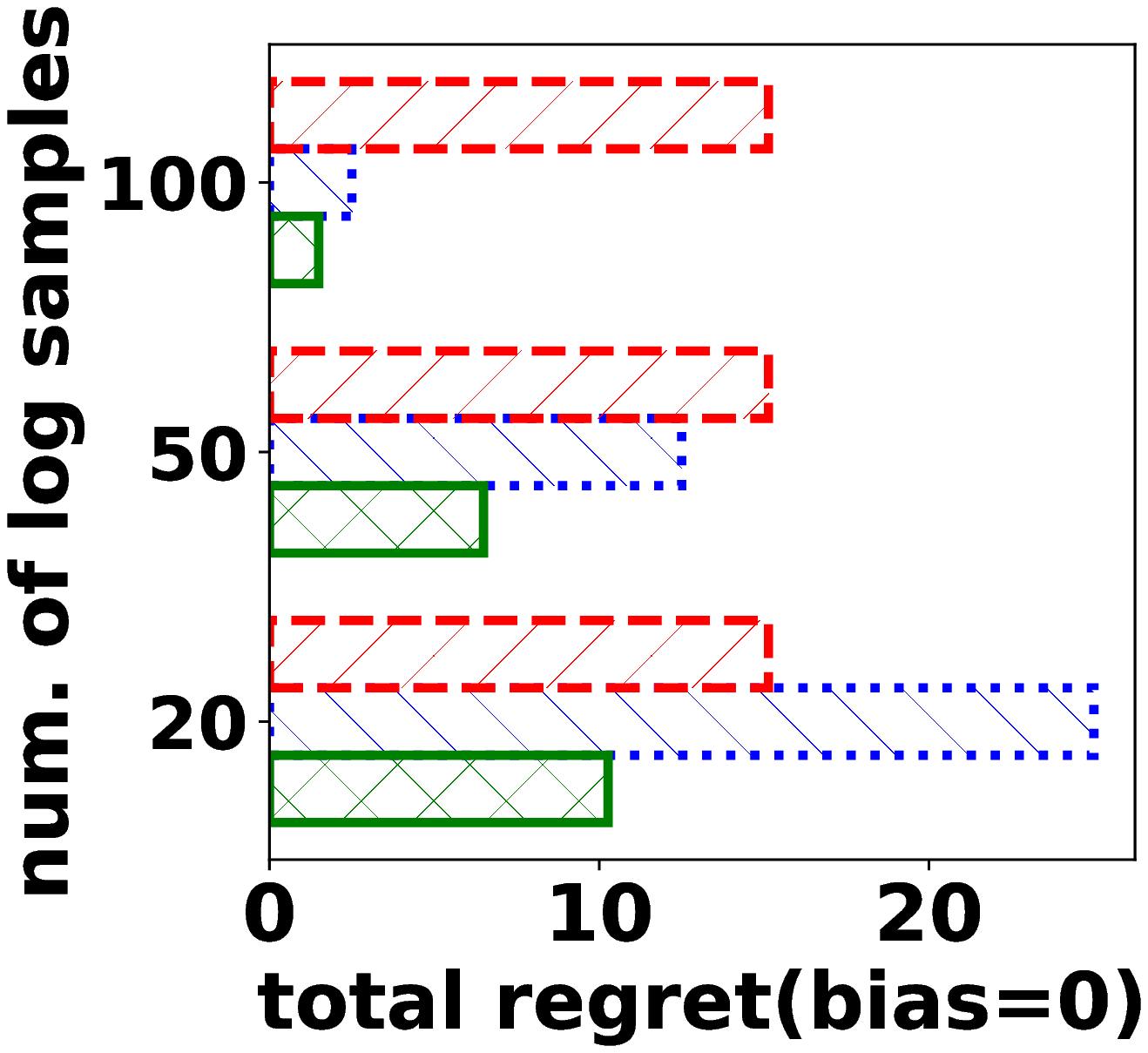}
          \includegraphics[height=0.32\textwidth]{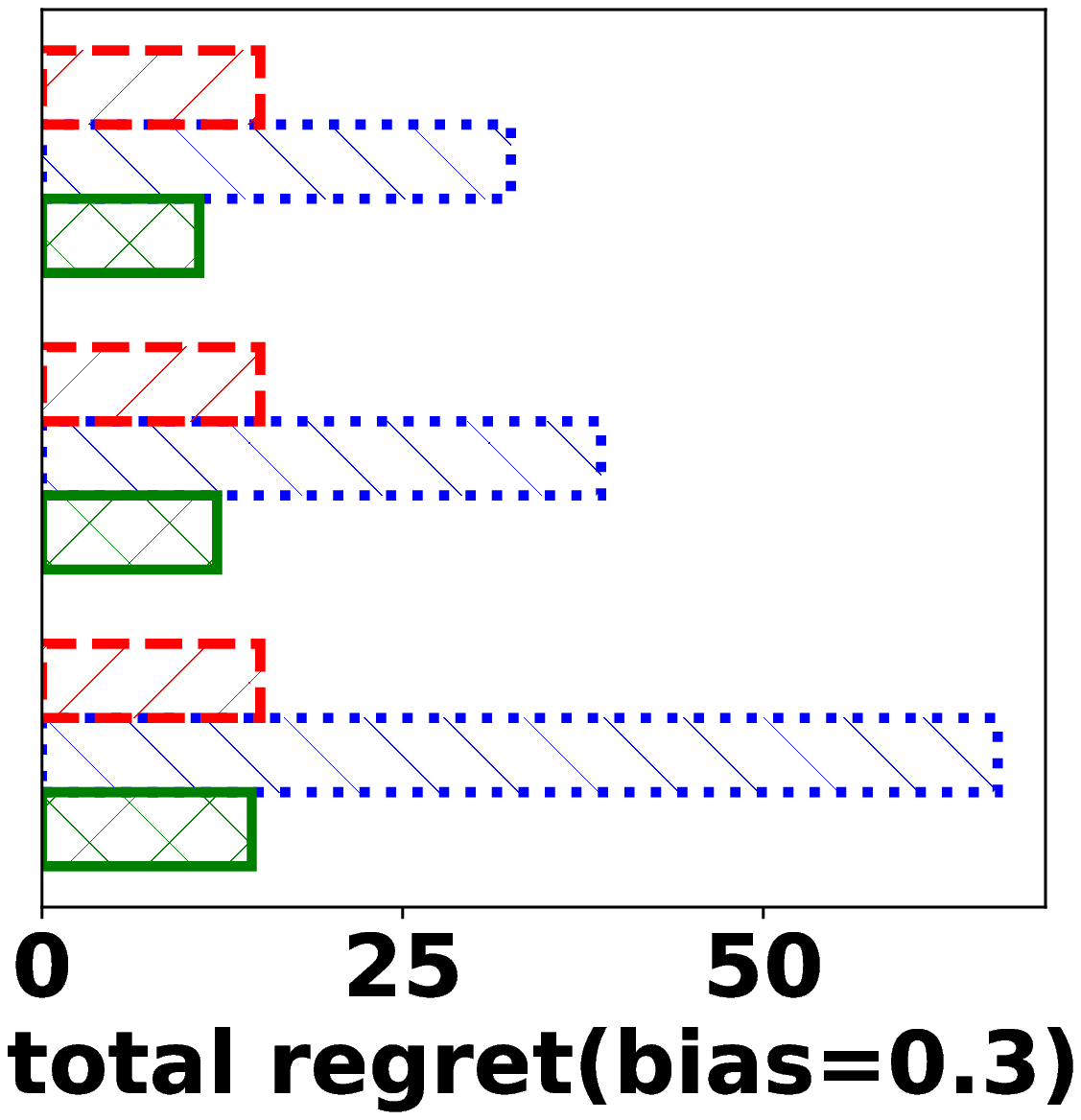}
         \includegraphics[height=0.32\textwidth]{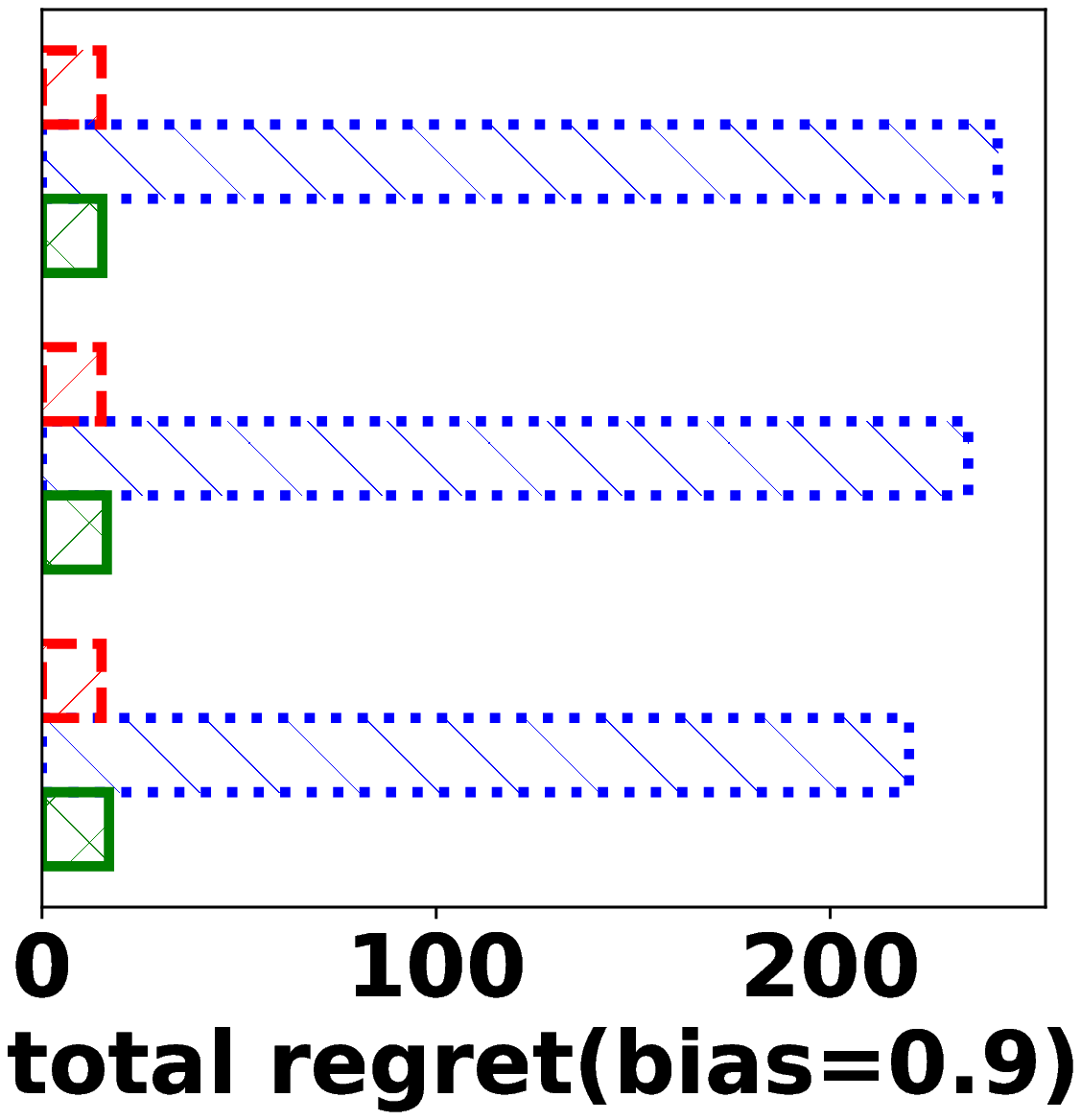} 
      \caption{The impact of the bias and the number of logged samples on the total
     regrets for $\CA_{\text{UCB+IPSW}}$ ($T$=500)}
   \label{fig:impact_quality_quantity}
  \end{minipage}
  \begin{minipage}{0.32\linewidth}
    \includegraphics[width=\textwidth]{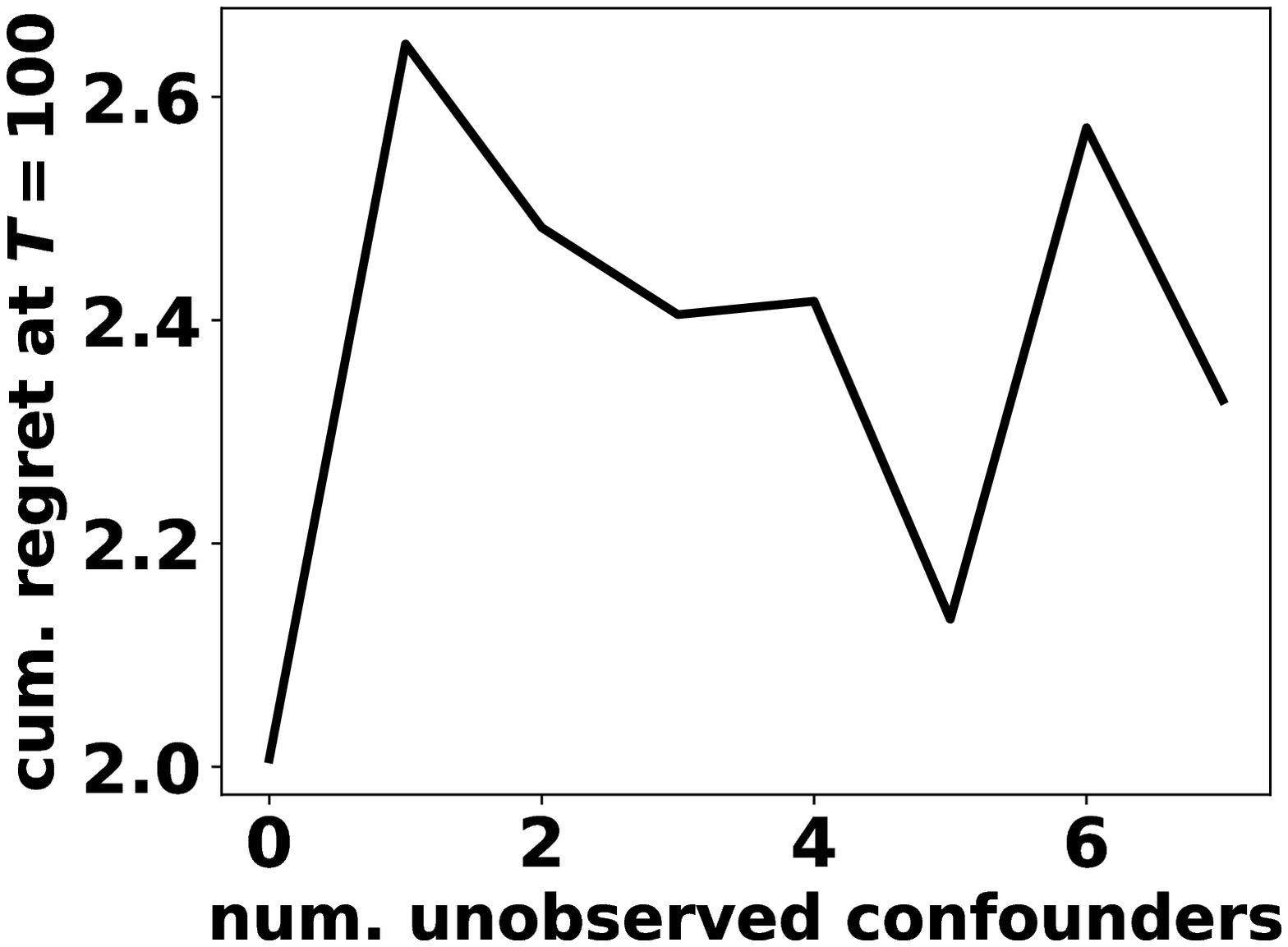}
    \caption{The impact of unobserved confounders for $\CA_{\text{UCB+IPSW}}$}
    \label{fig:unobserved_confounders}
  \end{minipage}
\end{figure*}

\begin{figure*}
  \centering
\begin{minipage}{0.245\linewidth}
\includegraphics[width=\textwidth]{real5_yahoo_linUCB50.eps}
\captionsetup{width=0.9\textwidth}
\vspace{-0.1in}
\caption{$\CA_{\text{LinUCB+LR}}$ on Yahoo's data}
\label{fig:yahoo_LinUCB}
\end{minipage}
\begin{minipage}{0.245\linewidth}
\includegraphics[width=\textwidth]{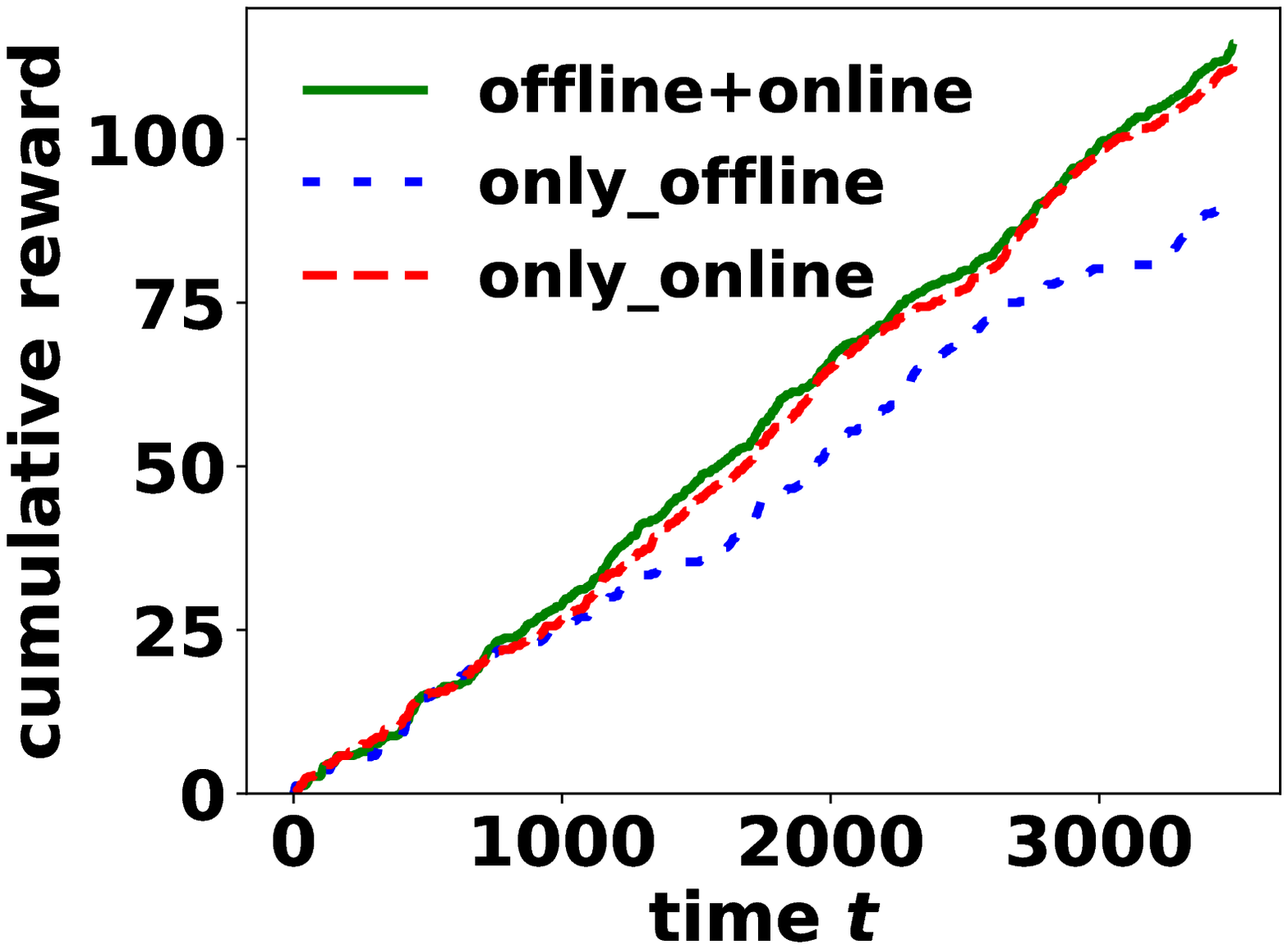}
\captionsetup{width=0.8\textwidth}
\vspace{-0.1in}
\caption{$\CA_{\text{Fst+MoF}}$ on Yahoo's data}
\label{fig:yahoo_forest}
\end{minipage}
\begin{minipage}{0.245\linewidth}
  \includegraphics[width=\textwidth]{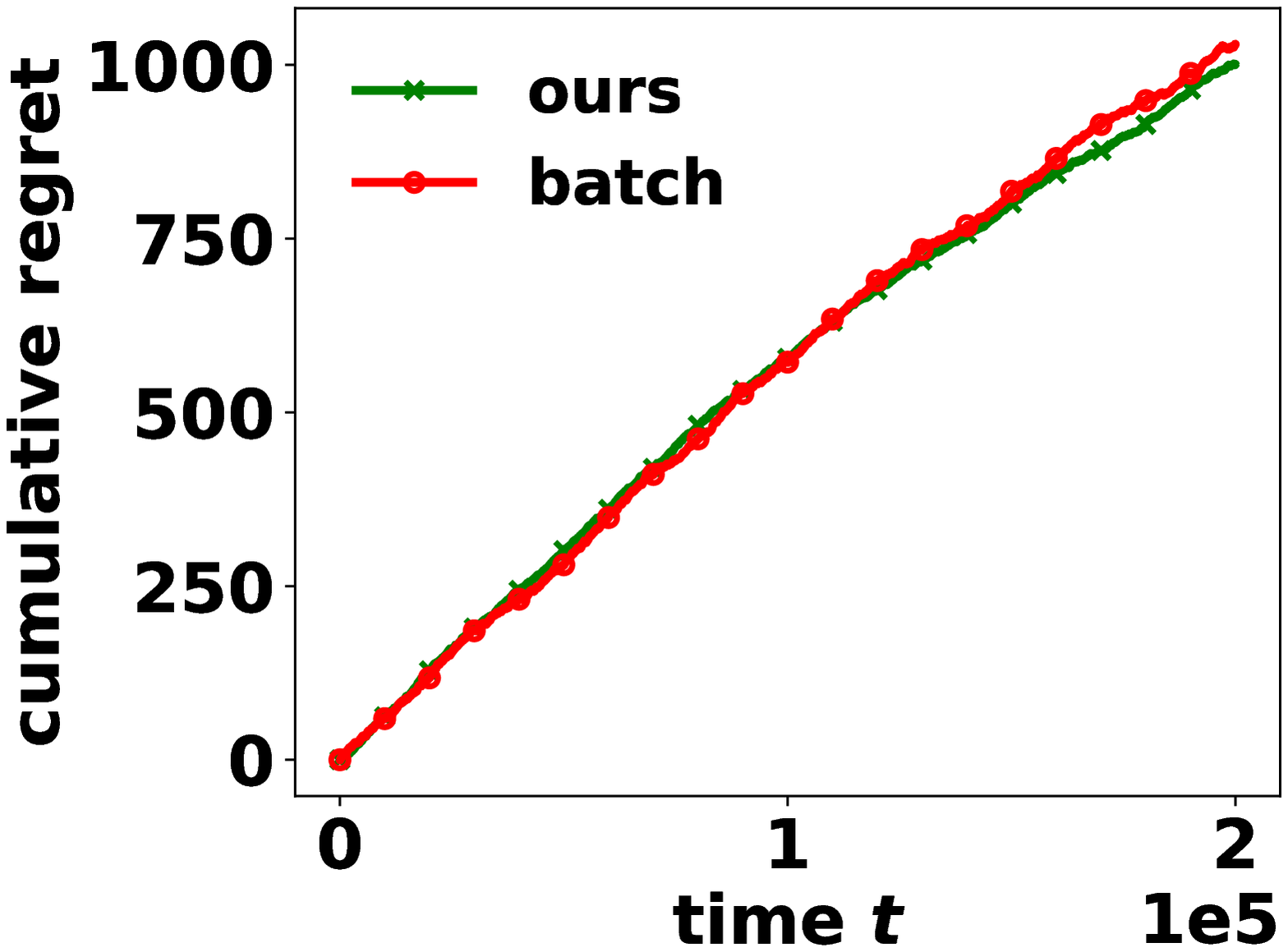}
  \caption{Batch mode vs. our method on Yahoo's data}
  \label{fig:yahoo_batch_mode}
\end{minipage}
\end{figure*}

\subsection{Comparison to Batch Method on Real Data}
The batch version of our algorithmic framework is outlined as Algorithm~\ref{alg:batch}. There are several
differences between the batch variant and our original algorithmic framework in
Algorithm~\ref{alg:framework}. First, in the online phase (Line 13-16) of the batch variant, we do
not use the offline data. Second, in Line~7 of Algorithm~\ref{alg:batch}, the
action $a$ is not generated by the online learning oracle, but is a fixed value
inside the for-loop.
Because not all the actions are generated by the bandit oracle, we cannot
directly use the theoretical results of existing bandit algorithms.

In Figure~\ref{fig:yahoo_batch_mode}, we show that the cumulative regrets for
the batch method and our method are almost indisdinguishable on Yahoo's dataset. This further
validate our observation in the main paper on the synthetic data.

\setcounter{algocf}{1}
\begin{algorithm}[htb]
  \caption{\bf Algorithmic Framework - Batch Variant}\label{alg:batch}
    Initialize the \textit{OfflineEvaluator} with logged data $\CL$\\
  Initialize the \textit{BanditOracle}\\
    \mycomment{The offline phase}\\
  \For{$a\in[K]$}{
    \While{True}{
     $\bm{x} {\gets} \textit{context\_generator()}$ \mycomment{from CDF $F_{\bm{X}}(\cdot)$}\\
                  $y \gets \textit{OfflineEvaluator.}\textbf{get\_outcome}(\bm{x}, a)$\\
   \If{$y \ne$ {\em NULL} }{
        \textit{BanditOracle.}\textbf{update}$(\bm{x}, a, y)$
}\Else(\mycomment{offline evaluator cannot synthesize a feedback}){
        \textbf{break} 
      }
    }
  }
  \mycomment{The online phase}\\
  \For{$t=1$ to $T$}{
       $a_t \gets \textit{BanditOracle.}\textbf{play}(\bm{x}_t)$
    \mycomment{online play} \\
    $y_t\gets$ the outcome from the online environment \\
    \textit{BanditOracle.}\textbf{update}$(\bm{x}_t, a_t, y_t)$
  } 
\end{algorithm}

\subsection{Experiments on Unobserved Confounders}
We first directly analyze the imapct of unobserved confounders on the regret.
Then, we notice that the unobserved confounders create bias in the estimated
reward which relates to the ``quality of the logged data''. 
Therefore, in the second part, we discuss the impact of the quantity and quality of logged data.

\noindent{\bf The imapct of unobserved confounders.}
In Figure~\ref{fig:unobserved_confounders}, we randomly choose a number of
confounders and hide them as unobserved. We see that the cumulative regret
becomes the lowest when there are no unobserved confounders. When there exists
unobserved confounders, the regret do not have a clear relationship with the
number of unobserved confounders.
This is because when there is some missing information, we do not know whether
each part of the missing information has positive or negative impacts on the
cumulative regrets. 

\noindent{\bf Impact of the quantity and quality of logged data.}
Here, we explore the situations where the offline evaluator may return biased samples.
In the ideal case, in terms of quantity we have a sufficiently large number of
data for each action, and in terms of quality the
data records all the confounding factors. 
In reality, these conditions may not hold. 

In Figure~\ref{fig:impact_quality_quantity}, we investigate the impacts of
both the quantity and quality of data, where we focus on the
context-indepedent algorithm $\CA_{\text{UCB+IPSW}}$.
Recall that the expected rewards for the two actions are $0$ and $0.5$.
Now, in the logged data we add a bias to the first action, and its expected reward becomes ``$0+$bias''. 
We observe that when the bias is 0 or 0.3,
the ``offline+online'' variant $\CA_{\text{UCB+IPSW}}$ has the lowest regret. This is because with small bias, the logged data is
still informative to select the better action. However, when the bias is
as large as 0.9, the ``only\_online'' variant (i.e. UCB) achieves the lowest
regret, because the offline estimations are misleading.
The impact of the number of logged samples depends on the bias. 
In the case of zero bias (the left figure), if we have a large number of logged samples (e.g.
100),
then our $\CA_{\text{UCB+IPSW}}$ algorithm and the ``only\_offline'' IPSW algorithm have low regrets because they use logged data.
 But when logged data has high bias (the right figure), more logged samples
 result in a higher regret for algorithms $\CA_{\text{UCB+IPSW}}$ and IPSW that
 use the logged data.

\section{Proofs}
\label{appendix:proof}

In our main paper, we have Theorem~\ref{thm:general_upper_bound},
\ref{thm:ps_matching}, \ref{thm:online_causal_forest}. We give proofs of these
three theorems in Section~\ref{sec:proof:general_bound},
\ref{sec:proof:context_independent}, \ref{sec:proof:online_forest}.

\subsection{General Regret Upper and Lower Bounds (Theorem~\ref{thm:general_upper_bound} and Theorem~\ref{mthm:general_lower_bound})}
\label{sec:proof:general_bound}

Now, we prove the general upper bound of our framework.\footnote{We have a 
  technical condition that regret bounds of the online bandit oracle $g(T)$
  only depends on expected rewards of each arm (e.g. the regret bound of
  UCB~\cite{auer2002finite} only depends on the expected reward).}

\begin{proof}[\bf Proof of Theorem~\ref{thm:general_upper_bound}]
The proof follows the idea described in Section 3.2.
Online learning oracle is called for $N+T$ times, including $N$ times with
synthetic feedbacks and $T$ times with real feedbacks. Denote the total
pseudo-regret in these $N+T$ time slots as $R(\CA_{\CO+\CE_\emptyset},N+T)$. Because the condition
(2) ensures that our offline evaluator returns unbiased i.i.d. samples in different time slots, the online
bandit oracle cannot distinguish these offline samples from online samples. (This
is because the regret bound only depends on the expected rewards of each arm and
the offline evaluator $\CE$ is unbiased.) Then according to
the regret bound of the online learning oracle, we have
\begin{align}
  \label{eq:total_bound}
  R(\CA_{\CO+\CE_\emptyset},N+T) \le g(N+T).
\end{align}
Moreover, we could decompose the total {\em expected} regret of the online learning oracle as
\begin{align}
  \label{eq:decompose}
R(\CA_{\CO+\CE_\emptyset},N{+}T) {=} \sum_{j=1}^N (\max_{a\in[K]}\MBE[y|a]{-}\MBE[y|a{=}\tilde{a}_j]) {+} R(\CA_{\CO+\CE}, T)
\end{align}
On the right hand side of (\ref{eq:decompose}), the first term $\sum_{j=1}^N (\max_{a\in[K]}\MBE[y|a]-\MBE[y|a=\tilde{a}_j])$ is the cummulative regret of the bandit oracle in
the offline phase, and the second term $R(\CA_{\CO+\CE},T)$ is the cumulative regret in
the online phase.
Combining (\ref{eq:total_bound}) and (\ref{eq:decompose}), we get 
\begin{align*}
 R(\CA_{\CO+\CE},T) \le   g(N+T) - \sum_{j=1}^N (\MBE[y|a^\ast]-\MBE[y|\tilde{a}_j]),
\end{align*}
which concludes our proof for the context-independent case.
For the contextual case, the proof is similar and we only need to replace
$\MBE[y|a]$ with $\MBE[y|a,\bm{x}]$.
\end{proof}

\begin{proof}[\bf Proof of Corollary~\ref{corollary:reduction_regret}]
Based on Theorem 1, we only need to show $\lim_{T\rightarrow +\infty} g(N+T)-g(T) = 0$.
Before we start our proof, we want to point out that regret bounds of many
bandit algorithms have ``no-regret'' property. For example, the regret bound
$g(T)$ for UCB is proportional to $\log(T)$, the regret bound $g(T)$
for EXP3 is proportional to $\sqrt{T}$. These functions w.r.t. $T$ are sub-linear
and concave. These functions are concave because as the
oracle receives more online feedbacks, it makes better decisions and thus has
less regret per time slot. For the concave function, $\frac{g(N+T)-g(T)}{N}$ is
decreasing in $T$. We claim that $\lim_{T\rightarrow
  +\infty}\frac{g(N+T)-g(T)}{N}=0$. Otherwise, there will be a $l> 0$, such
that $\frac{g(N+T)-g(T)}{N}\ge l$, for $T\ge T_0$ where $T_0$ is a constant. It
means that gradient of $g(T)$ is larger than $l$ when $T$ is large.
Then, $\lim_{T\rightarrow +\infty}{g(T)}/{T}\ge l$ which contradicts to the
``no-regret'' property.

Then $N{\times} \lim\limits_{T\rightarrow +\infty}\frac{g(N{+}T){-}g(T)}{N} {=} N{\times} 0 {=} 0$.
Now we have 
\begin{align*}
 & \lim_{T\rightarrow +\infty}  g(T)-R(T,\CA_{\CO+\CE}) \\
= & \lim_{T\rightarrow +\infty}\left( g(T) {-} g(N{+}T) \right) 
{+} \lim_{T\rightarrow +\infty}\left( g(N{+}T){-}R(T,\CA_{\CO+\CE}) \right)\\
\ge & 0 + \sum\nolimits_{j=1}^N\left( \max_{a\in[K]}\MBE[y|a] - \MBE[y|a=\tilde{a}_j] \right),
\end{align*}
which completes our proof for the context-independent case.
For the contextual case, the proof is similar and we only need to replace
$\MBE[y|a]$ with $\MBE[y|a,\bm{x}]$.
\end{proof}

\generalLowerBound*
\begin{proof}[\bf Proof of Theorem~\ref{mthm:general_lower_bound}]

After decomposing the total regret to the offline phase and online phase,
we have for any bandit oracle ${\CO}$
\begin{align}
 R&(T,\CA_{\CO+\CE})  {=} R(T{+}N,{\CA_{\CO+\CE_\emptyset}}) {-} \sum\limits_{j=1}^N\left( \max_{a\in[K]}\MBE[y|a] {-} \MBE[y|a{=}{a}_j] \right)\nonumber\\
 & \ge h(T{+}N) {-} \sum\limits_{j=1}^N\left( \max_{a\in[K]}\MBE[y|a] {-} \MBE[y|a{=}{a}_j] \right).
 \label{eq:thm2_1}
\end{align}
Next, for a non-decreasing function $h(\cdot)$ we have 
\begin{align}
h(T+N)\ge h(T). 
  \label{eq:thm2_2}
\end{align}

 Combining (\ref{eq:thm2_1}) and (\ref{eq:thm2_2}), we have
\begin{align*}
R(T,\CA_{\CO+\CE})\ge h(T) - \sum\nolimits_{j=1}^N\left( \max_{a\in[K]}\MBE[y|a] - \MBE[y|a={a}_j]\right),
\end{align*}
which concludes our proof for the unbiased estimators.
For the contextual case, the proof is similar and we only need to replace
$\MBE[y|a]$ with $\MBE[y|a,\bm{x}]$.
\end{proof}

\subsection{Regret Bounds for Context-Independent Algorithms
  $\CA_{\text{UCB+EM}}$ and $\CA_{\text{UCB+PSM}}$
  (Theorem~\ref{mthm:exact_matching} and Theorem~\ref{thm:ps_matching})}
\label{sec:proof:context_independent}

\begin{proof}[\bf Proof of Theorem~\ref{mthm:exact_matching}]
{\NIPS
  The proof consists of three steps. The first step is to decompose the regret
  as ``the total regret'' - ``the virtual regret''. In the second step, we give
  a bound to the virtual regret. In the third step, we bound the total regret.
}

  The idea of the proof is similar to the proof of the general upper bounds
  Theorem~\ref{thm:general_upper_bound}. According to Assumption~\ref{assumption:ignorability} (ignorability), the exact-matching offline
  evaluator returns unbiased outcomes. Since all the
decisions are made by the online learning oracle, we can apply the regret bound
of the UCB algorithm, and minus the regrets of {\em virtual} plays
for the samples returned by the exact matching evaluator.

\noindent{\underline{\bf Step 1:}}
As usual, to analyze a UCB-like algorithm, we count the number of times
we draw each arm.
\begin{definition}
  $\lambda_a$ is defined as the expected number of rounds that the $a_{th}$ arm is pulled by
  the online learning oracle.\end{definition}
We say an {\em``offline evaluator returns the $a_{th}$ arm''} if 
$\CI(\bm{x},a)\ne
\emptyset$ in Line 5 of OfflineEvaluator~\ref{alg:exact_match} ($\CE_{\text{EM}}$), and meanwhile, the context-action
pair $(\bm{x},a)$ is
{\em matched} by the offline evaluator. Otherwise, if $\CI(\bm{x},a)=
\emptyset$ in Line 5 of OfflineEvaluator~\ref{alg:exact_match}, we say
$(\bm{x},a)$ is {\em unmatched}.
\begin{definition}
Let $M_a$ be the number
  of times that the offline evaluator returns the $a_{th}$ arm. 
\end{definition}
 Recall that $\Delta_a=\MBE[y|a^\ast]-\MBE[y|a]$. Then, the expected regret
  \begin{align}
    {R}(\CA_{\text{UCB+EM}}, T) = \sum_{a\in[K]} \MBE[ (\lambda_a - M_a)] \Delta_a.
    \label{eq:thm3_1}
  \end{align}

  Now, we count the number of times $M_a$ that an action $a$ is matched by the
  exact matching offline evaluator.
  Denote $M(\bm{x}^c,a)$ as the number of times the pair $(\bm{x}^c,a)$ is matched by the
  offline evaluator, hence $\sum_{c\in[C]} M(\bm{x}^c,a)=M_a$. We note that $M_a$ is
  the number of ``virtual plays''.

\noindent{\underline{\bf Step 2: (lower bound of $M_a$)} }
The lower bound of $M_a$ corresponds to the lower bound of
    {\em regret of virtual play}.
Note that when some context-action pair $(\bm{x},a)$ is unmatched, the matching
process for action $a$ will stop.
We consider the following two cases: (1)
  the matching process does not stop at $T$. In this case the expected
  number $\MBE[M(\bm{x}^c,a)]= \lambda_a {\MBP}[\bm{x}^c]$, because the context and action
  are generated independently for the context-independent decisions.
  (2) the matching process terminates before $T$.
  In this case, we run out of the samples with $(\bm{x}^{\tilde{c}},a)$. Suppose
  the unmatched context-action pair is
  $(\bm{x}^{\tilde{c}},a)$ (there are still samples for some other context $\bm{x}$), then the expected number of matched sample for some other
  context $\bm{x}^{{c}}$ is
  $\MBE[M(\bm{x}^c,a)]=N(\bm{x}^{\tilde{c}},a)\frac{\MBP[\bm{x}^c]}{\MBP[\bm{x}^{\tilde{c}}]}$.
  This is because the $M(\bm{x}^{\tilde{c}},a)=N(\bm{x}^{\tilde{c}}, a)$ and
  $\frac{\MBE[M(\bm{x}^c,a)]}{\MBE[M(\bm{x}^{\tilde{c}},a)]} = \frac{\MBP[\bm{x}^c]}{\MBP[\bm{x}^{\tilde{c}}]}$.
 The unmatched context can be any $\bm{x}^c$ $\forall c\in[C]$. Consider the worst case, then $M(\bm{x}^c,a)\ge
  \min_{\tilde{c}\in[C]}
  N(\bm{x}^{\tilde{c}},a)\frac{\MBP[\bm{x}^c]}{\MBP[\bm{x}^{\tilde{c}}]}$.
  Note that when $\tilde{c}=c$, we have
$\frac{N(\bm{x}^{\tilde{c}}{,}a)\MBP[\bm{x}^c]}{\MBP[\bm{x}^{\tilde{c}}]} = N(\bm{x}^{\tilde{c}},a)$.
Combining the counts of $M(\bm{x}^c,a)$ in the above two cases, we have
\begin{align}
\MBE[M_a] \ge \sum_{c\in[C]}\min \left\{ \min_{\tilde{c}\in[C]}
  \frac{N(\bm{x}^{\tilde{c}},a)\MBP[\bm{x}^c]}{\MBP[\bm{x}^{\tilde{c}}]}, \lambda_a{\MBP}[\bm{x}^c
]\right\}.
  \label{eq:thm3_2}
\end{align}

  Combine (\ref{eq:thm3_1}) and (\ref{eq:thm3_2}), and we note 
  $\lambda_a{=}\lambda_a \sum_{c\in[C]}{\MBP}[\bm{x}^c]$ (because
  $\sum_{c\in[C]}{\MBP}[\bm{x}^c]=1$ by definition), then
  \begin{align*}
   {R} & (\CA_{\text{UCB+EM}}, T) \le  \sum_{a\in[K]} \Delta_a \times\nonumber\\
  & \left(
      \sum_{c\in[C]} \hspace{-0.05in}
    \MBE\left[
      \max\{ \lambda_a{\MBP}[\bm{x}^c] {-}
\min_{\tilde{c}\in[C]}
  \frac{N(\bm{x}^{\tilde{c}},a)\MBP[\bm{x}^c]}{\MBP[\bm{x}^{\tilde{c}}]}, 0\}
    \right]\right).
  \end{align*}
  
We have the following equality:
\begin{align*}
  & \max\{ \lambda_a{\MBP}[\bm{x}^c] - 
\min_{\tilde{c}\in[C]}\frac{N(\bm{x}^{\tilde{c}},a)\MBP[\bm{x}^c]}{\MBP[\bm{x}^{\tilde{c}}]}
,0 \} \\
= & \max\{ l_a {\MBP}[\bm{x}^c] {+} (\lambda_a-l_a) {\MBP}[\bm{x}^c]
{-}\min_{\tilde{c}\in[C]}\frac{N(\bm{x}^{\tilde{c}},a)\MBP[\bm{x}^c]}{\MBP[\bm{x}^{\tilde{c}}]}
 , 0 \}  \\
= & \max\{ l_a {\MBP}[\bm{x}^c]
{-}\min_{\tilde{c}\in[C]}\frac{N(\bm{x}^{\tilde{c}},a)\MBP[\bm{x}^c]}{\MBP[\bm{x}^{\tilde{c}}]}
, 0 \} {+} (\lambda_a-l_a) {\MBP}[\bm{x}^c].
\end{align*}
where we define 
\begin{align}
  l_a \triangleq \lceil{ (8\ln (T+ \MBE[\sum_{a\in[K]} M_a]))/\Delta_a^2}
\rceil.
\label{eq:thm3_3}
\end{align}
 Then, $l_a \ge \MBE[\lceil{ \MBE[8\ln (T+\sum_{a\in[K]} M_a)] }\rceil]$ 
because $\ln(\cdot)$ is a concave function (according to Jensen's inequality,
the right term takes the expectation out). 
According Assumption \ref{asum:StableUnit:offline} and
\ref{asum:StableUnit:online} (stable unit in offline and online cases) and ``the
reward $y$ is bounded in $[0,1]$'', we can apply the results in paper of
Auer et al.\cite{auer2002finite} and $\MBE[\lambda_a-l_a]\le
1+\frac{\pi^2}{3}$ for some sub-optimal action $a\ne a^\ast$.
Therefore, we have 
\begin{align}
  {R}(&\CA_{\text{UCB+EM}}, T) \le 
\sum_{a\in[K]} \left(
  (1+\frac{\pi^2}{3}) +  \right. \nonumber\\
  &\left.  \sum_{c\in[C]}   \max\{ l_a {\MBP}[\bm{x}^c]
{-}\min_{\tilde{c}\in[C]}\frac{N(\bm{x}^{\tilde{c}},a)\MBP[\bm{x}^c]}{\MBP[\bm{x}^{\tilde{c}}]}
, 0 \}  \right)\Delta_a.
 \label{eq:thm3_4}
\end{align}

\noindent{\underline{\bf Step 3: (upper bound of $M_a$)}}
  To get an upper bound for $l_a$, we now give an upper bound for the expected number of samples that
  are matched, i.e. $\MBE[\sum_{a\in[K]}M_a]$.
  Recall that we denote the number of matched samples with context
  $\bm{x}^c$ and arm $a$ as $M(\bm{x}^c, a)$. Then, because it cannot exceed
  the number of data samples, we have ``the trivial bound''
  \begin{align}
    \label{eq:ub1}
    \MBE[M(\bm{x}^c, a)]\le  N(\bm{x}^c,a).
  \end{align}
  Also, because the expected number of matched
  samples cannot exceed the expected number of times the action is selected, we
  have ``the refined bound''
  \begin{align}
  \MBE[M(\bm{x}^c,a)]\le \MBE[\lambda_a] \MBP[\bm{x}^c].
    \label{eq:ub2}
  \end{align}
  Therefore, combining (\ref{eq:ub1}) and (\ref{eq:ub2}), we have 
\[
\MBE[M(\bm{x}^c,a)]\le \max\{N(\bm{x}^c,a), \lambda_a\MBP[\bm{x}^c]\}.
\]
  Then, 
\begin{align}
& \MBE[\sum_{a\in[K]} M_a]
\le \sum_{c\in[C]}\sum_{a\in[K]} \min\{N(\bm{x}^c,a), \lambda_a\MBP[\bm{x}^c]\} \nonumber\\
= & -\sum_{c\in[C]}\sum_{a\in[K]} \max\{- N(\bm{x}^c,a), - \lambda_a\MBP[\bm{x}^c]\} \nonumber\\
  = &
      \sum_{c\in[C]}\sum_{a\in[K]} N(\bm{x}^c,a) {-} \nonumber\\
\hspace{0.0in}& \sum_{c\in[C]}\sum_{a\in[K]} \max\left\{N(\bm{x}^c{,}a) {-} N(\bm{x}^c{,}a), N(\bm{x}^c{,}a) {-} \MBE[\lambda_a]\MBP[\bm{x}^c]\right\}\nonumber\\
= & N - \sum_{c\in[C]}\sum_{a\in[K]} \max\{0, N(\bm{x}^c,a) - \MBE[\lambda_a]\MBP[\bm{x}^c]\}\nonumber\\
\le & N \hspace{-0.02in} {-} \hspace{-0.07in}\sum_{c{\in}[C]}\hspace{-0.02in}\sum_{a{\in}[K]}
 \hspace{-0.07in}\max\{0{,} N(\bm{x}^c{,}a) {-} (8\frac{\ln(T{+}N)}{\Delta_a^2}{+}1{+}\frac{\pi^2}{3})\MBP[\bm{x}^c]\}
.
\label{eq:thm3_5}
\end{align}
Recall that $N$ is the number of all logged samples. The last equation is because $\lambda_a\le
8\frac{\ln(T+N)}{\Delta_a^2}+1+\frac{\pi^2}{3}$ according to the
paper~\cite{auer2002finite}.

Plug-in (\ref{eq:thm3_3}) and (\ref{eq:thm3_5}) to (\ref{eq:thm3_4}), then we have the upper bound claimed by our Theorem.
\end{proof}

\begin{proof}[\bf Proof of Theorem~\ref{thm:ps_matching}]
 The proof is similar to the proof of Theorem~\ref{mthm:exact_matching} for $\CA_{\text{UCB+EM}}$. The only difference is
 that for propensity score matching, the only context to be matched is the
 propensity score.

First, we will show that by matching the propensity score, the expected reward
in each round for each arm is not changed.

The expected reward when we choose action $a$ is
\begin{align*}
  \MBE[y|a] = \sum_{\bm{x}\in \CX}\MBP[\bm{x}] \MBE[y|a,\bm{x}], 
\end{align*}
 where $\MBE[y|a,\bm{x}]$ is the expected reward when the
context is $\bm{x}$ and the action is $a$.
We then consider the expected reward when we use the propensity score matching strategy.
Let us denote the propensity score of choosing an action $\tilde{a}$ under context $\tilde{\bm{x}}$
as
\begin{align*}
p(\tilde{\bm{x}},\tilde{a})=\MBP[a=\tilde{a}|\bm{x}=\tilde{\bm{x}}].
\end{align*}
By the propensity matching procedure, the expected reward of choosing an action $\tilde{a}$ is
\begin{align*}
  & \sum_{\bm{x}\in \CX}\MBP[\bm{x}] \MBE[y|\bm{p} {=} \bm{p}(\bm{x}), a{=}\tilde{a}] \\
= &     
 \sum_{\bm{x}\in \CX} \MBP[\bm{x}]\left( \sum_{c\in[Q]}\indicator{\bm{p}(\bm{x}){=}\bm{p}_c} \MBE[y|\bm{p}{=}\bm{p}_c, a{=}\tilde{a}]\right) \\
=& \sum_{c\in[Q]}\sum_{\bm{x}\in \CX} \MBP[\bm{x}] \indicator{\bm{p}(\bm{x}){=}\bm{p}_c} \MBE[y|\bm{p}{=}\bm{p}_c, a{=}\tilde{a}].
\end{align*}
and we have
\begin{align*}
  \MBE[y|\bm{p}{=}\bm{p}_c, a{=}\tilde{a}] {=}
  &
  \frac{
\sum_{\bm{x}\in\CX} \MBE[y|\bm{x},\tilde{a}]\times  \MBP[\bm{x}]
  \indicator{\bm{p}(\bm{x})=\bm{p}_c} \bm{p}_c(\tilde{a})
}{
\sum_{\bm{x}\in\CX} \MBP[\bm{x}]\times 
  \indicator{\bm{p}(\bm{x})=\bm{p}_c} \bm{p}_c(\tilde{a})
} \\
{=} &
 \frac{
\sum_{\bm{x}\in\CX} \MBE[y|\bm{x}, \tilde{a}] \times \MBP[\bm{x}]
  \indicator{\bm{p}(\bm{x})=\bm{p}_c} 
}{
\sum_{\bm{x}\in\CX} \MBP[\bm{x}]
 \times \indicator{\bm{p}(\bm{x})=\bm{p}_c}
}.
\end{align*}
Therefore, we have the expected reward
\begin{align}
    & \sum_{\bm{x}\in \CX}\MBP[\bm{x}] \MBE[y|\bm{p}{=}\bm{p}(\bm{x}), a{=}\tilde{a}] \nonumber\\
  {=} & \hspace{-0.02in} \sum_{c\in[Q]} \sum_{\bm{x}\in \CX} \MBP[\bm{x}]\indicator{\bm{p}(\bm{x}){=}\bm{p}_c} \hspace{-0.00in}
 \frac{
\sum_{\bm{x}\in\CX} \MBE[y|\bm{x}, \tilde{a}] \MBP[\bm{x}]
  \indicator{\bm{p}(\bm{x})=\bm{p}_c} 
}{
\sum_{\bm{x}\in\CX} \MBP[\bm{x}]
  \indicator{\bm{p}(\bm{x})=\bm{p}_c}
} \nonumber\\
 = & \hspace{-0.02in} \sum_{c\in[Q]} \sum_{\bm{x}\in\CX} \MBE[y|\bm{x}, \tilde{a}] \MBP[\bm{x}] \indicator{\bm{p}(\bm{x})=\bm{p}_c} \nonumber\\
 = & \sum_{\bm{x}\in\CX} \MBE[y|\bm{x}, \tilde{a}] \MBP[\bm{x}] 
= \MBE[y|\tilde{a}].
\end{align}
The last but one equation is from our assumption that all the propensity scores
are belong to a finite set $\{\bm{p}_1,\ldots,\bm{p}_Q\}$, and thus
$\sum_{c\in[Q]} \indicator{\bm{p}(\bm{x})=\bm{p}_c} = 1$ (namely, the propensity
score belongs to some value in the set).

Hence, our propensity score matching method unbiasedly estimate the 
$\MBE[y|\tilde{a}]$ for any action $\tilde{a}$.

With such unbiasedness property, the remaining is the same as Theorem~\ref{mthm:exact_matching},
except that the contexts $\bm{x}$ is replaced by the propensity score $\bm{p}$ verbatim.
\end{proof}

\subsection{Regret Bound for Contextual Algorithm $\CA_{\text{Fst}+\CE_\emptyset}$ (Theorem~\ref{thm:online_causal_forest})}
\label{sec:proof:online_forest}

\begin{comment}
\noindent{\bf Causal forest.}
Now, we analyze the causal forest algorithm. The analysis is from the confidence
bound from Wager and Athey's paper\cite{wager2018estimation}\cite{athey2019generalized} on the causal forest.

We first consider the asymptotic regret bound for the online version of the
causal forest algorithm. In this case, we do not know the exact structure of the
tree and the estimator is potentially biased. In this case, analyzing the regret
needs to deal with the bias-variance tradeoff. Recall that in the causal forest algorithm, we use the
$\epsilon$-decreasing exploration strategy. 
Note that our following analysis is for the {\em
  $\epsilon$-greedy-causal-forest} online oracle which only uses the online
feedbacks (not the algorithm that uses both data sources).
\end{comment}

{\NIPS
\begin{proof}[\bf Proof of Theorem~\ref{thm:online_causal_forest}]
The proof of Theorem~\ref{thm:online_causal_forest} consists of four parts.
First, Lemma~\ref{lemma:epsilon_lb} will show that if the exploration rate is
$\epsilon_t=t^{-1/2(1-\beta)}$, then in each data item of the dataset up till time $T$, any action $a\in[K]$ will be played with
a probability at least $\varepsilon_T=\frac{1}{K}T^{-1/2(1-\beta)}$, i.e.
$\MBP[A_t=a | X=\bm{x}] \ge \varepsilon_t$. Second,
Lemma~\ref{lemma:asymptotic_bias_var} will show that when each action was played
with probability at least $\varepsilon_t$ at time $t$, then the estimation
error at that time will be asymptotically bounded. Third, based on the previous
asymptotic results, our Lemma~\ref{lemma:finite_large_n} will show that when the
number of samples is large, the estimation error by our multi-action forest
estimator will be small with high probability. Fourth, we use
Lemma~\ref{lemma:asymptotic_bias_var} and Lemma~\ref{lemma:finite_large_n} to conclude that the
cummulative regret will be small.

\noindent\underline{\bf Step 1:}
Recall that in each time slot $t$, we have a
probability $\epsilon_t$ to draw a random action. 
Step 1 is to show that the $\epsilon$-decreasing strategy will
create an overlap condition for the dataset of online feedbacks. Moreover, we show that
compared to a constant exploration rate (instead of our $\epsilon$-decreasing
exploration), our strategy is not doing over-exploration up to a logarithmic factor.

\begin{lemma}
   \label{lemma:epsilon_lb}
   We have the following bound for the sum of power
   \begin{align}
     \label{eq:power_bound}
     T^{1-p} \le \sum_{t=1}^T t^{-p} \le T^{1-p}\log(T)^p, && \text{for some } p\in(0,1).
   \end{align}
   Applying to our case, we let $p=-\epsilon_0 = 1/2(1-\beta)$, and
   \[
   T^{1+\epsilon_0}\le
   \sum_{t=1}^T t^{\epsilon_0} \le T^{1+\epsilon_0}\log(T)^{-\epsilon_0}.
   \]
   Moreover, in the dataset collected till time $T$, for a randomly picked data
   point $(X, Y, A)$, we have $\MBP[A=a|X=x]\ge \frac{1}{K} T^{-1/2(1-\beta)}$. 
 \end{lemma}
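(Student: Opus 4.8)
The plan is to treat the three claims of Lemma~\ref{lemma:epsilon_lb} separately: the two-sided power-sum estimate, its restatement under the substitution $\epsilon_0=-p=-\tfrac12(1-\beta)$, and the per-slot propensity lower bound. The first two are elementary and I would dispatch them by monotonicity and integral comparison; the third is the one carrying the actual content, since it supplies the ``overlap''-type condition that Lemma~\ref{lemma:asymptotic_bias_var} (and the underlying results of \cite{wager2018estimation}) will consume.

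For the lower bound $\sum_{t=1}^T t^{-p}\ge T^{1-p}$ I would note that $t\mapsto t^{-p}$ is decreasing for $p>0$, so every summand dominates the last one, $t^{-p}\ge T^{-p}$ for $t\le T$, and summing $T$ such terms gives $\sum_{t=1}^T t^{-p}\ge T\cdot T^{-p}=T^{1-p}$. For the upper bound I would use the integral comparison $t^{-p}\le\int_{t-1}^{t}x^{-p}\,dx$ (again by monotonicity), whence $\sum_{t=1}^T t^{-p}\le\int_0^T x^{-p}\,dx=\frac{T^{1-p}}{1-p}$. This already gives the stated $O(T^{1-p})$ growth; to match the displayed form $T^{1-p}\log(T)^p$ it suffices that $\log(T)^p\ge\frac{1}{1-p}$, i.e. $T\ge\exp\big((1-p)^{-1/p}\big)$, which holds in the asymptotic regime $T\to\infty$ that Theorem~\ref{thm:online_causal_forest} concerns. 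Substituting $\epsilon_0=-p$ rewrites $t^{-p}=t^{\epsilon_0}$, $1-p=1+\epsilon_0$ and $p=-\epsilon_0$, turning the first display into the second verbatim; comparing $\sum_t t^{-p}\in[T^{1-p},\,T^{1-p}\log(T)^p]$ with the constant-rate total $T\cdot\epsilon_T=T\cdot T^{-p}=T^{1-p}$ is exactly the claim that the $\epsilon$-decreasing schedule over-explores by at most a logarithmic factor.

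For the propensity bound I would argue pointwise in the history. In slot $t$, BanditOracle~\ref{alg:cf_oracle} plays a uniformly random action with probability $\epsilon_t$ and the greedy action otherwise, so for \emph{every} realization of the past data $\CH_{t-1}$ and every context, $\MBP[A_t=a\mid X_t=x,\CH_{t-1}]\ge \epsilon_t/K$ --- the greedy branch can only add mass to some action, never subtract it. Because $\epsilon_t=t^{-1/2(1-\beta)}=t^{-p}$ is decreasing, $\epsilon_t\ge\epsilon_T=T^{-1/2(1-\beta)}$ for all $t\le T$, so $\MBP[A_t=a\mid X_t=x,\CH_{t-1}]\ge\frac1K T^{-1/2(1-\beta)}$ holds uniformly over $t\le T$. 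A data point drawn from the collection up to time $T$ is a mixture over these slots (and over histories); since each ingredient of the mixture obeys the same lower bound, so does the mixture, giving $\MBP[A=a\mid X=x]\ge\frac1K T^{-1/2(1-\beta)}$.

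I expect the only delicate point to be this last averaging step: making precise the sense in which ``a randomly picked data point'' conditioned on $X=x$ inherits the per-slot bound, since the action law in slot $t$ is $\CH_{t-1}$-measurable and hence correlated across slots through the evolving forest. The clean way around this is to keep the estimate conditional on $\CH_{t-1}$ --- where it holds deterministically by the $\epsilon$-greedy construction --- and only then take expectations, so that no independence across slots is ever needed; the monotonicity $\epsilon_t\ge\epsilon_T$ is what lets a single uniform constant survive the mixing. The power-sum step is routine by comparison, with the sole caveat noted above that the $\log(T)^p$ form of the upper bound is an asymptotic convenience rather than a bound valid for every $T$.
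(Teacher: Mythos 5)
Your proposal is correct, and its skeleton matches the paper's: the same term-by-term lower bound $t^{-p}\ge T^{-p}$, and the same ``exploration probability divided by $K$'' idea for the propensity claim. The two places where you diverge are worth recording. For the upper bound on $\sum_{t=1}^T t^{-p}$, the paper applies a power-mean inequality (it calls it Cauchy--Schwartz; note $1/p>1$) to get $\frac{1}{T}\sum_{t=1}^T t^{-p} \le \bigl(\frac{1}{T}\sum_{t=1}^T t^{-1}\bigr)^{p}$ and then bounds the harmonic sum by $\log T$; your integral comparison gives the cleaner estimate $\sum_{t=1}^T t^{-p}\le T^{1-p}/(1-p)$, valid for every $T\ge 1$, and you correctly flag that converting it to the displayed $T^{1-p}\log(T)^p$ form is only an asymptotic statement. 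In fact the paper's own derivation carries the same caveat, since $\sum_{t=1}^T 1/t > \log T$ (the harmonic number exceeds $\log T$ by roughly Euler's constant, and the inequality degenerates at $T=1$ where $\log(T)^p=0$), so neither route establishes the displayed bound literally for all $T$; only the asymptotic content is consumed downstream in Theorem~\ref{thm:online_causal_forest}. For the propensity bound, the paper averages the exploration rate over a uniformly random slot and invokes the power-sum lower bound $\frac{1}{T}\sum_{t=1}^T t^{\epsilon_0}\ge T^{\epsilon_0}$, whereas you use the termwise monotonicity $\epsilon_t\ge\epsilon_T$ \emph{conditionally on the history} and then average over slots and histories; the conclusions coincide, but your version makes explicit the point the paper glosses over --- the action law in slot $t$ is history-dependent, so the bound should be established conditionally (where it holds deterministically by the $\epsilon$-greedy construction) before mixing, and it also decouples the propensity claim from the power-sum estimate entirely.
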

 \begin{proof}
  The left inequality is easy to show. As $t^{-p}$ decreases in $t$,
  $T^{-p}\le t^{-p}$ for any $t\le T$, and thus
  $T^{1-p}=\sum_{t=1}^T T^{-p}\le \sum_{t=1}^T t^{-p}$. Now, we show the right
  inequality. According to Cauchy-Schwartz inequality (note that $1/p>1$),
  \begin{align*}
    &\frac{\sum_{t=1}^T t^{-p}}{T} \le \left( \frac{\sum_{t=1}^T (t^{-p})^{1/p}}{T}  \right)^{p}\\
    =& \left( \frac{\sum_{t=1}^T t^{-1}}{T}  \right)^{p} \le  \left( \frac{\log(T)}{T}  \right)^{p}.
  \end{align*}
  Then, we get the inequality $\sum_{t=1}^T t^{-p} \le T^{1-p}\log(T)^p$ that is
  (\ref{eq:power_bound}).
Then, we note that the expected total number of times to do the random
exploration is $\sum_{t=1}^T t^{\epsilon_0}$ till time $T$. Thus, the expected
number of times that we do the exploration in a randomly picked time slot is
${(\sum_{t=1}^T t^{\epsilon_0})}/{T}$. For a randomly picked data item, the
probability that an action is played $\MBP[A=a|X=\bm{x}]$ is greater than or
equal to $\frac{1}{K}$ times the probability that we do exploration in a randomly picked
time slot. Therefore, $\MBP[A=a|X=\bm{x}]\ge \frac{1}{K} (\sum_{t=1}^T t^{\epsilon_0})/T
= \frac{1}{K} T^{\epsilon_0}$.
 \end{proof}
In Lemma~\ref{lemma:epsilon_lb}, our main purpose is to give a lower bound on
the overlap (or ``exploration'') probability.
In particular, the lower bound $T^{1-p}$ corresponds to a fixed rate of
exploration $\epsilon_t= T^{-p}$ for $\forall t$. Then, for our
$\epsilon$-decreasing strategy we give an upper bound and a lower bound compaing
to two fixed-exploration-rate strategies.

\noindent\underline{\bf Step 2:}
In Lemma~\ref{lemma:epsilon_lb}, we have shown that our $\epsilon$-decreasing exploration
gives a {\em ``dynamic''} overlap condition, i.e. $\varepsilon_t$ changes in $t$. In contrast, the usual overlap
condition (e.g. \cite{imbens2015causal}) states a constant overlap probability.
Now, we will show that under this dynamic overlap condition, we have the
asymptotic convergence and normality properties for our multi-action forest estimator.

We first introduce the notation $\lesssim$. Here, $f(s)\lesssim g(s)$ means that $\lim_{s\rightarrow +\infty} \frac{f(s)}{g(s)} \le 1$.
 \begin{lemma}[Asymptotic bias and variance]
   \label{lemma:asymptotic_bias_var}
{\NIPS
     Suppose that we have $n$ i.i.d. training examples $(X_i, Y_i, A_i)\in [0,1]^d \times \mathbb{R} \times
  [k]$. Suppose the ignorability Assumption~\ref{assumption:ignorability} holds.
  Finally, suppose that all potential outcome distributions $(X_i,Y_i(a))$ for
  $\forall a\in[K]$ satisfy the same regularity assumptions as the pair
  $(X_i,Y_i)$ did in the statement of Theorem~3.1 in \cite{wager2018estimation}.
  Under this data-generating process, suppose the trained $\CF$ (in Line 11) is honest,
  $\alpha$-regular with $\alpha\le 0.2$ in the sense of
  Definition~\ref{def:honest} and \ref{def:alpha_regular}, and symmetric
  random-split (in the sense of Definition 3 and 5 in
  \cite{wager2018estimation}) multi-action forest.
}
 Denote
 $A{\triangleq}\frac{\pi}{d}\frac{\log((1-\alpha)^{-1})}{\log(\alpha^{-1})}$
  where $\pi\in[0,1]$ is a constant in Definition 3 of \cite{wager2018estimation}.
  {\color{black}Suppose in the fixed logged data of $n$ samples,
   \begin{align}
   \label{eq:overlap}
   \MBP[A=a|X=\bm{x}] > \varepsilon_n, \text{ for each } a\in[K], \text{ for any }\bm{x}.
   \end{align}
 }
 where $\varepsilon_n$ is a constant. Then for $s=n^\beta$ where $\beta=1-\frac{2A}{2+3A}$
 \begin{align}
   \label{eq:bias_bound}
   |\MBE[\hat{\mu}_n(\bm{x},a)] - \mu(\bm{x},a) | \lesssim Md\left( \frac{\varepsilon_n s}{2k-1} \right)^{-\frac{1}{2} \frac{\log\left( (1-\alpha)^{-1} \right)}{\log(\alpha^{-1})} \frac{\pi}{d}}.
 \end{align}
 In addition, there exists a sequence $\{\sigma_n\}_{n=1}^T$ where $\sigma_n=O(\frac{s}{n})$, $\frac{\MBE[\hat{\mu}_n(\bm{x},a)] - \hat{\mu}_n(\bm{x},
   a)}{\sigma_n(\bm{x})} \Rightarrow \CN(0,1)$ for $\forall a$, where ``$\Rightarrow$''
 means ``converges in distribution''.
 Here, $\hat{\mu}_n(\bm{x},a)\triangleq \frac{1}{B}\sum_{b\in[B]} \hat{L}_b(\bm{x},a)$
 is the prediction by the multi-action forest, with $n$ data samples.
   \end{lemma}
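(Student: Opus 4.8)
The plan is to reduce the multi-action forest, action by action, to an ordinary honest regression forest run on the subpopulation that received action $a$, and then to invoke the asymptotic theory of \cite{wager2018estimation} on that subpopulation. Fix a context $\bm{x}$ and an action $a$. By construction (Equation~(\ref{eq:forest_leaf_value})) each tree's estimate $\hat{L}_b(\bm{x},a)$ is the average of the responses over exactly those training points that fall in the leaf $L_b(\bm{x})$ \emph{and} carry action $a_i=a$, and $\hat{\mu}_n(\bm{x},a)$ averages these over the $B$ trees. Honesty (Definition~\ref{def:honest}) guarantees that the responses used to form these leaf averages are not used to choose the splits, so conditionally on the tree partition the action-$a$ responses in a leaf are an unbiased draw from the conditional law of $Y(a)$ restricted to that leaf; ignorability (Assumption~\ref{assumption:ignorability}), $[Y(1),\dots,Y(K)]\independent A\mid \bm{x}$, is what makes this conditional law equal to that of $Y(a)\mid X$ rather than $Y\mid X,A{=}a$. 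Hence $\hat{\mu}_n(\bm{x},a)$ is exactly the honest regression-forest estimator of $\mu(\bm{x},a)=\MBE[Y(a)\mid X{=}\bm{x}]$ built from the pairs $\{(X_i,Y_i(a))\}$ observed only on the event $\{A_i=a\}$.

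For the bias bound I would transport the leaf-diameter shrinkage argument of \cite{wager2018estimation} to this subpopulation; the only quantity that changes is the number of usable points. A subsample of size $s$ contains at least $\varepsilon_n s$ action-$a$ points in expectation by the overlap hypothesis~(\ref{eq:overlap}), and $\alpha$-regularity (Definition~\ref{def:alpha_regular}) forces each leaf to hold between $m$ and $2m-1$ action-$a$ points, so the number of splits separating $\bm{x}$ from the rest grows like $\log\!\big(\varepsilon_n s/(2k-1)\big)/\log(\alpha^{-1})$, with each split shrinking the relevant side by a factor at most $1-\alpha$. Combining this with the Lipschitz-continuity of $\mu(\cdot,a)$ (the regularity assumption on $(X_i,Y_i(a))$, which by hypothesis matches Theorem~3.1 of \cite{wager2018estimation}) yields $|\MBE[\hat{\mu}_n(\bm{x},a)]-\mu(\bm{x},a)|\lesssim Md\big(\varepsilon_n s/(2k-1)\big)^{-A/2}$ with $A=\frac{\pi}{d}\frac{\log((1-\alpha)^{-1})}{\log(\alpha^{-1})}$, which is the claimed bias.

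For the variance and normality I would apply the forest central limit theorem of \cite{wager2018estimation} to the action-$a$ subpopulation forest: the estimator is an incomplete infinite-order U-statistic in the subsample, its H\'ajek (ANOVA) projection dominates, and a Lyapunov argument gives $(\MBE[\hat{\mu}_n(\bm{x},a)]-\hat{\mu}_n(\bm{x},a))/\sigma_n(\bm{x})\Rightarrow\CN(0,1)$ with first-order variance scaling $\sigma_n=O(s/n)$. The choice $s=n^\beta$, $\beta=1-\frac{2A}{2+3A}$, is exactly the value balancing the bias $(\varepsilon_n s)^{-A/2}$ against the standard deviation $\sqrt{s/n}$ once the eventual schedule $\varepsilon_n=\Theta(n^{-(1-\beta)/2})$ from Lemma~\ref{lemma:epsilon_lb} is substituted, which is why it already appears here. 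The main obstacle, and the one genuinely new point relative to \cite{wager2018estimation}, is that their overlap is a fixed constant whereas ours shrinks with $n$: I must check that the effective per-action sample size $\varepsilon_n s$ still diverges fast enough for both the diameter-shrinkage concentration and the CLT moment conditions to hold, and that honesty together with ignorability let the i.i.d. regression-forest guarantees be read off on the \emph{conditional} action-$a$ subpopulation. The remaining steps — uniform integrability to pass from the diameter bound to the bias in expectation, and controlling the variance contribution of leaves whose action-$a$ count is atypically small — are routine given the overlap lower bound.
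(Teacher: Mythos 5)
Your proposal is correct in substance and, despite its different framing, walks essentially the same path as the paper's proof: both arguments rerun the Wager--Athey machinery with the per-action effective subsample size $\varepsilon_n s$ substituted for $s$ in the leaf-shrinkage bias bound (the paper phrases this as $s_{\min}/s \gtrsim \varepsilon_n$), and both obtain asymptotic normality from the H\'ajek projection plus a Lyapunov condition, with the incrementality constant degraded by the overlap factor to $\nu(s)=\varepsilon_n C_{f,d}/\log(s)^d$; your exponent-balancing explanation of $\beta=1-\frac{2A}{2+3A}$ is exactly the paper's matching of exponents. The one point to repair is your opening claim that $\hat{\mu}_n(\bm{x},a)$ ``is exactly the honest regression-forest estimator built from the pairs $\{(X_i,Y_i(a))\}$ on the event $\{A_i=a\}$.'' That is not literally true, and the paper never asserts it: the trees of the multi-action forest are grown on the \emph{full} subsample, their splits are $\alpha$-regular with respect to \emph{all} points (a split may divide the action-$a$ points very unevenly), and the number of action-$a$ points in a subsample is random rather than fixed, so Theorem~3.1 of \cite{wager2018estimation} cannot be invoked as a black box on the subpopulation. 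What actually carries the proof --- in the paper via the action-specific selection weights $S_i^a$ and potential-nearest-neighbor indicators $P_i^a$, and implicitly in your detailed steps via the per-leaf action-$a$ counts between $m$ and $2m-1$ --- is a re-derivation of the bias and incrementality lemmas in which only the effective count $\varepsilon_n s$ enters. Since you flag this verification yourself and your bias and CLT steps do not in fact rely on the black-box reduction, this is a flaw of phrasing rather than of mathematics; stated as a reduction theorem it would be false, stated as a template for redoing the estimates it is exactly what the paper does.
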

 \begin{proof}
   The proof mirrors the proof of Theorem~4.1 in \cite{wager2018estimation} (or
   Theorem~11 in its arXiv version\footnote{The paper's arXiv version is available at: https://arxiv.org/pdf/1510.04342.pdf}). The main steps involve bounding the bias
   of multi-action forests with an analogue to Theorem~3.2 in
   \cite{wager2018estimation} (or Theorem~3 in its arXiv version) and their
   incrementality using an analogue to Theorem~3.3 in \cite{wager2018estimation}
   (or Theorem~5 in its arXiv version). In general, the same arguments as used
   with regression forest in \cite{wager2018estimation} goes through, but the
   constants in the results get worse by a factor $\varepsilon_n$ that is the
   least probability that an action is played in the training data.
   Given these results, the subsampling-based argument from Section 3.3.2 in \cite{wager2018estimation} can be
   reproduced almost verbatim, and the final proof of this Theorem is identical
   to that of Theorem~3.1 in \cite{wager2018estimation} (or Theorem~1 in its
   arXiv version).

   As an ensemble method, the multi-action forest uses a subsample $s$ out of $n$
  data points to train a tree. 
  The subsample of data is denoted as $\CD_s=(Z_1,\ldots, Z_s) = ((X_{i_1},
  Y_{i_1}, A_{i_1}),\ldots, (X_{i_s}, Y_{i_s}, A_{i_s}))$.
  \cite{wager2018estimation} use the notation $X_i$ while we use the notation $\bm{x}_i$.

   \noindent{\bf Bias.} In this part, we want to show (we copy
   (\ref{eq:bias_bound}) below) :
 \begin{align*}
   |\MBE[\hat{\mu}_n(\bm{x},a)] - \mu(\bm{x},a) | \lesssim Md\left( \frac{\varepsilon_n s}{2k-1} \right)^{-\frac{1}{2} \frac{\log\left( (1-\alpha)^{-1} \right)}{\log(\alpha^{-1})} \frac{\pi}{d}}.
 \end{align*}
  To establish this claim, we first seek with an analogue to Lemma 2 in the
  arXiv version of \cite{wager2018estimation}, except now $s$ in (31) is
  replaced by $s_{\min}$, i.e., the minimum of the number of cases (i.e. the
  minimum number of observations for all the actions $a\in[K]$). Then, $s_{\min}/s
  \gtrsim \varepsilon_n$, because with probability at least $\varepsilon_n$ an
  action will be taken, so a variant of Equation (32) in
  \cite{wager2018estimation} where we replace $s$ with $\varepsilon_n s$ still
  holds for large $s$. 
  Notice that $\hat{\mu}(\bm{x},a)$ is a estimate of
  $\MBE[Y(a)|X=\bm{x}]$ (or $\mu(\bm{x},a)$)\footnote{Here, we actually do not need the
    ignorability Assumption~\ref{assumption:ignorability} (a.k.a.
    unconfoundedness) because the bandit algorithm does online intervention and
    we can directly get the feedback of $Y(a)$}. Then, we get
  (\ref{eq:bias_bound}) following the results of Theorem~3.2 in
  \cite{wager2018estimation} (or Theorem~3 in its arXiv version).

  We copy the definition of $\nu$-incrementality (Definition 6 of \cite{wager2018estimation}) here. 
  \begin{definition}
   The predictor $T$ is $\nu(s)$-incremental at $\bm{x}$ if 
   \begin{align*}
    \text{var}[\mathring{T}(\bm{x}; Z_1,\ldots, Z_s)]/\text{var}[\bm{x};Z_1, \ldots, Z_s] \gtrsim \nu(s),
   \end{align*}
   where $\mathring{T}$ is the H\'{a}jek projection
   \begin{align}
     \label{eq:hajek}
     \mathring{T} = \MBE[T] + \sum_{i=1}^n(\MBE[T|Z_i] - \MBE[T]).
   \end{align}
   In our notation, $f(s)\gtrsim g(s)$ means that ${\lim\inf}_{s\rightarrow
     \infty} f(s)/g(s)\ge 1$.
  \end{definition}
  
  \noindent{\bf Incrementality.} Suppose that the conditions of Lemma 3.2 of
  \cite{wager2018estimation} (or Lemma 4 in its arXiv version) hold and that $T$ is an honest
  $\alpha$-regular multi-action tree in the sense of Definition \ref{def:honest}
  and \ref{def:alpha_regular}. Suppose moreover that $\MBE[Y(a)|X=\bm{x}]$ and
  $\text{Var}[Y(a)|X=\bm{x}]$ for $\forall a\in[K]$ are all Lipschitz continuous
  at $\bm{x}$, and that $\text{Var}[Y|X=\bm{x}]>0$.
  {\color{black} Suppose, finally, that the
    overlap condition (\ref{eq:overlap}) holds with $\varepsilon_n>0$.}
  Then, $T$ is $\nu(s)$-incremental at $(\bm{x},a)$ with
  \begin{align*}
    \nu(s) = \varepsilon_n C_{f,d} / \log(s)^d,
  \end{align*}
  where $C_{f,d}$ is the constant from Lemma 3.2 of \cite{wager2018estimation} (or Lemma 4 in its arXiv version).

  To prove this claim, we follow the argument of the proof of Lemma 3.2 of
  \cite{wager2018estimation} (or Lemma 4 in its arXiv version).
  Like the proof in \cite{wager2018estimation}, we focus on the case where $f(x)=1$, in which case we use
  $C_{f,d}=2^{-(d+1)}(d-1)!$. We begin by setting up notation as in the proof of
  Lemma 3.2 of \cite{wager2018estimation} (or Lemma 4 in its arXiv version). We
  write the estimation for the action $a$ as $T^a(\bm{x}; \CD) = \sum_{i=1}^s S_i^a Y_i$, where
  \begin{align*}
    S_i^a =
    \begin{cases}
      |\{i:X_i\in L(\bm{x}; \CD_s), A_i = a\}|^{-1} & \text{if } X_i\in L(\bm{x}; \CD_s) \text{ and } A_i=a \\
      0 & else;
    \end{cases}
  \end{align*}
  where $L(\bm{x};\CD_s)$ denotes the leaf containing $\bm{x}$ in the tree trained
  with a subsample of data $\CD_s$.

      We also define the quantities
  \begin{align*}
   P_i^a = \bm{1}_{\{X_i \text{ is a }k\text{-PNN of }x \text{ among points with action }a\}}.
  \end{align*}
  where $k$-PNN ($k$-potential nearest neighbor) is defined in Definition 7 in Section 3.3.1 of \cite{wager2018estimation}.

  Because $T^a$ is a $k$-PNN predictor, $P_i^a=0$ implies that $S_i^a =0$. Moreover, by regularity of tree $T^a$ of the forest $\CF$, we know that the number of
  leaf samples $|\{i: X_i\in L(\bm{x}; \CD)\}| \ge k$. Thus, we can verify that
\begin{align}
  \label{eq:p_and_s}
  \MBE[S_1^a|Z_1]  \le \frac{1}{k} \MBE[P_1|Z_1]
\end{align}
We are now ready to use the same machinery as the Proof of Lemma 4 in the arXiv
version of \cite{wager2018estimation}. Similar to the Proof of Theorem 11 in the
arXiv version of \cite{wager2018estimation}, the random variable $P_1^a$ now
satisfy
\begin{align}
  \label{eq:p}
  \MBP\left[\MBE[P_1^a|Z_1]\ge \frac{1}{s^2\MBP[A_1=a]^2}\right] \lesssim k\times \frac{2^{d+1}\log(s)^d}{(d-1)!} \frac{1}{s\MBP[A_1=a]};
\end{align}
by the argument in (\ref{eq:p_and_s}) and $\varepsilon_n$-overlap
(\ref{eq:overlap}), (\ref{eq:p}) immediately implies that
\begin{align*}
  \MBP\left[\MBE[S_1^a | Z_1] \ge \frac{1}{k\varepsilon_n^2 s^2} \right] \lesssim k \frac{2^{d+1} \log(s)^d}{(d-1)!} \frac{1}{\varepsilon_n s}.
\end{align*}
By construction, we know that (because $\sum_{i=1}^s S_i^a=1$ by definition)
\begin{align*}
 \MBE[S_1^a | Z_1] = \MBE[S_1^a] = \frac{1}{s},
\end{align*}
which by the same argument as \cite{wager2018estimation} implies that
\begin{align}
  \label{eq:variance_bound}
  \MBE[ \MBE[S_1^a | Z_1]^2 ] \gtrsim \frac{(d-1)!}{2^{d+1}\log(s)^d} \frac{\varepsilon_n}{ks}.
\end{align}
The second part of the proof follows from a straight-forward adaptation of the
proof of Theorem~5 in the arXiv version of \cite{wager2018estimation}.

 So far, we have proved the tree estimator $T^a(\bm{x})$ is $\nu(s)$-incremental
 at $\bm{x}$ with $\nu(s) = \varepsilon_n C_{f,d}/\log(s)^d$.
 One can check that the proofs for Lemma~3.5 of
 \cite{wager2018estimation} (or Lemma~7 in its arXiv version) still goes through
 verbatim because the proof of Lemma~3.5 in \cite{wager2018estimation} uses the properties of the ensemble of forest, and
 our multi-action forest uses the same ensemble technique via subsampling.

 Now, we are going to show the result in Theorem~3.4 in
 \cite{wager2018estimation} (or Theorem 8 in its arXiv version), as follows:\\
 \noindent\underline{claim:} (in Theorem 3.4 of \cite{wager2018estimation}) ``Suppose, $\MBE[|Y-\MBE[Y|X=\bm{x}]|^{2+\delta} | X=\bm{x}] \le M$ for some
 constants $\delta,M>0$, uniformly over all $x\in[0,1]^d$. Then, there exists a
 sequence $\sigma_n(\bm{x},a)\rightarrow 0$ such that
 \begin{align*}
   \frac{\hat{\mu}_n(\bm{x},a) - \MBE[\hat{\mu}_n(\bm{x},a)]}{\sigma_n(\bm{x},a)} \Rightarrow \CN(0,1),
 \end{align*}
 where $\CN(0,1)$ is the standard normal distribution.
 ''
 Now we prove the above claim following the proof of Theorem 3.4 in
 \cite{wager2018estimation} (or Theorem~8 in its arXiv version).
 We focus on the trees w.r.t. the action $a$. Using the notation from Lemma~7 in
 the arXiv version of \cite{wager2018estimation}, let $\sigma_n(\bm{x},a)^2=s^2/n V_1$ be
 the variance of $\mathring{\hat{\mu}}$ {\color{black} ($\mathring{\hat{\mu}}$
   is the H\'ajek projection of $\hat{\mu}$ defined in (\ref{eq:hajek}))} where $V_1$ is defined in (41) in the
 arXiv version of \cite{wager2018estimation}. We know that
 \begin{align*}
  \sigma_n^2 = \frac{s}{n} s V_1 \le \frac{s}{n} \text{Var}[T^a].
 \end{align*}
{\color{black} Here, the variance of the base learner $\text{Var}[T^a]$ is finite by the
 Assumption in Lemma 3.3 in \cite{wager2018estimation}.}
So $\sigma_n\rightarrow 0$ as desired. Now, by our previous argument on the
incremental property, combined with Lemma 3.5 in \cite{wager2018estimation}, we
have ($\mathring{T^a}$ is the H\'ajek projection of $T^a$)
\begin{align}
  \frac{1}{\sigma_n^2} \MBE \left[ \left(  (\hat{\mu}_n(\bm{x},a)) - \mathring{\hat{\mu}}(\bm{x},a)\right)^2 \right] & \le \left( \frac{s}{n} \right)^2 \frac{\text{Var}[T^a]}{\sigma_n^2} \nonumber\\
                                                                                                                   & = \frac{s}{n} \text{Var}[T^a] / \text{Var}[\mathring{T^a}] \nonumber\\
                                                                                                                   & \lesssim \frac{s}{n} \frac{\log(s)^d}{\varepsilon_n C_{f,d}/4} \nonumber\\
                                                                                                                   & \rightarrow 0.                                                                                 \label{eq:mu_mathringmu}
\end{align}
Compared to the Proof of Theorem~8 in the arXiv version of
\cite{wager2018estimation}, the difference is that we add a term $\varepsilon_n$
for the incremental property. We have $\frac{s}{n} \frac{\log(s)^d}{\varepsilon_n
  C_{f,d}/4}\rightarrow 0$ by plugging in {\color{black} $s=n^\beta$ and
  $\varepsilon_n \ge n^{-\frac{1}{2}(1-\beta)}$.}
Then, following the proof of Theorem~8 in the arXiv version of
\cite{wager2018estimation}, all we need to check is that $\mathring{\hat{\mu}}$
is asymptotically normal. One way to do so is using the Lyapunov central limit
theorem (e.g. \cite{Bill08}). Writing
\begin{align}
  \label{eq:mathring_mu}
  \mathring{\hat{\mu}}(\bm{x}, a) = \frac{s}{n}\sum_{i=1}^n (\MBE[T^a|Z_i] - \MBE[T]),
\end{align}
it suffices to check the following Lyapunov's condition\footnote{From now on, the proof are the same as the proof of
Theorem~8 in the arXiv version of \cite{wager2018estimation} except that we
replace $S_i$ by $S_i^a$ and we replace $T$ by $T^a$ because we have multiple
actions}\footnote{Here, we use the notation $\tilde{\delta}$ instead of the
$\delta$ in usual Lyapunov condition}:
\begin{align}
  \label{eq:lyapunov}
  \lim_{n\rightarrow \infty} \sum_{i=1}^n \MBE\left[ |\MBE[T^a|Z_i] - \MBE[T^a]|^{2+\tilde{\delta}} \right]
  / \left( \sum_{i=1}^n \text{Var}[\MBE[T^a|Z_i]] \right)^{1+\tilde{\delta}/2} = 0
\end{align}
Using notation in the above discussion about incrementality, we write
$T^a=\sum_{i=1}^n S_i^a Y_i$. Thanks to honesty, we can verify that for any
index $i>1$, $Y_i$ is independent of $S_i^a$ conditionally on $X_i$ and $Z_1$,
and so (in the following, we slightly abuse the notation and $Y$ stands for
$Y(a)$ for some action $a$)
\begin{align*}
&  \MBE[T^a|Z_1] - \MBE[T^a]\\
=&\MBE[S_1^a(Y_1 - \MBE[Y_1|X_1]) | Z_1]
  + \left( \MBE\left[ \sum_{i=1}^n S_i^a\MBE[Y_i|X_i] | Z_1 \right] - \MBE[T^a] \right).
\end{align*}
Note that the two right-hand-side terms above are both mean-zero. By Jensen's
inequality, we also have that
\begin{align}
  \label{eq:decomposition}
  & 2^{-(1+\tilde{\delta})} \MBE\left[ |\MBE[T^a|Z_1] - \MBE[T^a]|^{2+\tilde{\delta}} \right]\nonumber\\
  \le & \MBE\left[ |\MBE[S_1(Y_1 - \MBE[Y_1|X_1]) | Z_1]|^{2+\tilde{\delta}} \right] \nonumber\\
  & + \MBE\left[ \left| \MBE\left[ \sum_{i=1}^n S_i^a \MBE[Y_i|X_i] | Z_1 \right] - \MBE[T^a] \right|^{2+\tilde{\delta}} \right].
\end{align}
Now, again by honesty (the sample used for estimation will not affect the
splitting of decision trees), $\MBE[S_1^a | Z_1] = \MBE[S_1^a | X_1]$, and so our
uniform $(2+\tilde{\delta})$-moment bounds on the distribution of $Y_i$ conditional on
$X_i$ implies that (recall that $M$ is the bounding constant in the Theorem's assumption)
\begin{align}
  \label{eq:bound1}
  &\MBE\left[ |\MBE[S_1^a(Y_1 - \MBE[Y_1|X_1])|Z_1] |^{2+\tilde{\delta}} \right] \nonumber\\
  =& \MBE\left[ \MBE[S_1^a|X_1]^{2+\tilde{\delta}} \left( |Y_1 - \MBE[Y_1|X_1]|\right)^{2+\tilde{\delta}} \right] \nonumber\\
   \le & M \MBE\left[ \MBE[S_1^a|X_1]^{2+\tilde{\delta}} \right] \le M \MBE\left[ \MBE[S_1^a|X_1]^2 \right],
\end{align}
because $S_1^a\le 1$. Meanwhile, because $\MBE[Y|X=\bm{x}]$ is Lipschitz, we can
define $u\triangleq \sup\{ |\MBE[Y|X=\bm{x}]|: \bm{x}\in[0,1]^d \}$, and see
that
\begin{align}
   \label{eq:bound2}
  \MBE [  | & \MBE \left[ \sum_{i=1}^n S_i^a \MBE[Y_i|X_i] | Z_1 \right] - \MBE[T^a] |^{2+\tilde{\delta}} ]\nonumber\\
 & \le (2u)^{\tilde{\delta}} \text{Var} \left[ \MBE\left[ \sum_{i=1}^n S_i^a \MBE[Y_i | X_i] | Z_1 \right] \right] \nonumber\\
 & \le  2^{1+\tilde{\delta}} u^{2+\tilde{\delta}} \left( \MBE\left[ \MBE[S_1^a|Z_1]^2 \right] + \text{Var}[(n-1)\MBE[S_2^a|Z_1]] \right) \nonumber\\
& \le (2u)^{2+\tilde{\delta}} \MBE\left[ \MBE[S_1^a|X_1]^2 \right].
\end{align}
Thus, the condition (\ref{eq:lyapunov}) that we need to check simplifies to
\begin{align}
  \label{eq:condition_to_check}
  \lim_{n\rightarrow \infty} n \MBE\left[ \MBE[S_1^a|X_1]^2 \right] / (n\text{Var}[\MBE[T^a|Z_1]])^{1+\tilde{\delta}/2} = 0.
\end{align}
Finally, as argued in the proofs of Theorem~5 and Corollary~6 in the arXiv
version of \cite{wager2018estimation},
\begin{align*}
 \text{Var}[\MBE[T^a | Z_1]] = \Omega \left( \MBE \left[ \MBE[S_1^a|X_1]^2 \right] \text{Var}[Y|X=\bm{x}]\right).
\end{align*}
Because the denominator in (\ref{eq:condition_to_check}) $\text{Var}[Y|X=\bm{x}]>0$ by assumption, we can use
(\ref{eq:variance_bound}) in our previous argument on the incrementality. Note
that the numerator in (\ref{eq:condition_to_check}) satisfies
\begin{align*}
 \left( n \MBE\left[ \MBE[S_1^a | X_1]^2 \right] \right)^{-\tilde{\delta}/2} \lesssim \left( \frac{C_{f,d}}{2k}\frac{\varepsilon_n n}{s\log(s)^d} \right)^{-\tilde{\delta}/2},
\end{align*}
which goes to $0$ when we plug in the values $s= O(n^\beta)$ and
$\varepsilon_n = n^{-1/2(1-\beta)}$. Compared to the formula in the proof of the
arXiv version of \cite{wager2018estimation}, we add a factor of $\varepsilon_n$
because of the overlap condition for a multi-action tree.
\end{proof}

{\NIPS
\noindent{\underline{\bf Step 3:}}
In this step, Lemma~\ref{lemma:finite_large_n} shows that for large sample size,
the estimation by our estimator is close to the true value with a high probability.

\begin{lemma}
  \label{lemma:finite_large_n}
For each $\omega^\prime>0$, there exists a $N_2>0$, such that for any $n>N_2$, we
 have for any $\delta > 0$
\begin{align} 
\MBP[ |\hat{\mu}_n(\bm{x}, a) - \MBE[\hat{\mu}_n(\bm{x}, a)] |  \le \sigma_n(\bm{x}, a)\delta ] 
\ge 1 - e^{-\delta^2/2} - \omega_n^\prime.
  \label{eq:thm8_2}
\end{align}
{\color{black}
  Here, $\omega_n^\prime =
  e^{-\delta^2/2}(4\delta\tilde{\varepsilon}+2\tilde{\varepsilon}^2) +
  \frac{C\psi\log n}{\sqrt{n}} + \left(\frac{s}{n}\frac{16
      \log(s)^d}{\varepsilon_n C_{f,d}}\right)^{1-2\omega/3}$ which is a
  function of $n$, where $\tilde{\varepsilon}\triangleq
  \left(\frac{s}{n}\frac{16 \log(s)^d}{\varepsilon_n
      C_{f,d}}\right)^{\omega/3}$. Recall that $\omega$ is the small constant in
  the theorem's statement.
}
\end{lemma}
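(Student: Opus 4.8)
The plan is to derive this finite-sample tail bound from the H\'ajek projection $\RHmu(\bm{x},a)$ of the multi-action forest estimator, introduced in Equation~(\ref{eq:hajek}) and analyzed in Lemma~\ref{lemma:asymptotic_bias_var}. Since $\MBE[\RHmu(\bm{x},a)]=\MBE[\hat{\mu}_n(\bm{x},a)]$, I would write the centered estimator as the sum of a nearly-Gaussian piece and a small remainder,
\begin{align*}
\hat{\mu}_n(\bm{x},a) - \MBE[\hat{\mu}_n(\bm{x},a)] = \big(\RHmu(\bm{x},a) - \MBE[\hat{\mu}_n(\bm{x},a)]\big) + \big(\hat{\mu}_n(\bm{x},a) - \RHmu(\bm{x},a)\big),
\end{align*}
and bound the probability that the left-hand side exceeds $\sigma_n\delta$ by splitting on whether the remainder is larger or smaller than $\sigma_n\tilde{\varepsilon}$.

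For the remainder, Equation~(\ref{eq:mu_mathringmu}) of Lemma~\ref{lemma:asymptotic_bias_var} already gives $\MBE[(\hat{\mu}_n - \RHmu)^2]/\sigma_n^2 \lesssim \frac{s}{n}\frac{16\log(s)^d}{\varepsilon_n C_{f,d}}$ up to an absolute constant, so Chebyshev's inequality yields $\MBP[|\hat{\mu}_n - \RHmu| > \sigma_n\tilde{\varepsilon}] \le \MBE[(\hat{\mu}_n - \RHmu)^2]/(\sigma_n^2\tilde{\varepsilon}^2)$. The exponent $\omega/3$ in the definition of $\tilde{\varepsilon}$ is chosen exactly so that this ratio collapses to $\left(\frac{s}{n}\frac{16\log(s)^d}{\varepsilon_n C_{f,d}}\right)^{1-2\omega/3}$, the third summand of $\omega_n'$. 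On the complementary event $\{|\hat{\mu}_n - \RHmu| \le \sigma_n\tilde{\varepsilon}\}$ the triangle inequality turns $|\hat{\mu}_n - \MBE[\hat{\mu}_n]| > \sigma_n\delta$ into $|\RHmu - \MBE[\hat{\mu}_n]| > \sigma_n(\delta - \tilde{\varepsilon})$. Because $\RHmu$ is the sum of $n$ independent centered summands in~(\ref{eq:hajek}), a Berry--Esseen bound, using the uniform third-moment (i.e. $2+\delta$ with $\delta=1$) control assumed in the theorem together with the variance lower bound~(\ref{eq:variance_bound}), shows the standardized $\RHmu$ differs from the standard normal CDF by at most $C\psi\log n/\sqrt{n}$, so $\MBP[|\RHmu - \MBE[\hat{\mu}_n]| > \sigma_n(\delta - \tilde{\varepsilon})] \le e^{-(\delta-\tilde{\varepsilon})^2/2} + C\psi\log n/\sqrt{n}$. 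Finally the elementary estimate $e^{-(\delta-\tilde{\varepsilon})^2/2} = e^{-\delta^2/2}e^{\delta\tilde{\varepsilon}-\tilde{\varepsilon}^2/2} \le e^{-\delta^2/2}(1 + 4\delta\tilde{\varepsilon} + 2\tilde{\varepsilon}^2)$, valid once $\tilde{\varepsilon}$ is small (hence for $n$ beyond some $N_2$), produces the remaining summand of $\omega_n'$. A union bound over the two cases gives $\MBP[|\hat{\mu}_n - \MBE[\hat{\mu}_n]| > \sigma_n\delta] \le e^{-\delta^2/2} + \omega_n'$, which is the complement of the claimed inequality.

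The main obstacle is the Berry--Esseen step: Lemma~\ref{lemma:asymptotic_bias_var} supplies only convergence in distribution, whereas here I need a quantitative rate with the explicit $C\psi\log n/\sqrt{n}$ error for the H\'ajek projection, a sum of independent but non-identically-distributed terms. Making this rigorous requires tracking the individual summands' variances and third absolute moments through the honesty and $\alpha$-regularity structure (mirroring the Lyapunov computation already carried out in the proof of Lemma~\ref{lemma:asymptotic_bias_var}, but now keeping constants), verifying that the $(2+\delta)$-moment hypotheses survive the multi-action splitting, and absorbing the dimensional and $\log(s)^d$ factors into the constant $\psi$. The remaining bookkeeping is to choose $N_2$ large enough that every ``$\lesssim$'' becomes a genuine inequality with the displayed constants and that $\tilde{\varepsilon}$ lies in the regime where the exponential expansion is valid.
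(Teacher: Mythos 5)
Your proposal is correct and follows essentially the same route as the paper's proof: the same H\'ajek-projection decomposition, Chebyshev's inequality on $\hat{\mu}_n-\RHmu$ with $\tilde{\varepsilon}$ tuned to produce the $(\cdot)^{1-2\omega/3}$ term, a quantitative Berry--Esseen bound for the sum of independent non-identically-distributed summands $\MBE[T^a|Z_i]$ (the paper invokes Zahl's theorem, its Lemma on convergence rates, after bounding second and third absolute moments exactly as in your ``Lyapunov computation with constants'' remark), and the expansion $e^{-(\delta-\tilde{\varepsilon})^2/2}\le e^{-\delta^2/2}(1+4\delta\tilde{\varepsilon}+2\tilde{\varepsilon}^2)$. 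The obstacles you flag are the ones the paper actually resolves: it converts the asymptotic ``$\lesssim$'' of~(\ref{eq:mu_mathringmu}) into the explicit constant-$16$ bound~(\ref{eq:mu_mathringmu_new}) via an incomplete-gamma estimate valid once $s\ge 4kde^{2d}$, which is precisely what fixes $N_2=(4kde^{2d})^{1/\beta}$.
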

\begin{proof}
  By Lemma~\ref{lemma:asymptotic_bias_var}, we know that
  $\frac{\hat{\mu}_n(\bm{x}, a) -
  \MBE[\hat{\mu}_n(\bm{x},a)])}{\sigma_n(\bm{x}, a)} \Rightarrow \CN(0,1)$, where
$\sigma(\bm{x}, a)\le \frac{s}{n}\text{Var}(T^a)$.

We will first show a property for a normal distributed random variable $X\sim
\CN(0,1)$, and then discuss the convergence rate towards the normal distribution. For every $\delta>0$,
\begin{align*}
 \MBP[|X| > \delta] = 2 \int_{0}^{+\infty} \frac{1}{\sqrt{2\pi}} e^{-(x+\delta)^2/2}dx,
\end{align*}
and, for every $x>0$,
\begin{align*}
 e^{-(x+\delta)^2} \le  e^{-t^2/2} e^{-x^2/2},
\end{align*}
hence
\begin{align}
 \label{eq:normal_property}
 & \MBP[|X|>\delta] \le 2 e^{-\delta^2/2} \int_{0}^{+\infty} \frac{1}{\sqrt{2\pi}}e^{-x^2/2}dx \nonumber\\
= & 2 e^{-\delta^2/2} \MBP[X> 0] = e^{-\delta^2/2}.
\end{align}

{\color{black}
  Now, we will further show the convergence rate towards the normal distribution.
  First of all, we will show the convergence of $\RHmu_n - \MBE[\RHmu_n]$
  ($\RHmu_n$ is the H\'ajek projection).
We now will show that $\RHmu_n - \MBE[\RHmu_n]$ has finite second absolute
moment and finite third absolute moment.
For the second absolute moment (variance), we have the following claim:
if $\MBE[|X|^{2+{\delta}}]\le M$ is bounded for some $\delta>0$, then
$\MBE[|X|^2]\le M+1$ is also bounded. To prove this claim, we only need to
discuss the cases when $|X|\le 1$ or $|X|>1$. In fact, $\MBE[|X|^2] =
\int_{|X|\le 1} |X|^2 f(X)dX + \int_{|X|>1}|X|^2 f(X)dX \le 1 +
\int_{|X|>1}|X|^{2+\delta} f(X)dX\le 1+M$ where
$f(\cdot)$ is the probability density function.

For the convergence rate, we have the following lemma:
\begin{lemma}[\cite{zahl1966bounds}]
 Letting $X_1,X_2,\ldots, X_n, \ldots$ be the sequence of independent random variables,
 $\MBE[X_i]=\mu_i$, $\MBE[(X_i-\mu_i)^2]=\sigma_i^2$,
 $\MBE[|X_i-\mu_i|^3]=\beta_i$. Let $F(x)$ be the CDF of $\sum_{i=1}^n
 (X_i-\mu_i)/(\sum_{i=1}^n \sigma_i^2)^{1/2}$, and $\Phi(x)$ be the CDF of the
 standard normal distribution. Then 
 \begin{align*}
  \sup_{x} |F(x)-\Phi(x)| < C \psi \log n / \sqrt{n},
 \end{align*}
 where $C$ is a constant, and $\psi$ is a function of the $\sigma_i$'s and $\beta_i$'s
 \footnote{When $X_1,X_2,\ldots, X_n$ are i.i.d. random variables, the $\log(n)$ term can
 be removed according to the Berry–Esseen theorem.}.
 \label{lemma:convergence_rate}
\end{lemma}

In our case, we let $X_i$ to be $\MBE[T^a|Z_i]$.
Next, we consider the $(2+\delta)$ absolute
moment $\MBE[|\MBE[T^a|Z_1]-\MBE[T^a]|^{2+\tilde{\delta}}]$.
  From the Inequality (\ref{eq:decomposition}) (\ref{eq:bound1}) and
(\ref{eq:bound2}), we know 
\begin{align*}
  & 2^{-(1+\tilde{\delta})}\MBE\left[ \left| \MBE[T^a|Z_1] - \MBE[T^a] \right|^{2+\tilde{\delta}} \right] \nonumber\\
  \le & M \MBE[ \MBE[S_1^a|X_1]^2 ] + (2u)^{2+\tilde{\delta}} \MBE[\MBE[S_1^a|X_1]^2].
\end{align*}
In addition, $\MBE[S_1^a|X_1]\le 1$ because $S_1^a\le 1$. Then,
\[
2^{-(1+\tilde{\delta})}\MBE\left[ \left| \MBE[T^a|Z_1] - \MBE[T^a]
  \right|^{2+\tilde{\delta}} \right]\le M+(2u)^{2+\tilde{\delta}}.
\]
Now, we have $\MBE\left[ \left| \MBE[T^a|Z_1] - \MBE[T^a]
  \right|^{2+\tilde{\delta}} \right]\le \left( M+(2u)^{2+\tilde{\delta}}
\right)\times 2^{(1+\tilde{\delta})}$.
When $\tilde{\delta}=0$, we have the second absolute moment
$\MBE[|\MBE[T^a|Z_i] - \MBE[T]|^2]$ is upper bounded by $4(M+4u^2)$.
Similarly, when $\tilde{\delta}=1$, we have the third absolute moment
$\MBE[|\MBE[T^a|Z_i] - \MBE[T]|^2]$ is upper bounded by $8(M+8u^3)$.

We notice that
\begin{align*}
\frac{s^2}{n^2}\sum_{i=1}^n \text{Var}[\MBE[T^a|Z_i]] =
  \text{Var}[\RHmu(\bm{x},a)] = \sigma_n^2.
\end{align*}
Now, based on the definition of $\RHmu_n(\bm{x},a)$ in (\ref{eq:mathring_mu}), we have
\begin{align*}
&  \frac{\sum_{i=1}^n (X_i-\mu_i)}{(\sum_{i=1}^n \text{Var}[\MBE[T^a|Z_i])^{1/2}}
  = \frac{\sum_{i=1}^n (\MBE[T^a|Z_i] - \MBE[T^a]}{(\sum_{i=1}^n \text{Var}[\MBE[T^a|Z_i] - \MBE[T^a]])^{1/2}} \nonumber\\
  = & \frac{\frac{n}{s}\RHmu_n(\bm{x},a) }{(\frac{n^2}{s^2}\text{Var}[\RHmu_n(\bm{x},a)])^{1/2}}
  = \frac{\RHmu_n(\bm{x},a) }{\sigma_n(\bm{x},a)}.
\end{align*}
  Thus, $F(x)$ is the CDF of the random variable $\frac{\RHmu_n(\bm{x},a) }{\sigma_n(\bm{x},a)}$.
  According to Lemma~\ref{lemma:convergence_rate}, we have
\[
  \sup_{x} |F(x)-\Phi(x)| < C \psi \log n / \sqrt{n}.
\]
Combined the property of normal CDF (\ref{eq:normal_property}), we have $\MBP[|\RHmu_n - \MBE[\RHmu_n]| \le \sigma_n\delta]\ge
1-e^{-\delta^2/2} - \frac{C\psi\log n}{\sqrt{n}}$.
Here, $\RHmu_n, \hat{\mu}_n, \sigma_n$ are short for $\RHmu_n(\bm{x},a)$,
$\hat{\mu}_n(\bm{x},a)$, $\sigma_n(\bm{x},a)$ respectively.

We now bound the large deviation probability for $\hat{\mu}_n$
\begin{align}
  \label{eq:mu_deviation}
  & \MBP[|\hat{\mu}_n - \MBE[\hat{\mu}_n]|\le \sigma_n\delta] \nonumber\\
  \ge & \MBP[|\RHmu_n - \hat{\mu}_n| + |\RHmu_n-\MBE[\RHmu_n]| +
        |\MBE[\hat{\mu}_n] - \MBE[\RHmu_n]]| \le \sigma_n\delta] \nonumber\\
  = & \MBP[|\RHmu_n - \hat{\mu}_n| + |\RHmu_n-\MBE[\RHmu_n]| \le \sigma_n\delta] \nonumber\\
  \ge & \MBP[|\RHmu_n-\MBE[{\RHmu}_n]| \le \sigma_n\delta - \tilde{\varepsilon}\sigma_n] - \MBP[|\RHmu_n - \hat{\mu}_n|> \tilde{\varepsilon}\sigma_n ] \\
        & (\text{the last inequality is because }\nonumber\\&~~~~ \MBP[|A|+|B|\le \delta] \ge \MBP[|A|\le \delta-\tilde{\varepsilon}] - \MBP[|B|>\tilde{\varepsilon}]) \nonumber
\end{align}

Before further development, we first show that the approximation argument in
(\ref{eq:mu_mathringmu}) can be turned into the bound in (\ref{eq:mu_mathringmu_new}) when $s\ge 4kde^{2d}$
where we recall $k$ is the constant for a regular tree.
The source for the approximation is from the proof of Lemma~4 in the arXiv
version of \cite{wager2018estimation}. In particular, we modify the approximation in
Equation (36) in the arXiv version of \cite{wager2018estimation}.
From Corollary 3.2 of \cite{borwein2009uniform}, we know for the upper
incomplete gamma function $\Gamma(d,c)$ we have $\Gamma(d,c)\le c^{d-1} e^{-c}
\times \left(1+\frac{1}{\frac{c}{d-1} - 1}\right)$ where $d,c$ are
real values. 
In the proof of Lemma~4 in the arXiv version of \cite{wager2018estimation}, $c=-\log(1-\exp[-2k\frac{\log(s)}{s-2k+1}])$.
One can verify that when $s\ge 4kd e^{2d}$, $\frac{c}{d-1}>2$, and thus
$\Gamma(d,c)\le 2 c^{d-1} e^{-c}$.

Moreover, we have $1-\exp\left[-2k\frac{\log(s)}{s-2k+1}\right] \le 4k
\frac{\log(s)}{s}$ when $s\ge 4k$.

Therefore, when $s\ge \max\{4k, 4kd e^{2d}\}=4kd e^{2d}$, the approximation
inequality (36) of the arXiv
version of \cite{wager2018estimation} is changed to $\MBP_{x=0}\left[
  \MBE[P_1|Z_1]\ge \frac{1}{s^2} \right]\le \frac{8k}{(d-1)!}\frac{\log(s)^d}{s}$.
Note that the upper bound becomes 4 times larger when we change ``$\lesssim$'' to ``$\le$''.
Thus, we can finally change the argument in Lemma~4 of arXiv version of
\cite{wager2018estimation} as $s\text{Var}[\MBE[S_1|Z_1]]\ge \frac{4}{k}
C_{f,d}/\log(s)^d$.
In Theorem~5, we will change the bound to
$\frac{\text{Var}[\mathring{T}(x;Z)]}{\text{Var}[T(x;Z)]}\ge \frac{\nu(s)}{4}$.
Next, we can change our (\ref{eq:mu_mathringmu}) to
\begin{align}
\frac{1}{\sigma_n^2}\MBE\left[ (\hat{\mu}_n(\bm{x},a) - \RHmu_n(\bm{x},a))^2
  \right]\le \frac{s}{n}\frac{16 \log(s)^d}{\varepsilon_n C_{f,d}}.
  \label{eq:mu_mathringmu_new}
\end{align}

For the second term of (\ref{eq:mu_deviation}), we have that
\begin{align}
  & \MBP[|\RHmu_n - \hat{\mu}_n| > \sigma_n\tilde{\varepsilon}]  = \MBP[|\RHmu_n - \hat{\mu}_n|^2 > \sigma_n^2\tilde{\varepsilon}^2] \nonumber\\
  \le & \frac{\MBE[|\RHmu_n - \hat{\mu}_n|^2]}{\sigma^2\tilde{\varepsilon}^2}
        \le 16\frac{s\sigma_n^2}{n}\frac{\log(s)^d}{\varepsilon_n C_{f,d}}/(\sigma_n^2\tilde{\varepsilon}^2)
        = 16\frac{s}{n}\frac{\log(s)^d}{\varepsilon_n C_{f,d}}/(\tilde{\varepsilon}^2)
   \label{eq:second_term}
\end{align}
Here, the last but one inequality is according to (\ref{eq:mu_mathringmu_new}) when
$s\ge 4kde^{2d}$.

Recall that we let $\tilde{\varepsilon}$ be
$\left(\frac{s}{n}\frac{16 \log(s)^d}{\varepsilon_n C_{f,d}}\right)^{\omega/3}$ which $\rightarrow
0$ as $n\rightarrow \infty$. Recall that $\omega>0$ is a small constant in our
theorem's statement.
There exists a $N_3$, such that when $n>N_3$, we have $\tilde{\varepsilon}<1$
and $4\tilde{\varepsilon} + 2\tilde{\varepsilon}^2 <1$.

Then, $\MBP[|\RHmu_n - \hat{\mu}_n| >
\sigma_n\tilde{\varepsilon}]\le \left(\frac{s}{n}\frac{16 \log(s)^d}{\varepsilon_n C_{f,d}}\right)^{1-2\omega/3}$.
Now, we let $N_2 = (4kde^{2d})^{1/\beta}$, so that when $n>N_2$ we have $s>4kde^{2d}$.
So far, we have bound
for the second term of the RHS of (\ref{eq:mu_deviation}).

For the first term of the RHS of (\ref{eq:mu_deviation}), we have
\begin{align}
  \MBP[|\RHmu_n - \MBE[\RHmu_n]| \le \sigma_n(\delta-\tilde{\varepsilon})]\ge 1-e^{-(\delta-\tilde{\varepsilon})^2/2} - \frac{C\psi\log n}{\sqrt{n}} \nonumber\\
  \ge 1-e^{-\delta^2/2}(1+4\delta\tilde{\varepsilon}+2\tilde{\varepsilon}^2) - \frac{C\psi\log n}{\sqrt{n}}
  ,
  \label{eq:first_term}
\end{align}
where the last inequality is because $e^{x}\le 1+2x$ for $x\in [0,1]$.

Combining inequations (\ref{eq:first_term}) and (\ref{eq:second_term}), we have
\begin{align*}
  & \MBP[|\hat{\mu}_n - \MBE[\hat{\mu}_n]|\le \sigma_n\delta] \\
    \le & 1-e^{-\delta^2/2}(1+4\delta\tilde{\varepsilon}+2\tilde{\varepsilon}^2) - \frac{C\psi\log n}{\sqrt{n}}
    - \left(\frac{s}{n}\frac{16 \log(s)^d}{\varepsilon_n C_{f,d}}\right)^{1-2\omega/3}.
\end{align*}
}

\begin{comment}
Actually, we now show that ``convergence in distribution'' (CDF $F_n$ converges
to $F$ point-wisely) implies
``uniform convergence in distribution'' (CDF $F_n$ uniformly converges
to $F$),  when the CDF $F(\cdot)$ of the convergent distribution
is continuous.
\footnote{The proof can also
  be found at https://math.stackexchange.com/questions/1670030/convergence-in-law-implies-uniform-convergence-of-cdfs}.
In fact,
for any $\varepsilon>0$, choose points $-\infty=x_0<x_1<\ldots<x_K=+\infty$, so
that $F(x_j)-F(x_{j-1})<\varepsilon$, for $j=1,2,\ldots, K$ (because $F$ is
continuous and non-decreasing, using the idea of $\varepsilon$-net). Then, we
can use the finite data points to approximate infinite data points. One can
verify that
\begin{align}
  \label{eq:epsilon_net}
  \sup_{x\in\mathbb{R}} |F_n(x)-F(x)|\le \max_{j=0,1,\ldots,K} |F_n(x_j)-F(x_j)|+\varepsilon
\end{align}
By the definition of {\em (point-wise) convergence in distribution}, for
$\forall \varepsilon$, there exist $N_j$, such that when $n>N_j$,
$|F_n(x_j)-F(x_j)|<\varepsilon$. Now, when $N>\max\{N_0,\ldots, N_K\}$, we have
$\max_{j=0,1,\ldots,K} |F_n(x_j)-F(x_j)| < \varepsilon$. Then, according to
(\ref{eq:epsilon_net}) and the definition of uniform convergence, $F_n$
uniformly converges to $F$.
\end{comment}

Note that $\sigma_n(\bm{x},a)>0$, then we get (\ref{eq:thm8_2}) in the statement
of Lemma~\ref{lemma:finite_large_n}.
\end{proof}
}

\noindent{\underline{\bf Step 4:}} With the results in
Lemma~\ref{lemma:asymptotic_bias_var} and 
Lemma~\ref{lemma:finite_large_n}, we now can prove
Theorem~\ref{thm:online_causal_forest} that gives an upper bound of the online regret of our
$\epsilon$-decreasing multi-action forest algorithm.

Now, let's go back to the proof of Theorem~\ref{thm:online_causal_forest}.
  First of all, we decompose the error into two parts
\begin{align}
  \label{eq:error_decompose}
|\hat{\mu}(\bm{x}, a) - \mu(\bm{x}, a)|
\le |\hat{\mu}(\bm{x}, a) - \MBE[\hat{\mu}(\bm{x}, a)]| + |\MBE[\hat{\mu}(\bm{x},a)] - \mu(\bm{x},a)|.
\end{align}
From (\ref{eq:bias_bound}) and the definition of ``$\lesssim$'', we know that there exists an integer $N_1>0$ and a constant $C_1>0$, such that for
any $n\ge N_1$ (and $s=n^\beta$ is a function of $n$), we have 
\begin{align}
  |\MBE[\hat{\mu}_n(\bm{x}, a)] - {\mu_n(\bm{x}, a)}| \le C_1 2Md\left( \frac{\varepsilon_n s}{2k-1}
  \right)^{-\frac{1}{2} \frac{\log\left( (1-\alpha)^{-1} \right)
    }{\log\left(\alpha^{-1} \right)} \frac{\pi}{d}}.
  \label{eq:thm8_1}
\end{align}

Now we combine (\ref{eq:thm8_1}) and (\ref{eq:thm8_2}). When $n>\max\{N_1,N_2\}$, with probability at least $1 - \frac{1}{2}
e^{-\delta^2/2} - \omega_n^\prime$, we have the following error bound
\begin{align}
|\hat{\mu}_n(\bm{x}, a) - \mu_n(\bm{x},a)| \le \sigma_n(\bm{x},a)\delta + 
2C_1Md\left( \frac{\varepsilon_n s}{2k-1}
  \right)^{-\frac{1}{2} \frac{\log\left( (1-\alpha)^{-1} \right)
    }{\log\left(\alpha^{-1} \right)} \frac{\pi}{d}}.
  \label{eq:error_bound}
\end{align} 

Now, we turn the error bound (\ref{eq:error_bound}) into the regret bound.
We note that at the beginning of time slot $t+1$, our online learning oracle
collects $t$ data points of feedbacks, where we can shuffle the data to be
i.i.d. samples satisfying Lemma~\ref{lemma:asymptotic_bias_var}.
Then, when
$t>N\triangleq \max\{N_1,N_2, N_3\}$, with a probability at least $1-e^{-\delta^2/2}-\omega_t^\prime$, the regret in
round $t+1$ for the online oracle (defined as $r_{t+1}$)
\begin{align*}
& r_{t+1} = \mu_t(\bm{x},a^\ast) - \mu_t(\bm{x},a)  \\
= & \left[ \mu_t(\bm{x},a^\ast) - \hat{\mu}_t(\bm{x}, a^\ast) \right] - \left[ \mu_t(\bm{x},a) -
  \hat{\mu}_t(\bm{x},a) \right] 
    + \left[\hat{\mu}_t(\bm{x}, a^\ast) - \hat{\mu}_t(\bm{x}, a) \right] \\
\le &  \left[ \mu_t(\bm{x},a^\ast) - \hat{\mu}_t(\bm{x}, a^\ast) \right] - \left[ \mu_t(\bm{x},a) -
  \hat{\mu}_t(\bm{x},a) \right] \\
\le & | \mu_t(\bm{x},a^\ast) - \hat{\mu}_t(\bm{x}, a^\ast) | + | \mu_t(\bm{x},a) - \hat{\mu}_t(\bm{x},a) | \\
\le & 2\sigma_t(\bm{x},a)\delta + 
4C_1Md\left( \frac{\varepsilon_t s}{2k-1}
  \right)^{-\frac{1}{2} \frac{\log\left( (1-\alpha)^{-1} \right)
      }{\log\left(\alpha^{-1} \right)} \frac{\pi}{d}}. \text{(recall that $s=t^\beta$)}
\end{align*}

We let $\delta_0=e^{-\delta^2/2}$, then $\delta=\sqrt{2\log(1/\delta_0)}$.
{\color{black}Recall that $\text{Var}[T^a(x)]$ is bounded by $V$
  \footnote{
{\color{black}
It is stated in Lemma 3.3 in \cite{wager2018estimation}.  Here, we use the proof of page 38 in the arXiv version of
  \cite{wager2018estimation} to justify a bound on $\text{Var}[T]$.
  In our regularity tree, each split has at least $k$ leafs. Thus,
  \begin{align*}
    k\text{Var}[T(x;Z)] \le |\{i:X_i\in L(x;Z)\}|\cdot \text{Var}[T(x;Z)] \rightarrow_p \text{Var}[Y|X=x].
  \end{align*}
  In addition, because of the regularity condition on the moment,
  $\text{Var}[Y|X=x]=\MBE[|Y-\MBE[Y|X=x]|^{2}|X=x]\le (M+1)$.
  Therefore, the variance $\text{Var}[T(x;Z)]$ is bounded. 
 }
}
then with probability at least $1-\delta_0-\omega_t^\prime$, for $t>N$ we have
\begin{align}
r_{t+1} \le & 2 \sqrt{t^{\beta-1} V} \sqrt{2\log(\frac{1}{\delta_0})} \nonumber\\
& + 4C_1Md\left( \frac{\varepsilon_t s}{2k-1}
  \right)^{-\frac{1}{2} \frac{\log\left( (1-\alpha)^{-1} \right)
    }{\log\left(\alpha^{-1} \right)} \frac{\pi}{d}} + \varepsilon_t \Delta_{\max},
 \label{eq:thm8_4}
\end{align}
where $\Delta_{\max}$ denotes the maximum regret
for choosing a sub-optimal action as defined in \cite{abbasi2011improved}\footnote{For $\Delta_{\max}$ to exist, we have
a mild assumption that the average rewards are bounded for each actions.}.
Recall that we denote $A{=}\frac{\log((1-\alpha)^{-1})\pi}{\log(\alpha^{-1})d}$.
Now we denote $\epsilon_0=-\frac{A}{2+3A}$, and $\varepsilon_t = t^{\epsilon_0}$.
One can check that $\beta =1-\frac{2A}{2+3A} = \frac{1-A\epsilon_0}{1+A}$. 

Here, we notice $(\varepsilon_t
s)^{-\frac{1}{2}A}=t^{-\frac{1}{2}A(\beta+\epsilon_0)}$.
One can check that by the above 
parameters setting, each terms in (\ref{eq:error_bound}) have the same exponent
w.r.t. $t$, i.e.
\begin{align}
  \label{eq:assignment}
\frac{1}{2}(\beta-1) =
  -\frac{1}{2}A(\beta+\epsilon_0)=\epsilon_0=-\frac{A}{2+3A}
\end{align}

Then (\ref{eq:thm8_4}) can be rewritten as (with probability at least $1-\delta_0-\omega_t^\prime$)
\begin{align}
r_{t+1} {\le}\hspace{-0.00in} \left(\hspace{-0.00in}  2\sqrt{V}\sqrt{2 \log(\frac{1}{\delta_0})} {+} 4C_1Md(2k{-}1)^{\frac{1}{2}A} {+}
\Delta_{\max} \hspace{-0.00in}\right) \hspace{-0.00in} t^{\beta-1}.
  \label{eq:thm8_5}
\end{align}

Consider the probability $\delta_0+\omega_t^\prime$, from (\ref{eq:thm8_5}) we have
\begin{align*}
r_{t+1} {\le}&\hspace{-0.00in} \left(\hspace{-0.00in}  2\sqrt{V}\sqrt{2 \log(\frac{1}{\delta_0})} {+} 4C_1Md(2k{-}1)^{\frac{1}{2}A} 
{+}
\Delta_{\max} \hspace{-0.00in}\right) \hspace{-0.00in} t^{\beta-1} \\
& + (\delta_0+\omega_t^\prime)\Delta_{\max}.
\end{align*}

Let $C_3 \triangleq
\left(2\sqrt{V}\sqrt{2\log(\frac{1}{\delta_0})} + 4C_1Md(2k-1)^{\frac{1}{2}A} +
  \Delta_{\max} \right)$ be a constant. Then, we further denote $p{\triangleq}
\frac{2+3A}{A}>1$ (where $p=\frac{2}{1-\beta}$) and by H\"older's inequality, when $T>N$ we have 
\begin{align*}
  & R(T, \CA_{\text{Fst}+\CE_\emptyset})
  = \sum_{t=1}^{N} r_t + \sum_{t=N+1}^T r_t \\
\le& \sum_{t=1}^{N} r_t + \left( (T{-}N)\delta_0{+}\sum_{t=N+1}^T\omega_t^\prime \right) \Delta_{\max} + T^{1-1/p} C_3 \left(\sum_{t=1}^T (\frac{r_t}{C_3} )^p\right)^{1/p} \\
 = & \sum_{t=1}^{N} r_t +\left( (T{-}N)\delta_0{+}\sum_{t=N+1}^T\omega_t^\prime \right)\Delta_{\max}+ C_3 T^{1-\frac{1}{p}} (\sum_{t=1}^T\frac{1}{t})^{\frac{1}{p}} \\
\le & \sum_{t=1}^{N} r_t + \left (\left( (T{-}N)\delta_0{+}\sum_{t=N+1}^T\omega_t^\prime \right)+N \right)\Delta_{\max} +  C_3 T^{1-\frac{1}{p}} (\log T)^{\frac{1}{p}},
\end{align*}
where the last inequality holds because $\sum_{t=1}^T\frac{1}{t}\le \log T$.

Now, we let $\delta_0 = T^{-\frac{A}{2+3A}}$.

Here, 
\begin{align*}
& \sum_{t=N+1}^T\omega_t^\prime= \\
& \sum_{t=N+1}^T \left( \delta_0
  (4\sqrt{2\log(\frac{1}{\delta_0})} \tilde{\varepsilon} {+} 2\tilde{\varepsilon}^2) {+}
  \frac{C\psi \log(t)}{\sqrt{t}} {+} \left(\frac{s}{t}\frac{16 \log(s)^d}{\varepsilon_t C_{f,d}}\right)^{1-\frac{2\omega}{3}} \right).
\end{align*}
Recall that when $n>N_3$, $\tilde{\varepsilon}<1$, and in our parameter setting
$\frac{s}{t \varepsilon_t} = t^{-1/2(1-\beta)}= t^{-1/p}$.
Hence,
\begin{align*}
  \sum_{t=N+1}^T\omega_t^\prime 
  \le &\sum_{t=N+1}^T \left( \delta_0 (4\sqrt{2\log(1/\delta_0)} + 2)  \right.\\
  & \left. + \frac{C\psi \log(t)}{\sqrt{t}} + \left(t^{-\frac{1}{p}}\frac{16 \log(s)^d}{C_{f,d}}\right)^{1-2\omega/3}  \right) \\
  \le & (T-N) (T^{-\frac{1}{p}} (4\sqrt{2\frac{1}{p}\log(T)}+2)) 
    + (T-N) \frac{C\psi\log(T)}{\sqrt{T}}\\
  & + (T-N) \left(T^{-\frac{1}{p}}\frac{16 \log(T)^d}{C_{f,d}}\right)^{1-2\omega/3}  \\
  \le & T^{1-\frac{1}{p}} (4\sqrt{2\frac{1}{p}\log(T)}+2) + \sqrt{T} C\psi\log(T)\\
 &  + T^{1-\frac{1}{p}+\frac{1}{2}\frac{2\omega}{3}}\left(\frac{16 \log(T)^d}{C_{f,d}}\right)^{1-2\omega/3}.
\end{align*}

We notice that $1-\frac{1}{p}>\frac{1}{2}$, so the exponent $T^{1-\frac{1}{p} +
  \frac{1}{3}\omega}$ dominates, and we use another $T^{\frac{1}{6}\omega}$ to
hide the $\log(T)$ terms. 
Then we have $R(T,\CA_{\text{Fst}+\CE_\emptyset})=O(T^{1-\frac{1}{p} +
  \frac{1}{2}\omega})$. Note that $1/p=\frac{1}{2}(1-\beta)$, then
\begin{align*}
\lim_{T\rightarrow +\infty} \frac{R(T,\CA_{\text{Fst}+\CE_\emptyset})}{T^{1- \frac{1}{2}(1-\beta)
  + \frac{\omega}{2} }} =
  \lim_{T\rightarrow +\infty} \frac{R(T,\CA_{\text{Fst}+\CE_\emptyset})}{T^{\frac{1+\beta+\omega}{2} }}
  = 0 && \text{ for any small } \omega>0.
\end{align*}
  Thus, using the big-$O$ notation, $\lim_{T\rightarrow +\infty} \frac{R(T,\CA_{\text{Fst}+\CE_\emptyset})}{T} = O(T^{-\frac{A}{2+3A}
  + \frac{\omega}{2}})$ for any small $\omega$. 

Finally, one can verify $1-\frac{A}{2+3A}=\frac{1+\beta}{2}$ which is less than $1$. Then, we reach our claim in the theorem that 
$\lim\nolimits_{T\rightarrow +\infty} \frac{{R}(T,\CA_{\text{Fst}+\CE_\emptyset})}
{T^{(1+\beta+\omega)/2}} = 0$,
and $\lim\limits_{T\rightarrow +\infty} \frac{R(T,\CA_{\text{Fst}+\CE_\emptyset})}{T}=0$ for
any $\omega$ that is smaller than $\frac{1-\beta}{2}$.
Namely, we have shown that the asymptotic regret is sub-linear w.r.t. $T$. 
}
\end{proof}

        % Now, we complete the proof for the Theorem.
 
\subsection{Regret Bound for Contextual Independent Algorithm $\CA_{\text{UCB+IPSW}}$ (Theorem~\ref{mthm:ipsw})}

\begin{proof}[\bf Proof of Theorem~\ref{mthm:ipsw}]
 The proof follows the same idea as previous ones. 
 We will first show that the estimation relying on the offline data is unbiased.
Second, we use a weighted Chernoff bound to show the effective number of logged
samples (a.k.a. Effective Sample Size) in terms of the confidence bound.

Many previous works have shown the inverse propensity weighting method provides
an unbiased estimator\cite{swaminathan2015counterfactual}. In fact, for $\tilde{a}\in[K]$
\begin{align*}
\MBE[\bar{y}_{\tilde{a}}] & = 
\frac{
  \MBE[ \sum_{i\in [-I]} \MBE[y|\bm{x}_i,\tilde{a})] \MBE[\indicator{a_i=\tilde{a}}] / p(\bm{x}_i,\tilde{a}]) 
}{
  \sum_{i\in [-I]} \MBE[\indicator{a_i=\tilde{a}}] / p(\bm{x}_i,\tilde{a}) ]
} \\
& =  \frac{
 \MBE[\sum_{i\in [-I]} \MBE[y |\bm{x}_i, \tilde{a}]]
}{
 I 
} \\
& = \sum_{\bm{x}\in \CX} \MBP[\bm{x}] \MBE[y|\bm{x},\tilde{a}] = \MBE[\bar{y}_{\tilde{a}}].
\end{align*}
The second equation holds because the probability that we observe the action
$\tilde{a}$ is $\MBE[\indicator{a_i=\tilde{a}}]$ which is the propensity score $p(\bm{x}_i,\tilde{a})$.
The last equation is because the expectation for data item $i$ is taken over the
contexts $\bm{x}$.

According to Chernoff-Hoeffding bound~\cite{hoeffding1994probability}, we have
the following Lemma.
\begin{lemma}
 If $X_1,X_2, \ldots, X_n$ are independent random variables and $A_i\le X_i \le B_i(i=1,2,\ldots,n)$, we have
 the following bounds for the sum $X=\sum_{i=1}^n X_i$:
\[
  \MBP[X \le \MBE[X] - \delta] \le e^{-\frac{2\delta^2}{\sum_{i=1}^n (B_i-A_i)^2 }}.
\]
\[
  \MBP[X \ge \MBE[X] + \delta] \le e^{-\frac{2\delta^2}{\sum_{i=1}^n (B_i-A_i)^2 }}.
\]
\end{lemma}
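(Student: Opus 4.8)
The plan is to prove the upper-tail bound $\MBP[X \ge \MBE[X] + \delta] \le \exp(-2\delta^2 / \sum_{i=1}^n (B_i - A_i)^2)$ by the exponential-moment (Chernoff) method; the lower-tail bound $\MBP[X \le \MBE[X] - \delta]$ then follows immediately by applying the upper-tail result to the variables $-X_i$, which are independent and satisfy $-B_i \le -X_i \le -A_i$ with the identical range $B_i - A_i$. So it suffices to treat one tail.

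First I would center the summands. Writing $Y_i \triangleq X_i - \MBE[X_i]$, we have $\MBE[Y_i] = 0$ and $A_i - \MBE[X_i] \le Y_i \le B_i - \MBE[X_i]$, an interval of width $B_i - A_i$. For any $t > 0$, Markov's inequality applied to the nonnegative variable $e^{t\sum_i Y_i}$ gives
\[
\MBP\Big[\sum\nolimits_{i=1}^n Y_i \ge \delta\Big] \le e^{-t\delta}\,\MBE\big[e^{t\sum_i Y_i}\big] = e^{-t\delta}\prod\nolimits_{i=1}^n \MBE[e^{tY_i}],
\]
where the product factorization uses the independence of the $X_i$ (hence of the $Y_i$).

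The crux of the argument is Hoeffding's lemma: for a mean-zero random variable $Y$ bounded in $[a,b]$, the moment generating function satisfies $\MBE[e^{tY}] \le e^{t^2(b-a)^2/8}$. I would establish this by convexity of $y \mapsto e^{ty}$: pointwise on $[a,b]$ one has $e^{tY} \le \frac{b-Y}{b-a}e^{ta} + \frac{Y-a}{b-a}e^{tb}$, and taking expectations the linear term in $Y$ vanishes because $\MBE[Y]=0$. Substituting $h = t(b-a)$ and $q_0 = -a/(b-a)$, the resulting bound can be written as $e^{L(h)}$ with $L(h) = -q_0 h + \ln(1 - q_0 + q_0 e^{h})$. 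One checks $L(0) = L'(0) = 0$, while $L''(h) = q(1-q)$ for $q \triangleq q_0 e^{h}/(1 - q_0 + q_0 e^{h}) \in [0,1]$, so $L''(h) \le 1/4$ uniformly. A second-order Taylor expansion then yields $L(h) \le h^2/8$, i.e.\ $\MBE[e^{tY}] \le e^{t^2(b-a)^2/8}$. Applying this to each $Y_i$ (range $B_i - A_i$) gives $\MBE[e^{tY_i}] \le \exp(t^2(B_i-A_i)^2/8)$.

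Combining the two displays, with $S \triangleq \sum_{i=1}^n (B_i - A_i)^2$, I obtain $\MBP[\sum_i Y_i \ge \delta] \le \exp(-t\delta + t^2 S/8)$ for every $t > 0$. The final step optimizes the free parameter: the exponent $-t\delta + t^2 S/8$ is minimized at $t = 4\delta/S$, where it equals $-2\delta^2/S$, yielding the claimed bound. The main obstacle is Hoeffding's lemma itself, and within it the uniform estimate $L''(h) \le 1/4$; I expect the cleanest route is the observation above that $L''$ has the form $q(1-q)$ for a probability $q$, which is bounded by $1/4$ by the AM-GM inequality.
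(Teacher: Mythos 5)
Your proof is correct in every step: the reduction of the lower tail to the upper tail via $-X_i$, the Chernoff--Markov bound with factorization over independent summands, Hoeffding's lemma via convexity and the uniform bound $L''(h)=q(1-q)\le 1/4$, and the optimization $t=4\delta/\sum_i(B_i-A_i)^2$ giving the exponent $-2\delta^2/\sum_i(B_i-A_i)^2$. The paper does not prove this lemma at all --- it simply invokes it by citation to Hoeffding --- and your argument is exactly the classical proof from that source, so there is nothing to reconcile.
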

In our case to estimate the outcome for an action $a$, we have 
$X_i = y_i \frac{\indicator{a_i={a}} /
  p(\bm{x}_i,a_i)}{\sum_{i\in[-I]} \indicator{a_i=a}/p(\bm{x}_i, a_i)} $, and 
$X = \sum_{i\in [-I]} X_i = \bar{y}_a$. 
Hence the constants $A_i = 0$, $B_i = \frac{\indicator{a_i=a} /
  p(\bm{x}_i,a_i)}{\sum_{i\in[-I]} \indicator{a_i=a}/p(\bm{x}_i, a_i)}$.
Therefore, we have 
\begin{align*}
  & \MBP[ |\bar{y}_a - \MBE[y|a]| \ge \delta ] \\
  \le &
  2e^{
    -\frac{
      2\delta^2
    }{
       \sum_{i\in [-I]} \left( 
        \frac{\indicator{a_i=a} /
         p(\bm{x}_i,a_i)}{\sum_{i\in[-I]} \indicator{a_i=a}/p(\bm{x}_i, a_i)}
      \right)^2  
    }
  } \\
 = & 2e^{
    -\frac{
      2\delta^2
    }{
       \frac{
       \sum_{i\in [-I]} \left( \indicator{a_i=a} /
         p(\bm{x}_i,a_i) \right)^2 
       }{\left(\sum_{i\in [-I]} \indicator{a_i=a}/p(\bm{x}_i, a_i)\right)^2}
    }
  } \\
 = & 2e^{
    -
      2 \delta^2
      \frac{
         \left( \sum_{i\in [-I]} \indicator{a_i=a}/p(\bm{x}_i, a_i) \right)^2
      }{
          \sum_{i\in [-I]}  \left(\indicator{a_i=a} /
         p(\bm{x}_i,a_i) \right)^2 
      }
  }
\end{align*}
We compare it with the Chernoff-Hoeffding bound used in the UCB
algorithm\cite{auer2002finite}. When we have $n_a$ online samples of arm $a$,
\[
  \MBP[ |\bar{y}_a - \MBE[y|a]| \ge \delta] \le 2e^{-2n_a\delta^2}.
\]
By this comparison, we let $n=\widehat{N}_a$ and we will get the same bound.

Now, we show that by using these $\lfloor \widehat{N}_a \rfloor$ samples from logged data, the online bandit UCB
oracle will always have a tighter bound than that for $\lfloor \widehat{N}_a
\rfloor$ i.i.d. samples from the online environment.

In the online phase, let the number of times to play the action $a$ to be $T_a$. For the offline samples, let $X_i {=} y_i\frac{\indicator{a_i=a}/p(\bm{x}_i,a_i)
}{\sum_{i\in[-I]}\indicator{a_i=a}/p(\bm{x}_i,a_i)
}\frac{\widehat{N}_a}{\widehat{N}_a + T_a}$.
For the online samples, let $X^t {=} y_t \frac{1}{\widehat{N}_a + T_a}$.
Let us consider the sequence $\{X_1, \ldots, X_I, X^1, \ldots, X^{T_a}\}$.
Now, $X = \sum_{i\in[-I]} X_i + \sum_{t\in [T_a]}X^t$.
Then, we have $\MBE[X] = \MBE[y|a]$, and $0\le X_i \le
\frac{\widehat{N}_a}{\widehat{N}_a+T_a}B_i (\forall i{\in}[-I])$, $0{\le} X^t {\le}
\frac{1}{\widehat{N}_a+T_a}$. In addition, we have
\begin{small}
\begin{align*}
 & \left(\frac{\widehat{N}_a}{\widehat{N}_a {+} T_a}\right)^2  \frac{\sum_{i\in [-I]} \left(\indicator{a_i=a}/p(\bm{x}_i,a_i)\right)^2
}{\sum_{i\in[-I]}\indicator{a_i=a}/p(\bm{x}_i,a_i) }
  {+} \hspace{-0.05in} \sum_{t\in[T_a]} \hspace{-0.07in}\left(\frac{1}{\widehat{N}_a{+}T_a}\right)^2 \\
& =  \left(\frac{\widehat{N}_a}{\widehat{N}_a + T_a}\right)^2 
\left( \frac{1}{\widehat{N}_a}  \right)
 + \frac{T_a}{(\widehat{N}_a+T_a)^2} 
= \frac{1}{\widehat{N}_a+T_a}.
\end{align*}
\end{small}
Therefore, 
\[
  \MBP[\bar{y}_a \le \MBE[y|a] -\delta ] \le e^{-2\delta^2 (\widehat{N}_a+T_a)},
\]
\[
  \MBP[\bar{y}_a \ge \MBE[y|a] +\delta ] \le e^{-2\delta^2 (\widehat{N}_a+T_a)}.
\]
In other words, when we have $T_a$ online samples of an action $a$, the confidence
interval is as if we have $T_a+\widehat{N}_a$ total samples for the bandit oracle.
Then, the regret bound reduces to the case where we have $\widehat{N}_a$ offline
samples for arm $a$ that do not have contexts.
\end{proof}

\subsection{Regret Bound for Contextual Algorithm $\CA_{\text{LinUCB+LR}}$
  (problem dependent Theorem~\ref{mthm:ContextDepend:LinPI} and problem
  independent Theorem~\ref{mthm:linear_problem_dependent})}

\begin{proof}[\bf Proof of Theorem~\ref{mthm:ContextDepend:LinPI}]
 The proof follows the analytical framework of the
 paper\cite{abbasi2011improved}. Especially, this Theorem corresponds to the
 Theorem 3 in the paper\cite{abbasi2011improved}.
 The
 proofs in papers\cite{auer2002using}\cite{chu2011contextual} have similar ideas.
  
In particular, we consider that the offline
samples have features $\bm{x}_{-1}, \bm{x}_{-2}, \ldots, \bm{x}_{-N}$, and the online samples have
features $\bm{x}^1, \bm{x}^2, \ldots, \bm{x}^T$. To have a unified index system,
we let $\bm{x}_{N+t}\triangleq \bm{x}^t$ for $t\ge 1$.

Because we choose the ``optimal'' action in the online phase, we have 
the pseudo-regret in time slot $t$ is 
\[
r_t \le 2 \sqrt{\beta_{t-1}(\delta)} \min\{ ||\bm{x}_{N+t}||_{V_{N+t-1}^{-1}}, 1 \}.
\]
Then, we have (recall that in this paper, we set $V_0$ as a $d\times d$ identity
matrix $\bm{I}_d$)
\begin{align*}
  & \sqrt{8\beta_n(\delta)} \sum_{n=1}^N  \min\{1,||\bm{x}_{n}||_{V_{n-1}^{-1}}\} + \sum_{t=1}^T r_t  \\
\le & \sqrt{8 (N+T) \beta_n(\delta) \log \frac{ \texttt{trace}(V_0)+(N+T)L^2 }{\texttt{det} V_0 } }.
\end{align*}
Here, we observe that 
\begin{align*}
\sum_{t=1}^T r_t & \le 
\sqrt{8 (N+T) \beta_n(\delta) \log \frac{ \texttt{trace}(V_0)+(N+T)L^2
  }{\texttt{det} V_0 } } \\
& -
 \sqrt{8\beta_n(\delta)} \sum_{n=1}^N  \min\{1,||\bm{x}_{n}||_{V_{n-1}^{-1}}\} 
.
\end{align*}
Now, we give a lower bound of the last term 
\[
\sqrt{8\beta_n(\delta)} \sum_{n=1}^N  \min\{1,||\bm{x}_{n}||_{V_{n-1}^{-1}}\}.
\]

Here, $||\bm{x}||_{A} = \sqrt{\bm{x}^T A \bm{x}} \ge \sqrt{\lambda_{\min} (A)} ||\bm{x}||_2$.
We have the following claim that $\lambda_{\min}(V_n^{-1}) \ge \frac{1}{1+(n-1)L^2}$. This is
because $\lambda_{\min}(V_n^{-1}) = 1/\lambda_{\max}(V_n)$. 
In fact, for the symmetric matrices, we have 
\[
\lambda_{\max}(A+B) \le \lambda_{\max}(A) + \lambda_{\max}(B).
\]
We have $\lambda_{\max}(I) = 1$, and $\lambda_{\max} (\bm{x} \bm{x}^T) =||\bm{x}||_2^2$. 
Therefore, 
\[
\lambda_{\max}(V_{n-1}) \le 1 + ||\bm{x}_1||_2^2 +
\ldots + ||\bm{x}_{n-1}||_2^2 \le 1 + (n-1) ||\bm{x}||_{\max}^2,
\] 
where we
consider $||\bm{x}_i||_2^2\le ||\bm{x}||_{\max}^2$ for $i\in[n]$.
Also, we consider $||\bm{x}_i||_2^2\ge ||\bm{x}||_{\min}^2$ for $i\in[n]$. 

Let $L=||\bm{x}||_{\max}$. Then, 
\begin{align*}
&\sum_{n=1}^N  \min\{1,||\bm{x}_{n}||_{V_{n-1}^{-1}}\}\\
\ge & \sum_{n=1}^N \min\{1, ||\bm{x}||_{\min} \sqrt{\frac{1}{1+(n-1)L^2}}  \}\\
\ge & \min\{1, ||\bm{x}||_{\min}\} \sum_{n=1}^N \sqrt{\frac{1}{1+(n-1)L^2}} \\
\ge & \min\{1, ||\bm{x}||_{\min}\} \sum_{n=1}^N \frac{2}{L^2} \left( \sqrt{1{+}nL^2} - \sqrt{1{+}(n{-}1)L^2} \right) \\
 = & \min\{1, ||\bm{x}||_{\min}\} \frac{2}{L^2} \left( \sqrt{1+NL^2} - 1\right)
.
\end{align*}
Hence, we have the final bound of regret 
\begin{align*}
& \sum_{t=1}^T r_t \le 
\sqrt{8 (N{+}T) \beta_n(\delta) \log \frac{ \texttt{trace}(V_0){+}(N{+}T)L^2
  }{\texttt{det} V_0 } }  \\
& -
 \sqrt{8\beta_n(\delta)} \min\{1, ||\bm{x}||_{\min}\} \frac{2}{L^2} \left( \sqrt{1{+}NL^2} {-} 1\right).
\end{align*}
\end{proof}
Compared with the previous regret bound without offline data, the regret bound
changes from $O(\sqrt{T})$ to $O(\sqrt{N+T}) - \Omega(\sqrt{N})$.
From the view of regret-bound, using offline data does not bring us a large
amount of regret-reduction.

We now show a better bound for the problem-dependent case. This corresponds to
section 5.2 of the paper\cite{abbasi2011improved}.
Let $\Delta_t$ be the ``gap'' at step $t$ as defined in the paper of Dani et
al.\cite{dani2008stochastic}. Intuitively, $\Delta_t$ is the difference between
the rewards of the best and the ``second best'' action in the decision set
$D_t$. We consider the samllest gap $\bar{\Delta}_n = \min_{1\le t \le n}\Delta_t$.

\begin{proof}[\bf Proof of Theorem~\ref{mthm:linear_problem_dependent}]
  We will first show a high-probability bound, i.e. with probability at least
  $1-\delta$, the cummulative regret has the bound
  \begin{align*}
  R(T, \CA_{\text{LinUCB+LR}}) \le \frac{4\beta_{N+T}(\delta)}{{\Delta}_{\min}} d\log(1+\kappa) 
  \end{align*}
 when the parameters $\{\beta_t\}_{t=1}^T$ ensure the confidence bound in each time slot.
  
Recall that the contexts of samples returned by the offline evaluator are
$\bm{x}_{-1}, \bm{x}_{-2},\ldots, \bm{x}_{-N}$. We denote $r_t\triangleq \max_{a\in [K]}
  \MBE[y_t|\bm{x}_t,a]-\MBE[y_t|\bm{x}_t, a_t]$ as the pseudo-regret in time
  slot $t$. Recall that $\beta_t(\delta)$ is the parameter $\beta_t$ in the
  $t^{th}$ time slot, and the $\delta$ is to emphasize that it is a function of $\delta$.
  From the proof for the problem-independent bound in paper\cite{abbasi2011improved}, we know  $\sum_{t=1}^T r_t \le \frac{4\beta_{N+T}(\delta)}{{\Delta}_{\min}} \log \frac{\texttt{det} V_{T}}{\texttt{det}V_N} $.
 The following is to bound $\log \frac{\texttt{det} V_{N+T}}{\texttt{det}V_N}$.
 We have the following lemma.
 \begin{lemma}
   \label{lemma:condition_number}
 Let $\kappa = \frac{T L^2}{\lambda_{\min}(V_N)}$, then $(1+\kappa) V_N \succcurlyeq V_{T+N}$.
 \end{lemma}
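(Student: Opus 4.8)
The plan is to prove the Löwner (positive-semidefinite) inequality directly, by splitting $V_{T+N}$ into its offline part $V_N$ and its online increment and then controlling that increment in the operator order. First I would recall that the design matrix accumulates rank-one outer products as the algorithm runs, so after the $T$ online rounds, using the unified indexing $\bm{x}_{N+t}\triangleq \bm{x}^t$ for the context seen in online slot $t$,
\[
V_{T+N} = V_N + \sum\nolimits_{t=1}^T \bm{x}_{N+t}\bm{x}_{N+t}^T .
\]
Consequently, proving $(1+\kappa)V_N \succcurlyeq V_{T+N}$ is equivalent to establishing the single increment bound $\kappa V_N \succcurlyeq \sum_{t=1}^T \bm{x}_{N+t}\bm{x}_{N+t}^T$, which is what I would target.

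The key step is two elementary facts about the Löwner order. For any vector $\bm{x}$ the matrix $\bm{x}\bm{x}^T$ has a single nonzero eigenvalue $\|\bm{x}\|_2^2$, so $\bm{x}\bm{x}^T \preccurlyeq \|\bm{x}\|_2^2\,\bm{I}_d \preccurlyeq L^2\,\bm{I}_d$ since $L=\max_{t\le T}\|\bm{x}_t\|_2$; summing over the $T$ online rounds gives $\sum_{t=1}^T \bm{x}_{N+t}\bm{x}_{N+t}^T \preccurlyeq T L^2\,\bm{I}_d$. On the other hand, because $V_N$ is positive definite (the design matrix is initialized with the regularizer $\bm{I}_d$, as in the $V_0=\bm{I}_d$ convention used just above), we have $\lambda_{\min}(V_N)>0$ and hence $\bm{I}_d \preccurlyeq \frac{1}{\lambda_{\min}(V_N)}V_N$. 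Chaining these and substituting $\kappa = TL^2/\lambda_{\min}(V_N)$ yields
\[
\sum\nolimits_{t=1}^T \bm{x}_{N+t}\bm{x}_{N+t}^T \preccurlyeq T L^2\,\bm{I}_d \preccurlyeq \kappa V_N ,
\]
which is exactly the increment bound; adding $V_N$ to both sides closes the argument.

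There is essentially no hard analytic obstacle here, so the main thing to get right is bookkeeping rather than mathematics. The two points requiring care are: (i) the indexing convention that merges the $N$ offline samples and the $T$ online samples into a single stream $\bm{x}_{-N},\dots,\bm{x}_{-1},\bm{x}^1,\dots,\bm{x}^T$, so that $V_N$ and $V_{T+N}$ genuinely satisfy the additive relation above; and (ii) the strict positivity of $\lambda_{\min}(V_N)$, which is what makes $\kappa$ well defined and legitimizes $\bm{I}_d \preccurlyeq \lambda_{\min}(V_N)^{-1}V_N$. Once the lemma is available it plugs straight into the surrounding proof of Theorem~\ref{mthm:linear_problem_dependent}: by monotonicity of the determinant under the Löwner order, $\det V_{T+N}\le \det\!\big((1+\kappa)V_N\big)=(1+\kappa)^d\det V_N$, so $\log(\det V_{T+N}/\det V_N)\le d\log(1+\kappa)$, which is precisely the quantity that needs bounding.
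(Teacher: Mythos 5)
Your proof is correct and follows essentially the same route as the paper's: both reduce the claim to $\sum_{t=1}^T \bm{x}_{N+t}\bm{x}_{N+t}^T \preccurlyeq \kappa V_N$, bound the online increment by $TL^2$, and use $\lambda_{\min}(V_N)$ to lower-bound $V_N$. The only difference is cosmetic — you chain Löwner inequalities through $TL^2\,\bm{I}_d$, while the paper verifies the same two bounds on quadratic forms $\bm{x}^\prime(\cdot)\bm{x}$ over unit vectors, which is the same argument stated pointwise.
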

 \begin{proof}[\bf Proof of Lemma~\ref{lemma:condition_number}]
We first consider the case where all the data samples are returned before the
first online phase start.
   Denote the $V$ matrix in the online time slot $t$ after using the logged data
 as $V_{N+t}$.
 Note that $V_{T+N}=V_N+\sum_{t=1}^T \bm{x}_{t}\bm{x}_{t}^\prime$. Thus the above lemma is equivalent to $\sum_{t=1}^T \bm{x}_{N+t}\bm{x}_{N+t}^\prime
  {\preccurlyeq} \kappa V_N$. Here, we use $\bm{x}^\prime$ to denote the
  transpose of $\bm{x}$ (to avoid using ``$\bm{x}^T$'' with the confusing $T$).
 The positive semi-definiteness means that for any $\bm{x}$ where $||\bm{x}||_2{=}1$, we want to have
  \begin{align}
  \bm{x}^\prime \left(\sum_{t=1}^T \bm{x}_{t}\bm{x}_{t}^\prime \right) \bm{x}
  {\le} \kappa \bm{x}^\prime V_N \bm{x}.
    \label{eq:compare}
  \end{align}
  In fact $\bm{x}^\prime \left(\sum_{t=1}^T \bm{x}_{t}\bm{x}_{t}^\prime \right) \bm{x}
  \le T L^2$, because $L$ is the maximum 2-norm of $\bm{x}_t$. In addition, $\bm{x}^\prime V_N \bm{x}\ge \lambda_{\min}(V_N)$. Hence, we always
  have (\ref{eq:compare}) for $\forall \bm{x}$. Hence we proved the above lemma.
 \end{proof}
 We have $\texttt{det} A \le \texttt{det} B$ if $A \preccurlyeq B$. 
  Hence, 
\[ 
\texttt{det} V_{T+N} \le \texttt{det} (1+\kappa)V_N = (1+\kappa)^d\texttt{det} V_N.
\]
Then, $\log \frac{\texttt{det} V_{N+T}}{\texttt{det}V_N}\le d\log(1+\kappa)$,
which leads to our Theorem.

Now, we set $\beta_t(\delta)=2d(1+2\ln(1/\delta))$, and the parameter is in the
confidence ball with probability at least $1-\delta$. Moreover, we set
$\delta=1/T$. Then, the regret in each time slot can be divided into two parts:
(1) the $\delta$ probability part (summing up to at most 1, because the outcome
is bounded); and (2) the
$1-\delta$ probability part (summing up to at most
$\frac{8d(1+2\ln (T))}{\Delta_{\min}} d\log (1+\kappa)$).
Therefore, the expected cumulative reward has an upper bound $\frac{8d(1+2\ln
  (T))}{\Delta_{\min}} d\log (1+\kappa){+}1$.

Now, plugging in the definition of $\kappa$, we have proved
\begin{align*}
{R}(T,\CA_{\text{LinUCB+LR}}) \le
 \frac{8d^2(1+2\ln(T))}{\Delta_{\min}} \log\left(1+\frac{TL^2}{\lambda_{\min}(\bm{V}_N)}\right) + 1.
\end{align*}
\end{proof}

\subsection{Relaxations of The Assumptions on The Logged Data (Theorem~\ref{thm:no_ignorability})}

\begin{proof}[\bf Proof of Theorem~\ref{thm:no_ignorability}]
  Let us consider the number of times that a sub-optimal action is played, using
  the UCB online bandit oracle.
  Let us denote the expected reward (or outcome) $\MBE[y|a]$ for an action $a$
  as $\mu_a$. 
  In the $t_{th}$ online round, we make the wrong decision to play an action $a$
  only if $(\mu_{a^\ast} - \mu_a) + \left( \frac{\delta_{a^\ast} N_a}{N_a+t} -
    \frac{\delta_a N_a}{N_a+t} \right) < I_a - I_{a^\ast}$, where $I_a$ is half of the
  width of the confidence interval {\color{black}$\beta \sqrt{\frac{2\ln
        (n)}{n_a}}$} for action $a$, where $n_a$ is the number of times that the
  online bandit oracle plays action $a$ and $n=\sum_{a\in[K]}n_a$. 
Now, we only need to consider the case where
$\delta_a-\delta_{a^\ast}\ge 0$. Otherwise, the offline data lets us to have
less probability to select the sub-optimal actions, and thus leads to a lower regret.
  
  According to Chernoff bound, when we have 
  \begin{align}
    \label{eq:chernoff_condition}
  (N_a+t)[\Delta_a + \frac{N_a}{N_a+t}(\delta_{a^\ast}-\delta_a)]^2 \ge 8\ln(N_a+T),
  \end{align}
the violation probability will be very low. In fact, under (\ref{eq:chernoff_condition})
\begin{align*}
  \MBP\left[(\mu_{a^\ast} - \mu_a) + \left( \frac{\delta_{a^\ast} N_a}{N_a+t} -
    \frac{\delta_a N_a}{N_a+t} \right) < I_a - I_{a^\ast} \right] \le t^{-4}.
\end{align*}
Then we can let $l_a$ to be a number such that when $t>l_a$, the inequality
(\ref{eq:chernoff_condition}) is satisfied.

In fact, when $l_a {=} \lceil 16\frac{\ln(N_a+T)}{\Delta_a^2} {+}
  [N_a(\frac{2(\delta_a{-}\delta_{a^\ast})}{\Delta_a}{-}1)] - N_a \rceil$,
(\ref{eq:chernoff_condition}) is satisfied.
Therefore, the expected number of times that we play an action $a$ is less than
\begin{align*}
&l_a+\sum_{t=1}^T t^{-4} \\
\le &\left( 16\frac{\ln(N_a{+}T)}{\Delta_a^2} {-} 2N_a ( 1{-}\frac{\max\{0,\delta_a {-} \delta_{a^\ast}\} }{\Delta_a}  ) {+} (1{+}\frac{\pi^2}{3}) \right).
\end{align*}
When we sum up over all actions $a\ne a^\ast$, we get 
 $
 R(T,\CA) \le
\sum_{a\ne a^\ast} \hspace{-0.0in} \Delta_a 
\hspace{-0.0in} \left( 16\frac{\ln(N_a{+}T)}{\Delta_a^2} 
{-} 2N_a ( 1{-}\frac{\max\{0,\delta_a {-} \delta_{a^\ast}\} }{\Delta_a}  ) {+} (1{+}\frac{\pi^2}{3}) \right).
$
\end{proof}

\bibliographystyle{ACM-Reference-Format}
{\small
\bibliography{bib}
}

\bibliographystyle{ACM-Reference-Format}
{\small
\bibliography{bib}
}

\end{document}